\newcommand{\Id}{\mathrm{Id}}
\newcommand{\argmin}{\mathrm{argmin}}
\definecolor{codegreen}{rgb}{0,0.6,0}
\definecolor{codegray}{rgb}{0.5,0.5,0.5}
\definecolor{codepurple}{rgb}{0.58,0,0.82}
\definecolor{backcolour}{rgb}{0.95,0.95,0.92}
\lstdefinestyle{mystyle}{
    backgroundcolor=\color{backcolour},   
    commentstyle=\color{codegreen},
    keywordstyle=\color{magenta},
    numberstyle=\tiny\color{codegray},
    stringstyle=\color{codepurple},
    basicstyle=\ttfamily\footnotesize,
    breakatwhitespace=false,         
    breaklines=true,                 
    captionpos=b,                    
    keepspaces=true,                 
    numbers=left,                    
    numbersep=5pt,                  
    showspaces=false,                
    showstringspaces=false,
    showtabs=false,                  
    tabsize=2
}
\newcommand{\argmax}{\mathrm{argmax}}
\newcommand{\R}{\mathbb{R}}
\newcommand{\rset}{\mathbb{R}}
\newcommand{\bfX}{\mathbf{X}}
\newcommand{\bfZ}{\mathbf{Z}}
\newcommand{\bfB}{\mathbf{B}}
\newcommand{\vareps}{\varepsilon}
\newcommand{\rmd}{\mathrm{d}}
\newcommand{\KL}{\mathrm{KL}}
\newcommand{\Cov}{\mathrm{Cov}}
\newcommand{\MMD}{\mathrm{MMD}}
\newcommand{\SNR}{\mathrm{SNR}}
\newcommand{\rbf}{\texttt{rbf}}
\newcommand{\imq}{\texttt{imq}}
\newcommand{\expk}{\texttt{exp}}
\newcommand{\msx}{\mathsf{X}}
\newcommand{\nset}{\mathbb{N}}
\newcommand{\joint}{\mathrm{joint}}
\newcommand{\marginal}{\mathrm{marginal}}
\definecolor{customred}{HTML}{d1221d} 
\definecolor{customblue}{HTML}{3c77df}
\definecolor{custompurple}{HTML}{bb1a4a}
\icmltitlerunning{Distributional Diffusion Models}
\begin{document}

\twocolumn[
\icmltitle{Distributional Diffusion Models with Scoring Rules}



\icmlsetsymbol{equal}{*}

\begin{icmlauthorlist}
\icmlauthor{Valentin De Bortoli}{equal,gdm}
\icmlauthor{Alexandre Galashov}{equal,gdm,ucl}
\icmlauthor{J. Swaroop Guntupalli}{gdm}
\icmlauthor{Guangyao Zhou}{gdm}
\icmlauthor{Kevin Murphy}{gdm}
\icmlauthor{Arthur Gretton}{gdm,ucl}
\icmlauthor{Arnaud Doucet}{gdm}
\end{icmlauthorlist}

\icmlaffiliation{ucl}{Gatsby UCL}
\icmlaffiliation{gdm}{Google DeepMind}

\icmlcorrespondingauthor{Valentin De Bortoli}{vdebortoli@google.com}
\icmlcorrespondingauthor{Alexandre Galashov}{agalashov@google.com}

\icmlkeywords{Machine Learning, ICML}

\vskip 0.3in
]

\printAffiliationsAndNotice{\icmlEqualContribution} 

\begin{abstract}
Diffusion models generate high-quality synthetic data. They operate by defining a continuous-time forward process which gradually adds Gaussian noise to  data until fully corrupted. The corresponding reverse process progressively ``denoises" a Gaussian sample into a sample from the data distribution. However, generating high-quality outputs requires many discretization steps to obtain a faithful approximation of the reverse process. This is expensive and has motivated the development of many acceleration methods. We propose to speed up sample generation by learning the posterior {\em distribution} of  clean data samples  given their noisy versions, instead of only the mean of this distribution. This allows us to sample from the probability transitions of the reverse process on a coarse time scale, significantly accelerating inference with minimal degradation of the quality of the output. This is  accomplished by replacing the standard regression loss used to estimate conditional means with a  scoring rule.
We validate our method on  image and robot trajectory generation, where we consistently outperform standard diffusion models at few discretization steps.
\end{abstract}


\section{Introduction}
Diffusion models \citep{ho2020denoising,song2019generative,sohl2015deep} have demonstrated remarkable success in synthesizing high-quality data across various domains, including images \citep{saharia2022photorealistic}, videos \citep{ho2022video}, and 3D \citep{poole2022dreamfusion}. These models proceed as follows: First, a forward diffusion process is defined where Gaussian noise is progressively introduced to corrupt the data. This allows us to learn a denoiser at a continuum of noise levels. Next, to generate new samples, we sample from the time-reversed diffusion process, leveraging the learned denoiser. Despite their impressive capabilities, diffusion models often require a large number of steps to faithfully approximate  the time-reversal so as to obtain high-fidelity samples. 
To overcome this limitation, different solutions have been explored. One approach focuses on the development of improved numerical integrators (e.g. \citep{karras2022elucidating,lu2022dpm,zheng2023dpm}). Another common strategy involves the use of distillation techniques  (e.g. \cite{luhman2021knowledge,song2023consistency,luo2023comprehensive,salimans2024multistep}), which requires training a smaller, more computationally efficient model to emulate the behavior of a larger, pre-trained diffusion model. Finally, parallel simulation methods have also been explored (e.g. \citep{shih2023parallel,chen2024accelerating}) and use more memory to avoid slow sequential processing.

We depart from these earlier approaches in 
simply proposing to sample from the generative process on a coarser time scale. However, naively using denoisers obtained from the training of diffusion models significantly degrades the quality of the outputs, as these denoisers do not capture the full posterior distribution of the clean data given its noisy version, but rather its conditional mean. We review the relevant background on diffusions and their limitations in \Cref{{sec:diffusion_models}}.
In earlier works,  \citet{xiao2021tackling,xu2024ufogen} proposed to use Generative Adversarial Networks (GANs) \cite{goodfellow2014generative} to learn such conditional distributions. We show here how to bypass adversarial training by relying on generalized scoring rules \cite{GneRaf07}, which we describe in \Cref{sec:proper_scoring_rules}. This yields a simple loss which interpolates between the classical regression loss used in diffusion models and \emph{distributional} losses.  We present our new class of {\bf Distributional Diffusion Models} in \Cref{sec:method}, and provide theoretical grounding for our choice of distributional loss, as well as details of the interpolation to the regression loss, in \Cref{sec:theory}.  We review related work in \Cref{sec:relatedWork}.

Experiments in \Cref{sec:experimentsMain} demonstrate that our approach produces high-quality samples with significantly fewer denoising steps. We observe substantial benefits across a range of tasks, including image-generation tasks in both pixel and latent spaces, and in robotics applications.
All proofs are in the supplementary and a table of notation is in \Cref{sec:notation}.

\section{Background \& Motivation}
\label{sec:background}

\subsection{Diffusion models}
\label{sec:diffusion_models}
We follow the Denoising Diffusion Implicit Models (DDIM) framework of \citet{song2020denoisingimplicit}. Let $p_0$ be a target data distribution on $\rset^d$. Consider $X_{t_0}\sim p_0$ and the process $X_{t_1:t_N}:=(X_{t_1},...,X_{t_N})$ distributed according to
\begin{equation}\label{eq:forward}
   p(x_{t_1:t_N}|x_{t_0})= p(x_{t_1}|x_{t_0}) \prod_{k=1}^{N-1} p(x_{t_{k}} | x_{t_0}, x_{t_{k+1}}) ,
\end{equation}
with $0 = t_0 < \dots < t_N = 1$. For $0\leq s<t\leq 1$, we define $p(x_{s} |x_0, x_{t})=\mathcal{N}(x_{s} ; \mu_{s,t}(x_0, x_{t}), \Sigma_{s,t})$ to ensure that for any $t \in [0,1]$, 
\begin{equation}
\label{eq:interpolation}
p(x_t | x_0) = \mathcal{N}(x_t ; \alpha_t x_0, \sigma_t^2 \Id) ,
\end{equation}
for some schedule $\alpha_t,\sigma_t$ chosen such that $\alpha_0 =\sigma_1 =1$ and $\alpha_1 = \sigma_0 = 0$. This ensures in particular that $p(x_1|x_0) = \mathcal{N}(x_1 ; 0,\Id)$. One possible popular schedule is given by the \emph{flow matching} noise schedule \citep{lipman2022flow,albergo2023stochastic,gao2025diffusionmeetsflow}, i.e., 
\begin{equation}
    \textstyle
    \alpha_t=1-t, \qquad \sigma_t = t . 
    \label{eq:flow_matching_noise_schedule}
\end{equation}
The mean and covariance of $p(x_{s} |x_0, x_{t})$ are given by
\begin{align}
    &\mu_{s,t}(x_0, x_t) = (\vareps^2 r_{1,2}(s,t) + (1 - \vareps^2) r_{0,1}) x_{t} \\
    & \quad + \alpha_{s}(1 - \vareps^2 r_{2,2}(s,t) - (1 - \vareps^2) r_{1,1}(s,t)) x_0 ,
\\
& \Sigma_{s,t} =  \sigma_{s}^2 (1 - (\vareps^2 r_{1,1}(s,t) + (1-\vareps^2))^2) \Id, \label{eq:mean_sigma}
\end{align}
with $r_{i,j}(s,t) = \left(\alpha_t/\alpha_s\right)^i \left(\sigma_s^2/\sigma_t^2\right)^j $; see \citep{song2020denoisingimplicit} and \Cref{sec:ddim_sde} for a discussion. In \eqref{eq:mean_sigma}, $\varepsilon \in [0,1]$ is a \emph{churn} parameter which interpolates between a \emph{deterministic} process ($\varepsilon=0$) and a \emph{stochastic} one ($\varepsilon=1$).

We generate data $X_{t_0}\sim p_0$ by sampling $X_{t_N}\sim \mathcal{N}(0,\Id)$ and 
$X_{t_k}\sim p(\cdot|X_{t_{k+1}})$ for $k=N-1,...,0$, where for $0\leq s<t\leq 1$
\begin{equation}
\label{eq:backward_kernel}
    p(x_s|x_t) = \int_{\rset^d} p(x_s|x_0, x_t) p(x_0 | x_t) \rmd x_0 .
\end{equation}
It can be easily checked that this process has  the same marginal distributions, denoted $p_{t_k}(x_{t_k})$, as the process defined by $p_0(x_{t_0})p(x_{t_1:t_N}|x_{t_0})$ (see \eqref{eq:forward}) so that in particular $X_{t_0}\sim p_0$; see e.g. \citep[Appendix E]{shi2024diffusion}.

Subsequently, to avoid  ambiguity, we  write $p_{s|t}(x_s|x_t)$ for $p(x_s|x_t)$ and $p_{s,t}(x_s,x_t)$ for $p(x_s,x_t)$ for any $0\leq s<t \leq 1$. Usually $p_{0|t}(x_0 | x_t)$ in \eqref{eq:backward_kernel} is approximated for any $t$ by $\updelta_{\hat{x}_{\theta}(t,x_t)}$, where $\hat{x}_{\theta}(t, x_t)\approx \mathbb{E}[X_{0}|X_t=x_t]$ is a neural network denoiser trained using a regression loss; i.e. 
\begin{equation}\label{eq:diffusionloss}
    \mathcal{L}_{\text{diff}}(\theta)=\int^1_0 w_t \mathbb{E}[||X_0-\hat{x}_\theta(t,X_t)||^2]\rmd t,
\end{equation}
for a weighting function $w_t$ \cite{kingma2021variational}. 
Approximating $p_{0|t}(x_0|x_t)$ with a Dirac mass located at $\hat{x}_\theta(t,x_t)$ seems crude at first. However, as $N\rightarrow \infty$ with $t_{k+1} \to t_k$, the resulting \emph{discrete} time process converges to a \emph{continuous} time process which recovers the data distribution if $\hat{x}_\theta(t,X_t)=\mathbb{E}[X_0|X_t]$; see e.g. \citet{song2020score}.

When performing a coarse time discretization, however, this result no longer holds, and the Dirac approximation is poor. 
We propose here to learn a generative network to sample approximately from $p_{0|t}(x_0|x_t)$.

\subsection{Scoring rules}
\label{sec:proper_scoring_rules}

We next recall the framework of scoring rules, as reviewed
by \citet{GneRaf07}, and then describe the specific scoring rules
used in this work. Consider two probability distributions $p,q$. A \emph{scoring rule} $S(p,y)$ indicates
the quality of a prediction $p$ when the event $Y=y$ is
observed. The \emph{expected score} is its expectation under $q$, 
\[
S(p,q):=\mathbb{E}_{q}[S(p,Y)].\label{eq:expected_score}
\]
A scoring rule is \emph{proper} when 
$S(q,q)\ge S(p,q)$. 
It is called \emph{strictly proper} with respect to a class of
distributions $\mathcal{P}$ when the equality holds if and only if  $p=q,$ for all  $p,q\in\mathcal{P}$.
In this work we focus on the class of proper scoring rules called \emph{kernel scores} introduced by \citet[Section 5.1]{GneRaf07} which take the form
\[
S_{\rho}(p,y)=\frac{1}{2} \mathbb{E}_{p\otimes p}
[\rho(X,X')]-\mathbb{E}_{p}[\rho(X,y)],\label{eq:proper_score}
\]
where $\rho$ is a continuous negative definite kernel. 
Following \citet[Section 3.2]{bouchacourt2016disco}, we will also consider \emph{generalized kernel scores} of the form 
\[
S_{\lambda, \rho}(p,y)=\frac{\lambda}{2} \mathbb{E}_{p\otimes p}
[\rho(X,X')]-\mathbb{E}_{p}[\rho(X,y)],\label{eq:generalized_kernel_score} 
\]
with $\lambda \in [0,1]$. 
One score of particular interest is the \emph{energy score} \citep[eq. 22]{GneRaf07} denoted $S_{\beta}$,
which uses 
$\rho(x,x')=\|x-x'\|^{\beta}$ with $\beta \in (0,2)$. We denote $S_{\lambda, \beta}$, its generalized version, called the \emph{generalized energy score}. 
We will also employ $\rho = -k$ with $k$ a continuous positive definite kernel, see e.g. \cite{BerTho04} and \Cref{sec:notation} for a definition. Similarly to \eqref{eq:expected_score}, we define the \emph{(generalized) expected energy score} and \emph{(generalized) expected kernel score}.

Positive definite kernels are said to be characteristic to $\mathcal{P}$
when $S_{\rho}$ yields a strictly proper scoring rule on $\mathcal{P}$
\citep{SriGreFukLanetal10,SriFukLan11}. While the exponentiated quadratic
(\rbf) kernel satisfies this property, we will also investigate other kernels such as the inverse multiquadratic (\imq) kernel and the exponential (\expk) kernel,
\begin{subequations} \label{eq:kernels_specific}
\begin{align}
k_{\mathrm{\imq}}(x,x') & =(\|x-x'\|^{2}+c)^{-1/2},\label{eq:imq_k}\\
k_{\rbf}(x,x') & =\mathrm{exp}\left[-\|x-x'\|^2/2\sigma^2\right],\label{eq:rbf_k} \\
k_{\expk}(x,x') & =\mathrm{exp}\left[-\|x-x'\|/\sigma\right].\label{eq:exp_k}
\end{align}
\end{subequations}
Any proper scoring rule $S_\rho$ defines a divergence 
\begin{equation}\label{eq:divergence}
D_\rho(p,q) := S(q,q) - S(p,q).
\end{equation}
When $S=S_\rho$ and $\rho(x,x') = -k(x,x')$ with $k$ a positive definite kernel, we recover the squared Maximum Mean Discrepancy ($\MMD^2$) \cite{gretton2012kernel}. For $\rho(x,x')=\|x-x'\|^{\beta}$, we obtain the \emph{energy distance} \cite{rizzo2016energy}. The relation between these discrepancies was established by \citet{sejdinovic2013equivalence} where the specific positive definite kernel family for which the MMD and energy distance are equivalent is described.

Letting $\beta \to 2$ for the energy distance, we obtain $D_\rho(p,q) = \| \mathbb{E}_p[X] - \mathbb{E}_q[X] \|^2,$ resembling the integrand of \eqref{eq:diffusionloss}. Indeed, up to a constant $C$ independent of $\theta$, letting $p=\updelta_{\hat{x}_\theta(t,X_t)}$ and $q=p_{0|t}(\cdot|X_t)$, we get
\begin{equation}
\label{eq:recover_diffusion}
    \mathbb{E}_{p_t}[D_\rho(p,q)] = \mathbb{E}[\| X_0 - \hat{x}_{\theta}(t, X_t) \|^2] + C ,
\end{equation}
see \Cref{sec:correspondence_between_discrepancy_and_diffusion} for a proof. 
In the case of $\rho=-k$ with $k$ given by \eqref{eq:imq_k} or \eqref{eq:rbf_k}, the relation between $D_{\rho}$ and the diffusion loss \eqref{eq:diffusionloss} is less immediate
than for the energy distance. However the connection becomes apparent
by keeping the leading terms in the Taylor expansions of the kernels,
as shown in \Cref{prop:diffusion_compatible}.

\section{Distributional Diffusion Models}
\label{sec:method}

We now introduce Distributional Diffusion Models. At a high level, we replace the \emph{regression}
loss \eqref{eq:diffusionloss}, used to learn approximation of the conditional mean
$\mathbb{E}[X_{0}|X_t]$
in diffusion models, with a  loss based on \emph{scoring rules}, to learn an approximation $p_{0\vert t}^{\theta}(x_{0}\vert x_{t})$ of $p_{0|t}(x_0|x_t)$. This is achieved by learning a generative network $\hat{x}_{\theta}(t,x_{t},\xi)$ which aims to produce samples $X_{0} \sim p_{0\vert t}^{\theta}(\cdot \vert x_{t})$. This model takes as input a time $t \in [0,1]$, a noisy sample $x_t$ and a Gaussian noise sample $\xi\sim\mathcal{N}(0,\Id)$. Sampling from the noise $\xi$ allows us to \emph{sample} from this model (as in a GAN).
The model's objective is to approximate the full distribution $p_{0|t}(x_0|x_t)$, i.e., $\hat{x}_{\theta}(t,X_{t},\xi) \overset{d}{=} X_0 | X_t, t$ and $\xi \sim \mathcal{N}(0, \Id)$.
Whenever the model $\hat{x}_{\theta}(t,X_{t},\xi)$ is used, we refer to such a method as \emph{distributional}. Using this model allows us to sample from diffusion models with fewer steps than classical diffusion models, leveraging \eqref{eq:backward_kernel}.

\textbf{Conditional Generalized Energy Score.} We base our training loss on \emph{generalized energy scores} as introduced in \Cref{sec:proper_scoring_rules}. For each $t,x_0,x_t$, we consider a \emph{conditional} generalized energy score with $\beta \in (0,2]$ and $\lambda \in [0,1]$
\begin{align}
\textstyle
S_{\lambda, \beta}(p_{0\vert t}^{\theta}(\cdot|x_t), x_0) &=  
\frac{\lambda}{2} \mathbb{E}_{p_{0\vert t}^{\theta}(\cdot|x_t)\otimes p_{0\vert t}^{\theta}(\cdot|x_t)}[\|X-X'\|^{\beta}] \\
 & \quad -\mathbb{E}_{p_{0\vert t}^{\theta}(\cdot|x_t)}[\|X-x_0\|^{\beta}] .
 \label{eq:energyScoreConditional}
 \end{align}
The conditional energy score was first proposed for training conditional generative models by \citet{bouchacourt2016disco}, and its use in extrapolation and model estimation was established by \citet{shen2024engression} in the case $\lambda=\beta=1$.
We emphasize that \eqref{eq:energyScoreConditional} is \emph{conditional}, since it compares the conditional distribution $p_{0\vert t}^{\theta}(\cdot|x_t)$ to $x_0$. We recall that this scoring rule is strictly proper when $\lambda=1$ and $\beta \in (0,2)$. The hyperparameter $\lambda$ allows us to control the trade-off between diversity (first ``interaction" term)
and accuracy (second ``confinement" term).

\textbf{Energy diffusion loss.} Our final expected loss integrates the \emph{conditional} \emph{generalized} energy score over both the noise level $t\in[0,1],$ \emph{and} the samples from the dataset, $X_0 \sim p_0$, together with noisy samples $X_t \sim p_{t\vert 0}(\cdot \vert X_0)$ so that $(X_0,X_t)\sim p_{0,t}$. This gives the \emph{energy diffusion loss}
\begin{equation}\label{eq:energyloss}
\mathcal{L}(\theta)=-\int_{0}^{1} w_{t} \mathbb{E}_{p_{0,t}} \left[ S_{\lambda, \beta}(p_{0\vert t}^{\theta}(\cdot|X_t),X_0) \right]\rmd t,
\end{equation}
where $w_{t}$ is a user-defined weighting function.
The loss~\eqref{eq:energyloss} has the following remarkable property: when we set $\beta \to 2$ and $\lambda \to 0$, we recover the classical regression diffusion loss \eqref{eq:diffusionloss}.
When $\beta \in (0, 2)$ and $\lambda=1$ we obtain an
(integrated) strictly proper expected \emph{conditional} energy score.
By selecting $\lambda \in (0, 1)$ and $\beta \in (0,2]$, we can interpolate between these two cases. 
Our approach shares with \citep{bouchacourt2016disco,gritsenko2020spectral,chen2024generative,Pacchiardi2024,shen2024engression} the conditional energy score \eqref{eq:energyScoreConditional}, but differs in that these earlier works learned only a {\em single} conditional distribution, whereas we integrate over many different noise levels.

\textbf{Empirical energy diffusion loss.} 
For samples $\{X^i_0\}_{i=1}^n \overset{\mathrm{i.i.d.}}{\sim} p_0$ from the training set and corresponding noise levels $\{t_{i}\}_{i=1}^n \overset{\mathrm{i.i.d.}}{\sim}\mathcal{U}[0,1]$, we sample noisy data points $X_{t_{i}}^{i} | X_{0}^{i}, t_i \sim p_{t_i | 0}(\cdot | X_0^i)$ from the forward diffusion process~\eqref{eq:interpolation} with noise schedule~\eqref{eq:flow_matching_noise_schedule}. Then, for each pair $(X_{0}^{i},X_{t_{i}}^{i})$, we sample $m$ Gaussian noise samples $\xi^j_i\sim\mathcal{N}(0,\Id)$, $j\in[m]$ which allows us to compute $\hat{x}_{\theta}(t_{i},X_{t_i},\xi_i^j)$. We then obtain the following empirical energy diffusion loss
\begin{align}\label{eq:lossdistributional}
 & \mathcal{L}_{n,m}(\theta){ =\frac{1}{nm}\sum_{i,j=1}^{n,m}w_{t_{i}}\Bigg[\left\| X_{0}^{i}-\hat{x}_{\theta}(t_{i},X_{t_i},\xi_i^j)\right\| ^{\beta}}\\
 & \ \frac{-\lambda}{2(m-1)}\sum_{j'\neq j,}^{m}\left\Vert \hat{x}_{\theta}(t_{i},X_{t_{i}},\xi_{i}^{j})-\hat{x}_{\theta}(t_{i},X_{t_{i}},\xi_{i}^{j'})\right\Vert ^{\beta}\Bigg].
\end{align}

In Section~\ref{sec:theory}, we discuss an alternative training loss using the \emph{joint} \emph{generalized} energy score which would focus on $p_{0, t}^{\theta}$ and show that its empirical version suffers from much higher variance.  See also Appendix~\ref{app_sec:different_learning_objectives} for a longer discussion.

\textbf{Kernel diffusion loss.}
Similarly, we can define a \emph{kernel diffusion loss} by replacing $\| x-x' \|^\beta$ with a characteristic kernel $-k(x,x')$, such as \eqref{eq:imq_k} or \eqref{eq:rbf_k}, in~\eqref{eq:energyScoreConditional} and \eqref{eq:energyloss}. As for the energy score, we can  recover the diffusion loss \eqref{eq:diffusionloss} for $k_{\mathrm{\imq}}$ and $k_{\mathrm{\rbf}}$ by identifying the leading terms in the Taylor expansion of these kernels.  We refer the reader to \Cref{sec:diffusion_compatible} for more details.

\begin{figure}
    \centering
    \begin{subfigure}
        \centering
        \includegraphics[width=.48\linewidth]{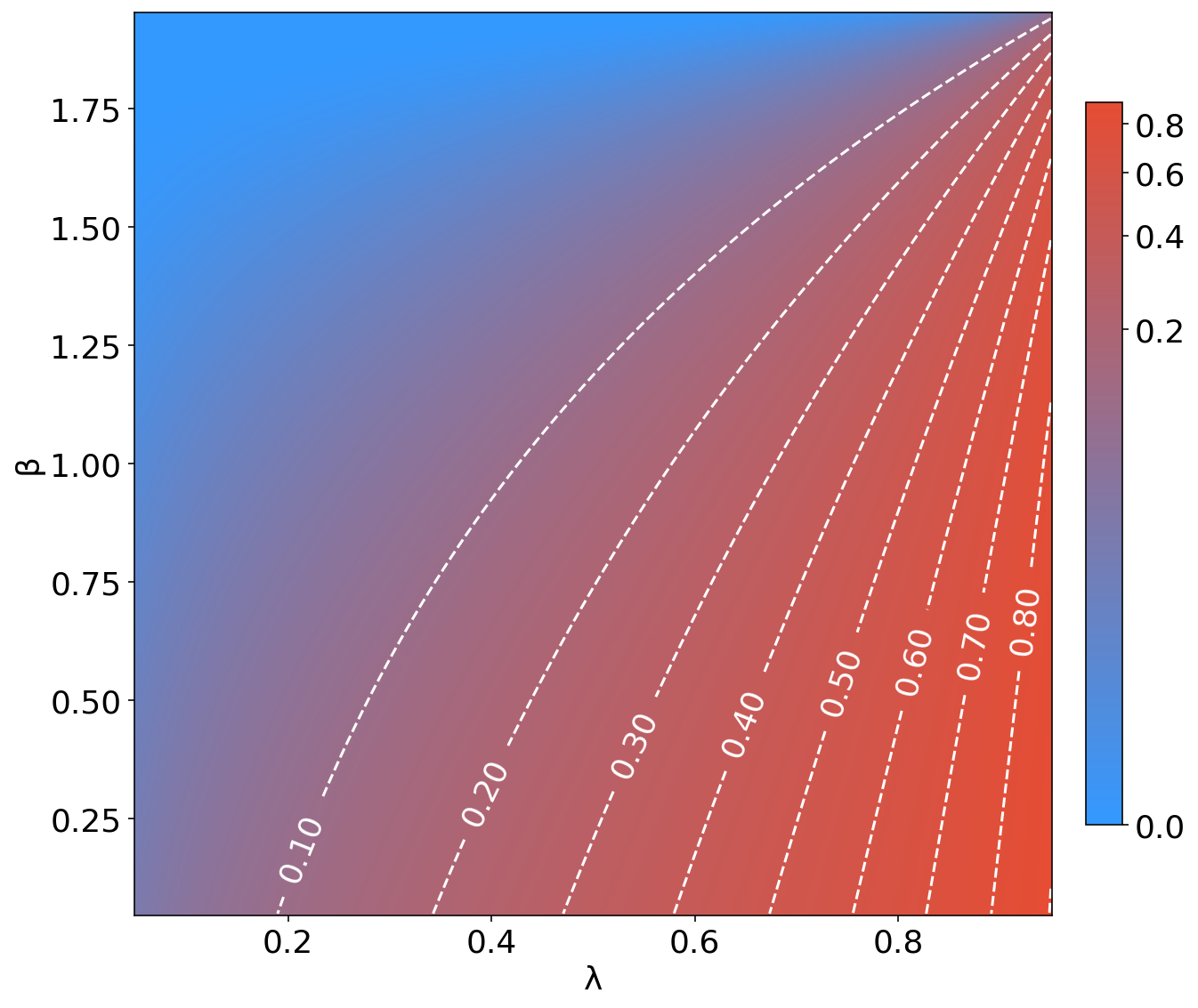}
        \label{fig:beta_evolution}
    \end{subfigure}
    \hfill
    \begin{subfigure}
        \centering
        \includegraphics[width=.48\linewidth]{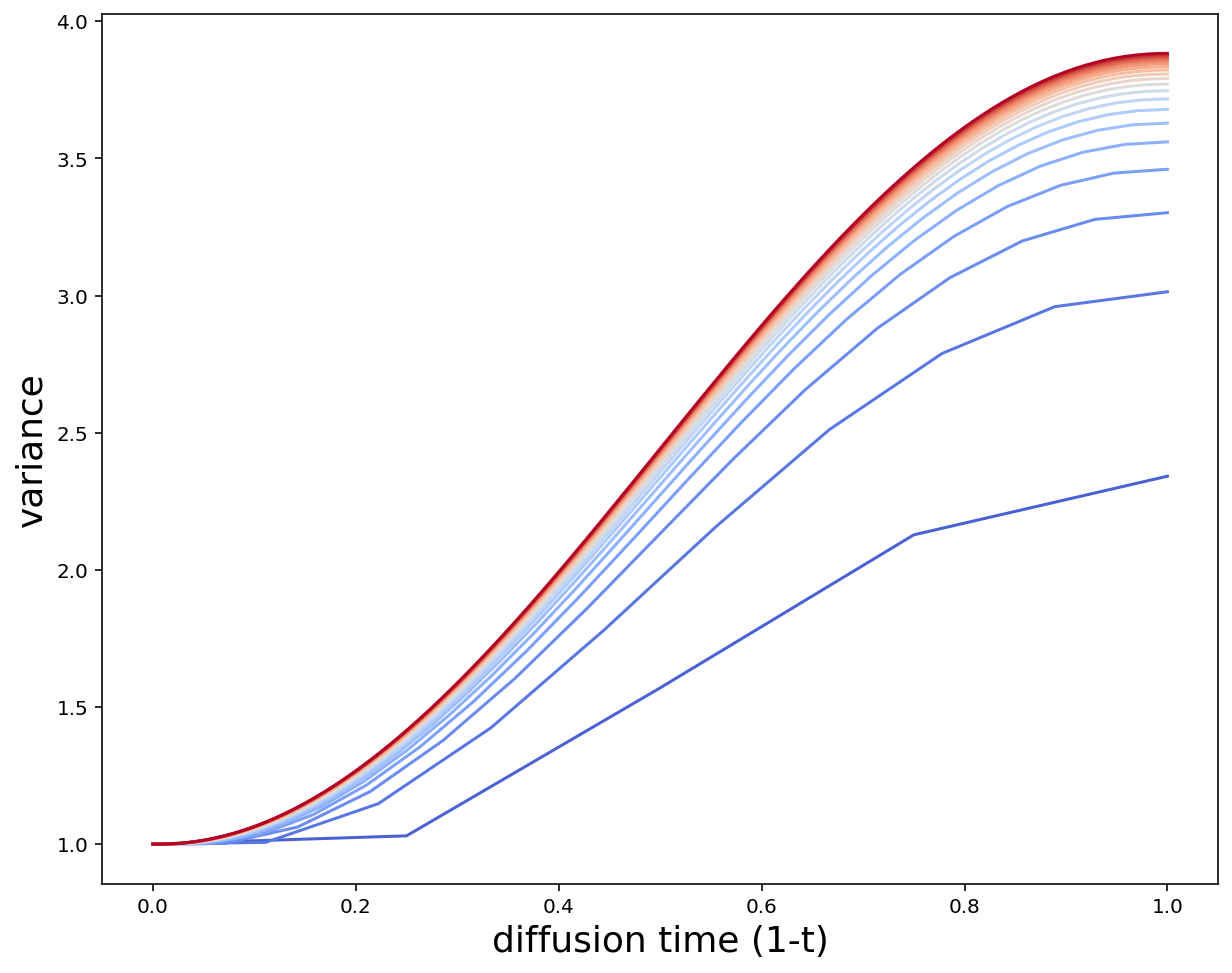}
        \label{fig:lambda_evolution}
    \end{subfigure}
    \caption{Left: variance reduction factor $f(\lambda, \beta)$ as a function of $\beta \in [0,2]$ and $\lambda \in [0,1]$. Right: evolution of the covariance during the sampling of distributional  diffusion model for a Gaussian target $p_0 = \mathcal{N}(0, 4 \Id)$ with a number of denoising steps $N \in \{5, 10, \dots, 95, 100\}$, $\lambda=0.5,\beta=0.2$. As $N$ increases (the color changes from blue to red), the output variance gets closer to $4$.}
    \label{fig:overall}
    \label{fig:stepsize_matters}
\end{figure}

\textbf{Architecture and training.}
Unlike standard diffusion models, the neural network $\hat{x}_{\theta}(t,x_{t},\xi)$ takes not only $t$ and $x_t$ as input, but also $\xi \sim \mathcal{N}(0, \Id)$ to obtain an approximate sample from $p_{0|t}(x_0|x_t)$. For simplicity, in image generation, we concatenate $[x_t, \xi]$ along the channel dimension without modifying the rest of the architecture, see \Cref{app_sec:experimental_details} for more details, and
\Cref{alg:training_diffusion} for the full training algorithm.

\begin{algorithm}
\caption{Distributional Diffusion Model (training)}\label{alg:training_diffusion}
\begin{algorithmic}
\Require $M$ training steps, schedule $(\alpha_t, \sigma_t)$,  distribution $p_0$, weights $\theta_0$, batch size $n$, population size $m$
\For{$k=1:M$}
\State Sample $t_i \sim \mathrm{Unif}([0,1])$~ for $i\in[n]$
\State Sample $X_0^{i}  \overset{\textup{i.i.d.}}{\sim}  p_0$~ for $i\in[n]$
\State Sample $X_{t_i}^{i} \sim p_{t_i \vert 0}(\cdot | X_0^{i})$~for $i\in[n]$ using~\eqref{eq:interpolation}
\State Sample $\xi_{i,j} \sim \mathcal{N}(0, \Id)$ for $i\in[n],j\in[m]$
\State Set $\theta_{k} = \theta_{k-1} - \delta \nabla \mathcal{L}_{n,m}(\theta)|_{\theta_{k-1}}$ using \eqref{eq:lossdistributional}
\EndFor
\State \textbf{Return:} $\theta_{M}$
\end{algorithmic}
\end{algorithm}

Once again, we emphasize that setting $\lambda = 0$ and $\beta = 2$ in \Cref{alg:training_diffusion} recovers the classical diffusion model loss. 

\textbf{Sampling.}
Once we have trained our model $\hat{x}_{\theta}$, we generate samples using \Cref{alg:sampling_diffusion}. This procedure is very similar to DDIM \citep{song2020denoisingimplicit}.  The only difference is that $\hat{x}_{\theta}$ now outputs an approximate \emph{sample} from $p_{0|t}(x_0|x_t)$ instead of an approximation of $\mathbb{E}[X_0|X_t=x_t]$.

\begin{algorithm}
\caption{Distributional Diffusion Model (sampling)}\label{alg:sampling_diffusion}
\begin{algorithmic}
\Require $\{t_k\}_{k=0}^N$ with $t_0 = 0 < \dots < t_N = 1$, churn parameter $\vareps$ 
\State Sample $X_{t_N} \sim \mathcal{N}(0, \Id)$
\For{$k \in \{N-1, \dots, 0\}$}
\State Sample $\xi \sim \mathcal{N}(0, \Id)$
\State Sample $Z \sim \mathcal{N}(0, \Id)$
\State Set $\hat{X}_0 = \hat{x}_{\theta}(t_{k+1}, X_{t_{k+1}}, \xi)$
\State Compute $\mu_{t_k, t_{k+1}}(\hat{X}_0, X_{t_{k+1}})$,  $\Sigma_{t_k, t_{k+1}}^{1/2}$ using~\eqref{eq:mean_sigma}
\State Set $X_{t_k} = \mu_{t_k, t_{k+1}}(\hat{X}_0, X_{t_{k+1}}) + \Sigma_{t_k, t_{k+1}}^{1/2} Z$
\EndFor
\State \textbf{Return:} $X_0$
\end{algorithmic}
\end{algorithm}

\section{Theoretical analysis}
\label{sec:theory}

In this section, we provide some theoretical understanding of the generalized scoring rules introduced in \Cref{sec:method}.

\subsection{Gaussian analysis}
\label{sec:gaussian}
We shed more light on the two main hyperparameters of the generalized kernel score introduced in \Cref{sec:proper_scoring_rules}:  
\begin{enumerate*}[label=(\roman*)]
    \item the kernel parameter ($\beta$ in the case of the energy score);
    \item the trade-off term $\lambda$.
\end{enumerate*}
We focus here on the case of  \emph{generalized energy score} defined in \eqref{eq:generalized_kernel_score}.
\begin{proposition}{}{reduction_variance}
Assume that $p = \mathcal{N}(\mu, \sigma^2 \Id)$ for $\mu \in \rset^d$ and $\sigma >0$. Then, for any $\lambda \in [0,1]$ and $\beta \in [0,2]$, we have that  $q^\star_{\lambda, \beta} = \mathcal{N}(\mu_\star, \sigma_\star^2 \Id)$ maximizes $S_{\lambda, \beta}(p, q)$ defined by~\eqref{eq:generalized_kernel_score} among all Gaussian distributions $q$, with parameters
 \begin{equation}
    \mu_\star = \mu , \qquad \sigma_\star^2 = (2 \lambda^{-2 / (2 - \beta)} - 1)^{-1} \sigma^2 . 
\end{equation}
\end{proposition}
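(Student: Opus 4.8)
The plan is to reduce everything to a one-dimensional computation by exploiting the rotational symmetry and separability of both the Gaussian and the energy kernel $\rho(x,x') = \|x-x'\|^\beta$. First I would compute the expected generalized energy score $S_{\lambda,\beta}(p,q)$ when $p = \mathcal{N}(\mu,\sigma^2\Id)$ and $q = \mathcal{N}(\nu, \tau^2\Id)$ range over Gaussians. The two terms involve $\mathbb{E}\|X-X'\|^\beta$ for $X,X' \sim q$ independent (the interaction term) and $\mathbb{E}\|X - Y\|^\beta$ for $X \sim q$, $Y \sim p$ independent (the confinement term, after noting that $\mathbb{E}_{p_{0|t}}$ in the definition becomes an expectation over $Y\sim p$ once we average the score over $y \sim p$). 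In both cases the difference of two independent Gaussians is again Gaussian: $X - X' \sim \mathcal{N}(0, 2\tau^2 \Id)$ and $X - Y \sim \mathcal{N}(\nu - \mu, (\tau^2+\sigma^2)\Id)$. So I need the quantity $g(a,b) := \mathbb{E}\|a + bG\|^\beta$ where $G \sim \mathcal{N}(0,\Id)$ in $\rset^d$, as a function of the mean-shift $a$ and the scale $b$.

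The key structural observation is the homogeneity $g(0, b) = b^\beta \, c_{d,\beta}$ where $c_{d,\beta} = \mathbb{E}\|G\|^\beta$ is a constant depending only on $d$ and $\beta$; and that for the confinement term, $\mathbb{E}\|(\nu-\mu) + \sqrt{\tau^2+\sigma^2}\,G\|^\beta$ is minimized over $\nu$ at $\nu = \mu$ (this follows because $h(a) := \mathbb{E}\|a + bG\|^\beta$ is a convex function of $a$ — it is an average of the convex functions $a \mapsto \|a + bg\|^\beta$ for $\beta \ge 1$, and for $\beta \in (0,1)$ one can still argue it is Schur-convex / minimized at $0$ by a symmetrization argument pairing $g$ with $-g$, or simply by noting $a=0$ is the unique critical point of a function that is rotationally symmetric and increasing in $\|a\|$). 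Granting that, the optimal $q^\star$ has $\mu_\star = \mu$, and the problem collapses to maximizing over $\tau > 0$ the scalar function
\begin{equation*}
\phi(\tau) = \frac{\lambda}{2} (2\tau^2)^{\beta/2} c_{d,\beta} - (\tau^2 + \sigma^2)^{\beta/2} c_{d,\beta}.
\end{equation*}
Dividing by $c_{d,\beta}$ and writing $u = \tau^2$, I maximize $\psi(u) = \frac{\lambda}{2}(2u)^{\beta/2} - (u+\sigma^2)^{\beta/2}$; setting $\psi'(u) = 0$ gives $\frac{\lambda\beta}{2}\cdot 2^{\beta/2} u^{\beta/2 - 1} = \frac{\beta}{2}(u+\sigma^2)^{\beta/2-1}$, i.e. $\lambda \cdot 2^{\beta/2}\, u^{\beta/2-1} = (u+\sigma^2)^{\beta/2-1}$. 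Raising both sides to the power $1/(\beta/2 - 1) = 2/(\beta-2)$ and rearranging, $\lambda^{2/(\beta-2)} \cdot 2^{\beta/(\beta-2)} u = u + \sigma^2$, whence $u\big(\lambda^{2/(\beta-2)} 2^{\beta/(\beta-2)} - 1\big) = \sigma^2$. A short manipulation of the exponents — writing $\lambda^{2/(\beta-2)} = \lambda^{-2/(2-\beta)}$ and checking the power of $2$ combines to give the stated form — yields $\sigma_\star^2 = u = (2\lambda^{-2/(2-\beta)} - 1)^{-1}\sigma^2$, matching the claim. I would then confirm this critical point is a maximum (second-derivative sign, or behavior of $\psi$ at the endpoints $u \to 0^+$ and $u \to \infty$) and check the boundary cases $\lambda = 1$ (giving $\sigma_\star = \sigma$, consistent with strict propriety) and $\beta \to 2$ or $\lambda \to 0$ separately since the exponents degenerate there.

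The main obstacle I anticipate is the convexity/minimization-in-the-mean step for the full range $\beta \in (0,2]$: for $\beta \ge 1$ convexity of $a \mapsto \|a+bg\|^\beta$ is immediate, but for $\beta \in (0,1)$ the map $x \mapsto \|x\|^\beta$ is not convex, so I would instead rely on the fact that $\rho(x,x') = \|x-x'\|^\beta$ is a negative definite kernel (stated in the excerpt) together with the translation structure: the confinement term equals, up to constants, a squared maximum-mean-discrepancy-type quantity that is minimized when the means align, or more concretely use the symmetrization $\mathbb{E}\|a+bG\|^\beta = \frac12\mathbb{E}\big(\|a+bG\|^\beta + \|{-a}+bG\|^\beta\big)$ and a pointwise inequality $\|a+v\|^\beta + \|-a+v\|^\beta \ge 2\|v\|^\beta$ valid for $\beta \in (0,1]$ as well (which holds because $t \mapsto t^{\beta/... }$-type concavity works in our favor here — this needs a careful lemma). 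A cleaner route, which I would probably adopt in the writeup, is to note that everything is rotationally invariant so we may take $\mu = 0$ without loss of generality and $\nu$ along a fixed axis; then differentiate $\phi$ in $\nu$ and show the derivative vanishes only at $\nu = 0$ and the function grows at infinity. Once the mean is pinned, the remaining scalar optimization above is routine calculus.
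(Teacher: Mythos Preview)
Your approach is essentially the same as the paper's: write the score in terms of $\mathbb{E}\|\mu-\tilde\mu+(\sigma^2+\tilde\sigma^2)^{1/2}Z\|^\beta$ and $\mathbb{E}\|\sqrt{2}\tilde\sigma Z\|^\beta$, optimize the mean first via a symmetry argument, then do the one-variable calculus in $\tilde\sigma^2$. The paper isolates the mean step as a separate lemma (that $0\in\argmin_c\mathbb{E}\|X-c\|^\beta$ for isotropic Gaussian $X$), proved by exactly the symmetry idea you sketch (if $c^\star$ is a minimizer then so is $-c^\star$, and one rules out $c^\star\neq 0$); your discussion of the $\beta\in(0,1)$ case is in fact more careful than the paper's on this point.

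One arithmetic slip to fix in your write-up: when you differentiate $\psi(u)=\tfrac{\lambda}{2}(2u)^{\beta/2}-(u+\sigma^2)^{\beta/2}$ you expand $(2u)^{\beta/2-1}$ as $2^{\beta/2}u^{\beta/2-1}$, but it is $2^{\beta/2-1}u^{\beta/2-1}$. With the correct power, raising $\lambda\,2^{\beta/2-1}u^{\beta/2-1}=(u+\sigma^2)^{\beta/2-1}$ to the $2/(\beta-2)$ gives $2\lambda^{-2/(2-\beta)}u=u+\sigma^2$ directly, and the claimed $\sigma_\star^2$ follows; with your exponent the factor of $2$ would come out wrong.
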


A notable outcome of maximizing the generalized energy score is that 
$\mu^\star = \mu$ no matter the value of $\lambda$ and $\beta$. With $\lambda=1$ and $\beta < 2$, we recover as expected the correct variance, since $S_{\lambda, \beta}$ is a strictly proper energy score. As soon as $\lambda <1$, the variance $\sigma^2$ is underestimated by a factor $f(\lambda, \beta) = (2 \lambda^{-2 / (2 - \beta)} - 1)^{-1}$; see \Cref{fig:overall}, left.

We now consider the effect of these hyperparameters when sampling from the corresponding generative model using \Cref{alg:sampling_diffusion}.  For simplicity, we let the churn parameter $\vareps=1$ but our analysis also applies to $\vareps \in [0,1)$. We consider a Gaussian target $p_0 = \mathcal{N}(\mu, \sigma^2 \Id)$. In that case, $p_{0|t}(x_0|x_t)$ is also Gaussian for any $t \in [0,1]$. Therefore if this density were estimated using the generalized energy score, then the results of \Cref{prop:reduction_variance} apply. In particular, this implies that the induced estimate of the Gaussian distribution $p_{s|t}$ obtained by plugging our approximation of $p_{0|t}$ into \eqref{eq:backward_kernel} would have the correct mean but an incorrect variance for $\lambda<1$.

As expected, differences between classical diffusion models and distributional diffusion models arise when considering larger discretization stepsizes, where the Dirac approximation of $p_{0|t}(x_{0}|x_t)$ is no longer valid, see \Cref{fig:stepsize_matters}, right.

\subsection{Joint or conditional scoring rules}\label{subsec:jointvsconditionaltraining}
Given two unbiased estimators $\hat{A}$ and $\hat{B}$ of $A$ and $B$, we say that $A$ is easier to approximate than $B$ if $\SNR(\hat{A}) \geq \SNR(\hat{B})$ where, for an unbiased estimate $\hat{C}$, $\SNR(\hat{C}): = \mathbb{E}[\hat{C}]^2 / \mathrm{Var}(\hat{C})$; i.e. the inverse relative variance of $\hat{C}$.
We investigate here one alternative to the loss \eqref{eq:energyloss} and show that its empirical version exhibits lower SNR than the empirical version of \eqref{eq:energyloss} under the simplifying assumption that  $\theta$ is chosen so that $p^\theta_{0|t}=p_{0|t}$. While the loss \eqref{eq:energyloss} leverages generalized energy score on the \emph{conditional} distribution $p_{0|t}$, we can instead leverage the generalized energy score on the \emph{joint} distribution $p_{0,t}$ to define a new loss
\begin{equation}
\label{eq:energylossjoint}
\mathcal{L}_{\mathrm{joint}}(\theta)=-\int_{0}^{1} w_{t} \mathbb{E}_{p_{0,t}} \left[ S_{\lambda, \beta}(p_{0\vert t}^{\theta} p_t,(X_0, X_t)) \right]\rmd t,
\end{equation}
where $p^\theta_{0|t}p_t$ denotes $p^\theta_{0|t}(x_0|x_t) p_t(x_t)$.
In the case of the \emph{conditional} loss \eqref{eq:energyloss}, an empirical  interaction estimate is 
\begin{equation}
    \mathcal{I}_{n,m} = \sum_{i=1}^n \sum_{j\neq j'}^{m}  \frac{\lambda}{2n(m-1)} \Delta_{i,j}  
\end{equation}
where $\Delta_{i,j} = \| \hat{x}_{\theta}(t_{i},X_{t_{i}},\xi_{i}^{j})-\hat{x}_{\theta}(t_{i},X_{t_{i}},\xi_{i}^{j'})\| ^{\beta}$ where $X_{t_i}\sim p_{t_i}, \xi^j_i \sim \mathcal{N}(0,\Id)$ for all $i,j$. In the case of the \emph{joint} loss \eqref{eq:energylossjoint}, an empirical  interaction estimate is
\begin{align}
    \mathcal{I}_{n,m, \mathrm{joint}} &= \sum_{i=1}^n \sum_{j\neq j'}^{m}  \frac{\lambda}{2n(m-1)} \left[ \| X_{t_{i}}^j - X_{t_{i}}^{j'}\| + \Delta_{i,j}' \right] ,
\end{align}
where $\Delta_{i,j}' = \| \hat{x}_{\theta}(t_{i},X_{t_{i}}^j,\xi_{i}^{j})-\hat{x}_{\theta}(t_{i},X_{t_{i}}^{j'},\xi_{i}^{j'})\|^{\beta}$. Here we have
 $X^j_{t_i}\sim p_{t_i}, \xi^j_i \sim \mathcal{N}(0,\Id)$ for all $i,j$.
We then ask whether $\SNR(\mathcal{I}_{n,m}) \geq \SNR(\mathcal{I}_{n,m, \mathrm{joint}})$.  Unfortunately, we cannot answer this in the general case. However, if $p_0= \mathcal{N}(0, \sigma^2 \Id)$, and leveraging results from U-statistics, we have the following result.
\begin{proposition}{}{variance_snr}
Let $U \sim p_U$ and $V \sim p_V$ where $p_U \propto w_t / (1 + \tfrac{\alpha_t^2 \sigma^2}{\sigma_t^2})^{1/2}$ and $p_V \propto w_t (\sigma + (\alpha_t^2 \sigma^2 + \sigma_t^2)^{1/2})$ are two distributions on $[0,1]$. Then, we have that 
\begin{align}
    &\SNR(\mathcal{I}_{n}) := \lim_{m \to + \infty} \SNR(\mathcal{I}_{n,m}) = n\SNR(U) , \\
    &\SNR(\mathcal{I}_{n, \mathrm{joint}}) :=  \lim_{m \to +\infty} \SNR(\mathcal{I}_{n, m, \mathrm{joint}}) = n \SNR(V). 
\end{align}
\end{proposition}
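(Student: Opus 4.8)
The plan is to reduce both limits to computations about the variance of a one-sample U-statistic. Fix a noise level $t$ and condition on it; under the simplifying assumption $p^\theta_{0|t} = p_{0|t}$ with $p_0 = \mathcal{N}(0,\sigma^2\Id)$, the conditional law $p_{0|t}(\cdot|x_t)$ is Gaussian with a mean that is linear in $x_t$ and a covariance that is a deterministic scalar times $\Id$; moreover $X_t \sim \mathcal{N}(0, (\alpha_t^2\sigma^2 + \sigma_t^2)\Id)$. Because the energy-score interaction terms involve only differences of the relevant random vectors, everything is governed by the distribution of a single Gaussian vector whose scale I can read off explicitly. Concretely, for the conditional estimator, $\hat x_\theta(t,X_{t},\xi^j) - \hat x_\theta(t,X_{t},\xi^{j'})$ (same $X_t$, independent $\xi$'s) is a centered Gaussian whose scale depends on $t$ through the conditional covariance, i.e. proportional to $(1 + \alpha_t^2\sigma^2/\sigma_t^2)^{-1/2}$ after the algebra of \eqref{eq:mean_sigma} with $\vareps=1$; for the joint estimator one additionally has the term $\|X_{t}^j - X_{t}^{j'}\|$ with independent $X_t$'s, which is Gaussian of scale proportional to $(\alpha_t^2\sigma^2 + \sigma_t^2)^{1/2}$, and the $\Delta'_{i,j}$ term contributes the $\sigma$ piece (the conditional spread of $\hat x$ around the two distinct, independent noisy points). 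This is where the densities $p_U$ and $p_V$ come from: the $t$-dependent scale factors, reweighted by $w_t$, are exactly $w_t/(1+\alpha_t^2\sigma^2/\sigma_t^2)^{1/2}$ and $w_t(\sigma + (\alpha_t^2\sigma^2+\sigma_t^2)^{1/2})$.

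Next I would set up the U-statistic bookkeeping. Each $\mathcal{I}_{n,m}$ is an average over $i\in[n]$ of a within-block quantity $\mathcal{I}_{n,m}^{(i)} := \frac{\lambda}{2(m-1)}\sum_{j\neq j'}^m \Delta_{i,j}$, and conditionally on $t_i$ this is (up to the factor $\lambda$) a U-statistic of order $2$ in the i.i.d. sample $\{\xi_i^j\}_{j=1}^m$ (resp. $\{(X_{t_i}^j, \xi_i^j)\}$ in the joint case) with symmetric kernel $h(\cdot,\cdot)$ equal to the pairwise energy term. The standard U-statistic limit theory gives $\mathrm{Var}(\mathcal{I}_{n,m}^{(i)} \mid t_i) \to \mathrm{Var}(g(\cdot)\mid t_i)$ as $m\to\infty$, where $g$ is the Hájek projection $g(\xi) = \mathbb{E}[h(\xi,\xi')\mid \xi]$ (the ``$\zeta_1$'' variance term; the $O(1/m)$ ``$\zeta_2$'' term vanishes in the limit). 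The block mean $\mathbb{E}[\mathcal{I}_{n,m}^{(i)}\mid t_i]$ converges to $\theta(t_i) := \mathbb{E}[h]$ at that noise level. Then, averaging over the $n$ i.i.d. blocks: $\mathbb{E}[\mathcal{I}_{n,m}] = \mathbb{E}_{t}[\theta(t)]$ (constant in $n$ after dividing by $n$), and by the law of total variance $\mathrm{Var}(\mathcal{I}_{n,m}) = \frac1n\big(\mathbb{E}_t[\mathrm{Var}(\mathcal{I}_{n,m}^{(i)}\mid t_i)] + \mathrm{Var}_t(\mathbb{E}[\mathcal{I}_{n,m}^{(i)}\mid t_i])\big)$. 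Taking $m\to\infty$ and collecting terms, $\SNR(\mathcal{I}_n) = n \cdot \mathbb{E}_t[\theta(t)]^2 / \mathrm{Var}_{\text{total}}$. The final step is to recognize that, for the energy kernel with the Gaussian scalings above, the ratio $\mathbb{E}_t[\theta(t)]^2/\mathrm{Var}_{\text{total}}$ is exactly $\SNR(U)$ for a random variable $U$ with density $p_U$ — i.e. $\theta(t)$ and the limiting conditional variance are proportional (with the same constant) to the same function of $t$, so after the $w_t$-reweighting the whole SNR collapses to the SNR of a scalar $\propto$ that function. The same computation with the joint kernel yields $\SNR(V)$.

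The main obstacle is the last identification: showing that, in the $m\to\infty$ limit, the Hájek-projection variance $\mathrm{Var}(g\mid t)$, the ``interaction'' variance contributions, and the squared mean $\theta(t)^2$ all combine so that the per-$t$ randomness is captured by a single scalar $U$ (resp. $V$) whose law is precisely $p_U$ (resp. $p_V$). This requires two things to work out cleanly: (a) for a $d$-dimensional isotropic Gaussian of scale $\tau$, the moments $\mathbb{E}\|\tau G\|^\beta$ and $\mathrm{Var}\|\tau G\|^\beta$ are both $\tau^\beta$ times a dimension-dependent constant, so all $t$-dependence enters through $\tau(t)^\beta$ with a common prefactor — hence after normalization in the SNR the $\beta$ and $d$ constants cancel and one is left with a pure reweighting; and (b) in the joint case, the kernel $h$ splits as $\|X_t^j - X_t^{j'}\| + \Delta'_{i,j}$ (note $\beta$ appears on $\Delta'$ but the cross term is written with exponent $1$ in the statement, matching the linear ``$\sigma + (\cdots)^{1/2}$'' form of $p_V$), and one must check that the Hájek projection of this sum, together with its mean, reweights to $w_t(\sigma + (\alpha_t^2\sigma^2+\sigma_t^2)^{1/2})$ after simplifying $\Sigma_{s,t}$ and the conditional-covariance algebra at $\vareps=1$. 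Verifying these algebraic identities for $\mu_{s,t}, \Sigma_{s,t}$ under the flow-matching schedule is tedious but routine; the conceptual crux is the U-statistic limit plus the observation that isotropic-Gaussian norm-power moments factor through a single scale. I would also need to be slightly careful that the $m\to\infty$ limit commutes with the various expectations — justified because the U-statistic kernel has all moments finite (Gaussian tails, $\beta\le 2$) uniformly over $t$ in compact sets with the bounded weight $w_t$.
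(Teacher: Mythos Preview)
Your overall architecture is right --- law of total variance across the $n$ blocks, U-statistic structure within each block, then explicit Gaussian scaling --- and this is exactly the route the paper takes. But you have one genuine error and one sloppy conditioning choice, and the error is what makes your ``main obstacle'' look hard when in fact it is not.

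The error is the claim that the standard U-statistic limit theory gives $\mathrm{Var}(\mathcal{I}_{n,m}^{(i)}\mid t_i)\to \mathrm{Var}(g(\cdot)\mid t_i)$. Hoeffding's decomposition says $\mathrm{Var}(U_m)=\frac{2}{m(m-1)}\bigl[2(m-2)\zeta_1+\zeta_2\bigr]$, so the conditional U-statistic variance is $O(1/m)$ and tends to \emph{zero}, not to $\zeta_1$. In your total-variance decomposition $\mathrm{Var}(\mathcal{I}_{n,m})=\frac{1}{n}\bigl(\mathbb{E}[\mathrm{Var}(\mathcal{I}_{n,m}^{(i)}\mid\cdot)]+\mathrm{Var}(\mathbb{E}[\mathcal{I}_{n,m}^{(i)}\mid\cdot])\bigr)$, the first term therefore vanishes as $m\to\infty$, and only the second survives: $\lim_{m\to\infty}\mathrm{Var}(\mathcal{I}_{n,m})=\frac{1}{n}\mathrm{Var}(\theta(\cdot))$. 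This is exactly the content of the paper's \Cref{prop:appendix_main_variance}. With the spurious $\zeta_1$ term removed, your ``obstacle'' disappears: the limiting $\SNR$ is simply $n\cdot \mathbb{E}[\theta]^2/\mathrm{Var}(\theta)$, i.e.\ $n$ times the $\SNR$ of the scalar $\theta(t)$ under the relevant randomness, and no delicate combination of Hájek variance with squared means is needed.

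The conditioning issue: for the \emph{conditional} estimator you condition only on $t_i$ and call the inner sum a U-statistic in the $\xi_i^j$. It is not --- the $\Delta_{i,j}$ all share the single random $X_{t_i}$, so the summands are not functions of i.i.d.\ pairs. The correct conditioning (and the one the paper uses) is on $(t_i,X_{t_i})$; then the inner sum is a bona fide U-statistic in $\{\xi_i^j\}$. In the Gaussian case this subtlety is invisible in the answer, because the conditional covariance of $p_{0|t}(\cdot\mid x_t)$ is $\sigma^2 u_t/(1+u_t)\,\Id$ with $u_t=\sigma_t^2/(\alpha_t^2\sigma^2)$, independent of $x_t$; hence $\mathbb{E}[\Delta\mid t,X_t]$ depends on $t$ alone and $\mathrm{Var}_{t,X_t}(\cdot)=\mathrm{Var}_t(\cdot)$. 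That is precisely why both limiting $\SNR$s reduce to the $\SNR$ of a deterministic function of $t$ under $t\sim\mathrm{Unif}([0,1])$, namely $w_t(u_t/(1+u_t))^{1/2}$ and $w_t(\sigma+(\alpha_t^2\sigma^2+\sigma_t^2)^{1/2})$ respectively --- which are the $p_U$ and $p_V$ of the statement.
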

In practice, we consider the sigmoid weighting scheme $w_t = (1 + \exp[b-\log(\alpha_t^2 / \sigma_t^2)])^{-1}$ \citep{kingma2021variational,hoogeboom2023simple}, where $b \in \rset$ is some bias. 
The $\SNR$ of $U,V$ can easily be computed and we observe that, across a large range of values of $\sigma$ and bias values $b$, we indeed have $\SNR(\mathcal{I}_{n,m}) \geq \SNR(\mathcal{I}_{n,m, \mathrm{joint}})$ in the large $m$ limit, see \Cref{fig:snr_study_fig}.

\begin{figure}
    \centering
        \centering
        \includegraphics[width=.7\linewidth]{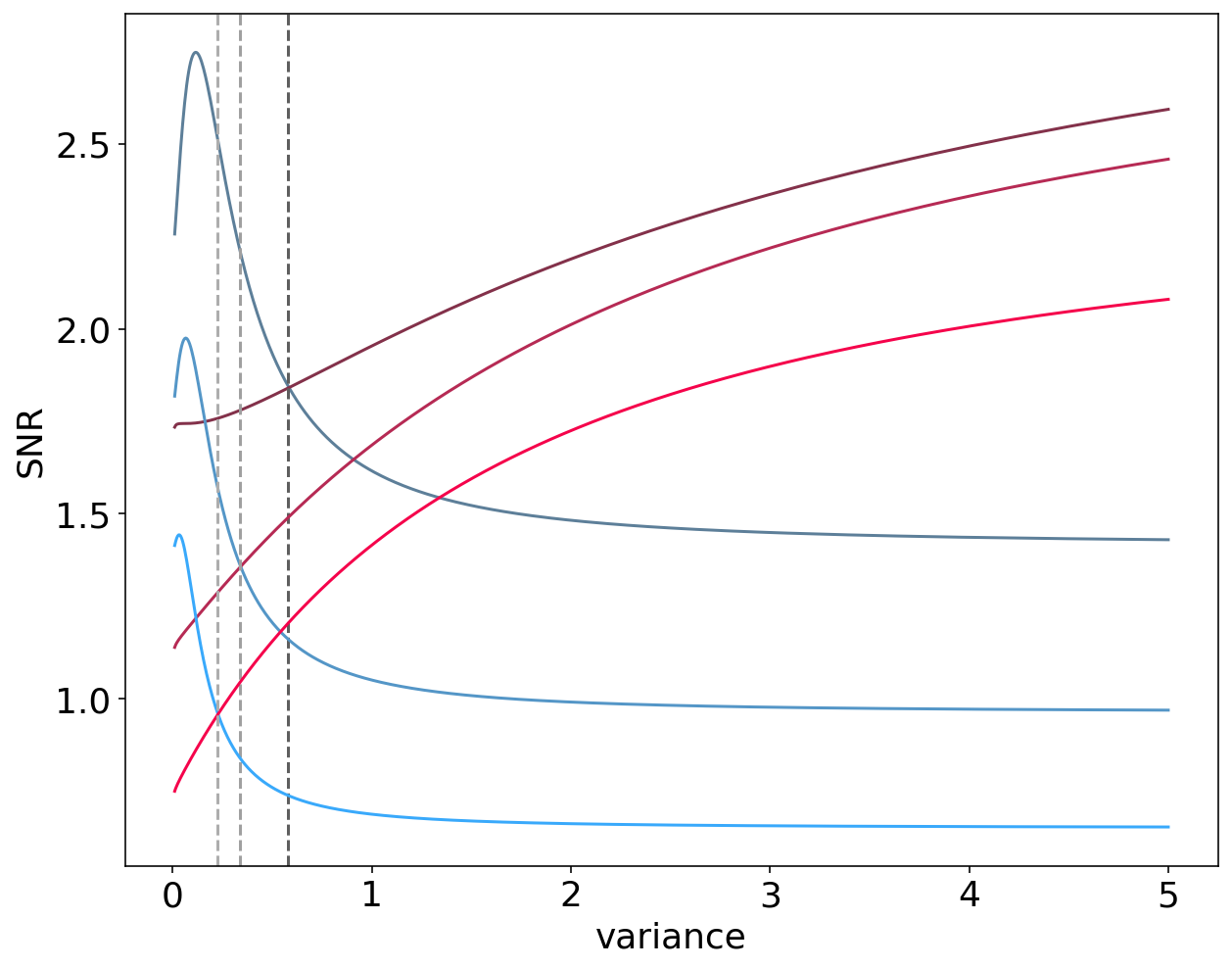}
    \caption{$ \SNR(\mathcal{I}_{n})$ (red) and $\SNR(\mathcal{I}_{n, \mathrm{joint}})$ (blue) w.r.t. $\sigma^2$ ($x$-axis) for 3 possible bias levels $b\in \{0,1,2\}$ (from dark to light). Vertically, we plot  $(\sigma_b^\star)^2$ such that for the bias $b$, we have $\SNR(\mathcal{I}_{n}) \geq \SNR(\mathcal{I}_{n,\mathrm{joint}})$ for $\sigma \geq \sigma_b^\star$.}
    \label{fig:snr_study_fig}
\end{figure}

\subsection{From kernel scores to diffusion losses}
\label{sec:diffusion_compatible}
We have shown in eq. \eqref{eq:recover_diffusion} of  \Cref{sec:proper_scoring_rules} that the energy diffusion loss \eqref{eq:lossdistributional} recovers the diffusion loss \eqref{eq:diffusionloss} when $\beta \to 2$. In that case, we say that the scoring rule is \emph{diffusion compatible}. A natural question to ask is whether other scoring rules are diffusion compatible: i.e., given $\rho_c$ with $c \in \rset$ a hyperparameter of the kernel, that there exists $c^\star \in [-\infty, +\infty]$ and $f: \rset \to \rset$ such that 
\begin{equation}
    \lim_{c \to c^\star} f(c) D_{\rho_c}(p, q) = \| \mathbb{E}_p[X] - \mathbb{E}_q[X] \|^2 . 
\end{equation}
The following result shows that $k_{\imq}$ and $k_{\rbf}$ can also recover the diffusion loss.
\begin{proposition}{}{diffusion_compatible}
Assume that $\rho = -k$ with $k$ given by \eqref{eq:imq_k} or \eqref{eq:rbf_k}. Then the scoring rule is diffusion compatible. 
\end{proposition}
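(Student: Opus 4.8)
The plan is to expand each kernel in its scale hyperparameter about $c^\star = +\infty$, note that the leading (constant) term cancels inside the discrepancy $D_{-k}$, and verify that the next-order term is exactly a scalar multiple of $\|\mathbb{E}_p[X] - \mathbb{E}_q[X]\|^2$. Two ingredients are needed up front. First, combining \eqref{eq:divergence} with \eqref{eq:proper_score} for $\rho = -k$, $k$ positive definite, one gets $D_{-k}(p,q) = \tfrac12 \mathbb{E}_{p\otimes p}[k(X,X')] + \tfrac12 \mathbb{E}_{q\otimes q}[k(Y,Y')] - \mathbb{E}_{p\otimes q}[k(X,Y)] = \tfrac12 \MMD^2_k(p,q)$. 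Second, the elementary identity already used before \eqref{eq:recover_diffusion}, namely $\mathbb{E}_{p\otimes q}\|X-Y\|^2 - \tfrac12 \mathbb{E}_{p\otimes p}\|X-X'\|^2 - \tfrac12 \mathbb{E}_{q\otimes q}\|Y-Y'\|^2 = \|\mathbb{E}_p[X] - \mathbb{E}_q[X]\|^2$, which follows from $\|x-y\|^2 = \|x\|^2 + \|y\|^2 - 2\langle x,y\rangle$ and linearity of expectation.

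For $k_{\rbf}$ with hyperparameter $\sigma$, I would write $k_{\rbf}(x,x') = 1 - \|x-x'\|^2/(2\sigma^2) + \epsilon_\sigma(x,x')$, where by Taylor--Lagrange applied to $v \mapsto e^{-v}$ (whose second derivative is at most $1$ on $[0,\infty)$) the remainder obeys $0 \le \epsilon_\sigma(x,x') \le \|x-x'\|^4/(8\sigma^4)$. Substituting into $D_{-k_{\rbf}}$: the constant parts give $\tfrac12 + \tfrac12 - 1 = 0$; the quadratic parts assemble, via the identity above, into $\tfrac{1}{2\sigma^2}\|\mathbb{E}_p[X] - \mathbb{E}_q[X]\|^2$; and the remainder is bounded in absolute value by $C_4/(8\sigma^4)$ with $C_4 := \tfrac12 \mathbb{E}_{p\otimes p}\|X-X'\|^4 + \tfrac12 \mathbb{E}_{q\otimes q}\|Y-Y'\|^4 + \mathbb{E}_{p\otimes q}\|X-Y\|^4$. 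Hence with $f(\sigma) = 2\sigma^2$ and $c^\star = +\infty$ one obtains $f(\sigma) D_{-k_{\rbf}}(p,q) = \|\mathbb{E}_p[X] - \mathbb{E}_q[X]\|^2 + O(\sigma^{-2}) \to \|\mathbb{E}_p[X] - \mathbb{E}_q[X]\|^2$.

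For $k_{\imq}$ I would factor $k_{\imq}(x,x') = c^{-1/2}(1 + \|x-x'\|^2/c)^{-1/2}$ and expand $(1+v)^{-1/2} = 1 - v/2 + r(v)$ with $0 \le r(v) \le \tfrac38 v^2$ on $[0,\infty)$ (the second derivative $\tfrac34(1+v)^{-5/2}$ is at most $\tfrac34$ there), giving $k_{\imq}(x,x') = c^{-1/2} - \|x-x'\|^2/(2c^{3/2}) + \tilde\epsilon_c(x,x')$ with $|\tilde\epsilon_c(x,x')| \le 3\|x-x'\|^4/(8c^{5/2})$. Exactly as above the $c^{-1/2}$ terms cancel, the quadratic terms produce $\tfrac{1}{2c^{3/2}}\|\mathbb{E}_p[X] - \mathbb{E}_q[X]\|^2$, and the remainder is $O(c^{-5/2})$, so $f(c) = 2c^{3/2}$ and $c^\star = +\infty$ do the job.

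The only non-elementary step — and the main obstacle — is making rigorous that the remainder terms vanish in the limit, i.e.\ that $C_4$ (and its $k_{\imq}$ analogue) is finite and that the limit can be passed inside the expectations. This needs a moment hypothesis, namely finite fourth moments of $p$ and $q$ (compact support being the simplest sufficient condition), which I would fold into the class $\mathcal{P}$ over which the scoring rules are considered; under it dominated convergence applies termwise and the displays above complete the proof.
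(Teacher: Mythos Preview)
Your proof is correct and follows essentially the same route as the paper: expand the kernel in its scale parameter, observe that the constant term cancels in $D_{-k}$, and identify the next-order term with $\|\mathbb{E}_p[X]-\mathbb{E}_q[X]\|^2$ via the bilinear identity for $\|x-y\|^2$.

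The one technical difference worth noting is how the remainder is handled. You bound the Taylor--Lagrange remainder by a multiple of $\|x-x'\|^4$ and therefore need fourth moments on $p,q$ to pass to the limit. The paper instead bounds the \emph{full} rescaled expression $|f(c)(k(x,x')-v)|$ uniformly in the scale parameter by $\|x-x'\|^2$ (for RBF this is just $1-e^{-t}\le t$; for IMQ it is $1-(1+v)^{-1/2}\le v/2$), and then invokes dominated convergence. That route only needs second moments, which is a slightly weaker hypothesis than yours; you could recover this by replacing your second-order remainder bound with the uniform first-order bound. Incidentally, your scaling $f(c)=2c^{3/2}$ for $k_{\imq}$ is the correct one; the paper's appendix statement $f(c)=2c$ is a typo (with $2c$ the limit is zero).
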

This proposition justifies using scoring rules other than the energy one, which also allow recovering diffusion models in the limit. We compare the performance of these different kernels in \Cref{sec:image_experiments}.  Our experiments suggest that diffusion compatibility, as demonstrated by the energy score, $k_{\imq}$, and $k_\rbf$, is sufficient for defining a loss function that leads to high-quality sample generation when used to train a model.  However, we have identified other kernels, such as $k_{\expk}$, that do not satisfy the diffusion compatibility requirement but still result in models with desirable properties.

\section{Related Work}
\label{sec:relatedWork}
\begin{figure}
    \centering
    \begin{subfigure}
        \centering
        \includegraphics[width=.45\linewidth]{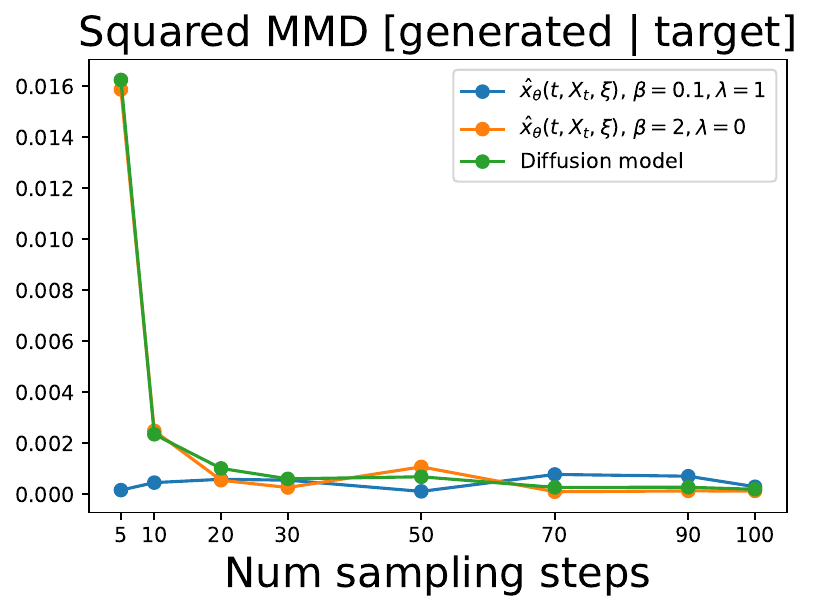}
        \label{fig:2d_mmd}
    \end{subfigure}
    \hfill
    \begin{subfigure}
        \centering
        \includegraphics[width=.45\linewidth]{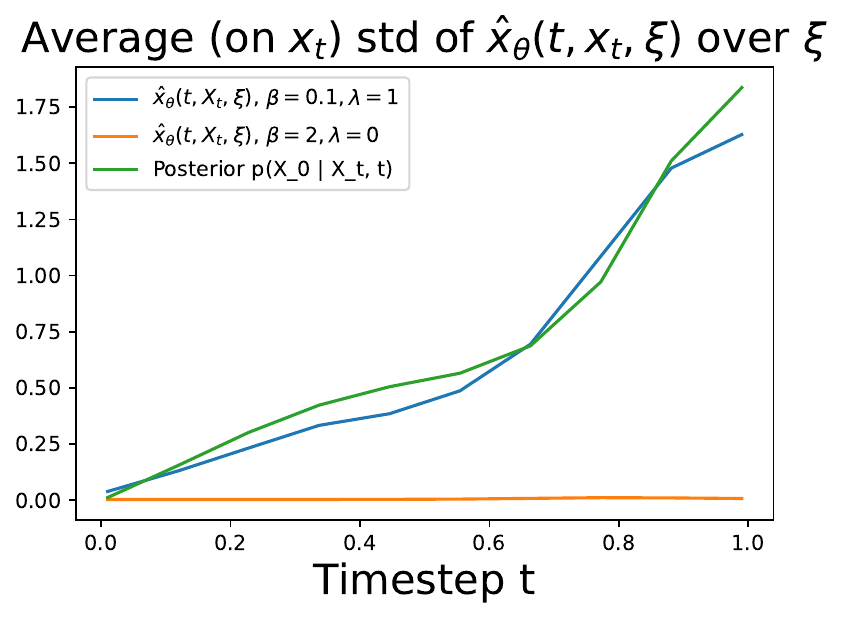}
        \label{fig:2d_stds}
    \end{subfigure}
    \caption{Left: Squared $\MMD$ between target distribution and sampled data according to different models, $x$-axis denotes the number of sampling steps. Right: average standard deviation of samples $X_0 | X_t$ produced by either true posterior distribution $p_{0 \vert t}$ or the models $\hat{x}_{\theta}(t,X_{t},\xi)$, $x$-axis is the timestep $t$.}
    \label{fig:2d_metrics}
\end{figure}

\textbf{Accelerated diffusion models.} Many strategies accelerate diffusion models, broadly classifiable as distillation- or sampling-based.  Distillation methods \citep{luhman2021knowledge,salimans2022progressive,luo2023comprehensive,salimans2024multistep,dieleman2024distillation,liu2022flow,meng2023distillation,song2023consistency,franceschi2024unifying,huang2024flow,sauer2025adversarial} train a student model to mimic a teacher diffusion model, leveraging consistency losses \cite{song2023consistency}, adversarial losses \citep{franceschi2024unifying,xu2024ufogen, sauer2025adversarial}, or noise coupling \cite{huang2024flow}. See \citep{dieleman2024distillation} for a detailed discussion.  While distillation is prevalent, other work focuses on improved samplers for larger step sizes \citep{jolicoeur2021gotta,lu2022dpm,zheng2023dpm}.  Our work differs from both, neither training a student nor proposing new samplers.

\textbf{Discrepancy and diffusion.} We modify the diffusion model training loss to learn the conditional distribution $p_{0|t}(x_0|x_t)$.  The importance of approximating the covariance of this distribution was noted by \citet{nichol2021improved} and exploited in \citep{ho2020denoising,nichol2021improved,bao2022estimating,bao2022analytic,ou2024diffusion}.  Closest to our approach is \citep{xiao2021tackling}, which uses a GAN to learn $p_{0|t}(x_0|x_t)$ for fewer sampling steps.  While sharing the same motivation, our method avoids adversarial training and discriminator training, offering a simple loss modification that encompasses standard diffusion models.  Other loss modifications, like using the $\ell_1$ loss \citep{chen2020wavegrad,saharia2022palette}, aim to improve output quality, not reduce sampling steps.  \citet{galashov2024deep} can be seen as dual to our work: they learn discriminative kernel features with a flow at different noise levels, while we focus solely on learning a generator.

\textbf{Energy distances, MMD, and generative modeling.}
The MMD \citep{gretton2012kernel}, of which energy distances \citep{szekely2013energy,rizzo2016energy} are a special case \citep{sejdinovic2013equivalence}, has been widely used as a distributional loss in generative modeling. GANs have used MMDs with fixed kernels as critics to distinguish generated from reference samples \citep{li2015generative,dziugaite2015training,Unterthiner2018coulomb}. MMDs \citep{li2017mmdGan,binkowski2018demystifying} and energy distances \citep{energy-distance-gan,bellemare2017cramerdistancesolutionbiased,salimans2018improvinggansusingoptimal} have also been defined on adversarially trained discriminative neural net features.
The conditional generalized energy score of \eqref{eq:energyScoreConditional} has been used in learning conditional distributions by the DISCONet approach of \citet{bouchacourt2016disco},  and by the engression approach of \citet{shen2024engression} 
(which corresponds to the special case of $\lambda,\beta=1$). Recently, energy distances have been applied to speech synthesis \cite{gritsenko2020spectral}, normalizing flows \cite{si2023semi}, neural SDEs \cite{issa2024non}, and other generative models \citep{chen2024generative,Pacchiardi2024}.

To our knowledge, such losses have not previously been used to train diffusion models.
It is possible, however, to use an adaptation of  \citep{galashov2024deep}  in combination with the approach of \citet{bouchacourt2016disco,shen2024engression}, in order to incorporate conditional GAN-style generation into the reverse process of a diffusion model.  This idea was proposed by X. Shen  (personal communication, 6th November 2024) in an open  discussion following a presentation at Google Deepmind, as a future research direction of interest. The main idea of \citet{galashov2024deep} is retained, namely to use a standard forward diffusion  process, and a distributional loss for the reverse process: however the adaptive-kernel MMD loss for the reverse process is replaced with a fixed-kernel energy score; and this score is used to train a sequence of conditional GAN generators to approximate $p(x_{t_{k-1}} |x_{t_{k}})$, rather than using particle diffusion directly. The approach indeed represents a promising line of work \citep[since developed in][]{shen2025reversemarkovlearningmultistep} for discretizing the reverse diffusion process, as distinct from the DDIM-style approach adopted in  \cref{sec:method}. The approach can further be understood as a non-adversarial formulation of  \citet{cheng2024conditionalganenhancingdiffusion}, which uses an adaptive conditional GAN critic in place of the fixed-kernel energy score.

\begin{figure}
    \centering
        \includegraphics[width=1\linewidth]{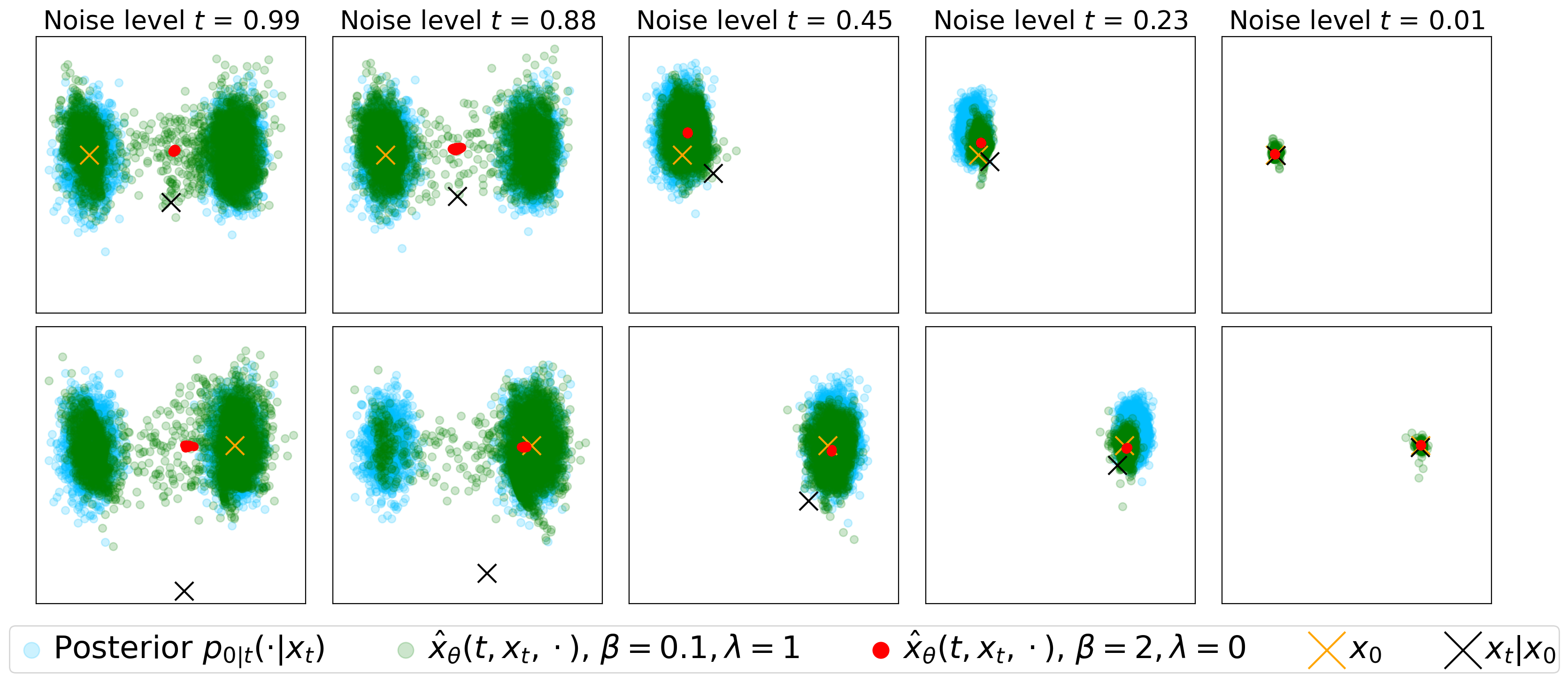}
    \caption{Samples from true posterior $p_{0 \vert t}(\cdot | x_t)$ (light blue) for a sample $x_t$ (black cross) from $p_ {t|0}(x_t|x_0)$ for a specific $x_0$ (orange cross) and samples $\hat{x}_{\theta}(t,x_{t},\xi) \sim p^\theta_{0|t}(\cdot|x_t)$ (green dots) for $\beta=0.1,\lambda=1$.  Top/bottom row: $x_0$ is the mean of the left/right Gaussian. Samples from $\hat{x}_{\theta}(t,x_{t},\xi)$ (red) with $\beta=2,\lambda=0$ concentrate around $\mathbb{E}[X_0 | x_t]$ as expected.}
    \label{fig:trajectories_2d}
\end{figure}

\section{Experiments}\label{sec:experimentsMain}

In this section, we validate the performance of our approach in 2D, image generation, and robotics settings.
The main experiments are presented here, while additional experiments and ablations are given in~\Cref{app_sec:additional_experiments}. All the experimental details are described in Appendix~\ref{app_sec:experimental_details}, while \Cref{app_sec:computational_complexity} analyzes computational complexity.

\subsection{2D experiments}
\label{sec:2d_experiments}
Consider a target given by a mixture of two Gaussians, i.e. $p_0=0.5\mathcal{N}(\mu_1, \sigma^2 \Id) + 0.5\mathcal{N}(\mu_2, \sigma^2 \Id)$, where $\mu_1=(3,3)$, $\mu_2=(-3,3)$ and $\sigma=0.5$.
We train an unconditional \emph{distributional} model $\hat{x}_{\theta}$ with $\beta=0.1$ and $\lambda=1$ using Algorithm~\ref{alg:training_diffusion}. Additionally, we train a baseline unconditional diffusion model by optimizing~\eqref{eq:diffusionloss} and the model $\hat{x}_{\theta}$ with $\beta=2$ and $\lambda=0$ using Algorithm~\ref{alg:training_diffusion}, i.e. a classical diffusion model using the same architecture as the \emph{distributional} models. More details are provided in Appendix~\ref{app_sec:2d_experimental_details}.

To measure the quality of the samples produced, we use the MMD squared given by $D_\rho$~\eqref{eq:divergence} with $\rho(x,x')=-k_{\rbf}(x,x')$ ~\eqref{eq:rbf_k}  for $\sigma=1$. In Figure~\ref{fig:2d_metrics} (left), we observe
that with few denoising steps, the \emph{distributional} variant has a smaller $D_\rho$ compared to other models, and achieves similar performance as the diffusion model using a large number of steps. In \Cref{fig:2d_metrics} (right), for each $X^i_t$, we produce $8$ samples $\xi \sim \mathcal{N}(0, \Id)$ and we compute the standard deviation (std) of $\hat{x}_{\theta}(t,x^i_{t},\xi)$ over $\xi$. We then average the std over all the $x^i_{t}$. While classical diffusion models cannot model the variance of the posterior, \emph{distributional} diffusion models have std close to that of the true posterior. 

To further our analysis, we visualize the samples from the \emph{distributional} model in Figure~\ref{fig:trajectories_2d}. For  given $x_t$ values, we sample from the true posterior $p_{0|t}(x_0|x_t)$ which is  available in closed form. We also sample $ \hat{x}_{\theta}(t,x_t,\xi)\sim p^\theta_{0|t}(\cdot|X^i_t)$ where $\xi \sim \mathcal{N}(0,\Id)$ for a model trained using $\beta=0.1,\lambda=1$, showing that our generating network is able to learn a good approximation of the posterior. We also display samples from $\hat{x}_{\theta}(t,x_t,\xi)$ trained with $\beta=2,\lambda=0$ which are concentrated around $\mathbb{E}[X_0|x_t]$, as expected.

\subsection{Image experiments}
\label{sec:image_experiments}

\begin{figure}
    \centering
        \includegraphics[width=1\linewidth]{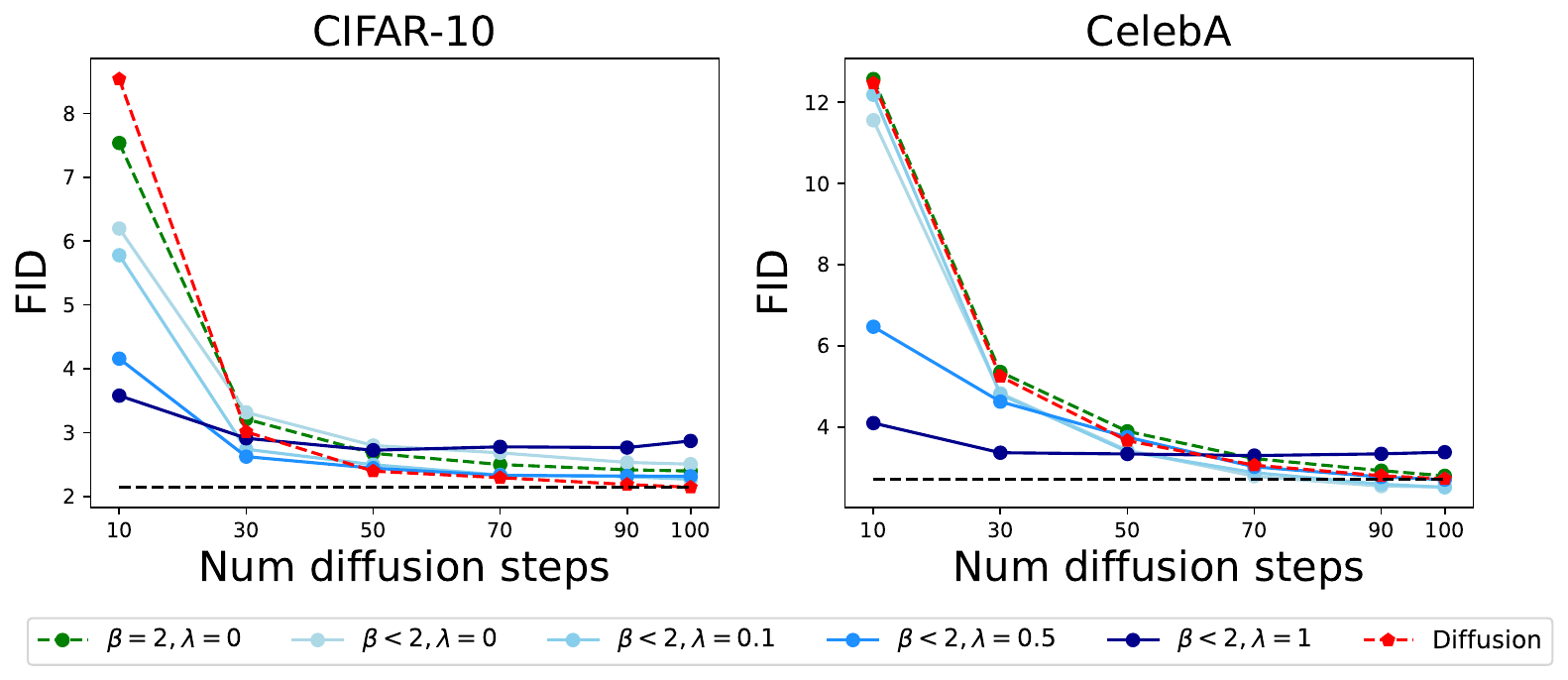}
    \caption{
    Conditional image generation. $x$-axis denotes number of denoising steps, $y$-axis represents FID.
    Black dashed line denotes the performance of diffusion model at $100$ steps.}
    \label{fig:image_results_conditional}
\end{figure}

\textbf{Main results.} We train conditional pixel-space models on CIFAR-10 (32x32x3) and on CelebA (64x64x3), as well as unconditional pixel-space models on LSUN Bedrooms (64x64x3). We  further use an autoencoder trained on CelebA-HQ (256x256x3) producing latent codes of shape 64x64x3, which are then used to build latent unconditional models. For each dataset, we train a diffusion model by optimizing~\eqref{eq:diffusionloss} as well as a \emph{distributional} model  (Algorithm~\ref{alg:training_diffusion}) for different $\lambda \in [0,1]$ and $\beta \in [0.001, 2]$. Models are evaluated using the FID score~\citep{heusel2017gans}. See~\Cref{app_sec:image_experimental_details} for details.

\begin{figure}
    \centering
        \includegraphics[width=1\linewidth]{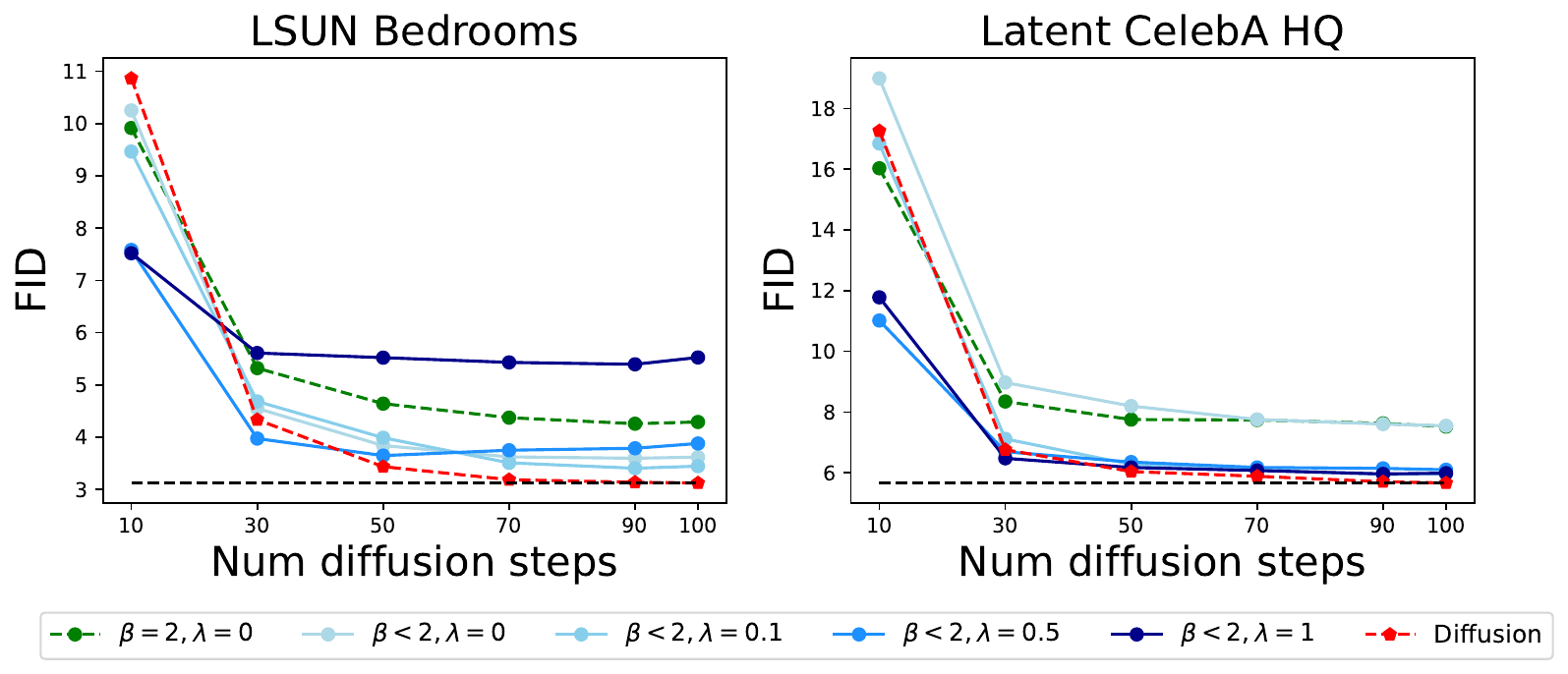}
    \caption{
    Unconditional image generation. $x$-axis denotes number of denoising steps, $y$-axis represents FID.
    Black dashed line denotes the performance of the diffusion model at $100$ steps.}
    \label{fig:image_results_unconditional}
\end{figure}

In Figure~\ref{fig:image_results_conditional}, we show results of conditional image generation on CIFAR-10 and CelebA, and in Figure~\ref{fig:image_results_unconditional} results of unconditional image generation on LSUN Bedrooms and latent CelebA-HQ. Whenever we report performance of \emph{distributional} model and write $\beta < 2$, we select a parameter $\beta$ given one fixed $\lambda$ which minimizes the FID. More detailed results are given in~\Cref{app_sec:image_additional_results}, which include FIDs for different numbers of steps for every combination of $(\beta,\lambda)$.

\textbf{Main takeaways.} The results suggest that whenever the number of diffusion steps is low, then \emph{distributional} models with $\lambda=1,\beta<2$ achieve better performance than classical diffusion models. This confirms that the diffusion model approximation is poor in the "few steps" regime, and can be improved with better modeling of $p_{0|t}$ as explained in~\Cref{sec:diffusion_models}. As the number of diffusion steps increases, the Dirac approximation becomes more accurate and diffusion models achieve the best performance. In that scenario, we observe that \emph{distributional} models with $\lambda=1,\beta<2$ yield worse performance. Our hypothesis is that since \emph{distributional} models $\hat{x}_{\theta}$ only approximate the posterior distribution $p_{0|t}$, this could lead to an increased amount of noise at sampling time and consequently to error accumulation.
However, we observe that \emph{distributional} model variants with $\lambda \in (0, 1)$ still yield good performance as the number of diffusion steps increases. This highlights the trade-off between \emph{distributional} and diffusion models controlled by the parameter $\lambda$. As argued in~\Cref{sec:gaussian},  $\lambda < 1$ leads to underestimation of the target variance which, for denoising diffusions, implies that the variance of $\hat{x}_{\theta}$ is more concentrated around $p_{0|t}$ than would occur for posterior samples. Remarkably, we also found that using $\beta < 2,\lambda=0$ led to comparable or even slightly better performance compared to classical diffusion models. This suggests that it could be valuable to train diffusion models using a different loss than Mean Square Error (MSE) regression.

\begin{figure}
    \centering
        \includegraphics[width=.95\linewidth]{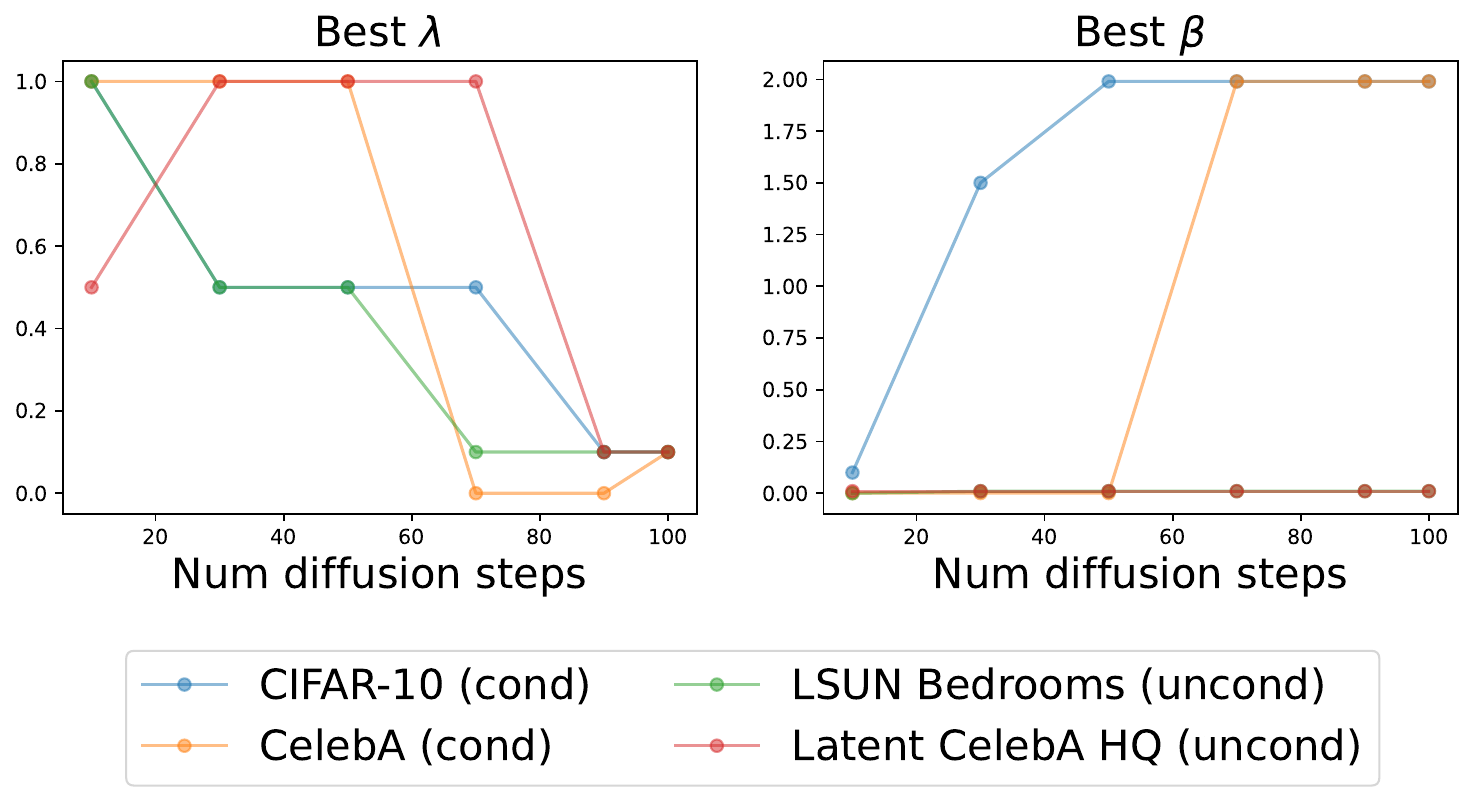}
    \caption{Parameters $\lambda$ and $\beta$ minimizing the FID for a given number of diffusion steps. $x$-axis indicates the number of diffusion steps and legend color indicates the dataset.}
    \label{fig:best_lambda_and_beta}
\end{figure}

\textbf{Conditional vs unconditional generation.} We observe slightly different behavior of \emph{distributional} models in the  conditional image generation case (Figure~\ref{fig:image_results_conditional}), and in the unconditional case (Figure~\ref{fig:image_results_unconditional}). For a small number of diffusion steps, we notice that the gap in performance between the model trained with $\lambda=1,\beta<2$ and a diffusion model, is much smaller in the unconditional case (1.5x) compared to the conditional one (3x). In Figure~\ref{fig:best_lambda_and_beta} we visualize the best parameters $\lambda,\beta$ selected to achieve the lowest FID, as a function of diffusion steps. More detailed results are presented in~\Cref{app_sec:image_additional_results}. We observe that overall $\lambda$ follows a downward trend, decreasing as the number of steps increases. However,  $\beta$ behaves differently in each setting, increasing monotonically as a function of diffusion steps in the conditional case, while remaining at a minimal value in the unconditional case. We hypothesise that this is because learning unconditional distributions is a much harder task since the data is spread out more, leading to a much higher variance in the distribution of $\hat{x}_{\theta}$.

\textbf{Using different kernels.}
As discussed in~\Cref{sec:proper_scoring_rules} and in~\Cref{sec:method}, we can also use the kernel diffusion loss. We train $\hat{x}_{\theta}$ with kernels~\eqref{eq:imq_k},~\eqref{eq:rbf_k},~\eqref{eq:exp_k}, varying kernel parameters and $\lambda \in [0,1]$; see Figure~\ref{fig:kernel_results} for results on conditional image generation for CIFAR-10. We observe similar behavior to the energy diffusion loss, achieving the best results when $\lambda=1$ for few diffusion steps and better performance with $\lambda < 1$ when number of steps increases; see \Cref{app_sec:image_additional_results} for detailed results.

\begin{figure}
    \centering
        \includegraphics[width=0.95\linewidth]{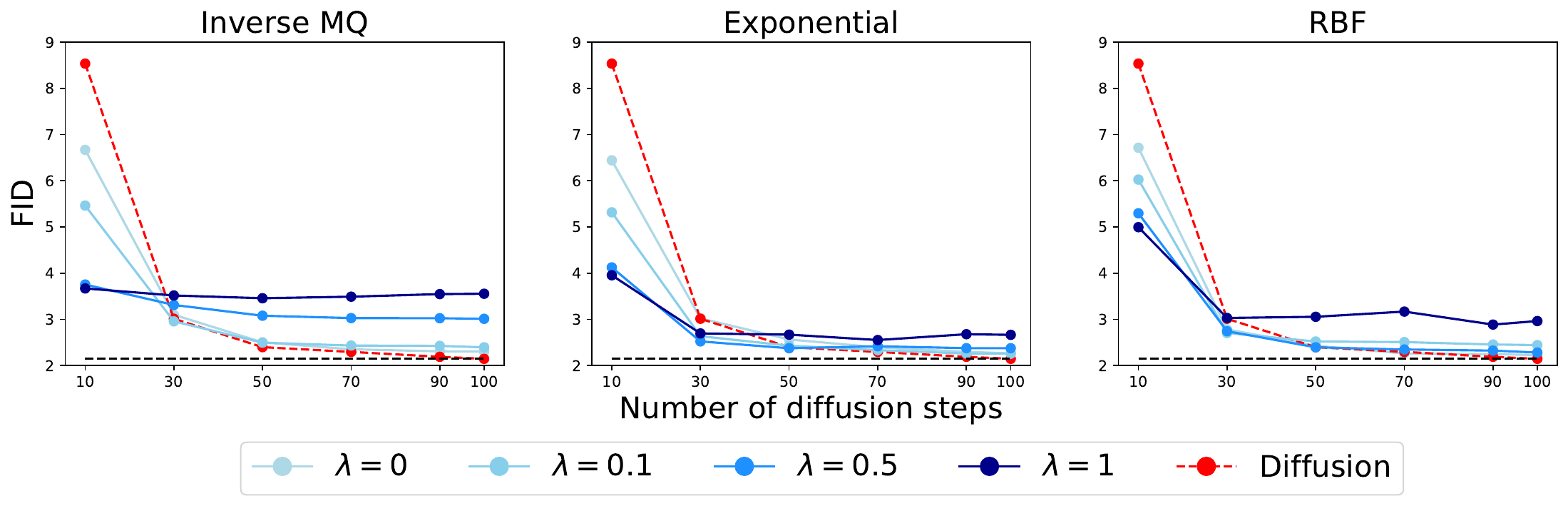}
    \caption{Conditional generation on CIFAR-10 with different kernels. $x$-axis is the number of diffusion steps. $y$-axis is the FID.}
    \label{fig:kernel_results}
\end{figure}

\textbf{Comparisons to distillation and numerical solvers.} 
In \Cref{app_sec:additional_comparisons}, we compare our approach with the \emph{DPM-solver++} method \citet{lu2022dpm} and the state-of-the-art, multi-step distillation method, moment-matching distillation \citep{salimans2024multistep}. Overall, our approach achieves results competitive with multistep distillation (where we compared different numbers of steps) and outperforms \emph{DPM-solver++} for more than 8 NFEs. The performance of \emph{DPM-solver++} degrades as the number of steps increases, which is expected due to its numerical instability.

Our approach offers two compelling benefits over multistep distillation. First, it does not rely on distillation, eliminating the need to train a large (and slow) teacher model. Second, it does not require specifying additional sampler hyperparameters during training. Combining our distributional approach with a modern distillation method is an interesting direction for future research.

\subsection{Robotics experiments}
\label{sec:robotics_experiments}

\textbf{Experimental setup.}
We demonstrate here the effectiveness of our \emph{distributional} diffusion models for robotics applications. We experiment with diffusion policies on the Libero \cite{LIBERO} benchmark, a life-long learning benchmark that consists of 130 language-conditioned robotic manipulation tasks. In our experiments, we used 4 Libero suites with 10 tasks in each: Libero-Long, Libero-Goal, Libero-Object, and Libero-Spatial. 
Our multi-task diffusion policy is conditioned on the encoded visual and proprioceptive observations, and the task descriptions and generates a chunk of 8 7-dimensional actions to execute; see \Cref{app_sec:robotics_experimental_details} for additional details on the architecture.

\textbf{Distributional diffusion model.} We train the \emph{distributional} diffusion model with Algorithm~\ref{alg:training_diffusion} using the energy diffusion loss with population size $m=16$ and varying $\beta$ and $\lambda$. We use a diffusion model to model a sequence of $8$ actions $a = (a_1,\ldots,a_8)$, where $a_i \in \rset^{7}$. We use the noise $\xi \in \rset^{8,2}$ which we concatenate with $a$ on the last dimension. At sampling time, we follow Algorithm~\ref{alg:sampling_diffusion} to sample a sequence of actions from a diffusion policy and we execute it. We sweep over a number $\{2, 16, 50\}$ of diffusion steps.

\textbf{Results.} For the main result, we focus on Libero-Long, the most challenging suite with 10 tasks that features long-horizon tasks with diverse object interactions. Similar to the results on image experiments, $\lambda=0$ works best when more diffusion steps $(50)$ are used during evaluation. Using $\lambda > 0$ gives best results when fewer diffusion steps are used $(2, 16)$. We note that the performance slightly deteriorates with $\lambda > 0$ when the number of diffusion steps is large, see \Cref{fig:libero10_results}. See \Cref{fig:libero_suites_results_beta1} for similar results on other Libero suites.

\begin{figure}
    \centering
        \includegraphics[width=.7\linewidth]{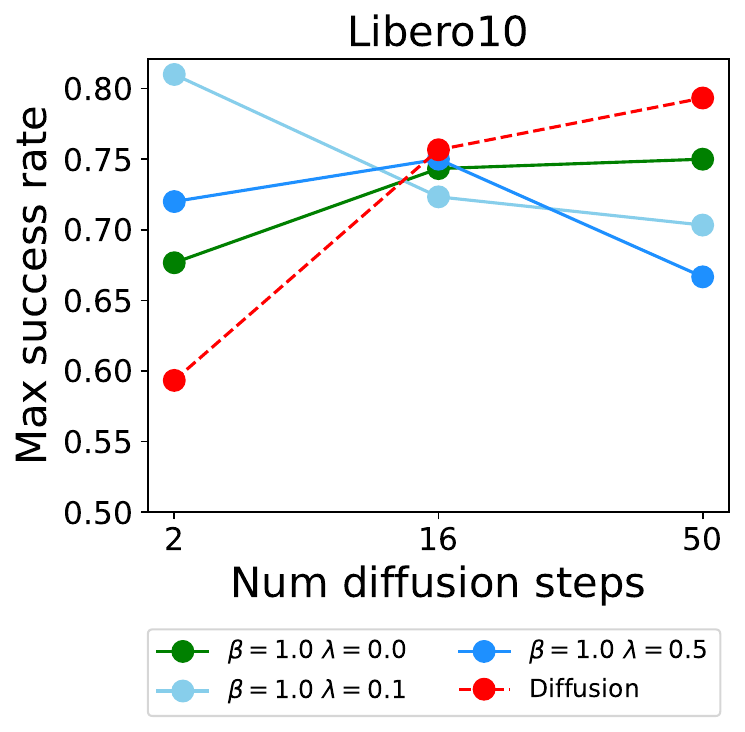}
    \caption{Performance in Libero10 as function of diffusion steps and $\lambda$. We plot the maximum success rate from best checkpoint during training. $\lambda=0$ works best when more diffusion steps are used, and  $\lambda > 0$ gives best results when fewest diffusion steps are used. $\lambda > 0$ also reduces the performance by a little for 50 diffusion steps (see 0.5 vs 0.0).}
    \label{fig:libero10_results}
\end{figure}

\vspace{-.1cm}
\section{Discussion}
We present a novel approach for training diffusion models, by learning the full conditional distributions $p_{0 \vert t}$ using scoring rules.  This approach achieves good performance in image generation and robotics tasks, outperforming standard diffusion models in the "few-step" regime with minimal degradation of output quality.   Our experimental results further suggest that standard diffusion models can also be trained with objectives other than classical MSE regression.

So far, our generative networks approximating $p_{0 \vert t}$ incorporate the noise $\xi$ only by concatenating it to the $x_t$ along the channel dimension.
We believe that other architectures could significantly improve sample quality, and will explore this in future work.
Another promising direction is in learning the kernel $\rho$  and  the parameter $\lambda$ to maximize generative performance in the few-step regime. Combining our approach with powerful distillation techniques such as \citep{salimans2024multistep} is a further avenue for future work. 

Finally kernel scores allow for more general domains than $\R^d$, by way of characteristic kernels on groups  (e.g. the group $\mathrm{SO}(3)$ of 3d  rotations) and semigroups ($d$-dimensional histograms) \citep{NIPS2008_d07e70ef}; and graphs \citep{JMLR:v11:vishwanathan10a}. It is thus possible to extend our methodology to these scenarios.

\section*{Impact Statement}
This paper presents work whose goal is to advance the field of Machine Learning. There are many potential societal consequences of our work, none which we feel must be specifically highlighted here.

\bibliography{references}
\bibliographystyle{icml2025}

\newpage
\appendix
\onecolumn

\section*{Organization of the Appendix}

The appendix is organized as follows.
In \Cref{sec:notation}, we present a notation table, to help the reader. 
In \Cref{sec:correspondence_between_discrepancy_and_diffusion}, we prove that diffusion compatible scoring rules can be linked to diffusion model losses as claimed in \Cref{sec:proper_scoring_rules}.
In \Cref{sec:gaussian_setting}, we prove the results regarding the minimization of generalized expected energy score in a Gaussian setting presented in \Cref{sec:gaussian}. 
In \Cref{app_sec:different_learning_objectives}, we expand on the discussion of \Cref{subsec:jointvsconditionaltraining} regarding joint or conditional scoring rules. 
In \Cref{app:connection_to_mmd}, we present the results of \Cref{sec:diffusion_compatible} regarding diffusion-compatible kernels. 
In \Cref{sec:ddim_sde}, we present a new perspective on the DDIM updates derived in \citep{song2020denoisingimplicit}. More precisely, we show that these updates can be recovered using a Stochastic Differential Equation perspective. This justifies the form of the DDIM mean and covariance updates presented in \eqref{eq:mean_sigma}. 
We present some pseudocode to compute our loss function in \Cref{sec:pseudocode_training}.
Experimental details are provided in \Cref{app_sec:experimental_details}. We present a computational complexity analysis in \Cref{app_sec:computational_complexity}. We present an extended related work in \Cref{sec:extended_related}.
In \Cref{app_sec:additional_experiments}, we present additional experimental results including ablation studies. 


\section{Notation}
\label{sec:notation}

We recall that a kernel defined on a space $\msx$ is a symmetric function $k: \ \msx \times \msx \to \rset$, i.e., for any $x,y \in \msx$, $k(x,y) = k(y,x)$.
A kernel $k$ is said to be \emph{positive semi-definite} if for any $n \in \nset$, $\{x_1, \dots, x_n \} \in \msx^n$ and $\{c_1, \dots, c_n\} \in \rset^n$ we have
\begin{equation}
    \sum_{i=1}^n \sum_{j=1}^n c_i c_j k(x_i,x_j) \geq 0 . \label{eq:positive_semi_definite}
\end{equation}
The kernel $k$ is  \emph{positive definite} if equality in \eqref{eq:positive_semi_definite} occurs if and only $c_i = 0$ for any $i \in \{1, \dots, n\}$. We refer the reader to \citep{BerTho04,SteChr08,gretton2013introduction} for an overview on kernel theory. 

\begin{table}[h]
\centering
\begin{tabular}{|c|c|c|c|}
\hline
 Name & Notation & Definition & Appeared 1st \\
\hline \hline
\rule{0pt}{15pt} $\cdot$ & $\rho$ & Continuous negative definite kernel & Sec~\ref{sec:proper_scoring_rules} \\ 
\hline
\rule{0pt}{15pt} $\cdot$ & $k$ & Continuous positive definite kernel & Sec~\ref{sec:proper_scoring_rules} \\ 
\hline
\rule{0pt}{15pt} Scoring rule & $S(p, y)$ & $\cdot$ & Sec~\ref{sec:proper_scoring_rules} \\ 
\hline
\rule{0pt}{15pt} Expected score & $S(p,q)$ & $S(p,q) = \mathbb{E}_q[S(p,Y)]$ & Sec~\ref{sec:proper_scoring_rules} \\ 
\hline
\rule{0pt}{15pt} Kernel score & $S_\rho(p,y)$ & $S_\rho(p,y) = \frac{1}{2}\mathbb{E}[\rho(X,X')] - \mathbb{E}[\rho(X,y)]$ & Sec~\ref{sec:proper_scoring_rules}\\ 
\hline
\rule{0pt}{15pt} Energy score & $S_\beta(p,y)$ & $S_\beta(p,y) = S_\rho(p,y)$ with $\rho(x,x') = \| x - x'\|^\beta$ & Sec~\ref{sec:proper_scoring_rules} \\ 
\hline
\rule{0pt}{15pt} Expected Kernel score & $S_\rho(p,q)$ & $S_\rho(p,q) = \mathbb{E}_q[S_\rho(p,Y)]$ & Sec~\ref{sec:proper_scoring_rules} \\ 
\hline
\rule{0pt}{15pt} Expected Energy score & $S_\beta(p,q)$ & $S_\beta(p,q) = S_\rho(p,q)$ with $\rho(x,x') = \| x - x'\|^\beta$ & Sec~\ref{sec:proper_scoring_rules}\\ 
\hline
\rule{0pt}{15pt} Generalized (Gen.) Kernel score & $S_{\lambda, \rho}(p,y)$ & $S_{\lambda, \rho}(p,y) = \frac{\lambda}{2} \mathbb{E}_{p\otimes p}[\rho(X,X')] - \mathbb{E}[\rho(X,y)]$ & Sec~\ref{sec:proper_scoring_rules} \\ 
\hline
\rule{0pt}{15pt} Generalized Energy score & $S_{\lambda, \beta}(p,y)$ & $S_{\lambda, \beta}(p,y) = S_{\lambda, \rho}(p,y)$ with $\rho(x,x') = \| x - x' \|^\beta$ & Sec~\ref{sec:proper_scoring_rules} \\ 
\hline
\rule{0pt}{15pt} Expected Gen. Kernel score & $S_{\lambda, \rho}(p,q)$ & $S_{\lambda, \rho}(p,q) = \mathbb{E}_q[S_{\lambda, \rho}(p,Y)]$ & Sec~\ref{sec:proper_scoring_rules} \\ 
\hline
\rule{0pt}{15pt} Expected Gen. Energy score & $S_{\lambda, \beta}(p,q)$ & $S_{\lambda, \beta}(p,q) = S_{\lambda, \rho}(p,q)$ with $\rho(x,x') = \| x - x'\|^\beta$ & Sec~\ref{sec:proper_scoring_rules} \\ 
\hline
\rule{0pt}{15pt} Squared MMD & $D_\rho(p,q)$ & $D_\rho(p,q) =  S_\rho(q,q) - S_\rho(p,q)$ & Sec~\ref{sec:proper_scoring_rules} \\ 
\hline 
\rule{0pt}{15pt} Energy Distance & $D_\beta(p,q)$ & $D_\beta(p,q) = D_\rho(p,q)$ with $\rho(x,x') = \|x-x'\|^\beta$ & Sec~\ref{sec:proper_scoring_rules} \\ 
\hline
\rule{0pt}{15pt} Diffusion Loss & $\mathcal{L}_{\text{diff}}(\theta)$ & $\mathcal{L}_{\text{diff}}(\theta)=\int^1_0 w_t \mathbb{E}_{p_{0,t}}\left[||X_0-\hat{x}_\theta(t,X_t)||^2\right]\rmd t$ & Sec~\ref{sec:diffusion_models} \\ 
\hline 
\rule{0pt}{15pt} Energy Diffusion Loss & $\mathcal{L}(\theta)$ & $\mathcal{L}(\theta)=-\int_{0}^{1} w_{t} \mathbb{E}_{p_{0,t}} \left[ S_{\lambda, \beta}(p_{0\vert t}^{\theta}(\cdot|X_t),X_0) \right]\rmd t$ & Sec~\ref{sec:method}   \\ 
\hline
\rule{0pt}{15pt} Kernel Diffusion Loss & $\mathcal{L}(\theta)$ & $\mathcal{L}(\theta)=-\int_{0}^{1} w_{t} \mathbb{E}_{p_{0,t}} \left[ S_{\lambda, \rho}(p_{0\vert t}^{\theta}(\cdot|X_t),X_0) \right]\rmd t$ & Sec~\ref{sec:method}   \\ 
\hline
Joint Diffusion Energy Loss & $\mathcal{L}_{\mathrm{joint}}(\theta)$ & $\mathcal{L}_{\mathrm{joint}}(\theta) =-\int_{0}^{1} w_{t} \mathbb{E}_{p_{0,t}} \left[ S_{\lambda, \beta}(p_{0\vert t}^{\theta} p_t,(X_0, X_t)) \right]\rmd t$ & Sec~\ref{subsec:jointvsconditionaltraining} \\
\hline
\end{tabular}
\end{table}

\section{Correspondence between Discrepancy and Diffusion Loss}
\label{sec:correspondence_between_discrepancy_and_diffusion}

In this section, we demonstrate the connection between diffusion-compatible scoring rules and diffusion model losses, as stated in Section \ref{sec:proper_scoring_rules}. More precisely, we prove the following result. 

\begin{proposition}{}{correspondence_with_diffusion}
Assume that $D_\rho(p,q) = \|\mathbb{E}_p[X] - \mathbb{E}_q[X]\|^2$ and that $p = \updelta_{\hat{x}_\theta(t, X_t)}$ and $q = p_{0|t}(\cdot|X_t)$. 
Then, we have
\begin{equation}\label{eq:final}
    \mathbb{E}_{p_t}[D_\rho(p,q)] = \mathbb{E}_{p_{0,t}}[\| X_0 - \hat{x}_\theta(t, X_t) \|^2] - \mathbb{E}_{p_{0,t}}[\| X_0 - \mathbb{E}[X_0|X_t] \|^2],
\end{equation}
where the second term on the r.h.s. is independent of $\theta$.
\end{proposition}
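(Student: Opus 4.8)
The plan is to recognize this as a standard bias--variance decomposition of the squared error, applied conditionally on $X_t$. First I would evaluate the two means appearing in $D_\rho(p,q)$ under the stated choices of $p$ and $q$: since $p = \updelta_{\hat{x}_\theta(t,X_t)}$ is a Dirac mass, $\mathbb{E}_p[X] = \hat{x}_\theta(t,X_t)$, and since $q = p_{0|t}(\cdot|X_t)$ is the true posterior, $\mathbb{E}_q[X] = \mathbb{E}[X_0|X_t]$. Substituting these into the hypothesis $D_\rho(p,q) = \|\mathbb{E}_p[X] - \mathbb{E}_q[X]\|^2$ gives $D_\rho(p,q) = \|\hat{x}_\theta(t,X_t) - \mathbb{E}[X_0|X_t]\|^2$, so that $\mathbb{E}_{p_t}[D_\rho(p,q)] = \mathbb{E}_{p_t}\big[\|\hat{x}_\theta(t,X_t) - \mathbb{E}[X_0|X_t]\|^2\big]$.

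Next I would expand the right-hand side of the claimed identity. Writing $X_0 - \hat{x}_\theta(t,X_t) = \big(X_0 - \mathbb{E}[X_0|X_t]\big) + \big(\mathbb{E}[X_0|X_t] - \hat{x}_\theta(t,X_t)\big)$ and expanding the squared norm, one obtains three terms. The cross term is handled by the tower property: conditioning on $X_t$, the factor $\mathbb{E}[X_0|X_t] - \hat{x}_\theta(t,X_t)$ is $X_t$-measurable and hence can be pulled out, while $\mathbb{E}\big[X_0 - \mathbb{E}[X_0|X_t] \mid X_t\big] = 0$, so the cross term vanishes. This yields
\begin{equation}
\mathbb{E}_{p_{0,t}}\big[\|X_0 - \hat{x}_\theta(t,X_t)\|^2\big] = \mathbb{E}_{p_{0,t}}\big[\|X_0 - \mathbb{E}[X_0|X_t]\|^2\big] + \mathbb{E}_{p_t}\big[\|\mathbb{E}[X_0|X_t] - \hat{x}_\theta(t,X_t)\|^2\big].
\end{equation}

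Finally I would rearrange this equation to isolate $\mathbb{E}_{p_t}\big[\|\mathbb{E}[X_0|X_t] - \hat{x}_\theta(t,X_t)\|^2\big]$, which by the first paragraph equals $\mathbb{E}_{p_t}[D_\rho(p,q)]$; this gives exactly the stated result with $C = - \mathbb{E}_{p_{0,t}}\big[\|X_0 - \mathbb{E}[X_0|X_t]\|^2\big]$, a quantity manifestly independent of $\theta$. There is no real obstacle here: the only point requiring a line of care is the vanishing of the cross term, which follows from the definition of conditional expectation, and one should note in passing that all expectations are assumed finite (e.g.\ $p_0$ has finite second moment) so that the decomposition is legitimate.
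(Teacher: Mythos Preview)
Your proof is correct and follows essentially the same approach as the paper: both identify $D_\rho(p,q)$ with $\|\hat{x}_\theta(t,X_t)-\mathbb{E}[X_0|X_t]\|^2$ and then perform a bias--variance decomposition by inserting the conditional mean and using the tower property to control the cross term. Your version is in fact slightly more streamlined: by inserting $\mathbb{E}[X_0|X_t]$ into $X_0-\hat{x}_\theta(t,X_t)$ (rather than $X_0$ into $\hat{x}_\theta(t,X_t)-\mathbb{E}[X_0|X_t]$ as the paper does), the cross term vanishes in one step, whereas the paper computes the residual cross term explicitly and then matches it with $-\mathbb{E}_{p_{0,t}}[\|X_0-\mathbb{E}[X_0|X_t]\|^2]$.
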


\begin{proof}
First we have 
\begin{align}
    &\mathbb{E}_{p_t}[D_\rho(p,q)] = \mathbb{E}_{p_t}[\|\hat{x}_\theta(t, X_t) - \mathbb{E}[X_0|X_t] \|^2] \\
    &\qquad = \mathbb{E}_{p_{0,t}}[\|\hat{x}_\theta(t, X_t) - X_0 + X_0 - \mathbb{E}[X_0|X_t] \|^2] \\
    &\qquad = \mathbb{E}_{p_{0,t}}[\|\hat{x}_\theta(t, X_t) - X_0 \|^2] + \mathbb{E}_{p_{0,t}}[\| X_0 - \mathbb{E}[X_0|X_t] \|^2] + 2 \mathbb{E}_{p_{0,t}}[\langle \hat{x}_\theta(t, X_t) - X_0, X_0 - \mathbb{E}[X_0|X_t] \rangle] \\ 
    &\qquad = \mathbb{E}_{p_{0,t}}[\|\hat{x}_\theta(t, X_t) - X_0 \|^2] + \mathbb{E}_{p_{0,t}}[\| X_0 - \mathbb{E}[X_0|X_t] \|^2] \\
    &\qquad \quad + 2 \mathbb{E}_{p_{0,t}}[\langle \hat{x}_\theta(t, X_t) , X_0 - \mathbb{E}[X_0|X_t] \rangle] - 2 \mathbb{E}_{p_{0,t}}[\langle X_0, X_0 - \mathbb{E}[X_0|X_t] \rangle] \\
    &\qquad = \mathbb{E}_{p_{0,t}}[\|\hat{x}_\theta(t, X_t) - X_0 \|^2] + \mathbb{E}_{p_{0,t}}[\| X_0 - \mathbb{E}[X_0|X_t] \|^2] \\
    &\qquad \quad - 2 \mathbb{E}_{p_{0,t}}[\langle X_0, X_0 - \mathbb{E}[X_0|X_t] \rangle] . \label{eq:intermediate_uno}
\end{align}
In addition, we have 
\begin{equation}
    \mathbb{E}_{p_{0,t}}[\| X_0 - \mathbb{E}[X_0|X_t] \|^2] = \mathbb{E}_{p_0}[ \| X_0 \|^2] - \mathbb{E}_{p_0}[\| \mathbb{E}[X_0|X_t] \|^2] . \label{eq:intermediate_duo}
\end{equation}
Similarly, we have
\begin{equation}
    \mathbb{E}_{p_{0,t}}[\langle X_0,  X_0 - \mathbb{E}[X_0|X_t] \rangle] = \mathbb{E}_{p_0}[ \| X_0 \|^2] - \mathbb{E}_{p_0}[\| \mathbb{E}[X_0|X_t] \|^2] . \label{eq:intermediate_tertio}
\end{equation}
Combining \eqref{eq:intermediate_uno}, \eqref{eq:intermediate_duo} and \eqref{eq:intermediate_tertio}, we get 
\begin{equation}
    \mathbb{E}_{p_t}[D_\rho(p,q)] = \mathbb{E}_{p_{0,t}}[\|X_0-\hat{x}_\theta(t, X_t)\|^2] - \mathbb{E}_{p_{0,t}}[\| X_0 - \mathbb{E}[X_0|X_t] \|^2] ,
\end{equation}
which concludes the proof. 
\end{proof}

\section{Generalized Energy Score for Gaussian Targets}
\label{sec:gaussian_setting}
We first investigate the properties of the generalized energy score for a single Gaussian target in \Cref{sec:learninggaussiandistributiongeneralizedenergyscore} (\Cref{prop:reduction_variance}) and then apply these results to diffusion models in  \Cref{sec:Gaussianapplitodiffusionmodels} 
\subsection{Learning a Gaussian distribution}\label{sec:learninggaussiandistributiongeneralizedenergyscore}

For any $\beta \in (0, 2]$ and $\lambda \in [0,1]$ we recall that $S_{\lambda, \beta}$ is the generalized expected energy score given for any distributions $p, q$ on $\rset^d$ with $\beta$ moments by
\begin{equation}
\label{eq:generalized_energy_discrepancy}
    S_{\beta, \lambda}(p, q) = - \mathbb{E}_{(X,Y) \sim p \otimes q}[\|X - Y\|^\beta] + \frac{\lambda}{2} \mathbb{E}_{(X,X') \sim p \otimes p}[\|X - X'\|^\beta] .
\end{equation}
We refer the reader to \Cref{sec:notation} for a remainder on the notation used throughout the paper. 
For any $\mu \in \rset^d$ and $\sigma > 0$, let $p_{\mu, \sigma} = \mathcal{N}(\mu, \sigma^2 \Id)$ and denote by $\mathfrak{G}$ the space of Gaussian distributions with scalar covariance, i.e.,
\begin{equation}
    \mathfrak{G} = \{ \mathcal{N}(\nu, \gamma^2 \Id) \ , \ \nu \in \rset^d, \ \gamma \geq 0 \} . 
\end{equation}

We consider the following minimization problem: for any $\mu \in \rset^d$ and $\sigma >0$ solve
\begin{equation}
    p_{\mu_\star, \sigma_\star} = \argmax_{q \in \mathfrak{G}} S_{\beta, \lambda}(q, p_{\mu, \sigma}) . 
\end{equation}
When $\lambda = 1$ and $\beta \in (0,2)$, $S_{\lambda, \beta}(p, q)$ is a proper scoring rule \citep{rizzo2016energy,GneRaf07} so we have $p_{\mu_\star, \sigma_\star} = p_{\mu, \sigma}$. For $\lambda \in [0,1]$, we have the following result. This is a simple restatement of \Cref{prop:reduction_variance}.

\begin{proposition}{}{}
For any $\beta \in (0,2)$ and $\lambda \in (0,1)$, we have that $p_{\mu_\star, \sigma_\star} \in \argmax_{q \in \mathfrak{G}} S_{\lambda, \beta}(q, p_{\mu, \sigma})$ with 
\begin{equation}
    \mu_\star = \mu , \qquad \sigma_\star = (2 \lambda^{-2 / (2 - \beta)} - 1)^{-1} \sigma^2 . 
\end{equation}
\end{proposition}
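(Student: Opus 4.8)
The plan is to turn the optimization over $\mathfrak{G}$ into a one-dimensional calculus exercise. Parametrize the forecast as $q = \mathcal{N}(\nu, \gamma^2 \Id)$ with $\nu \in \rset^d$, $\gamma \geq 0$, and expand $S_{\lambda,\beta}(q, p_{\mu,\sigma}) = \tfrac{\lambda}{2}\mathbb{E}_{X,X'\sim q}[\|X-X'\|^\beta] - \mathbb{E}_{X\sim q,\, Y\sim p_{\mu,\sigma}}[\|X-Y\|^\beta]$. Since i.i.d.\ copies satisfy $X-X' \sim \mathcal{N}(0, 2\gamma^2\Id)$ and independent draws satisfy $X - Y \sim \mathcal{N}(\nu-\mu, (\gamma^2+\sigma^2)\Id)$, both expectations are computable in closed form: writing $c_{d,\beta} := \mathbb{E}[\|Z\|^\beta] > 0$ for $Z \sim \mathcal{N}(0,\Id)$, and $g_\beta(a) := \mathbb{E}[\|a e + Z\|^\beta]$ (independent of the unit vector $e$ by rotational invariance, with $g_\beta(0) = c_{d,\beta}$), one gets $S_{\lambda,\beta}(q, p_{\mu,\sigma}) = c_{d,\beta}\,\lambda\,2^{\beta/2-1}\gamma^\beta - (\gamma^2+\sigma^2)^{\beta/2}\,g_\beta\!\big(\|\nu-\mu\|/\sqrt{\gamma^2+\sigma^2}\big)$.

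I would then optimize over $\nu$ with $\gamma$ held fixed: only the last term depends on $\nu$, so the maximum is at the $\nu$ minimizing $g_\beta$. The key claim is that $g_\beta$ has its unique minimum at $0$, which gives $\nu = \mu$ for every $\gamma$ and hence $\mu_\star = \mu$. I would prove this from the standard integral representation $\|x\|^\beta = C_{d,\beta}\int_{\rset^d}(1-\cos\langle\xi,x\rangle)\|\xi\|^{-d-\beta}\,\rmd\xi$, valid for $\beta\in(0,2)$ with $C_{d,\beta}>0$: taking the expectation over $Z$ and using that the Gaussian characteristic function is real and positive yields $g_\beta(a) = C_{d,\beta}\int(1 - \cos(a\xi_1)e^{-\|\xi\|^2/2})\|\xi\|^{-d-\beta}\,\rmd\xi \geq C_{d,\beta}\int(1-e^{-\|\xi\|^2/2})\|\xi\|^{-d-\beta}\,\rmd\xi = g_\beta(0)$, with equality forcing $\cos(a\xi_1)\equiv 1$, i.e.\ $a=0$. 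This is the main obstacle; the rest is routine. (Alternatively, one can phrase it via strict propriety of the energy score for $\lambda=1$ plus the observation that $\mathbb{E}_{q\otimes q}[\|X-X'\|^\beta]$ does not depend on $\nu$, but one still needs precisely this radial-monotonicity fact.)

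With $\nu = \mu$ fixed, it remains to maximize over $u := \gamma^2 \geq 0$ the function $\Phi(u) = c_{d,\beta}\big(\lambda 2^{\beta/2-1}u^{\beta/2} - (u+\sigma^2)^{\beta/2}\big)$. Differentiating, $\Phi'(u) = c_{d,\beta}\tfrac{\beta}{2}u^{\beta/2-1}\big(\lambda 2^{\beta/2-1} - (1+\sigma^2/u)^{\beta/2-1}\big)$. Since $\beta/2-1 \in (-1,0)$, the map $u \mapsto (1+\sigma^2/u)^{\beta/2-1}$ increases strictly from $0$ (as $u\to0^+$) to $1$ (as $u\to\infty$), while $\lambda 2^{\beta/2-1}\in(0,1)$ for $\lambda\in(0,1)$, $\beta\in(0,2)$; hence $\Phi'$ is positive then negative, with a single sign change at the $u^\star$ solving $(1+\sigma^2/u^\star)^{\beta/2-1} = \lambda 2^{\beta/2-1}$. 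Raising both sides to the power $2/(\beta-2)$ gives $1 + \sigma^2/u^\star = 2\lambda^{-2/(2-\beta)}$, i.e.\ $u^\star = (2\lambda^{-2/(2-\beta)} - 1)^{-1}\sigma^2$, which is positive because $\lambda \le 1$. Comparing with the boundary value $\Phi(0) = -c_{d,\beta}\sigma^\beta < \Phi(u^\star)$, the point $u^\star$ is the unique global maximizer, so $\sigma_\star^2 = u^\star$, which is the claimed formula.
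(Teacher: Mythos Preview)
Your proof is correct and follows the same high-level scheme as the paper: write the generalized energy score explicitly for Gaussian forecast and target, optimize over the mean first, then over the variance. The two proofs differ in how they establish $\mu_\star=\mu$, i.e.\ that $c\mapsto\mathbb{E}[\|Z+c\|^\beta]$ is minimized at $c=0$ for a centered Gaussian $Z$. The paper proves this by a short symmetry argument (if $c^\star$ minimizes then so does $-c^\star$; then convexity for $\beta\geq 1$, or a concavity-plus-contradiction argument for $\beta\in(0,1)$, forces $0$ to be a minimizer). You instead invoke the L\'evy--Khintchine integral representation $\|x\|^\beta=C_{d,\beta}\int(1-\cos\langle\xi,x\rangle)\|\xi\|^{-d-\beta}\rmd\xi$ and the positivity of the Gaussian characteristic function, which gives the strict inequality (hence uniqueness of the minimizer) in one stroke and treats all $\beta\in(0,2)$ uniformly, at the cost of importing a little more machinery. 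For the variance step, the paper simply asserts ``maximizing with respect to $\tilde{\sigma}$, we get \ldots'', whereas you carry out the full first-order analysis, checking the monotonicity of $u\mapsto(1+\sigma^2/u)^{\beta/2-1}$, the range condition $\lambda 2^{\beta/2-1}\in(0,1)$, and the boundary/limit behavior; this is routine but worth having on record.
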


We start with the following lemma.

\begin{lemma}{}{lemmamin}
Assume that $X \sim \mathcal{N}(0, \sigma^2 \Id)$ for $\sigma >0$. Then for any $\beta > 0$, we have $0 \in \argmin_{c \in \rset^d} \mathbb{E}[\| X - c \|^\beta]$. 
\end{lemma}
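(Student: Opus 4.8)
Write $g(c) := \mathbb{E}[\|X-c\|^{\beta}]$ for $X\sim\mathcal{N}(0,\sigma^{2}\Id)$; the goal is to show $g(c)\ge g(0)$ for every $c\in\rset^{d}$. When $\beta\ge 1$ the map $c\mapsto\|x-c\|^{\beta}$ is convex, so $g$ is convex, and since $g(c)=g(-c)$ the point $c=0$ is a minimiser; the interesting case is $\beta\in(0,1)$, where $g$ need not be convex. The plan is to reduce to one dimension and use that a centred Gaussian density is even and unimodal. By rotational invariance of $\mathcal{N}(0,\sigma^{2}\Id)$, $g(c)$ depends on $c$ only through $r:=\|c\|$, so we may take $c=re_{1}$ with $r>0$. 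Writing $R^{2}:=X_{2}^{2}+\cdots+X_{d}^{2}$ (independent of $X_{1}\sim\mathcal{N}(0,\sigma^{2})$) and $\psi_{\rho}(t):=(t^{2}+\rho^{2})^{\beta/2}$, it suffices to prove $\mathbb{E}[\psi_{\rho}(X_{1}-r)]\ge\mathbb{E}[\psi_{\rho}(X_{1})]$ for each fixed $\rho\ge 0$, and then average over $R$.

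Next I would symmetrise. Since $\psi_{\rho}$ and the density $f$ of $\mathcal{N}(0,\sigma^{2})$ are both even, the substitution $s\mapsto-s$ gives $\mathbb{E}[\psi_{\rho}(X_{1}-r)]=\tfrac12\int\psi_{\rho}(s)\,[f(s+r)+f(s-r)]\,\rmd s$, so that
\[
\mathbb{E}[\psi_{\rho}(X_{1}-r)]-\mathbb{E}[\psi_{\rho}(X_{1})]=\int\psi_{\rho}(s)\,w(s)\,\rmd s,\qquad w(s):=\tfrac12\bigl(f(s+r)+f(s-r)\bigr)-f(s).
\]
A one-line computation with the Gaussian density gives the closed form $w(s)=f(s)\bigl(e^{-r^{2}/(2\sigma^{2})}\cosh(rs/\sigma^{2})-1\bigr)$. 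Three properties then fall out: $w$ is even; $w(s)<0$ for $|s|$ small and $w(s)>0$ for $|s|$ large, with a single sign change at the $u_{0}>0$ solving $\cosh(ru_{0}/\sigma^{2})=e^{r^{2}/(2\sigma^{2})}$; and $\int w(s)\,\rmd s=e^{-r^{2}/(2\sigma^{2})}\,\mathbb{E}[\cosh(rX_{1}/\sigma^{2})]-1=0$ by the Gaussian moment generating function.

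The conclusion is then a monotone-rearrangement step. Using $\int w=0$,
\[
\int\psi_{\rho}(s)\,w(s)\,\rmd s=\int\bigl(\psi_{\rho}(s)-\psi_{\rho}(u_{0})\bigr)\,w(s)\,\rmd s\ \ge\ 0,
\]
since $\psi_{\rho}$ is nondecreasing in $|s|$, so $\psi_{\rho}(s)-\psi_{\rho}(u_{0})$ and $w(s)$ are both $\le 0$ on $\{|s|\le u_{0}\}$ and both $\ge 0$ on $\{|s|\ge u_{0}\}$, making the integrand pointwise nonnegative. Averaging over $R$ yields $g(re_{1})\ge g(0)$, which proves the lemma.

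The step I expect to be the crux is precisely this last one: because $\psi_{\rho}$ (hence $g$) is not convex for $\beta<1$, one cannot argue via a pointwise midpoint inequality, and it is the single sign change of the ``symmetrised-minus-original'' weight $w$ combined with the radial monotonicity of $\psi_{\rho}$ that saves the argument. An alternative, should one wish to avoid the explicit density computation, is to invoke Anderson's theorem --- among all balls of a fixed radius the one centred at the mean carries the largest Gaussian mass --- which gives $\|X\|\preceq_{\mathrm{st}}\|X-c\|$ and hence $\mathbb{E}[h(\|X\|)]\le\mathbb{E}[h(\|X-c\|)]$ for the increasing map $h(s)=s^{\beta}$; I would nonetheless favour the self-contained route above.
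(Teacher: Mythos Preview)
Your proof is correct and, for $\beta\in(0,1)$, takes a genuinely different route from the paper. Both arguments coincide for $\beta\ge 1$: convexity of $c\mapsto\|x-c\|^\beta$ together with the symmetry $g(c)=g(-c)$. For $\beta\in(0,1)$ the paper asserts that $f_\beta$ is strictly concave and argues that a nonzero minimiser would force $f_\beta(tc^\star)\to-\infty$ along a ray, contradicting nonnegativity. You instead reduce to one dimension by rotational invariance, write $g(re_1)-g(0)=\mathbb{E}_R\int\psi_R(s)\,w(s)\,\rmd s$, and use that the even weight $w$ integrates to zero with a single sign change at $|s|=u_0$, together with the radial monotonicity of $\psi_\rho$, to conclude that $(\psi_\rho(s)-\psi_\rho(u_0))\,w(s)\ge 0$ pointwise. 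Your route is the more robust one: the paper's strict-concavity premise does not actually hold, since $f_\beta$ is coercive and a concave function on $\rset^d$ cannot tend to $+\infty$ along every ray (and for $d\ge 2$ one checks directly that $\nabla^2 f_\beta(0)=\beta\,\mathbb{E}[\|X\|^{\beta-2}]\,\tfrac{d+\beta-2}{d}\,\Id$ is positive definite). Your sign-change argument moreover works verbatim for all $\beta>0$, and the Anderson-inequality shortcut you mention is a legitimate and even quicker alternative.
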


\begin{proof}
For any $\beta > 0$, let $f_\beta(c) = \mathbb{E}[\| X - c \|^\beta]$.
For any $\beta > 0$, we have that $f_\beta$ is continuous, using the dominated convergence theorem, and coercive, using Fatou's lemma, hence admits at least one minimizer. 
If $\beta \geq 1$, then $f_\beta$ is convex. Assume that $c^\star$ is a minimizer. Then $-c^\star$ is also a minimizer and by convexity, we have that $0$ is also a minimizer. 
If $\beta \in (0,1)$ then $f_\beta$ is strictly concave. Assume that $c^\star$ is a minimizer. Then $-c^\star$ is also a minimizer. If $0$ is not a minimizer of $f_\beta$, then $f_\beta(0) > f_\beta(c^\star)$. Hence, by strict concavity, we have that $\lim_{t \to +\infty} f_\beta(t c^\star) = -\infty$ which is absurd. Hence, $0$ is also a minimizer. 
\end{proof}

\begin{proof}
First, we have that for any $q \in \mathfrak{G}$ with mean $\tilde{\mu}$ and covariance $\tilde{\sigma}^2 \Id$
\begin{align}
    S_{\lambda, \beta}(q, p_{\mu, \sigma}) = - \mathbb{E}[\| \mu - \tilde{\mu} + (\sigma^2 + \tilde{\sigma}^2)^{1/2} Z \|^\beta ] + \frac{\lambda}{2} \mathbb{E}[\| \sqrt{2} \tilde{\sigma} Z \|^\beta] ,
\end{align}
where  $Z \sim \mathcal{N}(0, \Id)$. 
Maximizing with respect to $\tilde{\mu}$ first, we get that $\mu^\star = \mu$ thanks to  \Cref{lemma:lemmamin}. In addition, if $\tilde{\mu} = \mu$, we have 
\begin{align}
    S_{\lambda, \beta}(q, p_{\mu, \sigma}) &= - \mathbb{E}[\| \mu - \tilde{\mu} + (\sigma^2 + \tilde{\sigma}^2)^{1/2} Z \|^\beta ] + \frac{\lambda}{2} \mathbb{E}[\| \sqrt{2} \tilde{\sigma} Z \|^\beta] \\
    &= -(\sigma^2 + \tilde{\sigma}^2)^{\beta/2} \mathbb{E}[ \| Z \|^\beta ] + \frac{\lambda}{2} 2^{\beta/2} \tilde{\sigma}^{\beta} \mathbb{E}[\|  Z \|^\beta] \\
    &= -\mathbb{E}[\|  Z \|^\beta] ((\sigma^2 + \tilde{\sigma}^2)^{\beta/2} + \frac{\lambda}{2} 2^{\beta/2}  \tilde{\sigma}^{\beta}) .
\end{align}
Maximizing with respect to $\tilde{\sigma}$, we get that 
\begin{equation}
    \sigma_\star^2 = (2 \lambda^{-2 / (2 - \beta)} - 1)^{-1} \sigma^2 . 
\end{equation}
\end{proof}

So the effect of the introduction of $\lambda < 1$ is that $\sigma_\star$ underestimates $\sigma$, i.e. the model is too concentrated around its mean. The effect is stronger as $\beta \to 2$.  In what follows, we denote 
\begin{equation}
\label{eq:definition_reduction}
    f(\lambda, \beta) = (2 \lambda^{-2 / (2 - \beta)} - 1)^{-1} . 
\end{equation}
\subsection{Application to diffusion sampling}\label{sec:Gaussianapplitodiffusionmodels}
Consider a Gaussian target distribution $p_0$. We investigate the effect of using the generalized energy score to learn the Gaussian conditionals $p_{0|t}$ on the sampling updates  used at inference time, i.e. when substituting our approximation of $p_{0|t}$ within  \eqref{eq:backward_kernel} to obtain an approximation of $p_{s|t}$.

We first recall a few useful lemmas. Using \eqref{eq:interpolation}, the forward model is given by 
\begin{equation}
\label{eq:interpolation_appendix}
    \bfX_t = \alpha_t \bfX_0 + \sigma_t \bfZ , 
\end{equation}
with $\bfZ \sim \mathcal{N}(0, \Id)$ and $\bfX_0 \sim p_{\mu, \sigma}$, the target density. The diffusion $(\bfX_t)_{t \in [0,1]}$ given by 
\begin{equation}
\label{eq:forward_process_appendix}
    \rmd \bfX_t = f_t \bfX_t \rmd t + g_t \rmd \bfB_t ,
\end{equation}
where $(\bfB_t)_{t \in [0,1]}$ is a $d$-dimensional Brownian motion has the same marginal distributions as \eqref{eq:interpolation_appendix} for
\begin{equation}
    f_t = \partial_t \log(\alpha_t) , \qquad g_t^2 = 2 \alpha_t \sigma_t \partial_t (\sigma_t / \alpha_t ) . 
\end{equation}
We refer to \Cref{sec:ddim_sde} for a derivation of this fact, see also \citep{song2020score}.

\begin{lemma}{}{joint}
For any $t \in [0,1]$ we have that 
\begin{equation}
    (\bfX_0, \bfX_t) \sim \mathcal{N}\left( \left( \begin{matrix} \mu \\
    \alpha_t \mu \end{matrix} \right), \left( \begin{matrix} \sigma^2 \Id & \alpha_t \sigma^2 \Id \\
    \alpha_t \sigma^2 \Id & (\alpha_t^2 \sigma^2 + \sigma_t^2) \Id \end{matrix} \right) \right) . 
\end{equation}
\end{lemma}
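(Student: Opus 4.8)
The plan is to recognize $(\bfX_0,\bfX_t)$ as an affine image of the jointly Gaussian pair $(\bfX_0,\bfZ)$ and then simply read off its mean vector and covariance matrix block by block.

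First, since $\bfX_0 \sim \mathcal{N}(\mu,\sigma^2\Id)$ and $\bfZ \sim \mathcal{N}(0,\Id)$ are independent, the stacked vector $(\bfX_0,\bfZ) \in \rset^{2d}$ is Gaussian, with mean $(\mu,0)$ and block-diagonal covariance. By the forward model \eqref{eq:interpolation_appendix} we have $(\bfX_0,\bfX_t) = A\,(\bfX_0,\bfZ)$ with $A = \left(\begin{smallmatrix}\Id & 0\\ \alpha_t\Id & \sigma_t\Id\end{smallmatrix}\right)$; since an affine transformation of a Gaussian vector is Gaussian, $(\bfX_0,\bfX_t)$ is Gaussian and it remains only to identify its first two moments.

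For the mean, $\mathbb{E}[\bfX_0]=\mu$ and $\mathbb{E}[\bfX_t]=\alpha_t\mathbb{E}[\bfX_0]+\sigma_t\mathbb{E}[\bfZ]=\alpha_t\mu$. For the covariance, using that $\bfX_0$ and $\bfZ$ are independent and centred after subtracting their means, one gets $\Cov(\bfX_0)=\sigma^2\Id$, then $\Cov(\bfX_0,\bfX_t)=\mathbb{E}[(\bfX_0-\mu)(\alpha_t(\bfX_0-\mu)+\sigma_t\bfZ)^{\top}]=\alpha_t\sigma^2\Id$ (the $\bfZ$ term drops by independence), and $\Cov(\bfX_t)=\alpha_t^2\Cov(\bfX_0)+\sigma_t^2\Cov(\bfZ)=(\alpha_t^2\sigma^2+\sigma_t^2)\Id$. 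Assembling these four blocks yields exactly the claimed joint law.

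There is no genuine difficulty in this argument: it is a direct linear-Gaussian computation, and the only points requiring any care are keeping the two off-diagonal blocks straight and noting that the cross term $\mathbb{E}[(\bfX_0-\mu)\bfZ^{\top}]$ vanishes by independence. Alternatively, one could bypass the computation altogether by invoking the standard fact that marginals and conditionals within a linear-Gaussian model remain Gaussian and quoting the resulting formulas.
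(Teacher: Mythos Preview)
Your proof is correct and is essentially what the paper has in mind: the paper's own proof is the single sentence ``This is a direct consequence of \eqref{eq:interpolation_appendix},'' and your argument simply spells out that direct consequence by writing $(\bfX_0,\bfX_t)$ as an affine image of the independent Gaussian pair $(\bfX_0,\bfZ)$ and reading off the moments.
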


\begin{proof}
This is a direct consequence of \eqref{eq:interpolation_appendix}. 
\end{proof}

In particular, we also have the following lemma.

\begin{lemma}{}{x0_from_xt}
For any $t \in [0,1]$ we have that 
\begin{equation}
    \bfX_0 | \bfX_t \sim \mathcal{N}(r(t) \bfX_t + (1-r(t)) \mu, \sigma^2(1 - r_{2,2}(t)) \Id ) ,
\end{equation}
where 
\begin{equation}
    r(t) = \frac{\alpha_t \sigma^2}{\alpha_t^2 \sigma^2 + \sigma_t^2} , \qquad r_{2,2}(t) = \frac{\alpha_t^2 \sigma^2}{\alpha_t^2 \sigma^2 + \sigma_t^2} . 
\end{equation}
\end{lemma}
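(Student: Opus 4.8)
The plan is to read off the law of $\bfX_0 \mid \bfX_t$ from the joint Gaussian law established in \Cref{lemma:joint} using the standard Gaussian conditioning (Schur complement) formula. Recall that if $(A,B)$ is jointly Gaussian with means $\mathbb{E}[A]=\mu_A$, $\mathbb{E}[B]=\mu_B$, covariance blocks $\Sigma_{AA},\Sigma_{AB},\Sigma_{BA}=\Sigma_{AB}^\top,\Sigma_{BB}$, and $\Sigma_{BB}$ invertible, then
\begin{equation}
  A \mid B = b \ \sim\ \mathcal{N}\!\bigl(\mu_A + \Sigma_{AB}\Sigma_{BB}^{-1}(b - \mu_B),\ \Sigma_{AA} - \Sigma_{AB}\Sigma_{BB}^{-1}\Sigma_{BA}\bigr).
\end{equation}
I would instantiate this with $A = \bfX_0$ and $B = \bfX_t$, reading off from \Cref{lemma:joint} that $\mu_A = \mu$, $\mu_B = \alpha_t\mu$, $\Sigma_{AA} = \sigma^2\Id$, $\Sigma_{AB} = \alpha_t\sigma^2\Id$ and $\Sigma_{BB} = (\alpha_t^2\sigma^2+\sigma_t^2)\Id$.

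Since every block is a scalar multiple of $\Id$, the formula reduces to scalar arithmetic. The gain is $\Sigma_{AB}\Sigma_{BB}^{-1} = \tfrac{\alpha_t\sigma^2}{\alpha_t^2\sigma^2+\sigma_t^2}\,\Id = r(t)\,\Id$, so the conditional mean is $\mu + r(t)(\bfX_t - \alpha_t\mu) = r(t)\bfX_t + (1 - \alpha_t r(t))\mu$; recalling the bookkeeping identity $\alpha_t r(t) = \tfrac{\alpha_t^2\sigma^2}{\alpha_t^2\sigma^2+\sigma_t^2} = r_{2,2}(t)$ puts this in the stated form (coefficient of $\mu$ equal to $1-r_{2,2}(t)$). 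For the covariance, the Schur complement is $\sigma^2\Id - \tfrac{(\alpha_t\sigma^2)^2}{\alpha_t^2\sigma^2+\sigma_t^2}\Id = \sigma^2\bigl(1 - r_{2,2}(t)\bigr)\Id$, which is the claimed expression. As sanity checks: at $t=0$ one has $\alpha_0=1,\sigma_0=0$, hence $r(0)=r_{2,2}(0)=1$ and the conditional collapses to $\updelta_{\bfX_0}$; at $t=1$ one has $\alpha_1=0,\sigma_1=1$, hence $r(1)=r_{2,2}(1)=0$ and it returns the prior $\mathcal{N}(\mu,\sigma^2\Id)$, as it must.

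There is essentially no obstacle here: the content is the Gaussian conditioning identity together with the algebraic relation $r_{2,2}(t)=\alpha_t r(t)$ linking the two coefficients appearing in the statement, plus the trivial observation that $\Sigma_{BB}=(\alpha_t^2\sigma^2+\sigma_t^2)\Id$ is invertible for the schedules considered (e.g. $\sigma_1=1$ and $\alpha_0=1,\sigma>0$). The only thing worth re-verifying is \Cref{lemma:joint} itself, which is immediate from \eqref{eq:interpolation_appendix} and the independence of $\bfX_0$ and $\bfZ$: $\mathrm{Cov}(\bfX_0,\bfX_t)=\alpha_t\,\mathrm{Cov}(\bfX_0)=\alpha_t\sigma^2\Id$ and $\mathrm{Cov}(\bfX_t)=\alpha_t^2\sigma^2\Id+\sigma_t^2\Id$. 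Alternatively, one can bypass \Cref{lemma:joint} and argue directly: write $\bfX_0=\mu+\sigma W$ with $W\sim\mathcal{N}(0,\Id)$, note $(\bfX_0,\bfX_t)$ is an affine image of $(W,\bfZ)$, and compute the $L^2$ projection of $W$ onto the line spanned by $\bfX_t-\mathbb{E}[\bfX_t]$; this is the same computation organized slightly differently.
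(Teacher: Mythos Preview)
Your approach is exactly the paper's: invoke \Cref{lemma:joint} and apply the Gaussian conditioning formula, which is all the paper says. One remark: your computation of the conditional mean gives coefficient $1-\alpha_t r(t)=1-r_{2,2}(t)$ on $\mu$, which is correct, whereas the displayed statement writes $1-r(t)$; this appears to be a typo in the paper (note that the second line of \Cref{prop:general_update_appendix} uses $1-\alpha_t r(t)$, consistent with your value), so you may want to flag it rather than claim it ``matches the stated form.''
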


\begin{proof}
This is a combination of \Cref{lemma:joint} and the formula for computing Gaussian posteriors. 
\end{proof}

\begin{lemma}{}{}
For $s, t \in [0,1]$ with $t \geq s$ we have that 
\begin{equation}
    \bfX_t | \bfX_s \sim \mathcal{N}\left(\frac{\alpha_t}{\alpha_s} \bfX_s, \left(\sigma_t^2 - \left(\frac{\sigma_s \alpha_t}{\alpha_s}\right)^2\right) \Id \right) .  
\end{equation}
\end{lemma}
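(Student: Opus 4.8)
The plan is to use the fact recorded just above that the forward model \eqref{eq:interpolation_appendix} shares its marginals with the linear SDE \eqref{eq:forward_process_appendix}, $\rmd \bfX_t = f_t \bfX_t \rmd t + g_t \rmd \bfB_t$ with $f_t = \partial_t \log \alpha_t$ and $g_t^2 = 2\alpha_t \sigma_t \partial_t(\sigma_t/\alpha_t)$ (see \Cref{sec:ddim_sde}). This SDE is Markov — the relevant regime here, the churn parameter being fixed to $\vareps = 1$ in \Cref{sec:gaussian} — so its forward transition kernel $p_{t|s}(\cdot \mid \bfX_s)$ is well defined, and since the drift is linear it is Gaussian: explicitly $\bfX_t = \Phi_{s,t}\bfX_s + \int_s^t \Phi_{u,t} g_u \rmd \bfB_u$ with $\Phi_{u,t} = \exp(\int_u^t f_v \rmd v)$, so that $\bfX_t \mid \bfX_s \sim \mathcal{N}\bigl(\Phi_{s,t}\bfX_s, (\int_s^t \Phi_{u,t}^2 g_u^2 \rmd u)\Id\bigr)$ by the It\^o isometry.

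First I would evaluate the mean multiplier. Since $f_v = \partial_v \log \alpha_v$, we get $\int_u^t f_v \rmd v = \log(\alpha_t/\alpha_u)$, hence $\Phi_{u,t} = \alpha_t/\alpha_u$; in particular $\Phi_{s,t} = \alpha_t/\alpha_s$, which yields the announced conditional mean $\tfrac{\alpha_t}{\alpha_s}\bfX_s$. Next I would simplify the covariance integral. Setting $\eta_u = \sigma_u/\alpha_u$, one checks that $g_u^2 = 2\alpha_u\sigma_u\, \partial_u \eta_u = \alpha_u^2\, \partial_u(\eta_u^2)$, and combined with $\Phi_{u,t}^2 = (\alpha_t/\alpha_u)^2$ the integrand collapses to $\alpha_t^2\, \partial_u(\eta_u^2)$. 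The integral therefore telescopes,
\[
\int_s^t \Phi_{u,t}^2 g_u^2 \,\rmd u = \alpha_t^2\,(\eta_t^2 - \eta_s^2) = \sigma_t^2 - \Bigl(\tfrac{\sigma_s \alpha_t}{\alpha_s}\Bigr)^2,
\]
which is the claimed variance; it is nonnegative precisely because the schedule is chosen with $\sigma_u/\alpha_u$ nondecreasing (SNR decreasing in $u$).

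An alternative route, avoiding the SDE, conditions on $\bfX_0 = x_0$ and observes that for $\vareps = 1$ the reparametrized process $\bfX_u/\alpha_u = x_0 + \eta_u \bfZ_u$ is a Brownian motion time-changed by $\eta_u^2$; then $\bfX_t/\alpha_t - \bfX_s/\alpha_s \sim \mathcal{N}(0, (\eta_t^2 - \eta_s^2)\Id)$ independently of $\bfX_s$, and rescaling by $\alpha_t$ reproduces both the mean and the variance above, while the marginal $\bfX_t\mid\bfX_0 \sim \mathcal{N}(\alpha_t x_0, \sigma_t^2\Id)$ of \eqref{eq:interpolation_appendix} is recovered as a consistency check. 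I expect no genuine obstacle here: the only point requiring a little care is to make explicit that the transition law $p_{t|s}$ (hence its independence of $p_0$) refers to the Markov forward process with $\vareps = 1$, and that the standing schedule assumptions ($\alpha_0 = \sigma_1 = 1$, $\alpha_1 = \sigma_0 = 0$, monotone SNR) keep every quantity well defined; the remaining computations are routine.
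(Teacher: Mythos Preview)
Your proposal is correct and follows exactly the route the paper intends: the paper's proof is the one-liner ``This is a direct consequence of \eqref{eq:forward_process_appendix},'' and you have simply filled in the standard variation-of-constants computation for that linear SDE. Your alternative Brownian-motion argument is also fine but goes beyond what the paper records.
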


\begin{proof}
This is a direct consequence of \eqref{eq:forward_process_appendix}.
\end{proof}

\begin{lemma}{}{xs_from_xt_and_x0}
For $s, t \in [0,1]$ with $t \geq s$ we have that

\begin{equation}
    \bfX_s | \bfX_0, \bfX_t \sim \mathcal{N}\left( r(s,t) \bfX_t + \alpha_s (1 - r_{2,2}(s,t)) \bfX_0, \sigma_s^2 (1 -r_{2,2}(s,t)) \Id \right) ,
\end{equation}
where 
\begin{equation}
    r(s,t) = \frac{\alpha_t \sigma_s^2}{\alpha_s \sigma_t^2} , \qquad r_{2,2}(s,t) = \frac{\alpha_t^2 \sigma_s^2}{\alpha_s^2 \sigma_t^2} .
\end{equation}
\end{lemma}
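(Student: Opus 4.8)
The plan is to reduce the statement to a one-dimensional (per-coordinate) Gaussian conditioning, exploiting the Markov structure of the forward diffusion. Since $0 \le s \le t$, the triple $\bfX_0 \to \bfX_s \to \bfX_t$ is a Markov chain — it is a finite collection of time marginals of the SDE \eqref{eq:forward_process_appendix} — so the joint density factorizes as $p(x_0, x_s, x_t) = p(x_0)\, p(x_s\mid x_0)\, p(x_t\mid x_s)$. Dividing by $p(x_0,x_t)$, this yields, as a function of $x_s$,
\[
p(x_s\mid x_0, x_t) \;\propto\; p(x_s\mid x_0)\, p(x_t\mid x_s),
\]
where $p(x_s\mid x_0) = \mathcal{N}(x_s;\,\alpha_s x_0,\,\sigma_s^2\Id)$ by \eqref{eq:interpolation_appendix}, and $p(x_t\mid x_s) = \mathcal{N}\bigl(x_t;\,(\alpha_t/\alpha_s)x_s,\,\tau^2\Id\bigr)$ with $\tau^2 := \sigma_t^2 - (\sigma_s\alpha_t/\alpha_s)^2$ by the forward transition law established in the preceding lemma (note $\tau^2>0$ since $\sigma_t/\alpha_t$ is increasing). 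Both factors are Gaussian in $x_s$, so their product is proportional to a Gaussian, and it remains to complete the square.

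Viewing $p(x_t\mid x_s)$ as a Gaussian in $x_s$ with mean $(\alpha_s/\alpha_t)x_t$ and precision $(\alpha_t/\alpha_s)^2/\tau^2$, the posterior precision is
\[
\Lambda = \frac{1}{\sigma_s^2} + \frac{(\alpha_t/\alpha_s)^2}{\tau^2} = \frac{\sigma_t^2}{\sigma_s^2\,\tau^2},
\]
using the identity $\tau^2 + \sigma_s^2\alpha_t^2/\alpha_s^2 = \sigma_t^2$. The key simplification is that $\tau^2 = \sigma_t^2\bigl(1 - r_{2,2}(s,t)\bigr)$, which is immediate from the definition $r_{2,2}(s,t) = \alpha_t^2\sigma_s^2/(\alpha_s^2\sigma_t^2)$; substituting gives posterior variance $\Lambda^{-1} = \sigma_s^2(1 - r_{2,2}(s,t))$, as claimed. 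For the mean, the standard linear-Gaussian formula gives
\[
\mu_{\mathrm{post}} = \Lambda^{-1}\!\left( \frac{\alpha_s x_0}{\sigma_s^2} + \frac{\alpha_t/\alpha_s}{\tau^2}\,x_t \right) = \alpha_s\bigl(1 - r_{2,2}(s,t)\bigr)x_0 + r(s,t)\,x_t,
\]
where the $x_0$-term is read off directly and the $x_t$-term uses $\tau^2 = \sigma_t^2(1 - r_{2,2}(s,t))$ once more together with $r(s,t) = \alpha_t\sigma_s^2/(\alpha_s\sigma_t^2)$.

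Since this is a finite-dimensional Gaussian conditioning, there is no substantive obstacle; the only care needed is the bookkeeping of the two precisions and the observation that $\tau^2$ factors as $\sigma_t^2(1 - r_{2,2}(s,t))$, which is exactly what makes both the variance and the $x_t$-coefficient collapse to the stated closed forms. One could equivalently assemble $(\bfX_0,\bfX_s,\bfX_t)$ into a single jointly Gaussian vector (extending \Cref{lemma:joint}) and apply the block conditioning formula; the Markov factorization above merely avoids inverting a $2d\times 2d$ covariance. The degenerate cases ($s=t$, or $s,t\to 1$ where $\alpha$ vanishes) are handled by continuity or excluded as elsewhere in this section.
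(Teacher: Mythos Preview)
Your proposal is correct and follows essentially the same approach as the paper: both use the Markov property $p(\bfX_t\mid \bfX_s,\bfX_0)=p(\bfX_t\mid \bfX_s)$ together with Gaussian conditioning. The paper phrases it as forming the joint $(\bfX_s,\bfX_t)\mid\bfX_0$ and then applying the Gaussian posterior formula, while you equivalently write $p(x_s\mid x_0,x_t)\propto p(x_s\mid x_0)\,p(x_t\mid x_s)$ and complete the square; you even note this equivalence yourself at the end. Your version has the advantage of making the role of the identity $\tau^2=\sigma_t^2(1-r_{2,2}(s,t))$ explicit, which is what drives the clean closed forms.
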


\begin{proof}
Using that $\bfX_t | \bfX_s$ is the same as $\bfX_t | \bfX_s, \bfX_0$ by the Markov property, we get the distribution of $(\bfX_s, \bfX_t) | \bfX_0$. We conclude upon using the formula for computing Gaussian posteriors. 
\end{proof}

Note that in \Cref{sec:ddim_sde}, we will establish \Cref{lemma:xs_from_xt_and_x0} in a more general setting, i.e., we will introduce a churn parameter $\vareps$ and draw further connection with DDIM \citep{song2020denoisingimplicit}. At this stage, \Cref{lemma:xs_from_xt_and_x0} recovers \citep{ho2020denoising} which corresponds to setting the churn parameter to $\vareps = 1$. 
Now, combining \eqref{eq:definition_reduction}, \Cref{lemma:x0_from_xt} and \Cref{lemma:xs_from_xt_and_x0} we get the following proposition.

\begin{proposition}{}{general_update_appendix}
For $s, t \in [0,1]$ with $t \geq s$ we have that
\begin{equation}
    \bfX_s | \bfX_t = \mathcal{N}(\hat{\mu}_{s|t}, \hat{\sigma}_{s|t}^2 \Id) , 
\end{equation}
with 
\begin{align}
    &\hat{\mu}_{s|t} = [r(s,t)  + \alpha_s (1 - r_{2,2}(s,t)) r(t) ] \bfX_t + \alpha_s (1 - r_{2,2}(s,t))  (1 -r(t)) \mu ,  \\
    &\hat{\sigma}_{s|t}^2 = \sigma_s^2 (1 -r_{2,2}(s,t)) + f(\lambda, \beta) \sigma^2 \alpha_s^2 (1-r_{2,2}(s,t))^2 (1-r_{2,2}(t)) .
\end{align}
In particular, denoting $\bfX_t \sim \mathcal{N}(\hat{\mu}_t, \hat{\sigma}_t^2 \Id)$ we get that 
\begin{align}
\label{eq:update_equation_appendix}
    &\hat{\mu}_s = [r(s,t)  + \alpha_s (1 - r_{2,2}(s,t)) r(t)] \hat{\mu}_t + \alpha_s (1 - r_{2,2}(s,t)) (1 -\alpha_t r(t)) \mu, \\
    &\hat{\sigma}_s^2 = \sigma_s^2 (1 -r_{2,2}(s,t)) + f(\lambda, \beta) \sigma^2 \alpha_s^2 (1-r_{2,2}(s,t))^2 (1-r_{2,2}(t)) + \hat{\sigma}_t^2 [r(s,t)  + \alpha_s (1 - r_{2,2}(s,t)) r(t)]^2 .
\end{align}
\end{proposition}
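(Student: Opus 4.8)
The plan is to assemble \Cref{prop:general_update_appendix} by composition from the three ingredients already proved: \Cref{lemma:x0_from_xt} (which gives $\bfX_0\mid\bfX_t$), \Cref{lemma:xs_from_xt_and_x0} (which gives $\bfX_s\mid\bfX_0,\bfX_t$), and \Cref{prop:reduction_variance} together with \eqref{eq:definition_reduction} (which tells us how the generalized-energy-score estimate of the Gaussian conditional $p_{0|t}$ inflates/deflates its variance by the factor $f(\lambda,\beta)$). Since $p_0$ is Gaussian, every conditional in sight is Gaussian, so the entire argument is bookkeeping of means and variances of jointly Gaussian vectors.

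First I would write down the model's approximation to $p_{0|t}$: by \Cref{lemma:x0_from_xt} the true conditional is $\mathcal{N}(r(t)\bfX_t+(1-r(t))\mu,\ \sigma^2(1-r_{2,2}(t))\Id)$, and by \Cref{prop:reduction_variance} the generalized-energy-score maximizer over Gaussians keeps the mean unchanged and multiplies the variance by $f(\lambda,\beta)$, giving $\hat p_{0|t}=\mathcal{N}(r(t)\bfX_t+(1-r(t))\mu,\ f(\lambda,\beta)\sigma^2(1-r_{2,2}(t))\Id)$. Next I would substitute this into \eqref{eq:backward_kernel}, i.e. draw $\hat{\bfX}_0\sim\hat p_{0|t}(\cdot\mid\bfX_t)$ and then $\bfX_s\sim p_{s|t,0}(\cdot\mid\bfX_t,\hat{\bfX}_0)$ with the kernel of \Cref{lemma:xs_from_xt_and_x0}. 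Conditioning on $\bfX_t$, the pair $(\hat{\bfX}_0,\bfX_s)$ is jointly Gaussian, so $\bfX_s\mid\bfX_t$ is Gaussian with mean obtained by plugging $\mathbb{E}[\hat{\bfX}_0\mid\bfX_t]$ into the affine map $\hat{\bfX}_0\mapsto r(s,t)\bfX_t+\alpha_s(1-r_{2,2}(s,t))\hat{\bfX}_0$, and variance equal to the intrinsic noise $\sigma_s^2(1-r_{2,2}(s,t))$ plus $\alpha_s^2(1-r_{2,2}(s,t))^2$ times the variance of $\hat{\bfX}_0\mid\bfX_t$. This yields exactly the stated $\hat\mu_{s|t}$ and $\hat\sigma_{s|t}^2$ after collecting the coefficient of $\bfX_t$ (combine the direct $r(s,t)\bfX_t$ term with the $\alpha_s(1-r_{2,2}(s,t))r(t)\bfX_t$ coming from the mean of $\hat{\bfX}_0$) and the coefficient of $\mu$.

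For the marginal statement, I would propagate: assume inductively $\bfX_t\sim\mathcal{N}(\hat\mu_t,\hat\sigma_t^2\Id)$ and take expectations. Since $\bfX_s\mid\bfX_t$ is affine-Gaussian in $\bfX_t$ with slope $a:=r(s,t)+\alpha_s(1-r_{2,2}(s,t))r(t)$ and intercept $b\mu$ (where $b:=\alpha_s(1-r_{2,2}(s,t))(1-r(t))$), we get $\hat\mu_s=a\hat\mu_t+b\mu$ and $\hat\sigma_s^2=\hat\sigma_{s|t}^2+a^2\hat\sigma_t^2$ by the law of total variance. The only subtlety is the $\mu$-coefficient in \eqref{eq:update_equation_appendix}, which reads $\alpha_s(1-r_{2,2}(s,t))(1-\alpha_t r(t))$ rather than $b$; this requires using $\mathbb{E}[\bfX_t]=\alpha_t\mu$ from \Cref{lemma:joint}, so the $a\hat\mu_t$ term contributes an extra $\alpha_s(1-r_{2,2}(s,t))r(t)\alpha_t\mu$ which combines with $b\mu$ to give the stated coefficient — I would double-check this regrouping carefully, as it is the one place a sign or factor can slip.

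The main obstacle is not conceptual but algebraic: keeping the two distinct reduction operators straight — the ``variance inflation'' $f(\lambda,\beta)$ applied to $p_{0|t}$ versus the ordinary Gaussian-marginalization bookkeeping — and correctly tracking which terms carry $f(\lambda,\beta)$ and which do not. In particular, the intrinsic DDIM noise $\sigma_s^2(1-r_{2,2}(s,t))$ in $\hat\sigma_{s|t}^2$ is \emph{not} scaled by $f(\lambda,\beta)$, while the propagated-posterior-variance term $\sigma^2\alpha_s^2(1-r_{2,2}(s,t))^2(1-r_{2,2}(t))$ \emph{is}; getting this split right, and then adding the correct $a^2\hat\sigma_t^2$ contribution at the marginal level without double-counting, is where care is needed. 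Everything else follows from linearity of Gaussian conditioning and the already-established lemmas.
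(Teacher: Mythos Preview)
Your approach is correct and is exactly what the paper does: it simply says the proposition follows by combining \eqref{eq:definition_reduction}, \Cref{lemma:x0_from_xt}, and \Cref{lemma:xs_from_xt_and_x0}, which is precisely the composition you spelled out (apply the $f(\lambda,\beta)$ variance reduction to the Gaussian $p_{0|t}$, then push through the affine kernel $p_{s|t,0}$, then marginalize over $\bfX_t$ via the law of total expectation/variance).

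One remark on the coefficient discrepancy you flagged: your ``regrouping'' explanation is not the right fix. Taking expectations in $\hat\mu_{s|t}=a\bfX_t+b\mu$ gives $\hat\mu_s=a\hat\mu_t+b\mu$ with the \emph{same} $b$; there is no mechanism by which part of $a\hat\mu_t$ migrates into the $\mu$-coefficient in a general recursion. The mismatch between $(1-r(t))$ in $\hat\mu_{s|t}$ and $(1-\alpha_t r(t))$ in the update for $\hat\mu_s$ is an inconsistency in the paper's stated formulas. Indeed, from \Cref{lemma:joint} one computes $\mathbb{E}[\bfX_0\mid\bfX_t]=\mu+r(t)(\bfX_t-\alpha_t\mu)=r(t)\bfX_t+(1-\alpha_t r(t))\mu$, so the intercept in \Cref{lemma:x0_from_xt} should already carry $(1-\alpha_t r(t))$; with that correction both formulas agree and your derivation goes through verbatim. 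Your instinct to double-check that step was good, but the resolution is a typo upstream, not a regrouping argument.
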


In particular, the update on the mean in \eqref{eq:update_equation_appendix} is not dependent on $\lambda$ and $\beta$. Therefore we estimate correctly the mean of $p_{s|t}$ but incorrectly the variance for $\lambda <1$.

The curves presented in \Cref{fig:stepsize_matters} are obtained using \Cref{prop:general_update_appendix}. 

\section{Different learning objectives}
\label{app_sec:different_learning_objectives}

First in 
\Cref{sec:conditional_versus_dirac} we compare the energy diffusion loss \eqref{eq:energyloss} with a similar loss based on Maximum Mean Discrepancy and show that they only differ by a term which does not depend on $\theta$. 
In \Cref{sec:conditional_versus_joint}, we compare the \emph{conditional} loss and the \emph{joint} loss, as discussed in \Cref{subsec:jointvsconditionaltraining}.
In \Cref{sec:joint_and_marginal}, we consider a third possible option, i.e., we compare the \emph{conditional} loss with a \emph{marginal} loss. We highlight the limitations of the \emph{marginal} loss in that case. 
We derive the empirical versions of the conditional and joint losses in \Cref{sec:empirical_version}.
Finally in \Cref{sec:sample_efficiency}, we prove the main result of \Cref{subsec:jointvsconditionaltraining}, i.e., we prove \Cref{prop:variance_snr} and compare the $\SNR$ of the interaction terms of the joint and conditional losses. 

\subsection{Energy diffusion loss and Maximum Mean Discrepancy diffusion loss}
\label{sec:conditional_versus_dirac}

In this section, we first compare the energy diffusion loss \eqref{eq:energyloss} and another loss based on Maximum Mean Discrepancy (MMD), denoted MMD diffusion loss.
We show that these two losses are equal up to a constant which does not depend on the network parameters $\theta$. However, this additional term appearing in the MMD diffusion loss is intractable, even though it could be potentially estimated using important sampling techniques. We consider the case where the kernel $k$ is given by $k(x,x') = -\|x-x'\|$. While our discussion can be extended to other settings, we also focus on the case where $\lambda = 1, \beta = 1$ and $w_t=1$ for simplicity.  

First, we consider the following MMD diffusion loss 
\begin{equation}
        \mathcal{L}_{\MMD}(\theta) = \int_0^1 \int_{\rset^d} p_{t}(x_t) D_\rho(p_{0|t}^\theta, p_{0|t})  \rmd x_t \rmd t . \label{eq:mmdloss}
\end{equation}
We recall that the squared $\MMD$ is given by $D_\rho(p,q) = S_\rho(q,q) - S_\rho(p,q)$ with $\rho = -k$ 
and
\begin{equation}
    \mathcal{L}(\theta) = - \int_0^1 \int_{\rset^d \times \rset^d} p_{0,t}(x_0,x_t) S_\rho(p_{0|t}^\theta, x_0) \rmd x_0 \rmd x_t \rmd t . \label{eq:energyloss_appendix}
\end{equation}
First, we highlight the following result which draws a connection between kernel scores and squared MMD.

\begin{proposition}{}{connection_mmd}
Let $x\in \rset^d$ be such that $k(x,x) = 0$, then we have $D_\rho(p, \updelta_x) = - S_\rho(p, x)$. 
\end{proposition}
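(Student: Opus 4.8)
The plan is to simply unfold the definitions of $D_\rho$, of the kernel score, and of the expected kernel score, and evaluate every expectation against the Dirac mass. Recall from \Cref{sec:proper_scoring_rules} that $\rho = -k$, that the kernel score is $S_\rho(p,y) = \tfrac12\mathbb{E}_{p\otimes p}[\rho(X,X')] - \mathbb{E}_p[\rho(X,y)]$, that the expected kernel score is $S_\rho(p,q) = \mathbb{E}_q[S_\rho(p,Y)]$, and that $D_\rho(p,q) = S_\rho(q,q) - S_\rho(p,q)$. Setting $q = \updelta_x$, the first step is to compute $S_\rho(\updelta_x,\updelta_x)$.

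For that step, I would use that integrating anything against $\updelta_x$ just evaluates it at $x$: $S_\rho(\updelta_x,\updelta_x) = \mathbb{E}_{\updelta_x}[S_\rho(\updelta_x,Y)] = S_\rho(\updelta_x,x)$, and then $S_\rho(\updelta_x,x) = \tfrac12\rho(x,x) - \rho(x,x) = -\tfrac12\rho(x,x) = \tfrac12 k(x,x)$. The hypothesis $k(x,x)=0$ then forces $S_\rho(\updelta_x,\updelta_x) = 0$; this is the only place the assumption is used.

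The second step is the analogous (trivial) identity $S_\rho(p,\updelta_x) = \mathbb{E}_{\updelta_x}[S_\rho(p,Y)] = S_\rho(p,x)$. Combining the two steps gives
\begin{equation}
D_\rho(p,\updelta_x) = S_\rho(\updelta_x,\updelta_x) - S_\rho(p,\updelta_x) = 0 - S_\rho(p,x) = -S_\rho(p,x),
\end{equation}
which is the claim.

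There is no real obstacle here: the statement is a one-line consequence of the definitions once one is careful to distinguish the scoring rule $S_\rho(p,y)$ from the expected scoring rule $S_\rho(p,q)$ and to remember that the normalization $k(x,x)=0$ (equivalently $\rho(x,x)=0$) is exactly what kills the self-interaction term $S_\rho(\updelta_x,\updelta_x)$. If anything, the only thing worth stating explicitly is why such $x$ exist — e.g. for $k = k_{\rbf}$ with $\rho = -k_{\rbf}$ one would instead need the observation $k_{\rbf}(x,x) = 1 \neq 0$, so this proposition is really aimed at kernels like $\rho(x,x') = \|x-x'\|^\beta$ (the energy case), where $\rho(x,x)=0$ for every $x$; I would add a one-sentence remark to that effect.
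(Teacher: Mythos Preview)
Your proof is correct and follows essentially the same approach as the paper: both compute $S_\rho(\updelta_x,\updelta_x)=\tfrac12 k(x,x)=0$ using the hypothesis, observe $S_\rho(p,\updelta_x)=S_\rho(p,x)$, and subtract. Your added remark about which kernels satisfy $k(x,x)=0$ is a helpful clarification not present in the paper's proof, and indeed the paper only invokes this proposition for $k(x,x')=-\|x-x'\|$.
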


\begin{proof}
For any distribution $q$, we have $S_\rho(q,q) = \frac{1}{2} \mathbb{E}_{q \otimes q}[k(X,X')]$ and therefore $S_\rho(\updelta_x, \updelta_x) =\frac{1}{2}k(x,x)= 0$.
Similarly, we have that 
\begin{equation}
    S_\rho(p, \updelta_x) = -\frac{1}{2}\mathbb{E}_{p \otimes p}[k(X,X')] + \mathbb{E}_{p \otimes \updelta_x}[k(X,Y)] = -\frac{1}{2}\mathbb{E}_{p \otimes p}[k(X,X')] + \mathbb{E}_{p}[k(X,x)] = S_\rho(p, x) . 
\end{equation}
Therefore, we have that 
\begin{align}
    D_\rho(p, \updelta_x) &= S_\rho(\updelta_x, \updelta_x) - S_\rho(p, \updelta_x) = - S_\rho(p, x) .
\end{align}
\end{proof}

Using \Cref{prop:connection_mmd}, \eqref{eq:energyloss_appendix} can be rewritten as 
\begin{equation}
    \mathcal{L}(\theta) = \int_0^1 \int_{\rset^d \times \rset^d} p_{0,t}(x_0,x_t) D_\rho(p_{0|t}^\theta, \updelta_{x_0}) \rmd x_0 \rmd x_t \rmd t . \label{eq:energyloss_appendix_mmd}
\end{equation}
Therefore, our energy diffusion loss can be expressed in terms of squared $\MMD$. We observe that both \eqref{eq:mmdloss} and \eqref{eq:energyloss_appendix_mmd} involve an integrated version of the squared $\MMD$. However, in \eqref{eq:energyloss_appendix_mmd} we target $\updelta_{x_0}$ while in \eqref{eq:mmdloss} we target $p_{0|t}$. In the rest of this section, we show that these two losses actually only differ by a term which does not depend on the network parameters $\theta$.

Let us start with the \emph{energy diffusion loss} $\mathcal{L}$ given in \eqref{eq:energyloss_appendix} . We have 
\begin{align}
    \mathcal{L}(\theta) &= -\frac{1}{2} \int_0^1  \int_{(\rset^d)^4} p_{0,t}(x_0,x_t) p_{0|t}^\theta(x_0^1|x_t) p_{0|t}(x_0^2|x_t) \| x_0^1 - x_0^2 \| \rmd x_0 \rmd x_0^1 \rmd x_0^2 \rmd x_t \rmd t \\
    & \qquad + \int_0^1  \int_{(\rset^d)^3} p_{0,t}(x_0,x_t)p_{0|t}^\theta(x_0^1|x_t) \| x_0 - x_0^1 \| \rmd x_0 \rmd x_0^1  \rmd x_t \rmd t\\
    &= - \frac{1}{2} \int_0^1  \int_{(\rset^d)^3} p_{t}(x_t) p_{0|t}^\theta(x_0^1|x_t) p_{0|t}^\theta(x_0^2|x_t) \| x_0^1 - x_0^2 \| \rmd x_0^1 \rmd x_0^2  \rmd x_t \rmd t \\
    & \qquad + \int_0^1  \int_{(\rset^d)^3} p_t(x_t)p_{0|t}(x_0|x_t) p_{0|t}^\theta(x_0^1|x_t) \| x_0 - x_0^1 \|  \rmd x_0 \rmd x_0^1 \rmd x_t \rmd t . 
\end{align}
For the $\MMD$ diffusion loss \eqref{eq:mmdloss}, we have 
\begin{align}
    \mathcal{L}_{\MMD}(\theta) &= - \frac{1}{2} \int_0^1  \int_{(\rset^d)^3} p_{t}(x_t) p_{0|t}(x_0^1|x_t) p_{0|t}(x_0^2|x_t) \| x_0^1 - x_0^2 \| \rmd x_0^1 \rmd x_0^2 \rmd x_t \rmd t \\ 
    & \qquad - \frac{1}{2}\int_0^1  \int_{(\rset^d)^3} p_{t}(x_t) p_{0|t}^\theta(x_0^1|x_t) p_{0|t}^\theta(x_0^2|x_t) \| x_0^1 - x_0^2 \| \rmd x_0^1 \rmd x_0^2 \rmd x_t \rmd t \\
    & \qquad + \int_0^1 \int_{(\rset^d)^3} p_t(x_t)p_{0|t}(x_0|x_t) p_{0|t}^\theta(x^1_0|x_t)  \| x_0 - x_0^1 \| \rmd x_0 \rmd x_0^1 \rmd x_t \rmd t .
\end{align}
To summarize, in this section we have shown that the kernel score losses that we have defined in \Cref{sec:method} can be expressed in terms of squared $\MMD$. More precisely, we have shown the following result in the specific case where $k(x,x') = -\| x- x'\|$ (but the result remains true for every positive definite kernel).

\begin{proposition}{}{from_energy_loss_to_mmd_loss}
We have that \begin{align}
    \mathcal{L}(\theta) &=  \mathcal{L}_{\MMD}(\theta)+C,
\end{align}
with $C \in \rset$ a constant independent of $\theta$.  
\end{proposition}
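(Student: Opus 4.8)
The plan is to expand both $\mathcal{L}(\theta)$ and $\mathcal{L}_{\MMD}(\theta)$ into sums of integrals over $x_t$ together with dummy copies of the clean variable, and then check term by term which pieces survive the subtraction. Concretely, I would first invoke \Cref{prop:connection_mmd} (applicable since $k(x,x')=-\|x-x'\|$ satisfies $k(x,x)=0$) to rewrite $\mathcal{L}(\theta)$ as the integrated squared MMD $\int_0^1\!\int p_{0,t}(x_0,x_t)\,D_\rho(p_{0\vert t}^\theta,\updelta_{x_0})\,\mathrm{d}x_0\,\mathrm{d}x_t\,\mathrm{d}t$ of \eqref{eq:energyloss_appendix_mmd}, and then expand the squared MMD via $D_\rho(p,q)=S_\rho(q,q)-S_\rho(p,q)$ with $\rho=-k$. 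Since $S_\rho(\updelta_{x_0},\updelta_{x_0})=\tfrac12 k(x_0,x_0)=0$, this yields exactly the two displayed terms for $\mathcal{L}(\theta)$: the $p_{0\vert t}^\theta\otimes p_{0\vert t}^\theta$ ``interaction'' term and the $p_{0\vert t}\otimes p_{0\vert t}^\theta$ ``cross'' term. Similarly, expanding $D_\rho(p_{0\vert t}^\theta,p_{0\vert t})$ in \eqref{eq:mmdloss} gives three terms: a $p_{0\vert t}\otimes p_{0\vert t}$ term, the same $p_{0\vert t}^\theta\otimes p_{0\vert t}^\theta$ interaction term, and the same $p_{0\vert t}\otimes p_{0\vert t}$--against--$p_{0\vert t}^\theta$ cross term.

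\textbf{Key step: matching the shared terms.} The one manipulation that makes this work is the factorization $p_{0,t}(x_0,x_t)=p_t(x_t)\,p_{0\vert t}(x_0\vert x_t)$. In the interaction term of $\mathcal{L}(\theta)$ the genuine sample $x_0$ does not appear in the integrand, so $\int p_{0,t}(x_0,x_t)\,\mathrm{d}x_0=p_t(x_t)$ collapses it, reproducing the $p_t(x_t)\,p_{0\vert t}^\theta\otimes p_{0\vert t}^\theta$ interaction term of $\mathcal{L}_{\MMD}(\theta)$. In the cross term of $\mathcal{L}(\theta)$ the sample $x_0$ does appear (inside $\|x_0-x_0^1\|$), but the factorization turns it into a variable integrated against $p_{0\vert t}(\cdot\vert x_t)$ with weight $p_t(x_t)$, which is precisely the cross term of $\mathcal{L}_{\MMD}(\theta)$ written with $x_0$ as a dummy integration variable. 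Hence these two $\theta$-dependent pieces coincide between the two losses.

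\textbf{Conclusion.} Subtracting, the interaction and cross terms cancel, and the only term left over is the purely $\theta$-independent piece $-\tfrac12\int_0^1\!\int p_t(x_t)\,\mathbb{E}_{p_{0\vert t}(\cdot\vert x_t)\otimes p_{0\vert t}(\cdot\vert x_t)}[\|X-X'\|]\,\mathrm{d}x_t\,\mathrm{d}t = \int_0^1\!\int p_t(x_t)\,S_\rho(p_{0\vert t}(\cdot\vert x_t),p_{0\vert t}(\cdot\vert x_t))\,\mathrm{d}x_t\,\mathrm{d}t$ that is present in $\mathcal{L}_{\MMD}$ but absent from $\mathcal{L}$. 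Setting $C$ to be minus this quantity gives $\mathcal{L}(\theta)=\mathcal{L}_{\MMD}(\theta)+C$ with $C$ independent of $\theta$. The same argument goes through verbatim for any positive definite kernel $k$ by replacing $\|x-x'\|$ with $-k(x,x')$, provided $k(x,x)=0$ (otherwise \Cref{prop:connection_mmd} contributes an extra $\theta$-free constant $\int p_t(x_t)\tfrac12 k(x_0,x_0)\,\mathrm{d}x_t\,\mathrm{d}t$, which is harmless since it can be absorbed into $C$).

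\textbf{Main obstacle.} The substantive content is really just bookkeeping, so the only thing to be careful about is integrability: one needs $p_0$ (hence each $p_{0\vert t}$) to have a finite first moment so that all the energy-distance integrals converge absolutely and Fubini's theorem justifies the rearrangement of the order of integration; under such a moment assumption this is routine. A secondary point worth flagging explicitly is that the whole reduction hinges on applying \Cref{prop:connection_mmd}, i.e.\ on $k(x,x)=0$, which is what lets the $\updelta_{x_0}$ self-interaction term vanish and the energy score be identified with a squared MMD against a Dirac mass.
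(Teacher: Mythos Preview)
Your proposal is correct and follows essentially the same approach as the paper: both expand $\mathcal{L}(\theta)$ and $\mathcal{L}_{\MMD}(\theta)$ explicitly via the factorization $p_{0,t}(x_0,x_t)=p_t(x_t)p_{0\vert t}(x_0\vert x_t)$, observe that the two $\theta$-dependent integrals (the $p_{0\vert t}^\theta\otimes p_{0\vert t}^\theta$ interaction term and the $p_{0\vert t}\otimes p_{0\vert t}^\theta$ cross term) coincide, and identify the leftover $p_{0\vert t}\otimes p_{0\vert t}$ term as the $\theta$-free constant. Your discussion of the integrability hypothesis and the role of $k(x,x)=0$ in \Cref{prop:connection_mmd} is a welcome addition that the paper does not spell out.
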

This means that $p_{0|t}^\theta$ also minimizes $\mathcal{L}(\theta)$. Note that we could also have derived this result from the properties of the \emph{strictly proper} scoring rules. 

When optimizing with respect to the parameters of the kernel, here the kernel $\rho(x,x')=||x-x'||$ does not depend on any parameter, the losses $\mathcal{L}(\theta)$ and $\mathcal{L}_{\MMD}(\theta)$ differ since now we cannot neglect the additional term appearing in $\mathcal{L}_{\MMD}(\theta)$. Unfortunately, this term recalled below is computationally problematic
\begin{align}
    &\int_0^1 \int_{(\rset^d)^3} p_t(x_t) p_{0|t}(x_0|x_t) p_{0|t}(x_0^1|x_t) \| x_0 - x_0^1 \| \rmd x_0 \rmd x_0^1 \rmd x_t \rmd t . 
\end{align}
Indeed an estimate of this integral requires sampling $x_0^1$ from the conditional distribution $p_{0|t}(x_0|x_t)$. However this can be approximated by using the following self-normalized importance sampling approximation of this distribution 
\begin{equation}\label{eq:SNIS}
    p_{0|t}(x_0|x_t)=\frac{p_{t|0}(x_t|x_0)p_0(x_0)}{p_t(x_t)} \approx \sum_{i=1}^n w^i \delta_{X^i_0}(x_0),\quad w^i \propto p_{t|0}(x_t|X^i_0),~~\sum_{i=1}^n w^i=1,
\end{equation}
where $X^i_0 \overset{\textup{i.i.d.}}{\sim} p_0$.
Such approximation was considered by \citet{xu2023stable} to derive low variance regression targets in diffusion models. 
However, for $t$ small, the distribution of the weights $(w_i)_{i=1}^N$ is very degenerate and the approximation of $p_{0|t}(x_0|x_t)$ will be poor.

\subsection{Joint Diffusion Energy Loss}
\label{sec:conditional_versus_joint}

Now that we have established the connection between $\MMD$ diffusion losses and energy diffusion losses in \Cref{prop:from_energy_loss_to_mmd_loss}, we are going to discuss other possible learning objectives. We recall the \emph{joint diffusion energy loss} introduced in \eqref{eq:energylossjoint}
\begin{equation}
\label{eq:energylossjoint_appendix}
\mathcal{L}_{\mathrm{joint}}(\theta)=-\int_{0}^{1} w_{t} \mathbb{E}_{p_{0,t}} \left[ S_{\lambda, \beta}(p_{0\vert t}^{\theta} p_t,(X_0, X_t)) \right]\rmd t .
\end{equation}
and the $\mathcal{L}_{\MMD, \joint}$ loss which is defined using \emph{joint} distributions as follows
\begin{equation}
    \mathcal{L}_{\MMD, \joint}(\theta) = \int_0^1 D_\rho(p_{t} p_{0|t}^\theta, p_{0,t}) \rmd t . \label{eq:mmdloss_joint}
\end{equation}
We emphasize again that $D_\rho$ appearing in $\mathcal{L}_{\MMD, \joint}$ is defined over $\rset^d \times \rset^d$ and not $\rset^d$ as in $\mathcal{L}_{\MMD}$.

Similarly to \Cref{prop:from_energy_loss_to_mmd_loss}, we can show the following result. Recall that we consider $w_t=1$ to simplify presentation.

\begin{proposition}{}{from_energy_loss_to_mmd_loss_joint}Let $\rho((x_0,x_t),(x_0',x_t')) = \rho(x_0,x_0') + \rho(x_t,x_t')$. We have that \begin{align}
    \mathcal{L}_{\mathrm{joint}}(\theta) =\mathcal{L}_{\MMD, \joint}(\theta)+C,
\end{align}
with $C \in \rset$ a constant independent of $\theta$. 
\end{proposition}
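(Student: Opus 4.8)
The plan is to run the argument behind \Cref{prop:from_energy_loss_to_mmd_loss} on the product space $\rset^d\times\rset^d$, treating $p_t p_{0|t}^\theta$ as a single distribution there. First I would record three preliminary observations: (a) $q^\theta := p_t p_{0|t}^\theta$, i.e. the density $p_{0|t}^\theta(x_0|x_t)\,p_t(x_t)$, is a genuine probability distribution on $\rset^d\times\rset^d$, since $\int p_{0|t}^\theta(\cdot|x_t)\,\rmd x_0 = 1$ for each $x_t$; (b) the additive kernel $\rho((x_0,x_t),(x_0',x_t')) = \rho(x_0,x_0') + \rho(x_t,x_t')$ is again continuous, symmetric and negative definite (a sum of such kernels is negative definite), so all of the scoring-rule apparatus of \Cref{sec:proper_scoring_rules}, including the divergence identity \eqref{eq:divergence} and, when relevant, \Cref{prop:connection_mmd}, is available on the product space; and (c) for the energy kernel $\rho(x,x')=\|x-x'\|^\beta$ one has $\rho(w,w)=0$ for every $w=(x_0,x_t)$, so the hypothesis of \Cref{prop:connection_mmd} is met for $p = q^\theta$ and base point $w$.

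Then the computation is short. Fixing $\lambda = 1$ (the convention of this subsection, so that $S_{\lambda,\beta}$ reduces to $S_\rho$ on the product space) and $w_t = 1$, the definition of the expected kernel score gives, for each $t$,
\begin{equation*}
\mathbb{E}_{(X_0,X_t)\sim p_{0,t}}\!\left[S_\rho\big(q^\theta,(X_0,X_t)\big)\right] = S_\rho\big(q^\theta, p_{0,t}\big) ,
\end{equation*}
and the divergence identity \eqref{eq:divergence} rewrites this as $S_\rho(p_{0,t},p_{0,t}) - D_\rho(q^\theta, p_{0,t})$. Substituting into \eqref{eq:energylossjoint_appendix} and integrating over $t$,
\begin{equation*}
\mathcal{L}_{\mathrm{joint}}(\theta) = -\!\int_0^1 \!S_\rho\big(q^\theta,p_{0,t}\big)\,\rmd t = \int_0^1 \!D_\rho\big(q^\theta,p_{0,t}\big)\,\rmd t \;-\; \int_0^1 \!S_\rho\big(p_{0,t},p_{0,t}\big)\,\rmd t ,
\end{equation*}
and the first integral is exactly $\mathcal{L}_{\MMD,\joint}(\theta)$ by \eqref{eq:mmdloss_joint}, while $C := -\int_0^1 S_\rho(p_{0,t},p_{0,t})\,\rmd t$ depends only on the forward process, not on $\theta$. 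Equivalently, and more in the spirit of \Cref{prop:from_energy_loss_to_mmd_loss}, one can first use \Cref{prop:connection_mmd} to write $-S_\rho(q^\theta,(x_0,x_t)) = D_\rho(q^\theta,\updelta_{(x_0,x_t)})$ and then take the expectation over $p_{0,t}$; this leads to the same two terms.

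I do not expect a genuine obstacle here — the argument is a verbatim transcription of the proof of \Cref{prop:from_energy_loss_to_mmd_loss} with $\rset^d$ replaced by $\rset^d\times\rset^d$ and $\updelta_{x_0}$ by $\updelta_{(x_0,x_t)}$. The only points that need a line of justification are the interchange of $\mathbb{E}_{p_{0,t}}$ with the inner kernel expectations (Fubini, valid once one assumes $p_{0,t}$ and $p_{0|t}^\theta$ have finite $\beta$-th moments, which holds for the forward marginals under the flow-matching schedule) and, mainly, the implicit restriction $\lambda = 1$: as written, matching the $D_\rho$ appearing in $\mathcal{L}_{\MMD,\joint}$ requires $S_{\lambda,\beta}=S_\rho$, i.e. $\lambda=1$; for general $\lambda\in[0,1]$ the same manipulation goes through provided $D_\rho$ in \eqref{eq:mmdloss_joint} is replaced by the generalized divergence $D_{\lambda,\rho}(p,q) := S_{\lambda,\rho}(q,q) - S_{\lambda,\rho}(p,q)$.
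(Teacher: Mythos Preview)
Your proposal is correct and essentially matches the paper's approach. The paper does not give a separate proof of this proposition; it simply says ``Similarly to \Cref{prop:from_energy_loss_to_mmd_loss}'', meaning the intended argument is to expand both $\mathcal{L}_{\mathrm{joint}}(\theta)$ and $\mathcal{L}_{\MMD,\joint}(\theta)$ term by term on the product space and observe that they differ only by the $\theta$-free term $-\tfrac{1}{2}\int_0^1\mathbb{E}_{p_{0,t}\otimes p_{0,t}}[\rho((X_0,X_t),(X_0',X_t'))]\,\rmd t$. Your use of the divergence identity $S_\rho(q^\theta,p_{0,t}) = S_\rho(p_{0,t},p_{0,t}) - D_\rho(q^\theta,p_{0,t})$ is simply a more compact way of recording that same cancellation, and your remarks on the additive kernel being negative definite, on $\rho(w,w)=0$, and on the restriction $\lambda=1$ are all in order.
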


Developing $\mathcal{L}_{\MMD, \joint}(\theta)$, we obtain
\begin{align}\label{eq:mmdloss_jointdev}
    \mathcal{L}_{\MMD, \joint}(\theta) &= -\frac{1}{2}\int_0^1 \int_{(\rset^d)^4} p_t(x_t^1) p_{0|t}(x_0^1|x_t^1) p_t(x_t^2) p_{0|t}(x_0^2|x_t^2) [ \| x_0^1 - x_0^2 \| + \| x_t^1 - x_t^2 \| ] \rmd x_0^1 \rmd x_0^2 \rmd x_t^1 \rmd x_t^2 \rmd t \\
    & \qquad -\frac{1}{2}\int_0^1 \int_{(\rset^d)^4} p_t(x_t^1) p_{0|t}^\theta(x_0^1|x_t^1) p_t(x_t^2) p_{0|t}^\theta(x_0^2|x_t^2) [ \| x_0^1 - x_0^2 \| + \| x_t^1 - x_t^2 \| ] \rmd x_0^1 \rmd x_0^2 \rmd x_t^1 \rmd x_t^2 \rmd t \\
     & \qquad + \int_0^1 \int_{(\rset^d)^4} p_t(x_t^1) p_{0|t}^\theta(x_0^1|x_t^1) p_t(x_t^2) p_{0|t}(x_0^2|x_t^2) [ \| x_0^1 - x_0^2 \| + \| x_t^1 - x_t^2 \| ] \rmd x_0^1 \rmd x_0^2 \rmd x_t^1 \rmd x_t^2 \rmd t .
\end{align}

We recall that the term which does not depend on $\theta$ in $\mathcal{L}_\MMD$ is difficult to approximate, the proposed approximation using importance sampling \eqref{eq:SNIS} is indeed expected to perform poorly. On the contrary, the term which does not depend on $\theta$ in $\mathcal{L}_{\MMD, \joint}$ is very easy to approximate. This suggests that if one would optimize the loss with respect to the parameters of the kernel, then the choice  $\mathcal{L}_{\MMD, \joint}$ is more suitable. However, in the case where the parameters are fixed, as it is the case in our framework, then we will show in \Cref{sec:sample_efficiency} that  $\mathcal{L}_\MMD$ is more sample efficient. 

\subsection{Marginal Diffusion Energy Loss}
\label{sec:joint_and_marginal}

Finally, we introduce a last discrepancy $\mathcal{L}_{\MMD, \marginal}$ given by
\begin{align}
    \mathcal{L}_{\MMD, \marginal}(\theta) &= \int_0^1 D_\rho(p_0^{\theta,t}, p_0) \rmd t , \label{eq:mmdloss_marginal}
\end{align}
where
\begin{equation}
    p_0^{\theta, t}(x_0) = \int_{\rset^d} p_t(x_t) p_{0|t}^\theta(x_0|x_t) \rmd x_t . 
\end{equation}
After a few simplifications, we obtain 
\begin{align}
    \mathcal{L}_{\MMD, \marginal}(\theta) &= -\frac{1}{2}\int_0^1 \int_{(\rset^d)^4} p_t(x_t^1) p_{0|t}(x_0^1|x_t^1) p_t(x_t^2) p_{0|t}(x_0^2|x_t^2) \| x_0^1 - x_0^2 \|  \rmd x_0^1 \rmd x_0^2 \rmd x_t^1 \rmd x_t^2 \rmd t \\
    & \qquad -\frac{1}{2}\int_0^1 \int_{(\rset^d)^4} p_t(x_t^1) p_{0|t}^\theta(x_0^1|x_t^1) p_t(x_t^2) p_{0|t}^\theta(x_0^2|x_t^2) \| x_0^1 - x_0^2 \| \rmd x_0^1 \rmd x_0^2 \rmd x_t^1 \rmd x_t^2 \rmd t \\
     & \qquad + \int_0^1 \int_{(\rset^d)^4} p_t(x_t^1) p_{0|t}^\theta(x_0^1|x_t^1) p_t(x_t^2) p_{0|t}(x_0^2|x_t^2) \| x_0^1 - x_0^2 \| \rmd x_0^1 \rmd x_0^2 \rmd x_t^1 \rmd x_t^2 \rmd t .
\end{align}
Note that while the minimizers of the conditional and joint losses are unique and given by $p_{0|t}$, this is not the case for the marginal loss. Indeed, we simply have that a minimizer of the marginal loss should satisfy 
\begin{equation}
    p_0(x_0) = \int_{\rset^d} p_{0|t}^\theta(x_0|x_t) p_t(x_t) \rmd x_t . 
\end{equation}
This simply says that we should transport $p_t$ to $p_0$ and there is an infinite number of such transports.

\subsection{Empirical versions of MMD and MMD Joint diffusion losses}
\label{sec:empirical_version}

In this section we justify the form of \eqref{eq:lossdistributional}, i.e., the empirical version of $\mathcal{L}$ given by \eqref{eq:energyloss}. For completeness, here we present the empirical version of $\mathcal{L}_\MMD$ which can be expressed as $\mathcal{L}_\MMD(\theta) = \mathcal{L}(\theta) + C$, where $C$ is a term which does not depend on $\theta$, according to \Cref{prop:from_energy_loss_to_mmd_loss}

We recall that we have 
\begin{align}
    \mathcal{L}_\MMD(\theta) &= -\frac{1}{2} \int_0^1  \int_{(\rset^d)^3} p_{t}(x_t) p_{0|t}(x_0^1|x_t) p_{0|t}(x_0^2|x_t) \| x_0^1 - x_0^2 \| \rmd x_0^1 \rmd x_0^2 \rmd x_t \rmd t \\ 
    & \qquad - \frac{1}{2}\int_0^1  \int_{(\rset^d)^3} p_{t}(x_t) p_{0|t}^\theta(x_0^1|x_t) p_{0|t}^\theta(x_0^2|x_t) \| x_0^1 - x_0^2 \| \rmd x_0^1 \rmd x_0^2 \rmd x_t \rmd t \\
    & \qquad + \int_0^1 \int_{(\rset^d)^3} p_t(x_t) p_{0|t}(x_0|x_t) p_{0|t}^\theta(x_0^1|x_t) \| x_0 - x_0^1 \| \rmd x_0 \rmd x_0^1 \rmd x_t \rmd t 
\end{align}

\textbf{Confinement term.} We first focus on the last term of the loss, known as \emph{confinement term} 
\begin{equation}
    \mathcal{C}(\theta) = \int_0^1 \int_{(\rset^d)^3} p_{0,t}(x^1_0,x_t^1) p_{0|t}^\theta(x_0^2|x_t^1) \| x_0^1 - x_0^2 \| \rmd x_0^1 \rmd x_0^2 \rmd x_t^1 \rmd t . 
\end{equation}
This can be estimated unbiasedly by 
\begin{equation}
    \mathcal{C}_{n, m} = \frac{1}{n} \sum_{i=1}^n \frac{1}{m} \sum_{j=1}^{m} \| X_0^{1,i} - X_0^{2,i,j} \| , \label{eq:population_confinement_appendix}
\end{equation}
where $t_i \sim \mathrm{Unif}([0,1])$, $(X^{1,i}_0,X_{t_i}^i) \sim p_{0,t_i}$ for $i\in[n]$ and  $X_0^{2,i,j} \sim p_{0|t}^\theta(\cdot|X_{t_i}^i)$ for $j\in[m]$. 

\textbf{Prediction Interaction term.} We now focus on the \emph{prediction interaction term}
\begin{equation}
    \mathcal{I}= \int_0^1  \int_{(\rset^d)^3} p_{t}(x_t) p_{0|t}^\theta(x_0^1|x_t) p_{0|t}^\theta(x_0^2|x_t) \| x_0^1 - x_0^2 \| \rmd x_0^1 \rmd x_0^2 \rmd x_t \rmd t . 
\end{equation}
This can be approximated unbiasedly by 
\begin{equation}
    \mathcal{I}_{n,m} = \frac{1}{n} \sum_{i=1}^n \frac{1}{m(m-1)} \sum_{j,j'=1}^{m} \| X_0^{i,j} - X_0^{i,j'} \| .
\end{equation}
where $t_i \sim \mathrm{Unif}([0,1])$, $X_{t_i}^i \sim p_{t_i}$ for $i\in[n]$ and $X_0^{i,j} \sim p_{0|t}^\theta(\cdot|X_{t_i}^i)$ for $j\in [m]$. 

\paragraph{Target interaction term.} We finally focus on the \emph{target interaction term}. This term is harder to describe and implement. However, we emphasize that it is not needed in order to optimize $\mathcal{L}$ given by \eqref{eq:energyloss}. It is only needed when computing \eqref{eq:mmdloss}. We have 
\begin{equation}
    \mathcal{I}^t= \int_0^1  \int_{(\rset^d)^3} p_{t}(x_t) p_{0|t}(x_0^1|x_t) p_{0|t}(x_0^2|x_t) \| x_0^1 - x_0^2 \| \rmd x_0^1 \rmd x_0^2 \rmd x_t \rmd t . 
\end{equation}
As explained before it can also be written as 
\begin{equation}
    \mathcal{I}^t= \int_0^1  \int_{(\rset^d)^3} p_{0,t}(x_0^1, x_t^1)  \frac{p_{t|0}(x_t^1|x_0^2)p_0(x_0^2)}{\int_{\rset^d}p_{t|0}(x_t^1|x_0^3) p_0(x_0^3) \rmd x_0^3} \| x_0^1 - x_0^2 \| \rmd x_0^1 \rmd x_0^2 \rmd x_t^1 \rmd t . 
\end{equation}
One estimate of this objective is given by 

\begin{equation}
    \mathcal{I}^t_{n,m} = \frac{1}{n} \sum_{i=1}^n  \sum_{j=1, j\neq i}^{n} \frac{p_{t_i|0}(X_{t_i}^{i}|X_0^{j})}{\sum_{k=1, k\neq i}^{n} p_{t_i|0}(X_{t_i}^{i}|X_0^{k})} \| X_0^{i} - X_0^{j} \|,
\end{equation}
where $t_i \sim \mathrm{Unif}([0,1])$, $X^i_0,X^i_{t_i}\sim p_{0,t_i}$.

\subsection{Sample efficiency}
\label{sec:sample_efficiency}
Previous subsections assume $w_t=1$ for ease of presentation. We consider a general weighting function $w_t$ here. 
Finally, in this section, we discuss the sample efficiency of the empirical version of the \emph{energy diffusion loss} \eqref{eq:energyloss} and the \emph{joint energy diffusion loss} \eqref{eq:energylossjoint}. In particular, we prove \Cref{prop:variance_snr}. 

\paragraph{U-statistics. }
We need first to recall a few basic results about $U$-statistics of order 2. Consider a symmetric function $h: (\rset^d)^2 \to \rset$ and let 
\begin{equation}
    U_n = {{n}\choose{2}}^{-1} \sum_{1\leq i_1<i_2\leq n} h(X_{i_1}, X_{i_2}),\qquad  X_i \overset{\textup{i.i.d.}}{\sim} p.
\end{equation}

We recall Hoeffding's theorem on the variance of $U$-statistics, see \citep{serfling2009approximation} for instance.
\begin{proposition}{Hoeffding's theorem}{hoeffding}
We have 
\begin{equation}
    \mathrm{Var}(U_n) = \frac{2}{n(n-1)} [2(n-2) \mathrm{Var}(\mathbb{E}[h(X_1,X_2) \ | \ X_1]) + \mathrm{Var}(h(X_1,X_2))]. 
\end{equation}
\end{proposition}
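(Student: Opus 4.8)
The plan is to prove this by the standard second-moment expansion of a $U$-statistic together with a combinatorial count of index overlaps; throughout we implicitly assume $h(X_1,X_2) \in L^2$ so that all quantities are finite. Since $\mathrm{Var}(U_n)$ is unchanged if we add a constant to $h$, I would first assume without loss of generality that $h$ is centered, i.e. $\mathbb{E}[h(X_1,X_2)] = 0$. Set $h_1(x) = \mathbb{E}[h(X_1,X_2)\mid X_1 = x]$, $\sigma_1^2 = \mathrm{Var}(h_1(X_1)) = \mathbb{E}[h_1(X_1)^2]$, and $\sigma_2^2 = \mathrm{Var}(h(X_1,X_2)) = \mathbb{E}[h(X_1,X_2)^2]$.

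Next I would expand the variance as a double sum over index pairs:
\begin{equation}
    \mathrm{Var}(U_n) = {\textstyle\binom{n}{2}}^{-2} \sum_{i_1<i_2}\sum_{j_1<j_2} \Cov\!\big(h(X_{i_1},X_{i_2}),\, h(X_{j_1},X_{j_2})\big),
\end{equation}
and observe that, by the i.i.d.\ (hence exchangeable) structure of the sample, each summand depends only on $r := |\{i_1,i_2\}\cap\{j_1,j_2\}| \in \{0,1,2\}$. For $r=0$ the two pairs involve disjoint and hence independent variables, so the covariance vanishes. For $r=2$ the pairs coincide and the covariance equals $\sigma_2^2$. For $r=1$, taking the shared index to be $X_1$, the tower property gives $\mathbb{E}[h(X_1,X_2)h(X_1,X_3)] = \mathbb{E}\big[\mathbb{E}[h(X_1,X_2)\mid X_1]\,\mathbb{E}[h(X_1,X_3)\mid X_1]\big] = \mathbb{E}[h_1(X_1)^2] = \sigma_1^2$, so (after centering) the covariance is $\sigma_1^2$.

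I would then count multiplicities: the number of ordered pairs of unordered $2$-subsets of $\{1,\dots,n\}$ with $r=2$ is $\binom{n}{2}$, and with $r=1$ it is $n(n-1)(n-2)$ (choose the common index, then the remaining element of the first pair, then that of the second). Substituting and using $\binom{n}{2} = n(n-1)/2$ yields
\begin{equation}
    \mathrm{Var}(U_n) = {\textstyle\binom{n}{2}}^{-2}\Big[{\textstyle\binom{n}{2}}\,\sigma_2^2 + n(n-1)(n-2)\,\sigma_1^2\Big] = \frac{2}{n(n-1)}\big[\sigma_2^2 + 2(n-2)\sigma_1^2\big],
\end{equation}
which is exactly the claimed identity. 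The only delicate point is the combinatorial bookkeeping in the third step — in particular being careful that the double sum ranges over \emph{ordered} pairs of \emph{unordered} index pairs, and keeping the binomial normalization consistent — while the covariance evaluations are routine consequences of independence and the tower property.
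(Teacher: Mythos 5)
Your proof is correct. The paper does not actually prove this proposition --- it simply recalls it as a classical fact with a citation to Serfling --- so your argument supplies the standard textbook derivation: expand $\mathrm{Var}(U_n)$ over ordered pairs of unordered index pairs, note the covariance depends only on the overlap $r\in\{0,1,2\}$, evaluate it as $0$, $\sigma_1^2$, $\sigma_2^2$ via independence and the tower property, and count multiplicities ($\binom{n}{2}$ for $r=2$ and $n(n-1)(n-2)$ for $r=1$, which indeed sum with the $r=0$ count to $\binom{n}{2}^2$). The arithmetic $\binom{n}{2}^{-2}\bigl[\binom{n}{2}\sigma_2^2 + n(n-1)(n-2)\sigma_1^2\bigr] = \tfrac{2}{n(n-1)}\bigl[\sigma_2^2 + 2(n-2)\sigma_1^2\bigr]$ checks out and matches the stated identity.
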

This implies that
\begin{equation}
    \lim_{n \to + \infty} n \mathrm{Var}(U_n) = 4 \mathrm{Var}(\mathbb{E}[h(X_1,X_2) \ | \ X_1]) . 
\end{equation}
Another classical result is the law of total variance.
\begin{lemma}{}{variance_conditional}
We have 
\begin{equation}
    \mathrm{Var}(Y) = \mathrm{Var}(\mathbb{E}[ Y \ | \ X]) + \mathbb{E}[\mathrm{Var}(Y \ | \ X)] . 
\end{equation}
\end{lemma}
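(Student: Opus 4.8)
The plan is to obtain the identity from two standard ingredients: the tower property (law of iterated expectations), $\mathbb{E}[\,\cdot\,] = \mathbb{E}[\mathbb{E}[\,\cdot \mid X]]$, and the elementary formula $\mathrm{Var}(Z) = \mathbb{E}[Z^2] - (\mathbb{E}[Z])^2$, which I will invoke three times: once for $Z=Y$, once conditionally on $X$, and once for $Z=\mathbb{E}[Y\mid X]$. Throughout I assume $\mathbb{E}[Y^2]<\infty$, so that every expectation below is finite and the conditional variance is almost surely well defined.

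First I would expand $\mathrm{Var}(Y) = \mathbb{E}[Y^2] - (\mathbb{E}[Y])^2$ and rewrite each term by conditioning on $X$, using the tower property: $\mathbb{E}[Y^2] = \mathbb{E}[\mathbb{E}[Y^2\mid X]]$ and $\mathbb{E}[Y] = \mathbb{E}[\mathbb{E}[Y\mid X]]$. Next, applying the variance formula conditionally gives $\mathbb{E}[Y^2\mid X] = \mathrm{Var}(Y\mid X) + (\mathbb{E}[Y\mid X])^2$ almost surely; taking expectations yields $\mathbb{E}[Y^2] = \mathbb{E}[\mathrm{Var}(Y\mid X)] + \mathbb{E}[(\mathbb{E}[Y\mid X])^2]$.

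Substituting back, $\mathrm{Var}(Y) = \mathbb{E}[\mathrm{Var}(Y\mid X)] + \big(\mathbb{E}[(\mathbb{E}[Y\mid X])^2] - (\mathbb{E}[\mathbb{E}[Y\mid X]])^2\big)$, and the parenthesized quantity is exactly $\mathrm{Var}(\mathbb{E}[Y\mid X])$ by the variance formula applied to the random variable $\mathbb{E}[Y\mid X]$. This is the claimed identity. There is essentially no obstacle here: the only point requiring care is the integrability hypothesis that makes the conditional and unconditional second moments finite, after which the argument is routine bookkeeping with the tower property. This lemma, together with \Cref{prop:hoeffding}, is what I would then feed into the SNR comparison of \Cref{prop:variance_snr}.
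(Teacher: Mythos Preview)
Your proof is correct and is the standard argument for the law of total variance. The paper itself does not prove this lemma at all; it simply labels it ``a classical result'' and states it without proof, so there is no approach to compare against beyond noting that your derivation is exactly the textbook one.
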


Combining those two results, we obtain the following proposition which is central to the rest of the study.

\begin{proposition}{}{appendix_main_variance}
Let $X_i \overset{\textup{i.i.d.}}{\sim} p$ for $i\in\{1,...,n\}$ and $n \in \nset$.  Define also $\{U_i\}_{i=1}^n$ such that for each $i \in \{1, \dots, n\}$, $U_i$ is independent from $X_j$ for $j \neq i$,  $\mathbb{E}[U_i|X_i]$ are i.i.d. and $U_i$ is a $U$-statistics of order $2$ with $m$ samples.  We define 
\begin{equation}
    U = \frac{1}{n} \sum_{i=1}^n U_i . 
\end{equation}
Then, we have that 
\begin{equation}
    \mathrm{Var}(U) = \frac{1}{n}\mathrm{Var}(\mathbb{E}[U_1 \ | \ X_1]) + \frac{1}{n} \mathbb{E}[\mathrm{Var}(U_1 \ | \ X_1)] . 
\end{equation}
In particular, we have that 
\begin{equation}
    \lim_{m \to + \infty} \mathrm{Var}(U) = \frac{1}{n}\mathrm{Var}(\mathbb{E}[U_1 \ | \ X_1]) . 
\end{equation}
\end{proposition}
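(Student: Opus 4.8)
The plan is to collapse $\mathrm{Var}(U)$ to a single term $\tfrac1n\mathrm{Var}(U_1)$ by showing that the cross-covariances vanish, then apply the law of total variance (\Cref{lemma:variance_conditional}) to $U_1$ conditioning on $X_1$, and finally use Hoeffding's theorem (\Cref{prop:hoeffding}) conditionally on $X_1$ to obtain the $m\to\infty$ limit.

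First I would write $\mathrm{Var}(U)=\tfrac{1}{n^2}\sum_{i=1}^n\mathrm{Var}(U_i)+\tfrac{1}{n^2}\sum_{i\neq j}\mathrm{Cov}(U_i,U_j)$. The $U_i$ are identically distributed: they are built from the same $U$-statistic construction applied to auxiliary samples whose law depends on $X_i$ only, and the $X_i$ are i.i.d.; hence $\sum_i\mathrm{Var}(U_i)=n\,\mathrm{Var}(U_1)$. The key step is $\mathrm{Cov}(U_i,U_j)=0$ for $i\neq j$. Writing $U_i=g(X_i,W_i)$ where $W_i$ collects the auxiliary randomness used to form the $i$-th $U$-statistic, with the $W_i$ independent across $i$ and independent of all the $X$'s, I would condition on $(X_1,\dots,X_n)$: given these, $U_i$ and $U_j$ are independent, so $\mathbb{E}[U_iU_j\mid X_1,\dots,X_n]=\mathbb{E}[U_i\mid X_i]\,\mathbb{E}[U_j\mid X_j]$. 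Taking expectations and using that $X_i\perp X_j$ gives $\mathbb{E}[U_iU_j]=\mathbb{E}[U_i]\mathbb{E}[U_j]$, i.e. $\mathrm{Cov}(U_i,U_j)=0$. Therefore $\mathrm{Var}(U)=\tfrac1n\mathrm{Var}(U_1)$.

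Next, applying \Cref{lemma:variance_conditional} to $U_1$ with the conditioning variable $X_1$ yields $\mathrm{Var}(U_1)=\mathrm{Var}(\mathbb{E}[U_1\mid X_1])+\mathbb{E}[\mathrm{Var}(U_1\mid X_1)]$, which, after dividing by $n$, is exactly the claimed identity. For the limit, observe that conditionally on $X_1$ the quantity $U_1$ is a $U$-statistic of order $2$ formed from $m$ i.i.d. draws from a distribution depending only on $X_1$; \Cref{prop:hoeffding} applied conditionally gives $\mathrm{Var}(U_1\mid X_1)=\tfrac{2}{m(m-1)}\big[2(m-2)\,v_1(X_1)+v_2(X_1)\big]$ for some nonnegative $X_1$-measurable quantities $v_1,v_2$, so $\mathrm{Var}(U_1\mid X_1)\to 0$ as $m\to\infty$ for each fixed $X_1$, while $\mathrm{Var}(\mathbb{E}[U_1\mid X_1])$ does not depend on $m$. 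Passing this convergence through the expectation $\mathbb{E}[\mathrm{Var}(U_1\mid X_1)]$ then gives the stated limit.

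I expect the main obstacle to be twofold and mostly of a bookkeeping nature: making precise the ``conditional independence of the $U_i$ across $i$'' that the hypothesis only states implicitly (so that the covariance computation above is justified), and justifying the interchange of limit and expectation in the last step, which requires a domination or uniform-integrability control on $\mathrm{Var}(U_1\mid X_1)$ uniform in $m$ (for instance a uniform bound on the relevant $\beta$-th moments of the generator outputs). The algebraic manipulations themselves are routine once these two points are in place.
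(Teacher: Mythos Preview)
Your proposal is correct and follows exactly the route the paper indicates: the paper merely presents Hoeffding's theorem and the law of total variance and then states the proposition with the remark ``Combining those two results, we obtain the following proposition,'' leaving the details implicit. Your outline---kill the cross-covariances to get $\mathrm{Var}(U)=\tfrac{1}{n}\mathrm{Var}(U_1)$, apply \Cref{lemma:variance_conditional}, then invoke \Cref{prop:hoeffding} conditionally for the limit---is precisely that combination, and the two bookkeeping points you flag (the implicit joint independence structure of the $U_i$ and the interchange of limit and expectation) are not addressed in the paper either.
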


\paragraph{Conditional and joint interaction terms.}

We consider two interaction terms. Recall that we assume that we are working here under the strong assumption that we are at parameter $\theta$ such that $p^{\theta}_{0|t}=p_{0|t}$. The first one is given by the empirical interaction term in the \emph{joint energy diffusion loss}

\begin{equation}
    \mathcal{I}_{n,m, \joint} = \frac{1}{n}\sum_{i=1}^n \frac{1}{m(m-1)} \sum_{j,j'=1}^m w_{t_i} \{ \| X_0^{i,j} - X_0^{i,j'}\| + \| X_{t_i}^{i,j} - X_{t_i}^{i,j'}\| \} ,
\end{equation}
with $t_i  \overset{\textup{i.i.d.}}{\sim}  \mathrm{Unif}([0,1])$, $\{X_0^{i,j}\}_{i,j=1}^{n,m} \overset{\textup{i.i.d.}}{\sim} p_0$ and $X_{t_i}^{i,j} \sim p_{t_i|0}(\cdot | X_0^{i,j})$. 
The second one is given by the empirical interaction term in the \emph{conditional} loss:
\begin{equation}
    \mathcal{I}_{n,m} = \frac{1}{n}\sum_{i=1}^n \frac{1}{m(m-1)} \sum_{j,j'=1}^m w_{t_i} \{ \| X_0^{i,j} - X_0^{i,j'}\|\} ,
\end{equation}
where $t_i  \overset{\textup{i.i.d.}}{\sim}  \mathrm{Unif}([0,1])$ and for any $i \in [n]$, $X_{t_i} \sim p_{t_i}$. In addition, $\{X_0^{i,j}\}_{i,j=1}^{n,m}$ are conditionally independent, i.e. $X_0^{i,j} \sim p_{0|t_i}(\cdot | X_{t_i})$.

We start with the following proposition giving the mean of $\mathcal{I}_{n,m}$ and $\mathcal{I}_{n,m, \joint}$.

\begin{proposition}{}{mean_interaction_term}
We have that 
\begin{equation}
    \mathbb{E}[\mathcal{I}_{n,m, \joint}] = \int_0^1 \int_{(\rset^d)^4} w_t \{ \| x_0 - x_0' \|  + \| x_t - x_t' \| \} p_{0,t}(x_0, x_t) p_{0,t}(x_0', x_t') \rmd x_0 \rmd x_t \rmd x_0' \rmd x_t' \rmd t . 
\end{equation}
Similarly, we have that
\begin{equation}
    \mathbb{E}[\mathcal{I}_{n,m}] = \int_0^1 \int_{(\rset^d)^3} w_t \| x_0 - x_0' \| p_t(x_t) p_{0|t}(x_0|x_t) p_{0|t}(x_0'|x_t) \rmd x_0 \rmd x_0' \rmd x_t \rmd t .  
\end{equation}
\end{proposition}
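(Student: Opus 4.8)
Both identities are pure bookkeeping with the tower property and exchangeability; there is no analytic content beyond linearity of expectation. The plan is to compute $\mathbb{E}[\mathcal{I}_{n,m}]$ and $\mathbb{E}[\mathcal{I}_{n,m,\joint}]$ by first conditioning on the randomness that is \emph{shared} across the index $j$, exploiting that the remaining summands are i.i.d.\ and identically distributed across the ordered off‑diagonal pairs $(j,j')$, and then integrating out the conditioning variables. Throughout, note that $w_{t_i}$ is measurable with respect to $\sigma(t_i)$, so it factors out of any inner conditional expectation taken given $t_i$ (and $X_{t_i}$).

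\textbf{Conditional term.} First I would fix $i$ and condition on $(t_i, X_{t_i}) = (t, x_t)$. By construction the variables $\{X_0^{i,j}\}_{j=1}^m$ are then i.i.d.\ with law $p_{0|t}(\cdot\,|\,x_t)$, so for every ordered pair $j \neq j'$,
\[
\mathbb{E}\!\left[\,\|X_0^{i,j} - X_0^{i,j'}\|\;\middle|\;t_i=t,\,X_{t_i}=x_t\right]
= \int_{(\rset^d)^2} \|x_0 - x_0'\|\, p_{0|t}(x_0|x_t)\, p_{0|t}(x_0'|x_t)\,\rmd x_0\,\rmd x_0' .
\]
Since there are exactly $m(m-1)$ such ordered pairs and each contributes the same value, the factor $1/(m(m-1))$ collapses the inner double sum to a single copy of this expectation. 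Taking the expectation over $X_{t_i}\sim p_{t_i}$ and then over $t_i\sim\mathrm{Unif}([0,1])$ turns the outer average into $\int_0^1 \rmd t$, and the average $\tfrac1n\sum_i$ over identically distributed terms is trivial. This yields exactly
\[
\mathbb{E}[\mathcal{I}_{n,m}] = \int_0^1 \int_{(\rset^d)^3} w_t\, \| x_0 - x_0' \|\, p_t(x_t)\, p_{0|t}(x_0|x_t)\, p_{0|t}(x_0'|x_t)\, \rmd x_0\, \rmd x_0'\, \rmd x_t\, \rmd t .
\]

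\textbf{Joint term.} The argument is identical except that now only $t_i$ is shared across $j$: conditioning on $t_i = t$, the \emph{pairs} $(X_0^{i,j}, X_{t_i}^{i,j})$ are i.i.d.\ with joint law $p_{0,t}$, because $X_0^{i,j}\sim p_0$ and $X_{t_i}^{i,j}\mid X_0^{i,j}\sim p_{t|0}(\cdot\,|\,X_0^{i,j})$. Hence for each $j\neq j'$,
\[
\mathbb{E}\!\left[\|X_0^{i,j} - X_0^{i,j'}\| + \|X_{t_i}^{i,j} - X_{t_i}^{i,j'}\|\;\middle|\;t_i=t\right]
= \int_{(\rset^d)^4}\!\big(\|x_0 - x_0'\| + \|x_t - x_t'\|\big)\, p_{0,t}(x_0,x_t)\, p_{0,t}(x_0',x_t')\,\rmd x_0\,\rmd x_t\,\rmd x_0'\,\rmd x_t' ,
\]
and the same collapsing of the $1/(m(m-1))$ sum and of the $\tfrac1n\sum_i$ average, followed by integrating $t_i$ over $[0,1]$, gives the claimed expression for $\mathbb{E}[\mathcal{I}_{n,m,\joint}]$. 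The structural contrast with the conditional case — two independent draws of the full pair $(x_0,x_t)$ here, versus one $x_t$ with two conditional draws of $x_0$ there — is precisely the difference in which randomness is shared across $j$.

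\textbf{Main obstacle.} There is no real obstacle: the only thing to be careful about is the accounting of shared versus independent randomness across the index $j$ in the two sampling schemes (one common $X_{t_i}$ in the conditional case, $m$ separate $X_{t_i}^{i,j}$ in the joint case), and the fact that $\sum_{j\neq j'}^m$ has $m(m-1)$ terms so the $1/(m(m-1))$ normalization leaves exactly one representative expectation. Everything else is Fubini plus the tower property.
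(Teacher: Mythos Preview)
Your proof is correct. The paper states this proposition without proof (it is treated as immediate from the definitions of $\mathcal{I}_{n,m}$ and $\mathcal{I}_{n,m,\joint}$), so there is nothing to compare against; your conditioning-and-collapse argument is exactly the straightforward computation the paper implicitly has in mind, and your identification of what randomness is shared across $j$ in each sampling scheme is the right bookkeeping.
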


Next, we can obtain the asymptotic variance of those random variables.
\begin{proposition}{}{}
We have that 
\begin{align}
   & \lim_{m \to + \infty}\mathrm{Var}(\mathcal{I}_{n,m, \joint}) \\
   & \qquad = \frac{1}{n}  \mathrm{Var}_{t \sim \mathrm{Unif}([0,1])} \left( \int_{(\rset^d)^4} w_t \{ \| x_0 - x_0' \| + \| x_t - x_t' \|\} p_{0,t}(x_0,x_t) p_{0,t}(x_0'x_t') \rmd x_0 \rmd x_t \rmd x_0' \rmd x_t' \right) . 
\end{align}
Similarly, we have that
\begin{equation}
    \lim_{m \to + \infty} \mathrm{Var}(\mathcal{I}_{n,m}) = \frac{1}{n} \mathrm{Var}_{t \sim \mathrm{Unif}([0,1]),X_t \sim p_t} \left( \int_{(\rset^d)^2} w_t \| x_0 - x_0' \| p_{0|t}(x_0|x_t) p_{0|t}(x_0'|x_t) \rmd x_0 \rmd x_0' \right) . 
\end{equation}

\end{proposition}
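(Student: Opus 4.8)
The plan is to apply \Cref{prop:appendix_main_variance} to each estimator, after identifying the correct ``outer'' conditioning variable that turns the inner double sum into a conditional $U$-statistic of order $2$. Concretely, each estimator has the form $\tfrac1n\sum_{i=1}^n U_i$, and the work is to verify the three structural hypotheses of \Cref{prop:appendix_main_variance} ($U_i$ independent of $X_j$ for $j\neq i$; $\mathbb{E}[U_i|X_i]$ i.i.d.; $U_i$ a $U$-statistic of order $2$ in $m$ samples conditionally on $X_i$) and then evaluate the resulting conditional expectation.

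For $\mathcal{I}_{n,m}$, I would take the outer variable to be $X_i := (t_i, X_{t_i})$, which are i.i.d.\ across $i$. Conditionally on $X_i$, the samples $\{X_0^{i,j}\}_{j=1}^m$ are i.i.d.\ from $p_{0|t_i}(\cdot|X_{t_i})$, so $U_i := \tfrac{1}{m(m-1)}\sum_{j,j'=1}^m w_{t_i}\|X_0^{i,j}-X_0^{i,j'}\|$ is — since the $j=j'$ terms vanish and $m(m-1)=2\binom{m}{2}$ — a $U$-statistic of order $2$ in $m$ samples with symmetric kernel $h_{X_i}(a,b)=w_{t_i}\|a-b\|$. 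The remaining hypotheses are immediate from the sampling scheme, so \Cref{prop:appendix_main_variance} gives $\lim_{m\to\infty}\mathrm{Var}(\mathcal{I}_{n,m}) = \tfrac1n\mathrm{Var}(\mathbb{E}[U_1|X_1])$, and it only remains to evaluate, by Fubini,
\[
\mathbb{E}[U_1 \mid t_1, X_{t_1}] = w_{t_1}\int_{(\rset^d)^2}\|x_0-x_0'\|\,p_{0|t_1}(x_0|X_{t_1})p_{0|t_1}(x_0'|X_{t_1})\,\rmd x_0\,\rmd x_0',
\]
whose variance over $t_1\sim\mathrm{Unif}([0,1])$, $X_{t_1}\sim p_{t_1}$ is exactly the claimed expression (the factor $w_{t_1}$ being constant in the inner integration).

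For $\mathcal{I}_{n,m,\joint}$, the same argument runs with the outer variable taken to be just $X_i := t_i$: conditionally on $t_i$, the pairs $\{(X_0^{i,j},X_{t_i}^{i,j})\}_{j=1}^m$ are i.i.d.\ from $p_{0,t_i}$, and $U_i := \tfrac{1}{m(m-1)}\sum_{j,j'=1}^m w_{t_i}\{\|X_0^{i,j}-X_0^{i,j'}\|+\|X_{t_i}^{i,j}-X_{t_i}^{i,j'}\|\}$ is a $U$-statistic of order $2$ in $m$ paired samples with kernel $h_{t_i}((x_0,x_t),(x_0',x_t'))=w_{t_i}\{\|x_0-x_0'\|+\|x_t-x_t'\|\}$. \Cref{prop:appendix_main_variance} again yields $\lim_{m\to\infty}\mathrm{Var}(\mathcal{I}_{n,m,\joint})=\tfrac1n\mathrm{Var}(\mathbb{E}[U_1|t_1])$, and integrating out the conditionally i.i.d.\ pairs in $\mathbb{E}[U_1|t_1]$ produces the stated joint expression.

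The only genuinely delicate points are (i) confirming the inner sums meet the precise hypotheses of \Cref{prop:appendix_main_variance} — above all that $U_i$ depends on the $X_j$ only through $X_i$ and that conditioning on $X_i$ really makes the inner samples i.i.d., both of which follow directly from how the samples are drawn — and (ii) the moment conditions underlying Hoeffding's variance formula (invoked inside \Cref{prop:appendix_main_variance}), namely that $h$ has finite variance under the relevant conditional laws; this finiteness is the substantive thing to check and is what restricts the clean statement to settings, such as $p_0$ Gaussian, where the energy kernel has the requisite moments. Everything else is a direct substitution into \Cref{prop:appendix_main_variance} together with an evaluation of a conditional expectation by Fubini.
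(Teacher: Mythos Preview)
Your proposal is correct and follows precisely the paper's own argument: apply \Cref{prop:appendix_main_variance} with the outer conditioning variable $X_i=t_i$ for $\mathcal{I}_{n,m,\joint}$ and $X_i=(t_i,X_{t_i})$ for $\mathcal{I}_{n,m}$, then identify the conditional expectation. The additional care you take in checking the $U$-statistic structure and noting the moment conditions behind Hoeffding's formula is a welcome elaboration of what the paper leaves implicit.
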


\begin{proof}
This is an application of \Cref{prop:appendix_main_variance}, with $X=t \sim \mathrm{Unif}([0,1])$ in the case of $\mathcal{I}_{n,m, \joint}$ and $X=(t,X_t)\sim  \mathrm{Unif}([0,1]) \otimes p_t$ in the case of $\mathcal{I}_{n,m}$. 
\end{proof}

In order to compare the sample efficiency of these different terms, we compute their Signal-to-Noise Ratio (SNR).
In particular, we compare 
\begin{align}
    &\SNR(\mathcal{I}_{n, \joint}) := \lim_{m \to + \infty}  \SNR(\mathcal{I}_{n, m, \joint}) = \lim_{m \to + \infty} \frac{\mathbb{E}[\mathcal{I}_{n,m, \joint}]^2}{\mathrm{Var}(\mathcal{I}_{n,m, \joint})} , \\
    &\SNR(\mathcal{I}_n) := \lim_{m \to + \infty}  \SNR(\mathcal{I}_{n, m}) = \lim_{m \to + \infty} \frac{\mathbb{E}[\mathcal{I}_{n,m}]^2}{\mathrm{Var}(\mathcal{I}_{n,m})} . 
\end{align}
The larger the SNR the better. In particular, if $\SNR(\mathcal{I}_n) \geq \SNR(\mathcal{I}_{n, \joint})$ then we claim that the conditional version is more sample efficient than the joint version. Note that both $\SNR(\mathcal{I}_n)$ and $\SNR(\mathcal{I}_{n, \joint})$ are linear functions of $N$. Therefore if $\SNR(\mathcal{I}_n) \geq \SNR(\mathcal{I}_{n, \joint})$ then we can interpret this as one sample used in the conditional version is worth $\SNR(\mathcal{I}_n) / \SNR(\mathcal{I}_{n, \joint})$ used in the joint version. 

While the general expression for these $\SNR$ can be hard to obtain, we are going to analyze the specific case where the target is Gaussian. In that case the posterior is known and explicit expressions can be derived. 

\paragraph{Gaussian case.} We assume that $p_0 \sim \mathcal{N}(0, \sigma^2 \Id)$ and consider $p_{t|0}(x_t|x_0) = \mathcal{N}(x_t;\alpha_t x_0, \sigma_t^2 \Id)$. In that case it can be shown that 
\begin{equation}
    p_{0|t}(x_0|x_t) = \mathcal{N}(x_0;\alpha_t \sigma^2 / (\alpha_t^2 \sigma^2 + \sigma_t^2) x_t , \sigma^2 (1 - \alpha_t^2 \sigma^2 / (\alpha_t^2 \sigma^2 + \sigma_t^2)) \Id) .
\end{equation}
This can be rewritten as 
\begin{equation}
    p_{0|t}(x_0|x_t) = \mathcal{N} \left(x_0; \frac{x_t}{1 + u_t}, \frac{\sigma^2 u_t }{1 + u_t}\Id \right) , 
\end{equation}
where $u_t = \sigma_t^2 / \alpha_t^2 \sigma^2$. Note that $\lim_{t \to 0} u_t = 0$ and $\lim_{t \to 1} u_t = +\infty$. 
First, we compute the mean in both cases.

\begin{proposition}{}{mean_gaussian_mmd}
We have that
\begin{equation}
    \mathbb{E}[\mathcal{I}_{n,m, \joint}] = \sqrt{2}  \mathbb{E}[\| Z \|] \int_0^1 w_t  (\sigma  +  (\alpha_t^2 \sigma^2 + \sigma_t^2)^{1/2}) \rmd t . 
\end{equation}
In addition, we have that 
\begin{equation}
    \mathbb{E}[\mathcal{I}_{n,m}] = \sqrt{2}  \sigma \mathbb{E}[\| Z \|] \int_0^1 w_t \left(\frac{u_t}{1+u_t}\right)^{1/2} \rmd t .
\end{equation}
\end{proposition}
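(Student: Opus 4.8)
The plan is to take the two integral expressions for $\mathbb{E}[\mathcal{I}_{n,m,\joint}]$ and $\mathbb{E}[\mathcal{I}_{n,m}]$ already supplied by \Cref{prop:mean_interaction_term} and evaluate them explicitly, exploiting the Gaussianity of the forward process and of the posterior $p_{0|t}$ in this setting. No new estimates are required: everything reduces to computing expected norms of centred isotropic Gaussians, for which I will use homogeneity of the Euclidean norm, $\|c\,Z\| = |c|\,\|Z\|$ for $Z\sim\mathcal{N}(0,\Id)$ and a scalar $c$, so that $\mathbb{E}[\|\mathcal{N}(0,\tau^2\Id)\|] = \tau\,\mathbb{E}[\|Z\|]$.

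For the joint term, I would first record that under $p_{0,t}$ the $X_0$-marginal is $\mathcal{N}(0,\sigma^2\Id)$ and the $X_t$-marginal is $\mathcal{N}(0,(\alpha_t^2\sigma^2+\sigma_t^2)\Id)$ — either by \Cref{lemma:joint} or directly from $X_t=\alpha_t X_0+\sigma_t Z$. Since $(X_0,X_t)$ and $(X_0',X_t')$ are independent copies, $X_0-X_0'\sim\mathcal{N}(0,2\sigma^2\Id)$ and $X_t-X_t'\sim\mathcal{N}(0,2(\alpha_t^2\sigma^2+\sigma_t^2)\Id)$, so the inner integral of $\|x_0-x_0'\|+\|x_t-x_t'\|$ against $p_{0,t}\otimes p_{0,t}$ equals $\sqrt{2}\,\mathbb{E}[\|Z\|]\bigl(\sigma+(\alpha_t^2\sigma^2+\sigma_t^2)^{1/2}\bigr)$. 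Integrating this against $w_t$ over $[0,1]$ gives the first claimed identity.

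For the conditional term, I would plug in the closed form $p_{0|t}(\cdot|x_t)=\mathcal{N}\bigl(x_t/(1+u_t),\,\tfrac{\sigma^2 u_t}{1+u_t}\Id\bigr)$ with $u_t=\sigma_t^2/(\alpha_t^2\sigma^2)$. Conditionally on $X_t=x_t$, the samples $X_0,X_0'$ are i.i.d.\ from this law, hence $X_0-X_0'\sim\mathcal{N}\bigl(0,\tfrac{2\sigma^2 u_t}{1+u_t}\Id\bigr)$. The key observation is that this distribution does not depend on $x_t$, so $\mathbb{E}[\|X_0-X_0'\|\mid X_t=x_t,t]=\sqrt{2}\,\sigma\,(u_t/(1+u_t))^{1/2}\,\mathbb{E}[\|Z\|]$, the $\mathrm{d}x_t$-integral against $p_t$ collapses to $1$, and integrating over $t$ against $w_t$ yields the second identity.

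I do not anticipate a genuine obstacle: the only point requiring care is the observation that the posterior covariance of $p_{0|t}$ is independent of $x_t$ — a feature of the linear-Gaussian model — which is precisely what makes the conditional expression depend only on the noise schedule through $u_t$ and not on the law of $X_t$. One should also check the bookkeeping that taking expectations of the $U$-statistic–type double sums over $(i,j,j')$ together with $t_i\sim\mathrm{Unif}([0,1])$ reproduces exactly the integrals in \Cref{prop:mean_interaction_term}, but this is immediate from linearity of expectation and exchangeability of the sampled indices.
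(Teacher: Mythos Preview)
Your proposal is correct and follows exactly the approach the paper implicitly sets up: start from the integral expressions of \Cref{prop:mean_interaction_term}, then evaluate the inner integrals using the Gaussian marginals $X_0\sim\mathcal{N}(0,\sigma^2\Id)$, $X_t\sim\mathcal{N}(0,(\alpha_t^2\sigma^2+\sigma_t^2)\Id)$ and the posterior $p_{0|t}(\cdot|x_t)=\mathcal{N}(x_t/(1+u_t),\tfrac{\sigma^2 u_t}{1+u_t}\Id)$ recalled just before the proposition. The paper does not spell out a separate proof for \Cref{prop:mean_gaussian_mmd}, so your write-up simply fills in the intended computation; the one substantive observation you highlight --- that the posterior covariance is independent of $x_t$, making the $p_t$-integral trivial --- is exactly the point that makes the conditional formula depend on $t$ only through $u_t$.
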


Then, we compute the variance terms.
\begin{proposition}{}{variance_gaussian_mmd}
We have that 
\begin{equation}
    \lim_{m \to + \infty}\mathrm{Var}(\mathcal{I}_{n,m, \joint}) =  \frac{2 \mathbb{E}[\|Z\|]^2}{n} \left[  \int_0^1 w_t^2 (\sigma + (\alpha_t^2 \sigma^2 + \sigma_t^2)^{1/2})^2 \rmd t -  \left(\int_0^1 w_t ( \sigma +  (\alpha_t^2 \sigma^2 + \sigma_t^2)^{1/2}) \rmd t\right)^2 \right] .
\end{equation}
Similarly, we have that
\begin{equation}
    \lim_{m \to + \infty} \mathrm{Var}(\mathcal{I}_{n,m}) =  \frac{2 \mathbb{E}[\|Z\|]^2 \sigma^2}{n} \left[ \int_0^1 w_t^2 \frac{ u_t}{1+u_t} \rmd t - \left( \int_0^1 w_t \left(\frac{u_t}{1+u_t}\right)^{1/2} \rmd t \right)^2\right] . 
\end{equation}

\end{proposition}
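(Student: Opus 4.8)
The plan is to start from the asymptotic variance identity established in the preceding proposition, which rewrites $\lim_{m\to\infty}\mathrm{Var}(\mathcal{I}_{n,m,\joint})$ and $\lim_{m\to\infty}\mathrm{Var}(\mathcal{I}_{n,m})$ as $\tfrac1n$ times a variance, over $t\sim\mathrm{Unif}([0,1])$ (and, for the conditional term, also over $X_t\sim p_t$), of the corresponding inner conditional-mean integral. So the whole proof reduces to (i) evaluating these inner integrals in closed form in the Gaussian setting, and (ii) expanding a one-dimensional variance as $\mathbb{E}[\,\cdot^2]-\mathbb{E}[\,\cdot]^2$ and collecting terms. Throughout I will use two elementary facts: if $W\sim\mathcal{N}(0,\tau^2\Id)$ then $\mathbb{E}[\|W\|]=\tau\,\mathbb{E}[\|Z\|]$ with $Z\sim\mathcal{N}(0,\Id)$; and the difference of two i.i.d.\ Gaussians is again Gaussian with doubled covariance.

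For the joint term, fix $t$ and take $(X_0,X_t),(X_0',X_t')\overset{\textup{i.i.d.}}{\sim}p_{0,t}$. By \Cref{lemma:joint} with $\mu=0$ we have $X_0\sim\mathcal{N}(0,\sigma^2\Id)$ and $X_t\sim\mathcal{N}(0,(\alpha_t^2\sigma^2+\sigma_t^2)\Id)$, so $X_0-X_0'\sim\mathcal{N}(0,2\sigma^2\Id)$ and $X_t-X_t'\sim\mathcal{N}(0,2(\alpha_t^2\sigma^2+\sigma_t^2)\Id)$. Hence the inner integral is the deterministic function
\[
g(t) = w_t\big(\mathbb{E}[\|X_0-X_0'\|] + \mathbb{E}[\|X_t-X_t'\|]\big) = \sqrt{2}\,\mathbb{E}[\|Z\|]\, w_t\big(\sigma + (\alpha_t^2\sigma^2+\sigma_t^2)^{1/2}\big).
\]
Substituting $g$ into the preceding proposition gives $\lim_{m\to\infty}\mathrm{Var}(\mathcal{I}_{n,m,\joint}) = \tfrac1n\big(\mathbb{E}_{t}[g(t)^2] - \mathbb{E}_{t}[g(t)]^2\big)$ with $t\sim\mathrm{Unif}([0,1])$; factoring $2\,\mathbb{E}[\|Z\|]^2$ out of both terms yields exactly the claimed first expression.

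For the conditional term, fix $t$ and $X_t=x_t$, and take $X_0,X_0'\overset{\textup{i.i.d.}}{\sim}p_{0|t}(\cdot|x_t)$. Using the closed form recalled just above the proposition, $p_{0|t}(x_0|x_t)=\mathcal{N}(x_0;\,x_t/(1+u_t),\,\tfrac{\sigma^2 u_t}{1+u_t}\Id)$ with $u_t=\sigma_t^2/(\alpha_t^2\sigma^2)$. The crucial observation is that the posterior covariance $\tfrac{\sigma^2 u_t}{1+u_t}\Id$ does not depend on $x_t$, so $X_0-X_0'\mid X_t=x_t\sim\mathcal{N}(0,\tfrac{2\sigma^2 u_t}{1+u_t}\Id)$ for every $x_t$; hence the inner integral is the $x_t$-independent function $h(t)=\sqrt{2}\,\sigma\,\mathbb{E}[\|Z\|]\,w_t\,(u_t/(1+u_t))^{1/2}$. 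Consequently the variance over $(t,X_t)$ collapses to $\mathrm{Var}_{t\sim\mathrm{Unif}([0,1])}(h(t))=\mathbb{E}_t[h(t)^2]-\mathbb{E}_t[h(t)]^2$, and factoring $2\sigma^2\mathbb{E}[\|Z\|]^2$ out gives the second claimed expression.

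There is no genuinely hard step here: once the asymptotic variance formula is in hand, the argument is a short sequence of routine Gaussian computations. The only point requiring care is the collapse $\mathrm{Var}_{t,X_t}\to\mathrm{Var}_t$ in the conditional case, which rests entirely on the fact that a Gaussian posterior has $x_t$-independent covariance — a property that fails for general targets, and which is exactly why no analogous clean closed form is available outside the Gaussian setting.
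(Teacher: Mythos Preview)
Your proof is correct and follows essentially the same route as the paper: apply the preceding asymptotic-variance proposition, evaluate the inner conditional-mean integrals using the Gaussian closed forms (exactly as in \Cref{prop:mean_gaussian_mmd}), and expand $\mathrm{Var}=\mathbb{E}[\cdot^2]-\mathbb{E}[\cdot]^2$. Your explicit remark that the posterior covariance is $x_t$-independent, collapsing $\mathrm{Var}_{t,X_t}$ to $\mathrm{Var}_t$, is the one point worth stating and you have it right.
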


Combining \Cref{prop:mean_gaussian_mmd} and \Cref{prop:variance_gaussian_mmd}, we get the following result.

\begin{proposition}{}{snr_gaussian_mmd}
We have that 
\begin{equation}
    \SNR(\mathcal{I}_{n, \joint}) = n \frac{\left(\int_0^1 w_t  (\sigma  +  (\alpha_t^2 \sigma^2 + \sigma_t^2)^{1/2}) \rmd t\right)^2}{\int_0^1 w_t^2 (\sigma + (\alpha_t^2 \sigma^2 + \sigma_t^2)^{1/2})^2 \rmd t -  \left(\int_0^1 w_t ( \sigma +  (\alpha_t^2 \sigma^2 + \sigma_t^2)^{1/2}) \rmd t\right)^2} . 
\end{equation}
Similarly, we have 
\begin{equation}
    \SNR(\mathcal{I}_{n}) = n \frac{\left( \int_0^1 w_t \left( \frac{u_t}{1+u_t}\right)^{1/2} \rmd t \right)^2}{\int_0^1 w_t^2 \frac{u_t}{1+u_t} \rmd t - \left( \int_0^1 w_t \left( \frac{u_t}{1+u_t}\right)^{1/2} \rmd t \right)^2} . 
\end{equation}
\end{proposition}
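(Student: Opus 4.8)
The plan is to obtain \Cref{prop:snr_gaussian_mmd} as a direct computation from the two preceding results, \Cref{prop:mean_gaussian_mmd} (the means of the interaction estimators) and \Cref{prop:variance_gaussian_mmd} (their asymptotic variances in the large-population limit). By definition,
\[
\SNR(\mathcal{I}_{n,\joint}) = \lim_{m\to+\infty}\frac{\mathbb{E}[\mathcal{I}_{n,m,\joint}]^2}{\mathrm{Var}(\mathcal{I}_{n,m,\joint})},\qquad \SNR(\mathcal{I}_n) = \lim_{m\to+\infty}\frac{\mathbb{E}[\mathcal{I}_{n,m}]^2}{\mathrm{Var}(\mathcal{I}_{n,m})}.
\]
The first observation I would make is that, since $\mathcal{I}_{n,m,\joint}$ and $\mathcal{I}_{n,m}$ are averages of unbiased $U$-statistics — their expectations are the exact quantities of \Cref{prop:mean_interaction_term} and in particular do not depend on $m$ — the numerators are constant in $m$. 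Hence the only limit to take sits in the denominator, and provided that limiting variance is strictly positive I may pass the limit through the quotient and write $\SNR = \mathbb{E}[\mathcal{I}]^2 / \lim_{m}\mathrm{Var}(\mathcal{I}_{\cdot,m})$.

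The second step is pure substitution. For the joint term, \Cref{prop:mean_gaussian_mmd} gives $\mathbb{E}[\mathcal{I}_{n,m,\joint}] = \sqrt{2}\,\mathbb{E}[\|Z\|]\int_0^1 w_t(\sigma+(\alpha_t^2\sigma^2+\sigma_t^2)^{1/2})\rmd t$, so the numerator is $2\,\mathbb{E}[\|Z\|]^2$ times the square of that integral, while \Cref{prop:variance_gaussian_mmd} gives a denominator equal to $\tfrac{2\mathbb{E}[\|Z\|]^2}{n}$ times $\int_0^1 w_t^2(\sigma+(\alpha_t^2\sigma^2+\sigma_t^2)^{1/2})^2\rmd t - (\int_0^1 w_t(\sigma+(\alpha_t^2\sigma^2+\sigma_t^2)^{1/2})\rmd t)^2$. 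The common factor $2\mathbb{E}[\|Z\|]^2$ cancels and the factor $1/n$ in the denominator produces the claimed overall $n$, giving the first displayed identity. The conditional case is identical in structure: the numerator is $2\sigma^2\mathbb{E}[\|Z\|]^2$ times $(\int_0^1 w_t(u_t/(1+u_t))^{1/2}\rmd t)^2$ and the denominator is $\tfrac{2\sigma^2\mathbb{E}[\|Z\|]^2}{n}$ times $\int_0^1 w_t^2 \tfrac{u_t}{1+u_t}\rmd t - (\int_0^1 w_t(u_t/(1+u_t))^{1/2}\rmd t)^2$; cancelling $2\sigma^2\mathbb{E}[\|Z\|]^2$ and clearing the $1/n$ yields the second identity.

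Given that \Cref{prop:mean_gaussian_mmd} and \Cref{prop:variance_gaussian_mmd} are already available, there is essentially no obstacle in this proposition: it is algebraic cancellation, and the only point worth a sentence is the positivity of the limiting variances — which follows from Jensen's inequality applied to the bracketed forms $\int w_t^2(\cdot)^2 - (\int w_t(\cdot))^2$ (strict unless the relevant integrand is constant in $t$) — needed to legitimately exchange the limit with the ratio. If one wanted a self-contained argument the genuine work lies in the two inputs: the means follow from \Cref{prop:mean_interaction_term} and the fact that for jointly Gaussian $(X_0,X_t)$ the pairwise differences $X_0-X_0'$ and $X_t-X_t'$ are centered isotropic Gaussians with explicit scales, so each $\mathbb{E}[\|\cdot\|]$ is a known constant times $\mathbb{E}[\|Z\|]$; and the variances follow from the $m\to\infty$ limit in \Cref{prop:appendix_main_variance}, i.e. $\lim_m\mathrm{Var} = \tfrac1n\mathrm{Var}(\mathbb{E}[U_1\mid X_1])$ with $X_1 = t$ in the joint case and $X_1=(t,X_t)$ in the conditional case, evaluated on the conditional means already obtained for the first part.
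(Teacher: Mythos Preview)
Your proposal is correct and matches the paper's approach exactly: the paper obtains \Cref{prop:snr_gaussian_mmd} simply by ``combining \Cref{prop:mean_gaussian_mmd} and \Cref{prop:variance_gaussian_mmd}'', i.e., the same algebraic cancellation you describe. Your additional remark on positivity of the limiting variance (needed to pass the limit through the ratio) is a small point the paper leaves implicit; note that this is just the nonnegativity of $\mathrm{Var}_{t}(w_t g(t))=\int w_t^2 g(t)^2\,\rmd t-(\int w_t g(t)\,\rmd t)^2$, strict unless $w_t g(t)$ is constant, rather than a Jensen argument per se.
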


This concludes the proof of \Cref{prop:variance_snr}. In \Cref{fig:snr_study_fig}, we show that in a setting where the weighting is chosen to follow the sigmoid one of \citep{kingma2021variational,hoogeboom2023simple}, we obtain that $\SNR(\mathcal{I}_{n, \joint}) \leq \SNR(\mathcal{I}_{n})$ for a larger range of $\sigma$.

\section{Diffusion-compatible kernels}
\label{app:connection_to_mmd}

In this section, we prove the results of \Cref{sec:diffusion_compatible}. In particular, we prove \Cref{prop:diffusion_compatible}. 
We recall that for a continuous negative kernel $\rho_c$ with $c \in \rset$ a hyperparameter of the scoring rule, we say that $\rho_c$ is diffusion compatible, if there exist $c^\star \in [-\infty, +\infty]$ and $f: \rset \to \rset$ such that 
\begin{equation}
    \lim_{c \to c^\star} f(c) D_{\rho_c}(p, q) = \| \mathbb{E}_p[X] - \mathbb{E}_q[X] \|^2 . 
\end{equation}
We also recall some of the kernels we use.
\begin{subequations} \label{eq:kernels_specific_appendix}
\begin{align}
k_{\mathrm{\imq}}(x,x') & =(\|x-x'\|^{2}+c)^{-1/2},\label{eq:imq_k_appendix}\\
k_{\rbf}(x,x') & =\mathrm{exp}\left[-\|x-x'\|^2/ 2\sigma^2\right],\label{eq:rbf_k_appendix} \\
k_{\expk}(x,x') & =\mathrm{exp}\left[-\|x-x'\|/\sigma\right].\label{eq:exp_k_appendix} 
\end{align}
\end{subequations}

In particular, we prove the following proposition.

\begin{proposition}{}{diffusion_compatible_appendix}
    Assume that $\rho = -k$ with $k$ given by \eqref{eq:imq_k_appendix} or \eqref{eq:rbf_k_appendix}. Then the scoring rule is diffusion compatible. In particular, if $k$ is given by \eqref{eq:imq_k_appendix}, we have that $c^\star = + \infty$ and $f(c) = 2c$. If $k$ is given by \eqref{eq:rbf_k_appendix}, we have that $c^\star = + \infty$ and $f(c) = 2c$.
\end{proposition}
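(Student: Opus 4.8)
The plan is to exploit that the map $\rho\mapsto D_\rho(p,q)$ is linear (since $D_\rho(p,q)=S_\rho(q,q)-S_\rho(p,q)$ and $S_\rho$ is an affine functional of $\rho$), together with two elementary facts. First, if $\rho$ is constant then $S_\rho$ is constant in its first argument and $D_\rho(p,q)=0$. Second, if $\rho(x,x')=\|x-x'\|^2$ then $D_\rho(p,q)=\|\mathbb{E}_p[X]-\mathbb{E}_q[X]\|^2$: writing everything in terms of means and covariances gives $S_\rho(p,y)=-\|\mathbb{E}_p[X]-y\|^2$ once the $\mathrm{Cov}_p$ terms cancel, and a further cancellation of the $\mathrm{Cov}_q$ term in $S_\rho(q,q)-S_\rho(p,q)$ leaves exactly $\|\mathbb{E}_p[X]-\mathbb{E}_q[X]\|^2$ (the $\beta\to2$ limit of the energy distance, cf.\ \Cref{sec:proper_scoring_rules}).

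Next I would Taylor-expand each kernel in the regime $c\to c^\star=+\infty$, writing $k_c(x,x')=a(c)+b(c)\,\|x-x'\|^2+R_c(x,x')$. With $v=\|x-x'\|^2/c$ and $(1+v)^{-1/2}=1-\tfrac{1}{2}v+O(v^2)$ one gets $a(c)=c^{-1/2}$, $b(c)=-\tfrac{1}{2}c^{-3/2}$ for $k_{\imq}(x,x')=(\|x-x'\|^2+c)^{-1/2}$, and $a(c)=1$, $b(c)=-1/(2c)$ for $k_{\rbf}(x,x')=\exp(-\|x-x'\|^2/(2c))$. Since $\rho_c=-k_c$, linearity and the two facts give
\[
D_{\rho_c}(p,q)=-b(c)\,\|\mathbb{E}_p[X]-\mathbb{E}_q[X]\|^2-D_{R_c}(p,q),
\]
so taking $f(c):=1/(-b(c))$ (explicitly, $f(c)=2c^{3/2}$ for $k_{\imq}$ and $f(c)=2c$ for $k_{\rbf}$) yields $f(c)D_{\rho_c}(p,q)=\|\mathbb{E}_p[X]-\mathbb{E}_q[X]\|^2-f(c)D_{R_c}(p,q)$, and only $f(c)D_{R_c}(p,q)\to0$ remains.

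To control the remainder I would avoid the crude Lagrange bound $|R_c|\lesssim\|x-x'\|^4/c^2$ (which would force a finite-fourth-moment assumption), and instead use the sharper two-sided estimates $0\le(1+v)^{-1/2}-1+\tfrac{1}{2}v\le\min(\tfrac{3}{8}v^2,\tfrac{1}{2}v)$ and $0\le e^{-s}-1+s\le\min(\tfrac{1}{2}s^2,s)$ for $v,s\ge0$, each valid because the relevant function is convex with vanishing value and derivative at the origin (hence nonnegative), the quadratic bound coming from the uniformly bounded second derivative and the linear bound since $(1+v)^{-1/2},e^{-s}\le1$. These give $0\le f(c)\,R_c(x,x')\le\min\!\bigl(\kappa_1\,\|x-x'\|^4/c,\ \kappa_2\,\|x-x'\|^2\bigr)$ for absolute constants $\kappa_1,\kappa_2$; the envelope $\kappa_2\|x-x'\|^2$ is integrable against $p\otimes p$, $q\otimes q$ and $p\otimes q$ once $p$ and $q$ have finite second moments, while $f(c)R_c(x,x')\to0$ pointwise as $c\to\infty$. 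Dominated convergence applied to the three expectations making up $D_{R_c}$ then gives $f(c)D_{R_c}(p,q)\to0$, which finishes the proof. The one genuinely delicate step is this last one: the remainder estimate must be simultaneously $O(1/c)$ at fixed arguments and possess an integrable envelope under only a second-moment hypothesis, which is exactly what the convexity bounds above provide; everything else is the routine bookkeeping of a second-order Taylor expansion.
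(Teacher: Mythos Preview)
Your approach is essentially the same as the paper's: shift the kernel by a constant (which does not affect $D_\rho$), rescale, show the pointwise limit is $\|x-x'\|^2$, exhibit an integrable envelope, and apply dominated convergence to the three expectations. The paper does not isolate a separate remainder $R_c$ but works directly with $u(k-v)$; this is cosmetic.

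There is one substantive discrepancy. For $k_{\imq}(x,x')=(\|x-x'\|^2+c)^{-1/2}$ you obtain $b(c)=-\tfrac12 c^{-3/2}$ and hence $f(c)=2c^{3/2}$, whereas the statement claims $f(c)=2c$. You are correct: expanding $(r+c)^{-1/2}=c^{-1/2}-\tfrac{r}{2}c^{-3/2}+O(c^{-5/2})$ shows that with $f(c)=2c$ the rescaled quantity tends to $0$, not to $\|x-x'\|^2$. The paper's proof contains an arithmetic slip in the displayed identity for $2c^{1/2}-2c(\|x-x'\|^2+c)^{-1/2}$ (the two sides are not equal for generic $r,c$), and the correct scaling for the inverse multiquadric is $f(c)=2c^{3/2}$ as you derived. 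The diffusion-compatibility conclusion is unaffected; only the explicit constant in the statement needs correcting.

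One minor difference worth noting: for the $\imq$ case the paper exhibits the envelope $|u(k-v)|\le 2\|x-x'\|$, which requires only a finite first moment, while your envelope $\kappa_2\|x-x'\|^2$ requires a finite second moment. Since the limiting object $\|\mathbb{E}_p[X]-\mathbb{E}_q[X]\|^2$ already presupposes second moments, your bound is entirely adequate, and your convexity argument yielding the $\min(\cdot,\cdot)$ form is clean and avoids any fourth-moment condition.
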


\begin{proof}
First, we have that for any continuous positive kernel $k$, with $\rho=-k$, we get
\begin{align}
    D_\rho(p,q) &= S(q,q) - S(p,q) \\
    &= \mathbb{E}_{q \otimes q}[-k(Y,Y')] -  \frac{1}{2} \mathbb{E}_{q \otimes q}[-k(Y,Y')] - \mathbb{E}_{p \otimes q}[-k(X,Y)] +  \frac{1}{2} \mathbb{E}_{p \otimes p}[-k(X,X')] \\
    &= \mathbb{E}_{p\otimes q}[k(X,Y)] - \frac{1}{2} \mathbb{E}_{p \otimes p}[k(X,X')]  -  \frac{1}{2} \mathbb{E}_{q \otimes q}[k(Y,Y')]  .
\end{align}
In addition, we have that for any $u, v \in \rset$ 
\begin{equation}
    u D_\rho(p,q) = \mathbb{E}_{p\otimes q}[u (k(X,Y) -  v)] - \frac{1}{2} \mathbb{E}_{p \otimes p}[u(k(X,X') - v)]  -  \frac{1}{2} \mathbb{E}_{q \otimes q}[u(k(Y,Y') - v)].
\end{equation}
Let $k$ be given by \eqref{eq:imq_k_appendix}, i.e., for any $x, x' \in \rset^d$, we have that $k_{\mathrm{\imq}}(x,x')  =(\|x-x'\|^{2}+c)^{-1/2}$. 
Let $c>0$, $u = 2c$ and $ v = c^{-1/2}$ and consider $(c_n)_{n \in \nset}$ such that $c_n \to + \infty$. 
We have that for any $x, x'$ and $c > 0$
\begin{align}
    | u(k(x,x') - v) | &= 2c^{1/2} - 2c(\|x-x'\|^2 + c)^{-1/2} \\
    &=\frac{2c^{1/2}(\|x-x'\|^2 + c)^{1/2} - 2c}{(\|x-x'\|^2 + c)^{1/2}} \\
    &\leq \frac{2 c^{1/2} \| x - x' \| }{(\|x-x'\|^2 + c)^{1/2}} \\
    &\leq \frac{2 \| x - x' \| }{(\|x-x'\|^2/c + 1)^{1/2}} \leq 2 \| x - x' \| . 
\end{align}
In addition, we have that for any $x, x'$ and $n \in \nset$
\begin{equation}
    2c_n^{1/2} - 2c_n(\|x-x'\|^2 + c_n)^{-1/2} = \frac{2 - 2(\|x-x'\|^2/c_n + 1)^{1/2}}{(\|x-x'\|^2/c_n + 1)^{1/2}} . 
\end{equation}
Therefore, letting $n \to + \infty$ we have that 
\begin{equation}
    \lim_{n \to +\infty} 2c_n^{1/2} - 2c_n(\|x-x'\|^2 + c_n)^{-1/2} = \| x- x'\|^2 . 
\end{equation}
Hence, setting $f(c) = 2c$, and using the dominated convergence theorem, we get that 
\begin{align}
    \lim_{c \to + \infty} f(c) D_\rho(p,q) &= \mathbb{E}_{p\otimes q}[\| X - Y \|^2] - \frac{1}{2} \mathbb{E}_{p \otimes p}[\| X - X' \|^2]  -  \frac{1}{2} \mathbb{E}_{q \otimes q}[\| Y - Y' \|^2] \\
    &= \| \mathbb{E}_p[X] - \mathbb{E}_q[X] \|^2 ,
\end{align}
which concludes the first part of the proof. Next, we consider $k$ be given by \eqref{eq:rbf_k_appendix}, i.e., for any $x, x' \in \rset^d$, we have that $k_{\mathrm{\exp}}(x,x') =\exp[-\| x - x'\|^2 / \sigma^2]$. Let $u = 2\sigma^2$ and $v = 1$. Then, we have that 
\begin{align}
     | u(k(x,x') - v) |  = 2 \sigma^2 (1 - \exp[-\| x - x'\|^2 / 2 \sigma^2]) \leq \| x - x'\|^2 . 
\end{align}
Now consider $(c_n)_{n \in \nset}$ such that $c_n \to + \infty$. We have that 
\begin{equation}
    \lim_{n \to + \infty} 2 c_n^2 (1 - \exp[-\| x - x'\|^2 / 2 c_n^2]) = \| x - x' \|^2 . 
\end{equation}
Hence, setting $f(c) = 2c$, and using the dominated convergence theorem, we get that 
\begin{align}
    \lim_{c \to + \infty} f(c) D_\rho(p,q) &= \mathbb{E}_{p\otimes q}[\| X - Y \|^2] - \frac{1}{2} \mathbb{E}_{p \otimes p}[\| X - X' \|^2]  -  \frac{1}{2} \mathbb{E}_{q \otimes q}[\| Y - Y' \|^2] \\
    &= \| \mathbb{E}_p[X] - \mathbb{E}_q[X] \|^2 ,
\end{align}
which concludes the second part of the proof. 
\end{proof}

Finally, we highlight that while the exponential kernel \eqref{eq:exp_k_appendix} is not diffusion compatible it satisfies another compatibility rule. More precisely, we can show the following result. The proof is similar to the proof of \Cref{prop:diffusion_compatible_appendix}, in the case of $k_\rbf$. 

\begin{proposition}{}{}
Assume that $\rho = -k$ with $k$ given by \eqref{eq:exp_k_appendix}, i.e., $k(x,x') = \exp[-\| x - x' \| / \sigma]$. In that case, upon setting $f(c) = c$ and $c^\star = +\infty$, we get that 
\begin{equation}
    \lim_{c \to + \infty} f(c) D_{\rho_c}(p,q) = D_{\rho^\star}(p,q) ,
\end{equation}
where $\rho^\star(x,x') = \| x - x'\|$, i.e. we recover the energy distance with $\beta = 1$. 
\end{proposition}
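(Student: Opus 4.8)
The plan is to mimic the $k_\rbf$ part of the proof of \Cref{prop:diffusion_compatible_appendix}, with the squared distance in the exponent replaced by the distance itself. The starting point is the expansion of the squared MMD: for $\rho=-k$ with $k$ a continuous positive definite kernel,
\[
D_{\rho}(p,q) = \tfrac12\mathbb{E}_{p\otimes p}[k(X,X')] + \tfrac12\mathbb{E}_{q\otimes q}[k(Y,Y')] - \mathbb{E}_{p\otimes q}[k(X,Y)],
\]
which follows directly from the definitions of $S_\rho$ and $D_\rho$. Since the coefficients $\tfrac12,\tfrac12,-1$ sum to zero, for every $u,v\in\rset$ one may recentre the kernel by $v$ and rescale by $u$ without changing the identity:
\[
u\,D_{\rho}(p,q) = \tfrac12\mathbb{E}_{p\otimes p}[u(k(X,X')-v)] + \tfrac12\mathbb{E}_{q\otimes q}[u(k(Y,Y')-v)] - \mathbb{E}_{p\otimes q}[u(k(X,Y)-v)].
\]

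Next I would specialise to $k=k_{\expk}$ with bandwidth $\sigma=c$, i.e. $k(x,x')=\exp[-\|x-x'\|/c]$, and choose $v=1$ and $u=f(c)=c$, so that every integrand is $g_c(x,x'):=c(\exp[-\|x-x'\|/c]-1)$. Two elementary facts then drive the argument: (i) for each fixed pair, $g_c(x,x')\to-\|x-x'\|$ as $c\to+\infty$, since $(e^{-s}-1)/s\to-1$ as $s\to0^+$; and (ii) $|g_c(x,x')|=c(1-\exp[-\|x-x'\|/c])\le\|x-x'\|$, using $1-e^{-s}\le s$ for $s\ge0$. As $p$ and $q$ have finite first moments — which is needed anyway for the kernel scores under consideration to be finite — the function $\|x-x'\|$ is integrable under $p\otimes p$, $q\otimes q$ and $p\otimes q$, and dominates $g_c$ uniformly in $c$; so the dominated convergence theorem applies to each of the three expectations. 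Letting $c\to+\infty$ gives
\[
\lim_{c\to+\infty} f(c)\,D_{\rho_c}(p,q) = \mathbb{E}_{p\otimes q}[\|X-Y\|] - \tfrac12\mathbb{E}_{p\otimes p}[\|X-X'\|] - \tfrac12\mathbb{E}_{q\otimes q}[\|Y-Y'\|] = D_{\rho^\star}(p,q),
\]
with $\rho^\star(x,x')=\|x-x'\|$, which is precisely the energy distance at $\beta=1$; this is the claimed statement with $c^\star=+\infty$.

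I do not expect a real obstacle here: the argument is routine. The only points requiring attention are the constant-cancellation that legitimises the shift by $v=1$ (the three $v$-terms must cancel, which is exactly why a positive definite $k$ recentred by a constant still yields the same $D_\rho$), and making the finite-first-moment hypothesis on $p,q$ explicit so that $\|x-x'\|$ is a valid dominating function. Conceptually, the only difference from the $k_\rbf$ case is that the exponent is linear rather than quadratic in $\|x-x'\|$: in the $k_\rbf$ case the limiting object is the quadratic $\|x-x'\|^2$, which collapses to $\|\mathbb{E}_p[X]-\mathbb{E}_q[X]\|^2$ and so loses all dependence on higher moments (diffusion compatibility), whereas here the limiting object $\|x-x'\|$ does not collapse, so $k_{\expk}$ recovers a genuine strictly proper energy distance and is therefore not diffusion compatible.
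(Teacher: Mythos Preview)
Your proposal is correct and follows exactly the approach the paper indicates: the paper does not write out the proof but simply says it is ``similar to the proof of \Cref{prop:diffusion_compatible_appendix}, in the case of $k_\rbf$,'' and your recentring by $v=1$, rescaling by $u=c$, pointwise limit $c(e^{-\|x-x'\|/c}-1)\to-\|x-x'\|$, uniform bound $|g_c|\le\|x-x'\|$, and dominated convergence are precisely the $k_\rbf$ argument with the squared distance replaced by the distance. Your explicit mention of the finite-first-moment hypothesis and your closing remark on why the limit does not collapse (hence $k_{\expk}$ is not diffusion compatible) are both apt.
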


\section{DDIM updates from a Stochastic Differential Equation perspective}
\label{sec:ddim_sde}

In this section, we show that one can \emph{exactly} recover the DDIM updates from the stochastic process point of view with a careful choice of forward process.

We first show that different choices of forward process yields different interpolation densities $p_{s|0,t}$ in \Cref{sec:generalinterpolationprocesses}.
With a careful choice of the forward process and appropriate parameters, we show that we can recover DDIM updates \citep{song2020denoisingimplicit} in \Cref{sec:connection_with_ddim_section}.

\subsection{General forward processes}\label{sec:generalinterpolationprocesses}
In this section, we consider general forward processes.
The main property of all those forward processes $(\bfX_t)_{t \in [0,1]}$ is that they will satisfy 
\begin{equation}
 \label{eq:initial_interpolation}
    \bfX_t = \alpha_t \bfX_0 + \sigma_t \bfZ' , \qquad \bfX_0 \sim p_0 , 
\end{equation}
with $\bfZ' \sim \mathcal{N}(0, \Id)$. 
First, we define 
\begin{equation}
    \vareps_t^2 = 2 \vareps^2 \alpha_t \sigma_t \partial_t (\sigma_t / \alpha_t) = \vareps^2 g_t^2 . 
\end{equation}
We also consider $\mathbf{Z} \sim \mathcal{N}(0, \Id)$ and introduce the dynamics $(\bfX_t)_{t \in [0,1]}$
\begin{equation}
\label{eq:forward_churn}
    \rmd \bfX_t = \left[ \partial_t \log(\alpha_t) \bfX_t + \frac{g_t^2}{2 \sigma_t} (1- \vareps^2) \bfZ \right] \rmd t + \vareps g_t \rmd \bfB_t ,
\end{equation}
where $(\bfB_t)_{t \in [0,1]}$ is a $d$-dimensional Brownian motion. 
In \eqref{eq:forward_churn}, we identify different forward processes which are all non-Markov except in the case where $\vareps = 1$. The remarkable property of these forward processes is that they all recover the interpolation \eqref{eq:initial_interpolation}. This means that the forward trajectories might be very different but they all admit the same \emph{marginal} distributions. More formally, we get the following result.

\begin{proposition}{}{forward_process_epsilon}
For any $t \in [0,1]$, let $\vareps_t \in [0,1]$. Let $\bfZ \sim \mathcal{N}(0, \Id)$ and $\bfX_0 \sim p_0$. Additionally, assume that the following SDE admits a solution
\begin{equation}
\label{eq:forward_churn_prop}
    \rmd \bfX_t = \left[ \partial_t \log(\alpha_t) \bfX_t + \frac{g_t^2}{2 \sigma_t} (1- \vareps^2)^{1/2} \bfZ \right] \rmd t + \vareps g_t \rmd \bfB_t ,
\end{equation}
where $(\bfB_t)_{t \in [0,1]}$ is a $d$-dimensional Brownian motion, $p(z) = \mathcal{N}(z; 0 , \Id)$ and $p_t(\cdot|z)$ is the density of $\bfX_t$ conditionally to $\bfZ=z$. Denote $p_t(x_t) = \int_{\rset^d} p_t(x_t|z) p(z) \rmd z$. We have that for any $t \in [0,1]$, $\alpha_t \bfX_0 + \sigma_t \bfZ \sim p_t$.
\end{proposition}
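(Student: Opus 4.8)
The plan is to exploit that, conditionally on the latent variable $\bfZ = z$, the SDE \eqref{eq:forward_churn_prop} is a \emph{linear, time-inhomogeneous} SDE with additive noise, so that its solution started from $\bfX_0$ is Gaussian given $(\bfX_0,\bfZ)$ and admits a closed form by variation of constants. It then suffices to compute the conditional mean and covariance, average over $\bfZ\sim\mathcal{N}(0,\Id)$ and over $\bfX_0\sim p_0$, and verify that the result is exactly the law of $\alpha_t\bfX_0 + \sigma_t\bfZ$, which by definition is $p_t$.

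Concretely, set $f_t = \partial_t\log\alpha_t$ and $h_t = \tfrac{g_t^2}{2\sigma_t}(1-\vareps^2)$, so that \eqref{eq:forward_churn_prop} reads $\rmd\bfX_t = (f_t\bfX_t + h_t\bfZ)\,\rmd t + \vareps g_t\,\rmd\bfB_t$. Since $\exp(\int_0^t f_s\,\rmd s) = \alpha_t/\alpha_0 = \alpha_t$ because $\alpha_0 = 1$, variation of constants yields
\begin{equation}
\bfX_t = \alpha_t\bfX_0 + \alpha_t\Big(\int_0^t \tfrac{h_s}{\alpha_s}\,\rmd s\Big)\bfZ + \alpha_t\int_0^t \tfrac{\vareps g_s}{\alpha_s}\,\rmd\bfB_s .
\end{equation}
Given $(\bfX_0,\bfZ)$ the first two terms are deterministic and the last is a centred Gaussian with covariance $\big(\alpha_t^2\int_0^t \vareps^2 g_s^2/\alpha_s^2\,\rmd s\big)\Id$, independent of $(\bfX_0,\bfZ)$. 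To evaluate the two integrals I would use the defining identity $g_t^2 = 2\alpha_t\sigma_t\,\partial_t(\sigma_t/\alpha_t)$, which expresses $h_s/\alpha_s$ and $\vareps^2 g_s^2/\alpha_s^2$ as exact time-derivatives of simple functions of $\sigma_s/\alpha_s$; together with the boundary value $\sigma_0 = 0$ this puts both integrals in closed form. One then reads off that $\bfX_t\mid(\bfX_0,\bfZ)$ is Gaussian with mean affine in $\bfX_0$ and $\bfZ$ and with scalar covariance; marginalizing $\bfZ$ (independent of $\bfX_0$ and $\bfB$) and then $\bfX_0\sim p_0$ produces the Gaussian mixture that is the law of $\alpha_t\bfX_0 + \sigma_t\bfZ$. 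This argument also gives $p_t(\cdot\mid z)$, the conditional density of $\bfX_t$ given $\bfZ=z$, in closed form, which is the object entering the definition of $p_t$ in the statement.

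The one substantive step is the variance computation: checking that the contribution of the $\bfZ$-driven drift term and that of the Wiener integral combine to give precisely the covariance $\sigma_t^2\Id$, which is exactly where the scaling $\vareps g_t$ of the diffusion coefficient and the coefficient in front of $\bfZ$ in the drift are pinned down by the identity $g_t^2 = 2\alpha_t\sigma_t\partial_t(\sigma_t/\alpha_t)$ and the endpoints $\alpha_0 = 1$, $\sigma_0 = 0$. The remaining points are routine: the variation-of-constants representation and the Gaussianity of $\bfX_t\mid(\bfX_0,\bfZ)$ are standard for linear SDEs, since $f_t$ and $g_t$ are continuous on compact subintervals of $[0,1)$ under the usual schedule assumptions, and the singular endpoint $t=1$ (where $\alpha_1=0$) is handled by a limiting argument consistent with $\alpha_1\bfX_0+\sigma_1\bfZ = \bfZ\sim\mathcal{N}(0,\Id)$. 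An equivalent route, which I would adopt if it is shorter in practice, bypasses the explicit solution and instead checks directly that, for each fixed $z$, the Gaussian family above solves the Kolmogorov forward equation associated to the conditionally-linear SDE, then integrates against $\mathcal{N}(z;0,\Id)$.
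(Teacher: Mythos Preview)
Your plan---condition on $(\bfX_0,\bfZ)$, solve the resulting linear SDE by variation of constants, and then average---is sound and more direct than the paper's route. The paper does not give a self-contained proof of this proposition; it reaches \eqref{eq:forward_churn_prop} constructively, by starting from the deterministic interpolation, interpolating the drift towards the mean-field term $\mathbb{E}[\bfZ\mid\bfX_t]$, adding a Langevin correction, and then cancelling terms via Tweedie's identity. That construction is heuristic, and your explicit computation is exactly the right way to check whether the endpoint is correct.

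The gap is precisely at the step you flag as ``the one substantive step.'' Carrying out the two integrals with $g_t^2=2\alpha_t\sigma_t\,\partial_t(\sigma_t/\alpha_t)$ and $\sigma_0=0$ gives
\[
\alpha_t\!\int_0^t\!\frac{h_s}{\alpha_s}\,\rmd s=(1-\vareps^2)\,\sigma_t,
\qquad
\alpha_t^2\!\int_0^t\!\frac{\vareps^2 g_s^2}{\alpha_s^2}\,\rmd s=\vareps^2\sigma_t^2,
\]
so that $\bfX_t=\alpha_t\bfX_0+(1-\vareps^2)\sigma_t\,\bfZ+\vareps\sigma_t\,G$ with $G\sim\mathcal{N}(0,\Id)$ independent of $(\bfX_0,\bfZ)$. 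Marginalising $\bfZ$ yields $\bfX_t\mid\bfX_0\sim\mathcal{N}\big(\alpha_t\bfX_0,\,[(1-\vareps^2)^2+\vareps^2]\,\sigma_t^2\Id\big)$, and $(1-\vareps^2)^2+\vareps^2=1-\vareps^2(1-\vareps^2)$ equals $1$ only for $\vareps\in\{0,1\}$. Hence for $\vareps\in(0,1)$ the marginal of $\bfX_t$ is \emph{not} the law of $\alpha_t\bfX_0+\sigma_t\bfZ$, and your variance check cannot succeed---not because your method is flawed, but because the statement as written does not hold in that range. The natural repair is to replace the drift coefficient $(1-\vareps^2)$ in \eqref{eq:forward_churn_prop} by $\sqrt{1-\vareps^2}$; with that change your variation-of-constants computation yields exactly $\sigma_t^2\Id$, the conditional joint $(\bfX_s,\bfX_t)\mid\bfX_0$ acquires the covariance the paper subsequently uses, and the argument goes through verbatim.
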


\paragraph{Explicit integration.} We can solve exactly \eqref{eq:forward_churn} and its solution is given for any $t \geq s$ by  
\begin{equation}
    \bfX_t = \frac{\alpha_t}{\alpha_s} \bfX_s + (1 - \vareps^2)^{1/2} \left(\sigma_t - \frac{\alpha_t}{\alpha_s} \sigma_s \right) \bfZ + \vareps \left(\sigma_t^2 - \frac{\alpha_t^2}{\alpha_s^2} \sigma_s^2 \right)^{1/2} \tilde{\mathbf{Z}} ,
 \end{equation}
 with $\tilde{\mathbf{Z}} \sim \mathcal{N}(0, \Id)$. This can also be rewritten as 
 \begin{align}
    \bfX_t -\alpha_t \bfX_0 &= \frac{\alpha_t}{\alpha_s}(\bfX_s - \alpha_s \bfX_0) +  (1 - \vareps^2)^{1/2} \left(\sigma_t - \frac{\alpha_t}{\alpha_s} \sigma_s \right) \bfZ+ \vareps \left(\sigma_t^2 - \frac{\alpha_t^2}{\alpha_s^2} \sigma_s^2 \right)^{1/2} \tilde{\mathbf{Z}} \\ 
    &= \frac{\alpha_t}{\alpha_s}[(1 - \vareps^2)^{1/2} \sigma_s \mathbf{Z} + \vareps \sigma_s \hat{\mathbf{Z}}] +  (1 - \vareps^2)^{1/2} \left(\sigma_t - \frac{\alpha_t}{\alpha_s} \sigma_s \right) \bfZ + \vareps \left(\sigma_t^2 - \frac{\alpha_t^2}{\alpha_s^2} \sigma_s^2 \right)^{1/2} \tilde{\mathbf{Z}}\\
&= (1 - \vareps^2)^{1/2} \sigma_t \mathbf{Z} + \vareps \sigma_s  \frac{\alpha_t}{\alpha_s} \hat{\mathbf{Z}}  + \vareps \left(\sigma_t^2 - \frac{\alpha_t^2}{\alpha_s^2} \sigma_s^2 \right)^{1/2} \tilde{\mathbf{Z}} ,
    \label{eq:integration_sde}
 \end{align}
 with $\bfX_s = \alpha_s \bfX_0 + \sigma_s (1 - \vareps)^{1/2} \mathbf{Z} + \sigma_s \vareps \hat{\bfZ}$, with $\bfZ, \tilde{\bfZ}$ and $\hat{\bfZ}$ independent Gaussian random variables with zero mean and identity covariance. 
We can now compute the probability distribution $p_{s,t|0}$. This will allow us to compute $p_{s|0,t}$ using Gaussian posteriors. Using \eqref{eq:integration_sde}, we have 
\begin{equation}
    (\bfX_s, \bfX_t) | \bfX_0 \sim \mathcal{N}\left( \left(\begin{matrix} \alpha_s \bfX_0 \\ \alpha_t \bfX_0 \end{matrix} \right), \left( \begin{matrix} \sigma_s^2 \Id & C_t   \\
    C_t & \sigma_t^2 \Id
    \end{matrix} \right)\right) , 
\end{equation}
with 
\begin{align}
    C_t = \left[\vareps^2 \frac{\alpha_t}{\alpha_s} + (1- \vareps^2) \frac{\sigma_t}{\sigma_s}\right] \sigma_s^2 \Id =\left[\vareps^2 \frac{\alpha_t}{\alpha_s}\sigma_s^2 + (1- \vareps^2) \sigma_t \sigma_s\right]  \Id . 
\end{align}
Now, let us compute the obtained posterior.
\begin{align}
    \mathbb{E}[\bfX_s | \bfX_0, \bfX_t] &= \alpha_s \bfX_0 + \left[\vareps^2 \frac{\alpha_t}{\alpha_s}\frac{\sigma_s^2}{\sigma_t^2} + (1- \vareps^2) \frac{\sigma_s}{\sigma_t} \right] (\bfX_t - \alpha_t \bfX_0) \\
    &= \left[\vareps^2 \frac{\alpha_t}{\alpha_s}\frac{\sigma_s^2}{\sigma_t^2} + (1- \vareps^2) \frac{\sigma_s}{\sigma_t} \right]  \bfX_t + \alpha_s \left(1 - \left[\vareps^2 \frac{\alpha_t^2}{\alpha_s^2}\frac{\sigma_s^2}{\sigma_t^2} + (1- \vareps^2) \frac{\sigma_s}{\sigma_t}\frac{\alpha_t}{\alpha_s} \right]\right) \bfX_0 . 
\end{align}
Finally, we have  that 
\begin{align}
    \Cov(\bfX_s | \bfX_t, \bfX_0) &= \sigma_s^2 \Id - \left[\vareps^2 \frac{\alpha_t}{\alpha_s}\sigma_s^2 + (1- \vareps^2) \sigma_t \sigma_s\right]^2 \frac{1}{\sigma_t^2} \Id \\
    &= \sigma_s^2 \left(1 - \left[\vareps^2 \frac{\alpha_t}{\alpha_s}\frac{\sigma_s}{\sigma_t} + (1- \vareps^2)\right]^2 \right) .
\end{align}
Note that in the special case where $\vareps = 0$, we recover a deterministic sampler. To summarize, we have the following result which justifies \eqref{eq:mean_sigma}.

\begin{proposition}{}{}
Let $\vareps \in [0,1]$ and $(\bfX_t)_{t \in [0,1]}$ given by \Cref{prop:forward_process_epsilon}. Then, we have that for any $s, t \in [0,1]$ with $s \leq t$,
\begin{align}
    \bfX_s &= (\vareps^2 r_{1,2}(s,t) + (1 - \vareps^2) r_{0,1}) \bfX_t \\
    & \qquad +  \alpha_s(1 - \vareps^2 r_{2,2}(s,t) - (1 - \vareps^2) r_{1,1}(s,t)) \bfX_0 \\
    & \qquad \qquad + \sigma_s (1 - (\vareps^2 r_{1,1}(s,t) + (1-\vareps^2))^2)^{1/2} \tilde{\bfZ} , 
    \label{eq:final_update_sde_prop}
\end{align}
with $\tilde{\bfZ} \sim \mathrm{N}(0, \Id)$ and where 
\begin{equation}
    r_{i,j}(s,t) = \frac{\alpha_t^i}{\alpha_s^i}\frac{\sigma_s^j}{\sigma_t^j} . 
\end{equation}
\end{proposition}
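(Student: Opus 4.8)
The plan is to deduce the claimed representation of $\bfX_s$ from the joint law of $(\bfX_s,\bfX_t)$ conditionally on $\bfX_0$, which the ``Explicit integration'' paragraph preceding the statement already exhibits as the Gaussian with mean $(\alpha_s\bfX_0,\alpha_t\bfX_0)$, diagonal blocks $\sigma_s^2\Id$ and $\sigma_t^2\Id$, and cross-covariance $C_t\Id$ with $C_t=\vareps^2(\alpha_t/\alpha_s)\sigma_s^2+(1-\vareps^2)\sigma_t\sigma_s$. This joint law is itself obtained by integrating the linear SDE \eqref{eq:forward_churn_prop}: one divides by $\alpha_t$ (the integrating factor for the homogeneous part $\partial_t\log\alpha_t$) and uses the identities $g_t^2/(2\sigma_t\alpha_t)=\partial_t(\sigma_t/\alpha_t)$ and $g_t^2/\alpha_t^2=\partial_t[(\sigma_t/\alpha_t)^2]$, so that both the accumulated $\bfZ$-drift and the It\^o-isometry variance of the Brownian part telescope between $s$ and $t$; substituting the terminal noise $\bfZ=(\bfX_s-\alpha_s\bfX_0)/\sigma_s$ (using $\bfX_s=\alpha_s\bfX_0+\sigma_s\bfZ$, cf.\ \Cref{prop:forward_process_epsilon}) yields \eqref{eq:integration_sde}, and reading off first and second moments gives the joint law.

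Given the joint law, I would apply the standard formula for the conditional distribution of one block of a Gaussian vector given the other (the ``Gaussian posterior'' formula already used for \Cref{lemma:xs_from_xt_and_x0}) to get $\bfX_s\mid\bfX_0,\bfX_t\sim\mathcal{N}(\hat{\mu},\hat{\sigma}^2\Id)$ with $\hat{\mu}=\alpha_s\bfX_0+(C_t/\sigma_t^2)(\bfX_t-\alpha_t\bfX_0)$ and $\hat{\sigma}^2=\sigma_s^2-C_t^2/\sigma_t^2$. Since this law is an isotropic Gaussian, $\bfX_s$ equals in distribution $\hat{\mu}+\hat{\sigma}\,\tilde{\bfZ}$ for $\tilde{\bfZ}\sim\mathcal{N}(0,\Id)$ independent of $(\bfX_0,\bfX_t)$, which already has the structure of \eqref{eq:final_update_sde_prop}. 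The remaining step is algebraic: with $r_{i,j}(s,t)=(\alpha_t/\alpha_s)^i(\sigma_s/\sigma_t)^j$ one checks $C_t/\sigma_t^2=\vareps^2r_{1,2}(s,t)+(1-\vareps^2)r_{0,1}(s,t)$ (the $\bfX_t$-coefficient), $\alpha_s-\alpha_t(C_t/\sigma_t^2)=\alpha_s(1-\vareps^2r_{2,2}(s,t)-(1-\vareps^2)r_{1,1}(s,t))$ (the $\bfX_0$-coefficient), and $C_t/(\sigma_s\sigma_t)=\vareps^2r_{1,1}(s,t)+(1-\vareps^2)$, whence $\hat{\sigma}^2=\sigma_s^2\bigl(1-(\vareps^2r_{1,1}(s,t)+(1-\vareps^2))^2\bigr)$, which is exactly \eqref{eq:final_update_sde_prop}.

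The only genuinely technical point is the integration of \eqref{eq:forward_churn_prop} leading to \eqref{eq:integration_sde}: one must spot the two telescoping identities above so that the drift contribution of $\bfZ$ and the variance of the stochastic integral both collapse to closed-form expressions in $(\alpha_s,\alpha_t,\sigma_s,\sigma_t)$, and keep track that the Gaussian contributed by the Brownian part is independent of $\bfX_s$, so that $\Cov((\bfX_s,\bfX_t)\mid\bfX_0)$ has the stated off-diagonal form. Everything after that --- the Gaussian conditioning and the matching of coefficients to the $r_{i,j}$ notation --- is routine bookkeeping, already essentially carried out in the text immediately before the statement.
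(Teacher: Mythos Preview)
Your proposal is correct and mirrors the paper's own argument exactly. The paper does not provide a separate proof for this proposition; the entire derivation is the ``Explicit integration'' paragraph immediately preceding the statement, which integrates the linear SDE \eqref{eq:forward_churn_prop} to obtain \eqref{eq:integration_sde}, reads off the joint Gaussian law of $(\bfX_s,\bfX_t)\mid\bfX_0$ with cross-covariance $C_t$, applies the Gaussian conditioning formula, and then rewrites the resulting mean and variance in the $r_{i,j}$ notation---precisely the sequence you describe.
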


\paragraph{Limiting behavior. } Looking at \eqref{eq:final_update_sde_prop} one can wonder if we recover the SDE framework when we let $s \to t$. This is indeed the case as established below. We will make use of the following result
\begin{equation}
    \lim_{s \to t } \frac{r_{i,j}(s,t) - 1}{t-s}  = i \partial_t \log(\alpha_t) - j \partial_t \log(\sigma_t) . 
\end{equation}
Therefore, we get that taking the limit in \eqref{eq:final_update_sde_prop} we have
\begin{align}
    \rmd \bfX_t &= \left[ (\vareps^2 (\partial_t \log(\alpha_t) - 2 \partial_t \log(\sigma_t))  - (1 - \vareps^2) \partial_t \log(\sigma_t)) \bfX_t  \right. \\
    &\qquad \left. - \alpha_t (\vareps^2 (2 \partial_t \log(\alpha_t) - 2 \partial_t \log(\sigma_t)) + (1-\vareps^2) (\partial_t \log(\alpha_t) - \partial_t \log(\sigma_t))) \bfX_0 \right] \rmd t \\
    &\qquad \qquad + \sigma_t [-2\vareps^2(\partial_t \log(\alpha_t) - \partial_t \log(\sigma_t))]^{1/2} \rmd \bfB_t . 
\end{align}
It can easily be shown that 
\begin{equation}
\label{eq:key_relationship}
    -2\sigma_t^2 (\partial_t \log(\alpha_t) - \partial_t \log(\sigma_t)) = 2 \alpha_t \sigma_t \partial_t (\sigma_t / \alpha_t) = g_t^2 . 
\end{equation}
In addition, we have 
\begin{align}
    \rmd \bfX_t &= \left[ -\partial_t \log(\alpha_t) \bfX_t +  (\vareps^2 (2\partial_t \log(\alpha_t) - 2 \partial_t \log(\sigma_t))  + (1 - \vareps^2) (\partial_t \log(\alpha_t) - \partial_t \log(\sigma_t))) \bfX_t  \right. \\
    &\qquad \left. - \alpha_t (\vareps^2 (2 \partial_t \log(\alpha_t) - 2 \partial_t \log(\sigma_t)) + (1-\vareps^2) (\partial_t \log(\alpha_t) - \partial_t \log(\sigma_t))) \bfX_0 \right] \rmd t  \\
    &\qquad \qquad + \sigma_t [-2\vareps^2(\partial_t \log(\alpha_t) - \partial_t \log(\sigma_t))]^{1/2} \rmd \bfB_t \\
    &= \left[ -\partial_t \log(\alpha_t) \bfX_t + (\vareps^2 (2\partial_t \log(\alpha_t) - 2 \partial_t \log(\sigma_t))  \right. \\
    & \qquad \left. + (1 - \vareps^2) (\partial_t \log(\alpha_t) - \partial_t \log(\sigma_t))) (\bfX_t - \alpha_t \bfX_0) \right] \rmd t + g_t \rmd \bfB_t \\
    &= \left[ -\partial_t \log(\alpha_t) \bfX_t + (1 + \vareps^2) (\partial_t \log(\alpha_t) - \partial_t \log(\sigma_t))) (\bfX_t - \alpha_t \bfX_0) \right] \rmd t + \vareps g_t \rmd \bfB_t . 
\end{align}
Now, combining this result and \eqref{eq:key_relationship} we get that 
\begin{equation}
    \rmd \bfX_t = \left[ - \partial_t \log(\alpha_t) +g_t^2 \frac{1 + \vareps^2}{2} \frac{\alpha_t \bfX_0 - \bfX_t }{\sigma_t^2} \right] \rmd t + \vareps g_t \rmd \bfB_t . 
\end{equation}
Taking the conditional expectation in this expression, we recover the score $\nabla \log p_t$ and therefore 
\begin{equation}
    \rmd \bfX_t = \left[ - \partial_t \log(\alpha_t) +g_t^2 \frac{1 + \vareps^2}{2} \nabla \log p_t(\bfX_t) \right] \rmd t + \vareps g_t \rmd \bfB_t . 
\end{equation}
Therefore, we recover the usual sampler derived from the stochastic process point of view by taking the limit $s \to t$ in \eqref{eq:final_update_sde_prop}. 

\subsection{Connection with DDIM}
\label{sec:connection_with_ddim_section}

We recall that in DDIM \citep{song2020denoisingimplicit}, the authors assume that the schedule $(\alpha_t, \sigma_t)$ satisfies $\alpha_t^2 + \sigma_t^2 = 1$. In this case, they consider 
\begin{equation}
    p(x_s|x_0, x_t) = \mathcal{N}(x_s; \hat{\mu}_{s,t}(x_0,x_t), \hat{\Sigma}_{s,t}) ,
\end{equation}
with 
\begin{align}
    &\hat{\mu}_{s,t}(x_0,x_t) = \sqrt{\frac{1 - \alpha_s^2 - \eta_{s,t}^2}{1-\alpha_t^2}} x_t + \left[ \alpha_s - \alpha_t \sqrt{\frac{1 - \alpha_s^2 - \eta_{s,t}^2}{1-\alpha_t^2}} \right] x_0 , \\
    &\hat{\Sigma}_{s,t} = \eta_{s,t}^2 \Id . \label{eq:mean_sigma_ddim_appendix}
\end{align}

Note that in \citep{song2020denoisingimplicit}, $\alpha_t^2$ is replaced with $\alpha_t$ and $\eta_{s,t}$ is denoted $\sigma_t$.
Recall that in our introduction of diffusion models in \Cref{sec:diffusion_models}, see \eqref{eq:mean_sigma} we have 
\begin{align}
    &\mu_{s,t}(x_0, x_t) = (\vareps^2 r_{1,2}(s,t) + (1 - \vareps^2) r_{0,1}) x_{t} \\
    & \quad + \alpha_{s}(1 - \vareps^2 r_{2,2}(s,t) - (1 - \vareps^2) r_{1,1}(s,t)) x_0 ,
\\
& \Sigma_{s,t} =  \sigma_{s}^2 (1 - (\vareps^2 r_{1,1}(s,t) + (1-\vareps^2))^2) \Id .  \label{eq:mean_sigma_appendix}
\end{align}

The following result proves that the updates of DDIM \eqref{eq:mean_sigma_ddim_appendix} and the ones we present \eqref{eq:mean_sigma_appendix} are identical upon identification of one parameter. 

\begin{proposition}{}{identification_with_ddim}
Assume that $\eta_{s,t}^2 = \sigma_{s}^2 (1 - (\vareps^2 r_{1,1}(s,t) + (1-\vareps^2))^2)$. Then, we have that $\Sigma_{s,t} = \hat{\Sigma}_{s,t}$ and $\mu_{s,t}(x_0,x_t) = \hat{\mu}_{s,t}(x_0,x_t)$.
\end{proposition}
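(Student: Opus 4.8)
The plan is to verify the two asserted equalities separately. The covariance identity is immediate: $\hat{\Sigma}_{s,t} = \eta_{s,t}^2 \Id$ by \eqref{eq:mean_sigma_ddim_appendix}, and by hypothesis $\eta_{s,t}^2$ is exactly the scalar multiplying $\Id$ in $\Sigma_{s,t}$ of \eqref{eq:mean_sigma_appendix}, so $\Sigma_{s,t} = \hat{\Sigma}_{s,t}$ with nothing to prove. Before going further it is worth recording that for $s \le t$ the monotonicity of the schedule ($\alpha_t$ nonincreasing, $\sigma_t$ nondecreasing) gives $r_{1,1}(s,t) = (\alpha_t/\alpha_s)(\sigma_s/\sigma_t) \in [0,1]$, hence $\vareps^2 r_{1,1}(s,t) + (1-\vareps^2) \in [0,1]$; this guarantees both $\eta_{s,t}^2 \ge 0$ and $1 - \alpha_s^2 - \eta_{s,t}^2 \ge 0$, so every expression below is well-defined.

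For the mean, the key step is to use the DDIM normalization $\alpha_t^2 + \sigma_t^2 = 1$ to replace $1-\alpha_s^2$ by $\sigma_s^2$ and $1-\alpha_t^2$ by $\sigma_t^2$, and then substitute the hypothesized value of $\eta_{s,t}^2$. This turns the DDIM gain factor into
\[
\sqrt{\frac{1-\alpha_s^2-\eta_{s,t}^2}{1-\alpha_t^2}} = \sqrt{\frac{\sigma_s^2 - \eta_{s,t}^2}{\sigma_t^2}} = \frac{\sigma_s}{\sigma_t}\bigl(\vareps^2 r_{1,1}(s,t) + (1-\vareps^2)\bigr),
\]
taking the nonnegative root. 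From here everything reduces to bookkeeping with $r_{i,j}(s,t) = (\alpha_t/\alpha_s)^i(\sigma_s/\sigma_t)^j$: using $\sigma_s/\sigma_t = r_{0,1}(s,t)$ and $(\sigma_s/\sigma_t)r_{1,1}(s,t) = r_{1,2}(s,t)$ identifies the coefficient of $x_t$ in $\hat{\mu}_{s,t}$ with $\vareps^2 r_{1,2}(s,t) + (1-\vareps^2)r_{0,1}(s,t)$, matching \eqref{eq:mean_sigma_appendix}; and using $\alpha_t(\sigma_s/\sigma_t) = \alpha_s r_{1,1}(s,t)$ together with $\alpha_t(\sigma_s/\sigma_t)r_{1,1}(s,t) = \alpha_s r_{2,2}(s,t)$ turns the bracketed factor multiplying $x_0$ into $\alpha_s\bigl(1 - \vareps^2 r_{2,2}(s,t) - (1-\vareps^2)r_{1,1}(s,t)\bigr)$, again matching \eqref{eq:mean_sigma_appendix}.

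I do not expect any genuine obstacle: the statement is a direct verification, and the only place that demands care is tracking the exponents $i,j$ in the $r_{i,j}$ and confirming that the positive square root is the correct one, which the monotonicity remark above settles. A minor subtlety to flag in the final write-up is the notational clash with \citet{song2020denoisingimplicit}, where their "$\alpha_t$" is our $\alpha_t^2$ and their "$\sigma_t$" is our $\eta_{s,t}$; once this is aligned with the convention $\alpha_t^2 + \sigma_t^2 = 1$ used in \eqref{eq:mean_sigma_ddim_appendix}, the computation above applies verbatim.
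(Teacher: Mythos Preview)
Your proposal is correct and follows essentially the same route as the paper's own proof: both substitute the hypothesized $\eta_{s,t}^2$ into the DDIM gain factor, use $\alpha_t^2+\sigma_t^2=1$ to rewrite $1-\alpha_s^2=\sigma_s^2$ and $1-\alpha_t^2=\sigma_t^2$, simplify the square root to $r_{0,1}(s,t)\bigl(\vareps^2 r_{1,1}(s,t)+(1-\vareps^2)\bigr)$, and then match the coefficients of $x_t$ and $x_0$ via the $r_{i,j}$ bookkeeping. Your added remarks on nonnegativity of the radicand and the notational alignment with \citet{song2020denoisingimplicit} are nice polish but do not alter the argument.
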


\begin{proof}
First, we have that 
\begin{align}
    \sqrt{\frac{1 - \alpha_s^2 - \eta_{s,t}^2}{1-\alpha_t^2}} &= \sqrt{\frac{1 - \alpha_s^2 - \sigma_{s}^2 (1 - (\vareps^2 r_{1,1}(s,t) + (1-\vareps^2))^2)}{1-\alpha_t^2}} \\
    &= \sqrt{\frac{1 - \alpha_s^2 - (1 - \alpha_s^2) (1 - (\vareps^2 r_{1,1}(s,t) + (1-\vareps^2))^2)}{1-\alpha_t^2}} \\
    &= \sqrt{r_{0,2}(s,t) (1 - (1 - (\vareps^2 r_{1,1}(s,t) + (1-\vareps^2))^2))} \\
    &= r_{0,1}(s,t)(\vareps^2 r_{1,1}(s,t) + (1-\vareps^2)) \\
    &= \vareps^2 r_{1,2}(s,t) + (1-\vareps^2) r_{0,1}(s,t) . 
\end{align}
Hence, we get that 
\begin{align}
    \hat{\mu}_{s,t}(x_0,x_t) &= \sqrt{\frac{1 - \alpha_s^2 - \eta_{s,t}^2}{1-\alpha_t^2}} x_t + \left[ \alpha_s - \alpha_t \sqrt{\frac{1 - \alpha_s^2 - \eta_{s,t}^2}{1-\alpha_t^2}} \right] x_0 \\
    &= \sqrt{\frac{1 - \alpha_s^2 - \eta_{s,t}^2}{1-\alpha_t^2}} x_t + \alpha_s \left[ 1 - r_{1,0}(s,t) \sqrt{\frac{1 - \alpha_s^2 - \eta_{s,t}^2}{1-\alpha_t^2}} \right] x_0 \\
    &= (\vareps^2 r_{1,2}(s,t) + (1-\vareps^2) r_{0,1}(s,t)) x_t + \alpha_s \left[ 1 - \vareps^2 r_{2,2}(s,t) - (1-\vareps^2) r_{1,1}(s,t)\right] x_0 = \mu_{s,t}(x_0,x_t) ,
\end{align}
which concludes the proof. 
\end{proof}

In our work, we call $\vareps$ the \emph{churn} parameter. In \citep[Equation (16)]{song2020denoisingimplicit}, we have that 
\begin{equation}
    \eta_{s,t}^2 = \eta^2 \frac{1 - \alpha_s^2}{1 - \alpha_t^2}\left( 1 - \frac{\alpha_t^2}{\alpha_s^2}\right)  = \eta^2 (r_{0,2}(s,t) - r_{2,2}(s,t)) .  
\end{equation}
In that case $\eta$ is another churn parameter. However, we can write the following relation between those two parameters.
In particular, in both cases we have that we recover if DDPM if $\vareps = 1$ (or equivalently if $\eta =1$) and DDIM if $\vareps = 0$ (or equivalently if $\eta = 0$). 

\begin{proposition}{}{churn_param_relation}
We have that 
\begin{equation}
    \eta^2 = \frac{\sigma_{s}^2 (1 - (\vareps^2 r_{1,1}(s,t) + (1-\vareps^2))^2)}{r_{0,2}(s,t) - r_{2,2}(s,t)} . 
\end{equation}
Similarly, we have that 
\begin{equation}
    \vareps^2 = \frac{1 - \left[ 1 - \eta^2 \frac{r_{0,2}(s,t) - r_{2,2}(s,t)}{\sigma_s^2}\right]^{1/2}}{1 - r_{1,1}(s,t)} . 
\end{equation}
Remarkably, we have that $\vareps =0$ if and only if $\eta = 0$ and $\vareps = 1$ if and only if $\eta = 1$.
\end{proposition}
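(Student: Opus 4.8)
The plan is to recognise that the proposition merely equates two expressions for the same quantity $\eta_{s,t}^2$ and then solves the resulting scalar equation for $\eta^2$ (respectively for $\vareps^2$). By \Cref{prop:identification_with_ddim}, matching our SDE-derived update to the DDIM parametrisation forces $\eta_{s,t}^2 = \sigma_s^2\bigl(1 - (\vareps^2 r_{1,1}(s,t) + (1-\vareps^2))^2\bigr)$, whereas \citep[Eq.~(16)]{song2020denoisingimplicit}, rewritten in our notation using $r_{0,2}(s,t) = \sigma_s^2/\sigma_t^2$, $r_{2,2}(s,t) = \alpha_t^2\sigma_s^2/(\alpha_s^2\sigma_t^2)$ and the DDIM constraint $\alpha_t^2 + \sigma_t^2 = 1$, gives $\eta_{s,t}^2 = \eta^2\bigl(r_{0,2}(s,t) - r_{2,2}(s,t)\bigr)$. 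Since $\alpha$ is decreasing and $\sigma$ increasing with $s < t$, one has $r_{0,2}(s,t) - r_{2,2}(s,t) = (\sigma_s^2/\sigma_t^2)(1 - \alpha_t^2/\alpha_s^2) > 0$, so dividing the first expression by this quantity yields the first identity.

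For the inverse relation I would set $w := \vareps^2 r_{1,1}(s,t) + (1-\vareps^2) = 1 - \vareps^2\bigl(1 - r_{1,1}(s,t)\bigr)$, so that the first identity reads $1 - w^2 = \eta^2\bigl(r_{0,2}(s,t) - r_{2,2}(s,t)\bigr)/\sigma_s^2$. Because $r_{1,1}(s,t) = \alpha_t\sigma_s/(\alpha_s\sigma_t) \in (0,1)$ for $s<t$ and $\vareps^2 \in [0,1]$, the quantity $w$ lies in $[r_{1,1}(s,t),1] \subset (0,1]$, so the correct branch is the positive square root, $w = \bigl[1 - \eta^2(r_{0,2}(s,t) - r_{2,2}(s,t))/\sigma_s^2\bigr]^{1/2}$. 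Solving $w = 1 - \vareps^2(1 - r_{1,1}(s,t))$ for $\vareps^2$, and using $r_{1,1}(s,t) \neq 1$, gives the second identity.

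The endpoint equivalences then follow by inspection. If $\vareps = 0$ then $w = 1$, so $1 - w^2 = 0$ and hence $\eta^2(r_{0,2} - r_{2,2}) = 0$; since $r_{0,2} - r_{2,2} > 0$ this forces $\eta = 0$, and conversely $\eta = 0$ gives $w^2 = 1$, hence $w = 1$ on the positive branch and $\vareps^2(1-r_{1,1}) = 0$, i.e. $\vareps = 0$. If $\vareps = 1$ then $w = r_{1,1}(s,t)$, so $\eta_{s,t}^2 = \sigma_s^2(1 - r_{1,1}(s,t)^2)$; a one-line computation substituting $\sigma_s^2 = 1 - \alpha_s^2$ and $\sigma_t^2 = 1-\alpha_t^2$ shows this equals $r_{0,2}(s,t) - r_{2,2}(s,t)$, whence $\eta^2 = 1$, and since that same identity reads $(r_{0,2}-r_{2,2})/\sigma_s^2 = 1 - r_{1,1}^2$, the implication reverses: $\eta = 1$ gives $w^2 = r_{1,1}^2$, so $w = r_{1,1}$ and $\vareps = 1$.

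I do not expect a genuine obstacle here; the proposition is essentially algebraic bookkeeping around \Cref{prop:identification_with_ddim} and \citep[Eq.~(16)]{song2020denoisingimplicit}. The only point needing a moment's care is the choice of sign when inverting $1 - w^2$, which is pinned down by the elementary bound $w \in (0,1]$ valid for $\vareps \in [0,1]$ and $s < t$.
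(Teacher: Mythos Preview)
Your proposal is correct and follows essentially the same route as the paper. The paper's proof is in fact terser: it treats the two displayed formulas as immediate from equating $\eta_{s,t}^2 = \eta^2(r_{0,2}-r_{2,2})$ with $\eta_{s,t}^2 = \sigma_s^2(1-(\vareps^2 r_{1,1}+(1-\vareps^2))^2)$, leaves the $\vareps=0\Leftrightarrow\eta=0$ case to the reader, and for $\vareps=1\Rightarrow\eta=1$ carries out the same substitution $\sigma_s^2=1-\alpha_s^2$, $\sigma_t^2=1-\alpha_t^2$ that you sketch; your version is more explicit, in particular in isolating the auxiliary variable $w$ and justifying the branch choice when inverting $1-w^2$.
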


\begin{proof}
Proving that  $\vareps =0$ if and only if $\eta = 0$ is straightforward and left to the reader. Now assume that 
$\vareps = 1$, we are going to prove that $\eta = 1$. 
\begin{align}
    \eta^2 &= \frac{\sigma_{s}^2 (1 - (\vareps^2 r_{1,1}(s,t) + (1-\vareps^2))^2)}{r_{0,2}(s,t) - r_{2,2}(s,t)} \\
    &= \frac{\sigma_s^2 - \sigma_s^2 r_{2,2}(s,t)}{r_{0,2}(s,t) - r_{2,2}(s,t)} \\
    &= \frac{\frac{(1-\alpha_s^2)(1-\alpha_t^2)\alpha_s^2 - (1-\alpha_s^2)^2\alpha_t^2}{\alpha_s^2 (1-\alpha_t^2)}}{r_{0,2}(s,t) - r_{2,2}(s,t)} \\
    &= \frac{\frac{(1 - \alpha_s^2)(\alpha_s^2 - \alpha_t^2)}{\alpha_s^2 (1-\alpha_t^2)}}{r_{0,2}(s,t) - r_{2,2}(s,t)} \\
    &= \frac{r_{0,2}(s,t) (1 - r_{2,0}(s,t)) }{r_{0,2}(s,t) - r_{2,2}(s,t)} = 1 .
\end{align}
Similarly, we get that $\eta = 1$ implies that $\vareps = 1$. 
\end{proof}

\section{Pseudo-code for loss function}
\label{sec:pseudocode_training}

\subsection{Some replication utils}

Given a batch of $[x^1, x^2, x^4, x^5]$, assuming that $n=4$ and $m=2$, this function will output (before the reshape operation)

\begin{equation}
    \hat{x} = \left( \begin{matrix} 
    x^1 & x^1 \\
    x^2 & x^2 \\
    x^3 & x^3 \\
    x^4 & x^4 
    \end{matrix}\right) .
\end{equation}

\noindent\begin{minipage}{\textwidth}
\begin{lstlisting}[language=Python]
def replicate_fn(n: int, m: int, x: chex.Array) -> chex.Array:
  batch_size, data_shape = x.shape[0], x.shape[1:]
  x = x[:n]
  x = jnp.reshape(x, (n, 1, *data_shape))
  x = jnp.tile(x, (1, m) + (1,) * len(data_shape))
  x = jnp.reshape(x, (n * m, *data_shape))
  return x
\end{lstlisting}
\end{minipage}

We also assume that we are provided a function \texttt{split\_fn} such that given $x$ with shape $(nm, ...)$ the output of \texttt{split\_fn} has shape $(n, m, ...)$.

\subsection{Loss function}

We assume that we are given two functions \texttt{compute\_rho\_fn} and \texttt{compute\_rho\_diagonal\_fn} such that given $x$ and $y$ with shape $(n, m, ...)$
the output of \texttt{compute\_rho\_diagonal\_fn} is $\tfrac{1}{n}\sum_{i=1}^n \tfrac{1}{m} \sum_{j=1}^m \rho(x_{i,j}, y_{i,j})$ and the output of 
\texttt{compute\_rho\_fn} is $\tfrac{1}{n}\sum_{i=1}^n \tfrac{1}{m(m-1)} \sum_{j, j'=1, j \neq j'}^m \rho(x_{i,j}, y_{i,j'})$. Function $\rho$ is consistent with the notation defined in~\Cref{sec:proper_scoring_rules}.

In addition, we assume that we have access to a function \texttt{add\_noise\_fn} such that given $t$ of shape $(n,)$ and $x_0$ of shape $(n, ...)$ it outputs $x_t = \alpha_t x_0 + \sigma_t z$ where $z$ has the same shape as $x$ and has independent Gaussian entries $\mathcal{N}(0, 1)$. 

Finally, we assume that we have access to a function \texttt{apply\_fn} such that given $t$ of shape $(n,)$ and $x_t$ of shape $(n, ...)$ it outputs $\hat{x}_\theta(t, x_t)$. 

\noindent\begin{minipage}{\textwidth}
\begin{lstlisting}[language=Python]
def loss_fn(t: jnp.Array, x0: jnp.Array) -> jnp.Array:

  # add noise and replicate
  xt = add_noise_fn(key=key, t=t, x0=x0)
  key, _ = jax.random.split(key)
  x0_population = replicate_fn(n=n, m=m, x=x0)
  t_population = replicate_fn(n=n, m=m, x=t)
  xt_population = replicate_fn(n=n, m=m, x=xt)
  
  # compute prediction
  eps_population = jax.random.normal(key=key, shape=xt_population.shape)
  key, _ = jax.random.split(key)
  output_population = apply_fn(t=t_population, xt=xt_population, eps=eps_population)
  
  # split the populations
  x0_population = split_fn(n=n, m=m, x=x0_population)
  output_population = split_fn(n=n, m=m, x=output_population)

  # compute confinement term 
  confinement = compute_rho_diagonal_fn(x=x0_population, y=output_population)

  # compute interaction term for predictions
  interaction_prediction = compute_rho_fn(x=output_population, y=output_population)
  
  # Generalized kernel score
  score = 0.5 * lbda * interaction_prediction - confinement
  
  # Our aim to maximize the score, so the loss is negative score
  loss = -score
  
  return loss
  
\end{lstlisting}
\end{minipage}

In this function, we have focused on the $x_0$-prediction and have omitted the weighting for simplicity.

\section{Extended related work}
\label{sec:extended_related}

\paragraph{Diffusion and GANs.} In \citep{xiao2021tackling}, the authors propose to replace the conditional mean estimator of diffusion models with a GAN. This allows to model bigger steps in the denoising process. Recall that, seeing diffusion models as hierarchical VAEs, the ELBO is given by 
\begin{equation}
    \log p_\theta(x) \geq - \sum_{k=1}^N \KL(p(x_{t_{k-1}}|x_{t_k}) | p_\theta(x_{t_{k-1}}|x_{t_k})) + C, 
\end{equation}
where $C$ is a constant. In \citep{xiao2021tackling}, the relative entropy $\KL(q(x_{t-1}|x_t) | p_\theta(x_{t-1}|x_t))$ is replaced by an adversarial loss. Namely, the authors consider 
\begin{equation}
    \mathcal{L}(\theta, \phi) = \sum_{k=1}^N \left(\mathbb{E}_{p(x_{t_{k-1}}|x_{t_k})}[-\log (D_\phi(t, x_{t_{k-1}}, x_{t_k)})] + \mathbb{E}_{p_\theta(x_{t_{k-1}}|x_{t_k})}[-\log (1 - D_\phi(t, x_{t_{k-1}}, x_{t_k)})]\right)  .
\end{equation}
In that case, $D_\phi$ is a \emph{discriminator} and $p_\theta$ is a \emph{generator}. The main differences with our setting is that in our case, we do not train a discriminator. In our setting the discriminative parameters are only given by the kernel parameters, i.e., in the case of the energy distance, the parameters are given by $\lambda$ and $\beta$ in \eqref{eq:energyloss} and are \emph{fixed}. In this regards our parameterization is much more lightweight. In addition, while the transition $p_\theta(x_{t_{k-1}}|x_{t_k})$ is estimated in a similar way as in our paper, i.e., 
\begin{equation}
    p_\theta(x_{t_{k-1}}|x_{t_k}) = \int_{\rset^d} p_\theta(x_0|x_{t_k}) p(x_{t_{k-1}} | x_0, x_{t_k}) \rmd x_0 ,
\end{equation}
the discriminator is not applied on the $x_0$ variable conditionally to $x_{t_k}$ but on the couple $(x_{t_k}, x_{t_{k-1}})$. This is similar to the differences we highlighted in \Cref{sec:sample_efficiency} when comparing joint loss and conditional loss. Note that our approach could be extended to consider kernels not on $p_{0|s}$ but instead leveraging the characterisation

\begin{equation}
    p(x_s|x_t) = \int_{\rset^d} p(x_u|x_t) p(x_s|x_u,x_t) \rmd x_u ,
\end{equation}
for $u \leq \min(s, t)$. In that case, choosing $u = 0$ is simply a special case.

\paragraph{Modeling the covariance. } The importance of modeling $p_{0|t}$ in the low step regime has been highlighted in numerous papers, see \citep{ho2020denoising,nichol2021improved,bao2022estimating,bao2022analytic,ou2024diffusion}. A number of approaches aim at learning/approximating the covariance of $p_{s|t}$ or $p_{0|t}$, see \citep{ho2020denoising,nichol2021improved,bao2022estimating,bao2022analytic,ou2024diffusion,rozet2024learning}. Most of the time in order to obtain a tractable approximation of this covariance they consider diagonal approximations leveraging the Hutchinson trace estimator \cite{hutchinson1989stochastic}. In contrast, we do not assume a Gaussian form for $p_{0|t}$ or $p_{s|t}$. Doing so we 
\begin{enumerate*}[label=(\roman*)]
    \item  model more complex distributions than Gaussian transitions
    \item avoid the additional complexity of having to model a covariance matrix.
 \end{enumerate*}

 \paragraph{Relationship to works using $\MMD$.} Scoring rules have corresponding $\MMD$ divergences, and approaches exist that use the $\MMD$ in diffusion and particle flow models. \citet{galashov2024deep} generates a sequence of distributions from a forward diffusion process. It then generates noise-dependent neural $\MMD$s between clean and noisy data, and performs gradient flow to move particles from noise level $t$ to a lower noise level $s < t$. Compared to ours, \citet{galashov2024deep} does not use a generator since it is a particle flow. Their best CIFAR-10 FID is $7.7$. \citet{aiello2024_mmd} first trains a diffusion model, and then refines/distills by coarsening the reverse timesteps, and uses MMD on CLIP features to finetune the DDM denoiser. Their CIFAR10 FID for $NFE=10$ improves from $13.6$ to $3.8$, while ours is $3.19$.

\section{Experimental details}
\label{app_sec:experimental_details}

\subsection{2D experiments}
\label{app_sec:2d_experimental_details}

We consider a target distribution $p_0$ given by a mixture of two Gaussians, $p_0=0.5\mathcal{N}(\mu_1, \sigma^2 \Id) + 0.5\mathcal{N}(\mu_2, \sigma^2 \Id)$, where $\mu_1=(3,3)$, $\mu_2=(-3,3)$ and $\sigma=0.5$. We create a dataset from this distribution by sampling $102400$ points.

We train diffusion model and \emph{distributional} diffusion models for $100$k steps with batch size $128$ with learning rate $1e-3$ using Adam optimizer with a cosine warmup for first $100$ iterations. We use $b_1 = 0.9$, $b_2=0.999$, $\epsilon=1e-8$ in Adam optimizer. On top of that, we clip the updates by their global norm (with the maximal norm being $1$). We use EMA decay of $0.99$. We use the \emph{flow matching} noise schedule~\eqref{eq:flow_matching_noise_schedule} and we use safety epsilon $1e-2$. When training the model, we use sigmoid loss weighting as in~\citep{kingma2021variational}. The time is encoded via sinusoidal embedding into the dimension $2048$ followed by $2$ layer MLP where hidden dimension is $2048$ and the output dimension is $2048$, using \texttt{gelu} activation. As a backbone architecture, we use a $9$-layers MLP with \texttt{gelu} activation and with hidden dimension of $64$. The MLP is applied as follows. First, we apply $4$ preprocessing MLP layers to the embedding of time $t$ and separately to the $x_t$. After that, we concatenate these two on the last dimension and pass through $4$ MLP layers which is then followed by one output MLP layer of dimension $2$.  When we use the \emph{distributional} diffusion model, the noise $\xi$ has the same dimensionality as $x_t$ and we concatenate the two along the last dimension. After passing it through the MLP, which produces a vector of dimension $4$, we ignore first $2$ dimension to get the output $\hat{x}_{\theta}(t,x_{t},\xi)$ of dimension 2. We use the population size $m=32$ (see Algorithm~\ref{alg:training_diffusion}) and we use $\beta=0.1,\lambda=1$ for \emph{distributional} model and $\beta=2,\lambda=0$ for diffusion-like variant.

For evaluation, we sample $4096$ samples $X^i_0 \sim p_0$ as well as samples $X^i_t | X^i_0$ from the forward process~\eqref{eq:flow_matching_noise_schedule}. For Figure~\ref{fig:snr_study_fig}, right, we use the MMD squared given by $D_\rho$~\eqref{eq:divergence} with $\rho(x,x')=-k_{\rbf}(x,x')$ (see~\eqref{eq:rbf_k}) for $\sigma=1$.

For each $X^i_t$, we produce $8$ samples $\xi \sim \mathcal{N}(\xi, 0, \Id)$ and we compute standard deviation of $\hat{x}_{\theta}(t,x^i_{t},\xi)$ over $\xi$. Then, we average the standard deviation over all the $x^i_{t}$. On top of that, since we know the posterior $p(X_0|X_t)$, which is given by
\begin{align}
    p(x_0 | x_t) &= w_1 \mathcal{N}(x_0;\nu_1, \Sigma) + w_2 \mathcal{N}(x_0;\nu_2, \Sigma), \\
    w_{k} & \propto \mathcal{N}(\alpha_t\mu_k, (\alpha^2_t \sigma^2 + \sigma^2_t) \Id), \\
    \nu_k &= \Sigma \left( \frac{\alpha_t}{\sigma^2_t} X_t + \frac{1}{\sigma^2} \mu_k \right), \\
    \Sigma &= \left( \frac{\alpha^2_t}{\sigma^2_t} + \frac{1}{\sigma^2} \right)^{-1} \Id,
\end{align}
where $\mu_k$ with $k=1,2$ are the means of $p_0$ and $\sigma^2$ is the variance. The values of $\alpha_t$ and $\sigma_t$ are given by the schedule~\eqref{eq:flow_matching_noise_schedule}. We produce $8$ samples $X_0 \sim p(\cdot|X^i_t)$ and compute their standard deviation and then average over $X^i_t$. We report this metric in Figure~\ref{fig:2d_metrics}, right. In order to produce Figure~\ref{fig:trajectories_2d}, we fix $x_0$ to be equal to either $\mu_1$ or to $\mu_2$. We then produce $X_t | x_0$ for each $t$ (one sample for each $t$ and $x_0$). After that, we visualize $\hat{x}_{\theta}(t,X^i_{t},\xi)$ for $n=4096$ samples of  $\xi \sim \mathcal{N}(0, \Id)$.

\subsection{Image space experiments}
\label{app_sec:image_experimental_details}

\paragraph{Architecture.}
In our image space experiments we consider a $U$-net architecture \citep{ronneberger2015u}. We consider a channel of size $256$ and apply the following multiplier for the channel at each level of the $U$-net, $(1,2,2,2)$. At each level we consider $2$ residual blocks. We apply an attention layer at the second level of the $U$-net (resolution 16 in the case of CIFAR-10 and resolution 32 in the case of CelebA and LSUN Bedrooms). We use an attention block in the bottleneck block of the $U$-net. Each residual block consists of normalizing the input, applying a $3\times3$ convolution block and applying a non-linearity. This is then followed by a dropout layer and a $3\times3$ convolution block. Finally we add the input (processed through a $1\times1$ convolutional block) of the residual block to this output (this is the residual connection). We use the $\mathrm{RMSnorm}$ for the normalization layer and $\mathrm{GELU}$ for the non-linearity. When using an attention layer, this is done after the convolutional residual block just described. We use a multi-head attention mechanism with $8$ heads. We process the time with a sinusoidal embedding followed by a MLP layer. In the case of CelebA, we use a linear embedding layer to process the conditioning vector of shape $(40,)$ while in the case of CIFAR10 we consider an embedding matrix to process the class information. Once the time conditioning and the (optional) other conditioning have been obtained we sum them. The conditioning is used in the model by replacing the $\mathrm{RMSnorm}$  with an adaptive $\mathrm{RMSnorm}$ layer, i.e., the scale and bias of the normalization layers are obtained by passing the conditioning vector through a MLP layer. Similarly to the two dimensional setting, we concatenate $x_t$ of shape $(b, h, w, c)$ and $\xi $ of shape $(b, h, w, c)$ along the last dimension in order to get an input of shape of shape $(b, h, w, 2c)$. The output of the $U$-net is of shape of shape $(b, h, w, 2c)$ and we drop half of the channels to get a sample of shape $(b, h, w, c)$.

For latent CelebA-HQ, we follow the recipe of \citep{rombach2022highresolutionimagesynthesislatent} to train the autoencoder. We train the autoencoder with a similar architecture as the autoencoder used in LDM-4 (see~\citep{rombach2022highresolutionimagesynthesislatent}). It encodes CelebA-HQ of shape $(256,256,3)$ into latent code of shape $(64,64,3)$. Then, for latent diffusion model and \emph{distributional} latent model, we use a similar architecture as LDM-4 except that we concatenate the latent $x_{t}$ and $\xi$ over the last dimension. More precisely, we consider a channel of size $256$ and apply the following multiplier for the channel at each level of the $U$-net, $(1,2,2,2)$. At each level we consider $2$ residual blocks. We apply an attention layer at the foruth level of the $U$-net. The structure of the residual block is identical as in image space. Similarly, we use the same type of normalization layer and non-linearity. We use a multi-head attention mechanism with $8$ heads.

\paragraph{Training and evaluation details.}
We train all the pixel-space models for $1e6$ steps with batch size $256$ for CIFAR-10, batch size of $64$ for CelebA, LSUN Bedrooms. The latent space model consists of two-stage training regime, where we first train latent autoencoder for $2e6$ steps with batch size $16$. It is then followed by training a latent diffusion model for $2e6$ steps with batch size $16$.

For all the experiments, we use the Adam optimizer with additional norm clipping (maximal norm equal to 1) with the learning rate equal to $1e-4$ and an additional cosine warmup for $100$ first iterations.

Our autoencoder on CelebA-HQ is trained with $0.1$ coefficient on adversarial loss, $100$ coefficient on generator loss and $100$ coefficient on codebook, from the  VQ-VAE~\citep{oord2018neuraldiscreterepresentationlearning} loss. We use a $\beta$-VAE with $\beta = 1e-6$.

For all the models, we sweep over weighting $w_t$ (see~\eqref{eq:diffusionloss},~\eqref{eq:energyloss}), where we either do not use any weighting or we use sigmoid weighting, see~\citep{kingma2021variational},  weighting with bias parameters $\{-2, -1, 0, 1, 2\}$. When we train classical diffusion model, we always use velocity as a prediction target, i.e., the target is $X_0 - X_1$. When we train \emph{distributional} model, we use $X_0$ as prediction target. For \emph{distributional} models, we additionally sweep over $\lambda \in \{0, 0.1, 0.5, 1.0\}$, $\beta \in \{0.0001, 0.001, 0.01, 0.1, 0.5, 1.0, 1.5, 1.99, 2.0\}$. We use population size $m = 4$. We also tried $m = 2$ but it led to slightly worse results (the resulting FID was $7\%$ worse). We did not use higher population sizes since it led to higher computational and memory cost (see~\Cref{app_sec:computational_complexity} for more details).
When we use kernel diffusion losses, see
~\Cref{sec:proper_scoring_rules} and~\Cref{sec:method}, for (\imq) kernel~\eqref{eq:imq_k}, we sweep over $c \in \{0.01, 0.1, 0.3, 0.5, 1.0, 1.5, 2.0, 10.0, 50.0, 100.0\}$. For for~(\rbf) kernel~\eqref{eq:rbf_k} and for~(\expk) kernel~\eqref{eq:exp_k}, we firstly compute median $M^2 = ||x-x'||^2_2$ for~(\rbf) kernel and $M = ||x-x'||_2$ for~(\expk) kernel, on a subset of $100$ batches of size $512$ from the dataset. Then, for~(\rbf) kernel, we define $\sigma^2 = M^2 \gamma$ and we sweep over $\gamma \in \{0.01, 0.1, 0.3, 0.5, 1.0, 1.5, 2.0, 10.0\}$. For~($\exp$) kernel, we define $\sigma = M \gamma$ and we sweep over $\gamma \in \{ 0.01, 0.1, 0.3, 0.5, 1.0, 1.5, 2.0, 10.0 \}$. Since we only use kernel diffusion losses on CIFAR-10, we only report the value of the medians in this case. We have that $M^2 = 1357.8127$ and $M = 36.9$.

In order to produce samples, we follow \Cref{alg:sampling_diffusion} for \emph{distributional} models and \texttt{SDE} sampler for classical diffusion models. To evaluate performance, we use FID~\citep{heusel2017gans} metric. As we train the models, every $10000$ steps we produce samples with $10$ denoising steps for \emph{distributional} models and with $100$ denoising steps for classical diffusion model. We then evaluate the performance of the models by computing FID on a subset of $2048$ datapoints from the original dataset. Then, we use this metric in order to select the best hyperparameters and the best checkpoints for each best hyperparameter. After training is done, for every combination of $\lambda, \beta$ (or $c$, $\sigma$, $\sigma^2$ in case of other kernels), we select the best checkpoint based on their FID on this small subset of data. The corresponding checkpoints are then used for evaluation when sampling with $10,30,50,70,90,100$ denoising steps. To compute the final FIDs, we use $50000$ samples for CIFAR-10, LSUN Bedrooms and CelebA. We use $30000$ samples for latent CelebA-HQ.

\subsection{Robotics experiments}
\label{app_sec:robotics_experimental_details}
We used the Libero benchmark \cite{LIBERO}, a life-long learning benchmark that consists of 130 language-conditioned robotic manipulation tasks. There are 5 suites in the Libero benchmark, Libero-Spatial, Libero-Object, Libero-Goal, Libero-Long and Libero-90. Each suite contains 10 tasks, except Libero-90 which has 90 tasks. In our experiments we focus on Libero-Long, the most challenging suite with 10 tasks that features long-horizon tasks with diverse object interactions, and reproduce the main experiment settings on three other suites: Libero-Spatial, Libero-Object, Libero-Goal, with 10 tasks.
We train language-conditioned multitask diffusion policies for the 10 tasks of Libero10 suite and evaluate the success rates of the policies in simulation. We encode the visual observations using a ResNet \cite{ResnetHe2015-ak}, and encode the language instruction using a pretrained and frozen Bert \cite{BERTDevlin2018-fn} encoder. Our multitask diffusion policy does action chunking \cite{DiffpolicyChi2023-gp}, and predicts and executes a chunk of 8 7-dimensional actions conditioned on the language instruction, the current visual observations and proprioceptive states. For the denoiser, we follow Aloha unleashed \cite{AlohaZhao2024-bc} and use a cross-attention based transformer denoiser.

We follow the original Libero paper for training and evaluating our multitask diffusion policy. As a loss function for training the diffusion policy, we use the Mean Squared Error (MSE). Libero10 suite has 138090 steps of training data and we use a batch size of 32 to train our policies for 250k steps (roughly 50 epochs). Evaluation is on simulated environment for 50 episodes per task following the Libero benchmark. We evaluate our policy every 25k steps (roughly 5 epochs). We take the median success rate of the policy across all tasks in the suite and repeat the experiment over 3 seeds. We report the median success rates over all tasks and best checkpoint during training. We use the same set for other three suites and train for 50 epochs and evaluate every 5 epochs.

\textbf{Energy distance diffusion model.} For the energy distance diffusion variant, we concatenated 2 noise dimensions, sampled from $\mathcal{N}(0; \Id)$ to the action dimension of the chunks. For training we used population size $m=16$ per data sample, and computed the loss with varying $\beta$s and $\lambda$s. For evaluation, we use a single sample and execute the predicted action chunk from the diffusion policy using a varying number of diffusion steps $(2, 16, 50)$.

\section{Computational complexity}
\label{app_sec:computational_complexity}

Our method has the same computational complexity during sampling as ordinary diffusion models. However, it has an increased computational complexity during training which is detailed below.

Assume that the computational complexity of the forward pass in our diffusion model is $O(F)$ and that the dimensionality of $x$ is $D$.

\textbf{Diffusion loss case.} The loss function in~\eqref{eq:lossdistributional} with $m=1$ and $\lambda=0$, can be thought of as a standard diffusion model loss function. It requires $O(nF)$ evaluations to compute $x_{\theta}(t_i,X_{t_i})$ for every element of a batch. Then, to evaluate the loss (the norm), the complexity is $O(nD)$. Therefore the total cost of computing the loss is $O(n(F+D))$. The backwards pass is proportional to the forward pass and has computational complexity $O(nF)$.

\textbf{Distribution loss case.} For $\lambda \neq 0$ and $m > 1$, we first need $O(nmF)$ function evaluations to compute $x_{\theta}(t_i,X_{t_i},\xi_{i,j})$ for $i=1,\ldots,n$ and for $\xi_{i,j}$, $j=1,\ldots,m$. Additionally, we need $O(nmD)$ operations to compute the first (diffusion-like) terms of the loss. Then, in order to compute the interaction terms, we need $O(nm(m-1)D)$ time. Therefore, the total cost is of computing the loss is $O(nm(F+D) + nm(m-1)D)$. Naively, the backwards pass will take $O(nmF + nm(m-1)F)=O(nm^2F)$ time. However, since the 2nd term uses the gradients of $x_{\theta}(t_i,X_{t_i},\xi_{i,j})$, one could precompute these for all $i=1,\ldots,n$ and $j=1,\ldots,m$ and therefore decrease the total backwards cost to $O(nmF)$. We found that in practice, the XLA compiler in JAX performs this optimization without explicit coding, see our results below.

\textbf{Runs on real hardware}. We compared the training times on real hardware. We study the training of CIFAR-10 as we vary $m$. We report steps per second metric where a step corresponds to a full forward+backward step. As hardware, we use A100 GPU (40Gb of memory) with batch size = 16, H100 GPU (80Gb of memory) with batch size = 64 and TPUv5p (95Gb of memory) with batch size = 64 (per device with 4 devices in total). The results below indicate that steps per second decreases proportionally to $m$.

\begin{table}[h!]
    \centering
    \begin{tabular}{|l|c|c|c|c|}
        \hline
        \textbf{Hardware} & \textbf{Diffusion} & \textbf{Distributional (m=2)} & \textbf{Distributional (m=4)} & \textbf{Distributional (m=8)} \\
        \hline
        A100 (batch size =16) & 9.05 & 6.78 & 4.6 & 2.77 \\
        H100 (batch size =64) & 14.3 & 7.85 & 4.15 & 2.06 \\
        TPUv5 (batch size =64) & 11.2 & 8.5 & 4.3 & 2.22 \\
        \hline
    \end{tabular}
    \caption{Steps per seconds on real hardware for different models.}
    \label{table:computational_complexity}
\end{table}

\section{Additional experiments}
\label{app_sec:additional_experiments}

\subsection{Additional 2D experiments}
\label{app_sec:additional_2d_experiments}

We trained an unconditional standard diffusion model and our distributional models on a more complex 2D distribution – checkerboard, see~\citet{galashov2024deep}. We report MMD between sampled and target distributions with different NFEs, smaller MMD is in bold, in~\Cref{table:2d_additional} Our approach outperforms standard diffusion for small NFEs.

\begin{table}[h!]
    \centering
    \begin{tabular}{|l|c|c|}
        \hline
        \textbf{NFEs} & \textbf{Diffusion} & \textbf{Distributional} \\
        \hline
        5 & 7.00e-3 & \textbf{1.57e-4} \\
        10 & 2.10e-3 & \textbf{1.96e-4} \\
        50 & 2.70e-4 & \textbf{1.55e-4} \\
        100 & \textbf{1.96e-4} & 2.00e-4 \\
        1000 & \textbf{1.71e-4} & 1.83e-4 \\
        \hline
    \end{tabular}
    \caption{MMD between target and sampled distribution for checkerboard dataset.}
    \label{table:2d_additional}
\end{table}

\subsection{Additional image experiments}
\label{app_sec:image_additional_results}

In this section, we provide additional results for image space experiments.

\paragraph{Impact of parameters $\beta$ and $\lambda$ on FID.} We report detailed performance of \emph{distributional} models $\hat{x}_{\theta}$ based on chosen $\beta$ and $\lambda$ for different number of diffusion steps. We present results for conditional image generation on CIFAR-10 in Figure~\ref{fig:heatmap_energy_cifar10} and on CelebA in Figure~\ref{fig:heatmap_energy_celeba}. The unconditional image generation for LSUN Bedrooms are given in Figure~\ref{fig:heatmap_lsun} and for latent CelebA-HQ in Figure~\ref{fig:heatmap_latent_celeba}. In practice, we noticed that performance of models for some $\lambda$ and $\beta$ led to very high FIDs. In order to better visually inspect the results, we mask the corresponding blocks if FID reaches a certain threshold. We used following thresholds: $9.8$ for CIFAR-10, $20$ for  CelebA and LSUN Bedrooms, $300$ for latent CelebA HQ. The results on CIFAR-10 and CelebA suggest that for a small number of steps, the best values of parameters are located in the bottom right corner, i.e. $\lambda$ close to $1.0$ and $\beta$ small. As we increase the number of steps,  better values appear in the top left corner, i.e., $\lambda$ close or equal to $0$ and $\beta \approx 2$. In case of unconditional generation, we noticed generally that using the minimal possible $\beta$ led to the best results, as can be seen from \Cref{fig:heatmap_lsun} and \Cref{fig:heatmap_latent_celeba}, where the bottom row leads to the best performance.

\begin{figure}
    \centering
        \includegraphics[width=.95\linewidth]{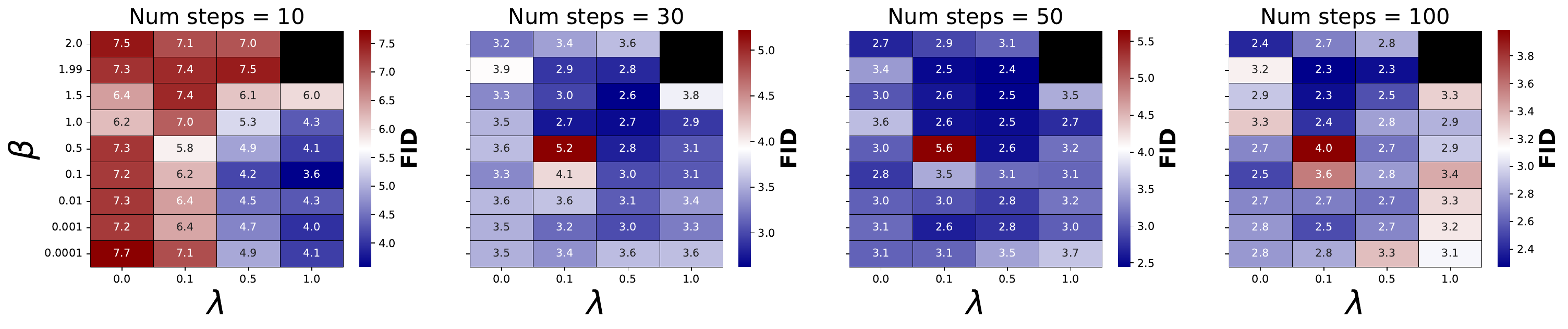}
    \caption{Heatmap of FIDs for conditional image generation on CIFAR-10. $x$-axis corresponds to $\lambda$, $y$-axis corresponds to $\beta$, the color corresponds to FID going from minimal (blue) to maximal (red) values. Different columns denote number of diffusion steps. Black squares correspond to the cases where performance reaches FID higher than specified threshold.}
    \label{fig:heatmap_energy_cifar10}
\end{figure}

\begin{figure}
    \centering
        \includegraphics[width=.95\linewidth]{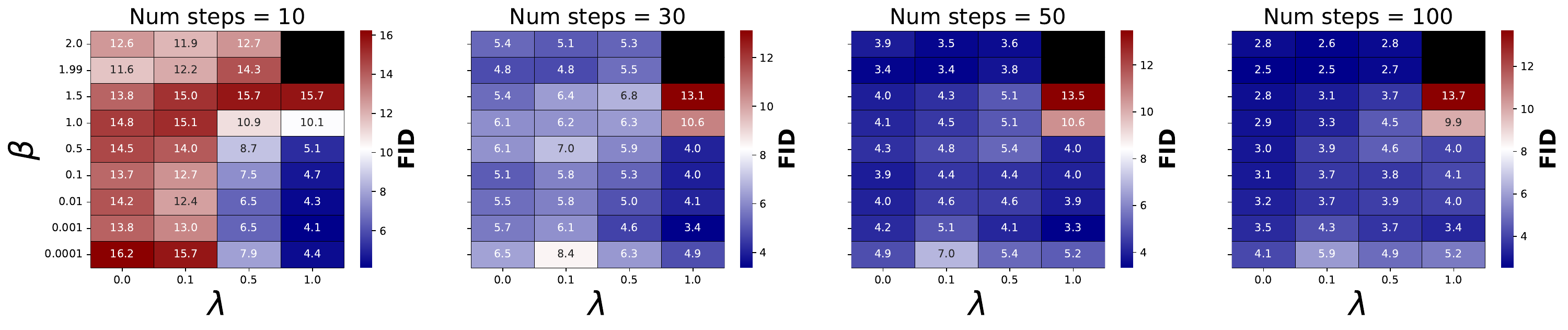}
    \caption{Heatmap of FIDs for conditional image generation on CelebA. $x$-axis corresponds to $\lambda$, $y$-axis corresponds to $\beta$, the color corresponds to FID going from minimal (blue) to maximal (red) values. Different columns denote number of diffusion steps. Black squares correspond to the cases where performance reaches FID higher than specified threshold.}
    \label{fig:heatmap_energy_celeba}
\end{figure}

\begin{figure}
    \centering
        \includegraphics[width=.95\linewidth]{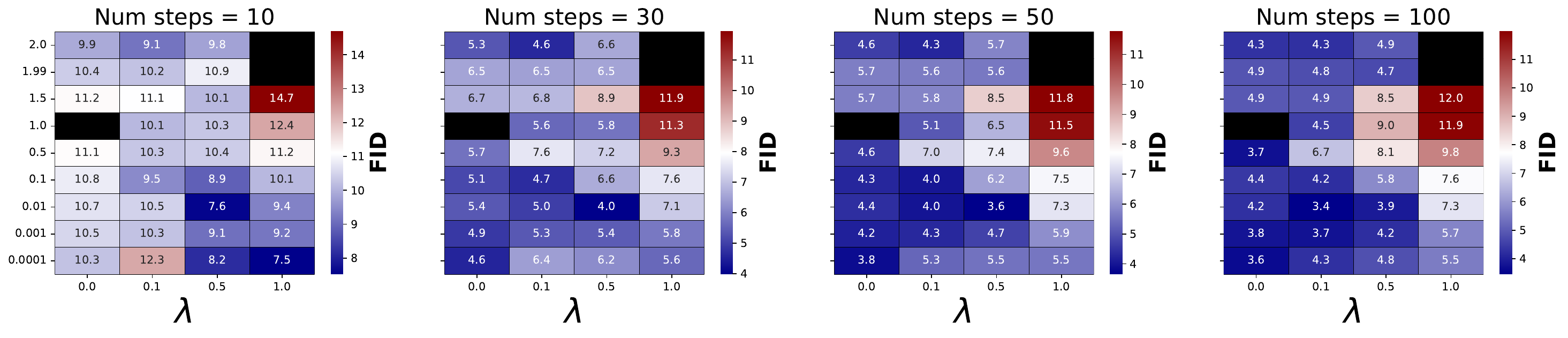}
    \caption{Heatmap of FIDs for unconditional image generation on LSUN Bedrooms. $x$-axis corresponds to $\lambda$, $y$-axis corresponds to $\beta$, the color corresponds to FID going from minimal (blue) to maximal (red) values. Different columns denote number of diffusion steps. Black squares correspond to the cases where performance reaches FID higher than specified threshold.}
    \label{fig:heatmap_lsun}
\end{figure}

\begin{figure}
    \centering
        \includegraphics[width=.95\linewidth]{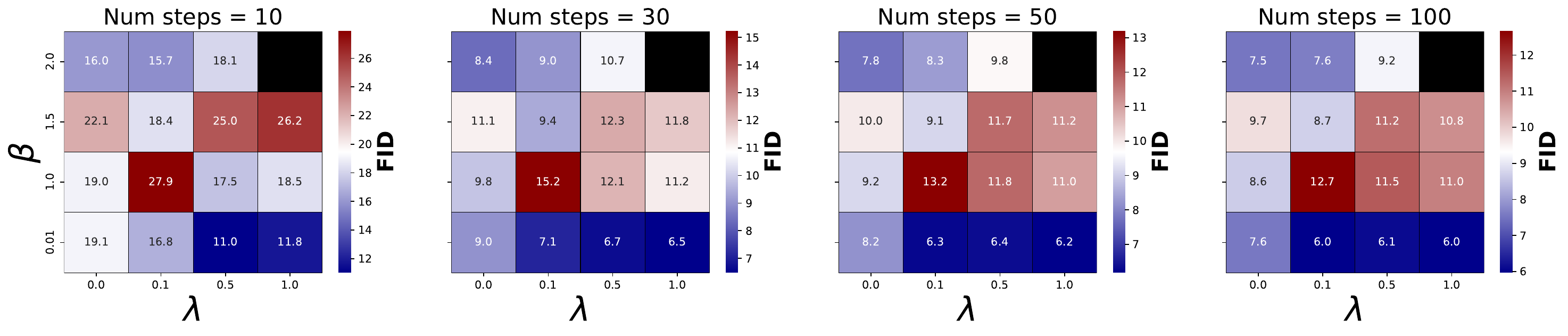}
    \caption{Heatmap of FIDs for unconditional image generation on Latent CelebA-HQ. $x$-axis corresponds to $\lambda$, $y$-axis corresponds to $\beta$, the color corresponds to FID going from minimal (blue) to maximal (red) values. Different columns denote number of diffusion steps. Black squares correspond to the cases where performance reaches FID higher than specified threshold.}
    \label{fig:heatmap_latent_celeba}
\end{figure}

\paragraph{Detailed results with different kernels.} We present results on conditional image generation on CIFAR-10 with different kernels. First, in Figure~\ref{fig:kernel_results_best_params}, we show the value of the $\lambda$ for every kernel which achieves the lowest FID for the given number of diffusion steps. Overall, we observe that $\lambda$ follows a downward trend as we increase the number of diffusion steps, which is consistent with out experiments in the main paper. Furthermore, we show the corresponding value of the kernel parameter which achieves the lowest FID for given number of diffusion steps. Below, we also show the detailed figures of FID for every $\lambda$ and kernel parameter combination.
\begin{figure}
    \centering
        \includegraphics[width=.95\linewidth]{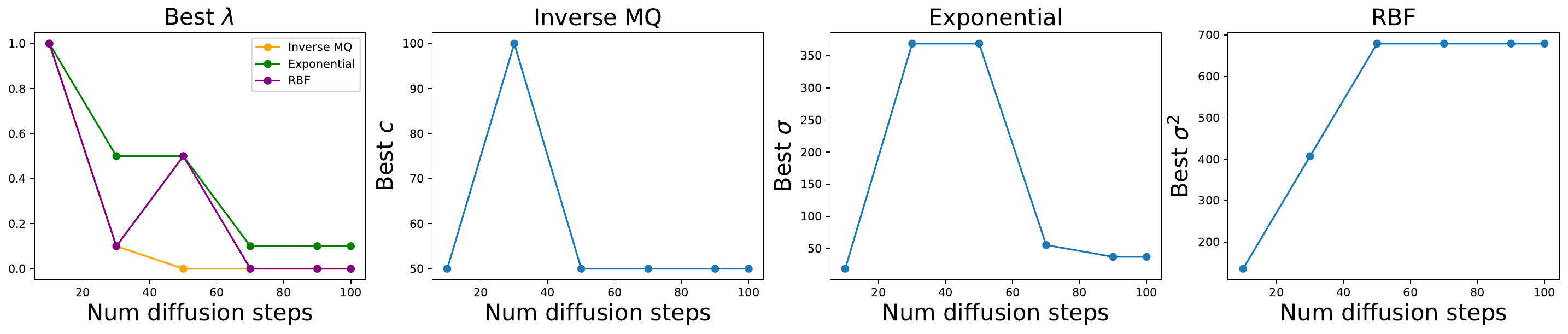}
    \caption{Best parameters $\lambda$ and corresponding kernel parameters for each kernel, achieving the lowest FID for the given number of diffusion steps.}
    \label{fig:kernel_results_best_params}
\end{figure}

On top of that, we compare performance of different kernels for a fixed value of $\lambda$. We also report the performance of \emph{energy} kernel, which is defined by $\rho(x,x')=\|x-x'\|^{\beta}$, see also~\Cref{sec:proper_scoring_rules}. The results are given in Figure~\ref{fig:kernel_performance_comparison}. We see that overall using different (to \emph{energy}) kernels leads to similar results overall.

\begin{figure}
    \centering
        \includegraphics[width=.95\linewidth]{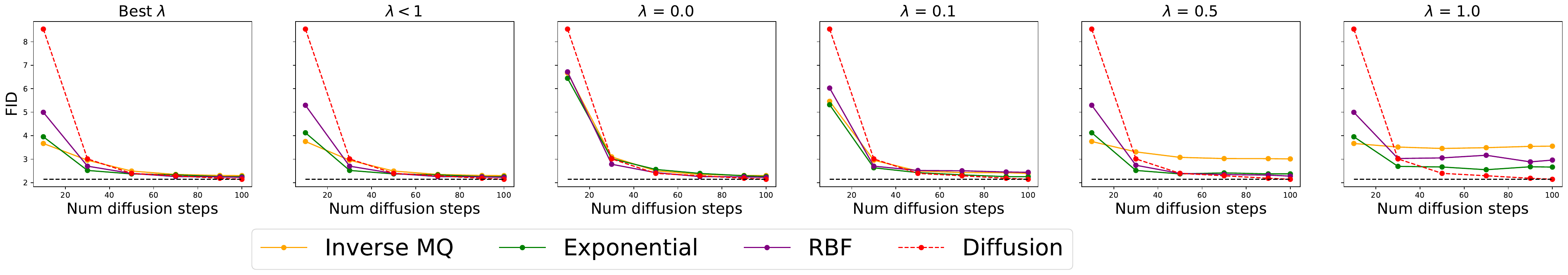}
    \caption{Performance of different kernels on conditional image generation on CIFAR-10, for different values of $\lambda$ (different columns). The kernel parameters are chosen to minimize FID for the given $\lambda$ condition. We also report performance of the diffusion model (red) and show in the black dashed line the performance of diffusion model with $100$ steps.}
    \label{fig:kernel_performance_comparison}
\end{figure}

Finally, we present detailed results of FIDs for different kernels with different parameters $\lambda$ and corresponding kernel parameters. We report the corresponding heatmaps. Whenever the value of FID is larger than a threshold, we put black rectangles. For Inverse MQ, we use threshold $13.3$. For RBF kernel, we use threshold $10$. For Exponential kernel, we use threshold $10$. For Inverse MQ, the results are given in Figure~\ref{fig:heatmap_inverse_mq}. We observe that overall the best FIDs are achieved for larger values of parameter $c$ for all three kernels we investigate. Moreover, when number of diffusion steps increases, the region with the best FIDs moves from $\lambda \approx 0$ to $\lambda \approx 1$. The results for RBF kernel are given in Figure~\ref{fig:heatmap_gaussian}. We observe that in case of small number of diffusion steps, the region of the best values is around $\lambda = 1$ and smaller values of $\sigma^2$. As we increase the number of diffusion steps, we generally observe that many values of $\lambda$ and $\sigma^2$ yield similar reasonable results. Results for Exponential kernel are given in Figure~\ref{fig:heatmap_gaussian_abs}. When the number of steps is low, the best values are achieved for $\lambda$ close to $1$. As we increase the value of $\lambda$, the best region moves closer to smaller values of $\lambda$. This is coherent with our theoretical and methodological motivation, i.e., $\lambda \approx 1$ is beneficial in the low number of diffusion steps regime, and confirms our experiments when using the energy kernel $\rho(x,x') = -\|x -x'\|^\beta$, see \Cref{fig:heatmap_energy_cifar10}. 

\begin{figure}
    \centering
        \includegraphics[width=.95\linewidth]{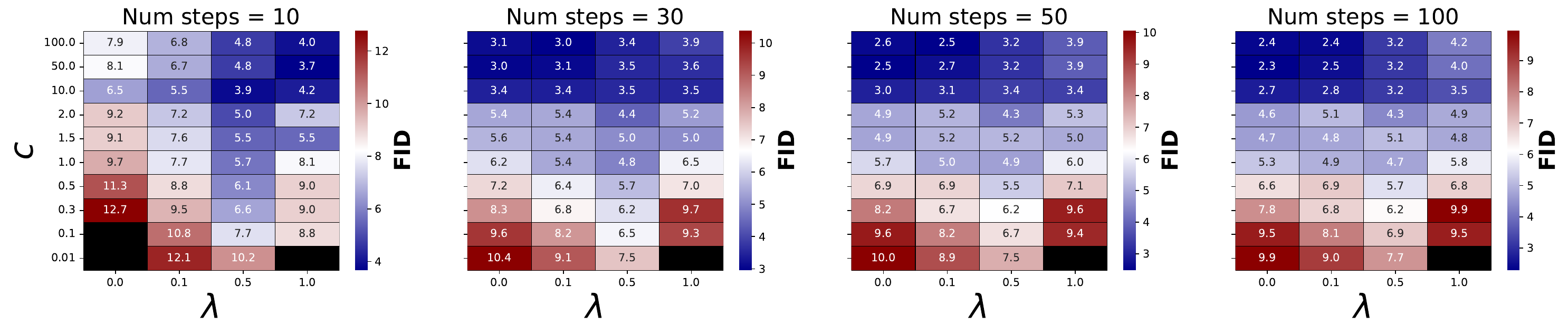}
    \caption{Heatmap of FIDs for conditional image generation on CIFAR-10 for Inverse MQ~\eqref{eq:imq_k} kernel. $x$-axis corresponds to $\lambda$, $y$-axis corresponds to $c$, the color corresponds to FID going from minimal (blue) to maximal (red) values. Different columns denote number of diffusion steps. Black squares correspond to the cases where performance reaches FID higher than specified threshold.}
    \label{fig:heatmap_inverse_mq}
\end{figure}

\begin{figure}
    \centering
        \includegraphics[width=.95\linewidth]{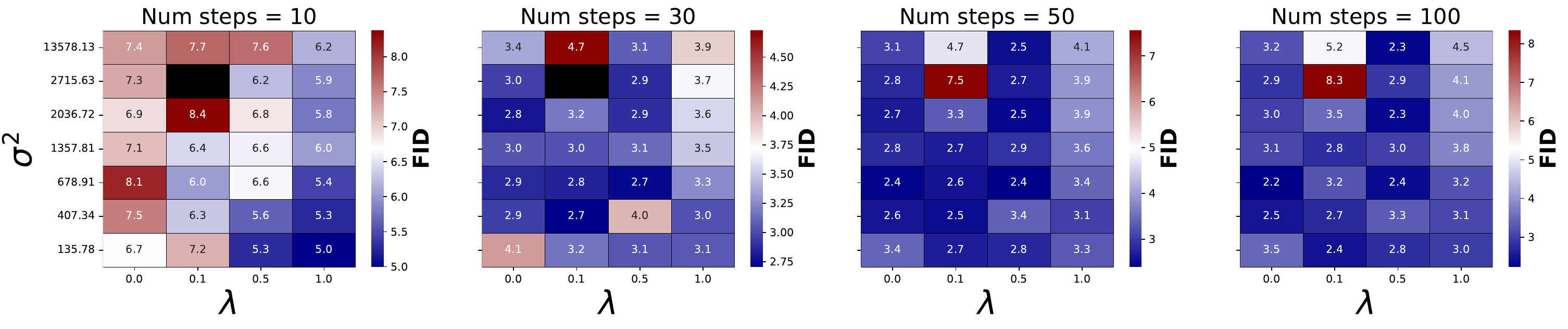}
    \caption{Heatmap of FIDs for conditional image generation on CIFAR-10 for RBF~\eqref{eq:rbf_k} kernel. $x$-axis corresponds to $\lambda$, $y$-axis corresponds to $\sigma^2$, the color corresponds to FID going from minimal (blue) to maximal (red) values. Different columns denote number of diffusion steps. Black squares correspond to the cases where performance reaches FID higher than specified threshold.}
    \label{fig:heatmap_gaussian}
\end{figure}

\begin{figure}
    \centering
        \includegraphics[width=.95\linewidth]{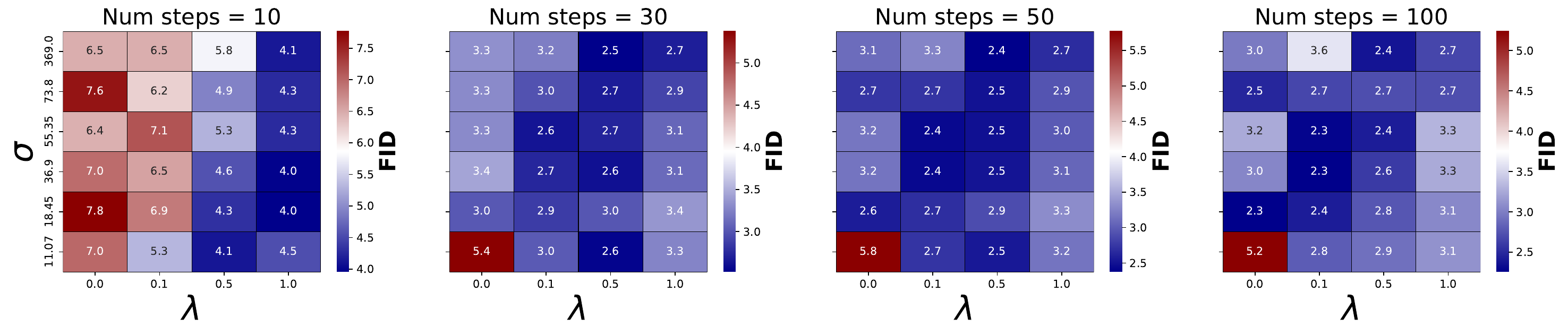}
    \caption{Heatmap of FIDs for conditional image generation on CIFAR-10 for Exponential~\eqref{eq:exp_k} kernel. $x$-axis corresponds to $\lambda$, $y$-axis corresponds to $\sigma$, the color corresponds to FID going from minimal (blue) to maximal (red) values. Different columns denote number of diffusion steps. Black squares correspond to the cases where performance reaches FID higher than specified threshold.}
    \label{fig:heatmap_gaussian_abs}
\end{figure}

\paragraph{ImageNet.} We ran initial experiments on ImageNet~\citep{imagenet2015} with resolution $64\times64\times3$, but so far were unable to obtain satisfactory results. We will focus on scaling our method up to large datasets in the follow-up work.

\subsection{Comparisons to distillation and numerical solvers}
\label{app_sec:additional_comparisons}
In this section, we compare our approach with the \emph{DPM-solver++} method \citet{lu2022dpm} and the state-of-the-art, multi-step distillation method, moment-matching distillation \citep{salimans2024multistep}. 

For all the sampling methods, we sweep over the “safety” parameter $\eta$, which specifies the time interval to be $[\eta, 1-\eta]$. We considered values $\{0.1,0.01,0.001,0.0001\}$. Moreover, for all the methods we also sweep over the “churn” parameter $\vareps$ which controls the stochasticity, see~\Cref{eq:mean_sigma}, we tried $\{0.0,0.25,0.5,0.75,1.0\}$. We found the stochastic version of \emph{{DPM}-solver++} (with a churn parameter close to $1$) overall worked the best. For ordinary diffusion and our method, we use a “uniform” time schedule. For \emph{{DPM}-solver++} we use the “logsnr” time schedule (see~\citet{lu2022dpm}), which led to the best results (we also tried EDM and uniform).

For our distillation comparisons, we distill pre-trained diffusion models into student models. These student models are trained for 100,000 iterations, using a batch size of 256 for pixel-space models and 32 for latent-space models. During training, we sweep over the learning rate and Exponential Moving Average (EMA) decay.

The student model utilizes a stochastic DDIM sampler with a churn parameter, $\varepsilon$. We sweep over $\varepsilon$ values in the set $\{0.0, 0.5, 1.0\}$. We found that at sampling time, setting the churn parameter to $\varepsilon=0$ for the distilled student yielded the best results.

For all the methods, we present the results with the best hyperparameters in the tables below (\Cref{app_table:cifar10},\Cref{app_table:celeba},\Cref{app_table:lsun},\Cref{app_table:latent_celeba_hq}).

\begin{table}[h!]
\centering
\caption{\textbf{CIFAR-10 (Conditional) results}. The metric is FID averaged across 3 random seeds. We report results with different NFEs for baseline diffusion model, DPM++, Distributional (ours) and moment matching distillation. In bold, we highlight the method with the lowest FID and in italic, we highlight the method with the second lowest FID.}
\label{app_table:cifar10}
\begin{tabular}{|c|c|c|c|c|}
\hline
NFEs & Diffusion & DPM++ & Distributional & Distillation \\
\hline
2 & 80.77 & 19.41 & 29.33 & \textbf{5.19} \\
4 & 23.31 & 12.60 & \textit{4.67} & \textbf{3.84} \\
8 & 7.53 & \textit{3.95} & 3.21 & \textbf{3.13} \\
10 & 5.67 & \textit{3.55} & 3.19 & \textbf{2.99} \\
12 & 4.83 & \textit{3.69} & 3.03 & \textbf{2.93} \\
15 & 3.61 & \textit{3.91} & 2.87 & \textbf{2.76} \\
\hline
\end{tabular}
\end{table}

\begin{table}[h!]
\centering
\caption{\textbf{CelebA (Conditional) results}. The metric is FID averaged across 3 random seeds. We report results with different NFEs for baseline diffusion model, DPM++, Distributional (ours) and moment matching distillation. In bold, we highlight the method with the lowest FID and in italic, we highlight the method with the second lowest FID.}
\label{app_table:celeba}
\begin{tabular}{|c|c|c|c|c|}
\hline
NFEs & Diffusion & DPM++ & Distributional & Distillation \\
\hline
2 & 62.85 & \textit{58.39} & 81.79 & \textbf{7.02} \\
4 & 26.25 & \textit{20.39} & 22.98 & \textbf{6.47} \\
8 & 13.07 & 17.28 & \textit{4.94} & \textbf{4.43} \\
10 & 11.67 & 22.48 & \textbf{3.60} & \textit{4.43} \\
12 & 10.00 & 23.23 & \textbf{3.37} &\textit{4.25} \\
15 & 8.51 & 23.53 & \textbf{3.45} & \textit{4.27} \\
\hline
\end{tabular}
\end{table}

\begin{table}[h!]
\centering
\caption{\textbf{LSUN (Unconditional) results}. The metric is FID averaged across 3 random seeds. We report results with different NFEs for baseline diffusion model, DPM++, Distributional (ours) and moment matching distillation. In bold, we highlight the method with the lowest FID and in italic, we highlight the method with the second lowest FID.}
\label{app_table:lsun}
\begin{tabular}{|c|c|c|c|c|}
\hline
NFEs & Diffusion & DPM++ & Distributional & Distillation \\
\hline
2 & 238.85 & \textit{120.99} & 233.24 & \textbf{16.04} \\
4 & \textit{41.57} & 43.60 & 69.90 & \textbf{5.54} \\
8 & 10.82 & 12.95 & \textit{9.79} & \textbf{4.52} \\
10 & 9.08 & 17.85 & \textit{7.14} & \textbf{4.64} \\
12 & 7.04 & 16.07 & \textit{6.12} & \textbf{3.81} \\
15 & 6.51 & 23.91 & \textit{5.78} & \textbf{4.39} \\
\hline
\end{tabular}
\end{table}

\begin{table}[h!]
\centering
\caption{\textbf{Latent CelebA-HQ (Unconditional) results}. The metric is FID averaged across 3 random seeds. We report results with different NFEs for baseline diffusion model, DPM++, Distributional (ours) and moment matching distillation. In bold, we highlight the method with the lowest FID and in italic, we highlight the method with the second lowest FID.}
\label{app_table:latent_celeba_hq}
\begin{tabular}{|c|c|c|c|c|}
\hline
NFEs & Diffusion & DPM++ & Distributional & Distillation \\
\hline
2 & 101.47 & 114.44 & \textit{67.60} & \textbf{37.88} \\
4 & 53.83 & 26.05 & \textit{23.80} & \textbf{14.47} \\
8 & 15.60 & 20.70 & \textit{9.84} & \textbf{8.22} \\
10 & 11.89 & 21.84 & \textit{8.15} & \textbf{6.76} \\
12 & 9.97 & 24.61 & \textit{7.04} & \textbf{6.05} \\
15 & 8.39 & 27.31 & \textit{6.21} & \textbf{5.46} \\
\hline
\end{tabular}
\end{table}

The results indicate that our approach is competitive with multistep distillation (across a varied number of steps) and outperforms \emph{DPM-Solver++} at 8 or more Number of Function Evaluations (NFEs). The performance of \emph{DPM-Solver++} degrades as the number of steps increases, which is an expected consequence of its numerical instability.

Our approach has two compelling advantages over multistep distillation. First, it does not rely on distillation, eliminating the need to train a large, slow teacher model. Second, it does not require specifying additional sampler hyperparameters during training.

Furthermore, combining our distributional approach with a modern distillation method is a potentially interesting topic for future research.

\subsection{Additional robotics experiments}
\label{app_sec:additional_robotics}
In addition to Libero10 suite, we repeated our main experiment on three other suites in Libero benchmark: Liber-Goal, Libero-Spatial, and Libero-Object. We present the performance on these three additional suites in \Cref{fig:libero_suites_results_beta1}. In all three suites, distributional diffusion helps the performance when fewer number of diffusion steps are used, reproducing our results on Libero10.

In the setting of \Cref{sec:robotics_experiments}, we only consider the case $\beta = 1$ and compared our results for several values of $\lambda \in [0,1]$, namely $\lambda \in \{0.0, 0.1, 0.5\}$. 
In \Cref{fig:libero10_results_beta2}, we present a similar study as \Cref{fig:libero10_results} but in the case where $\beta = 2$. 

We also ran an ablation over $m$ with fixed $\beta=1$ and $\lambda = 0.5$. The results are reported in~\Cref{table:ablation_m_robotics}.

\begin{table}[h!]
    \centering
    \begin{tabular}{|c|c|}
        \hline
        \textbf{$m$} & \textbf{Success rate} \\
        \hline
        2 & 0.783 \\
        4 & 0.73 \\
        8 & 0.817 \\
        16 & 0.800 \\
        32 & 0.840 \\
        64 & 0.797 \\
        \hline
    \end{tabular}
    \caption{Ablation over $m$ for robotics dataset.}
    \label{table:ablation_m_robotics}
\end{table}

\begin{figure}
    \centering
        \includegraphics[width=.3\linewidth]{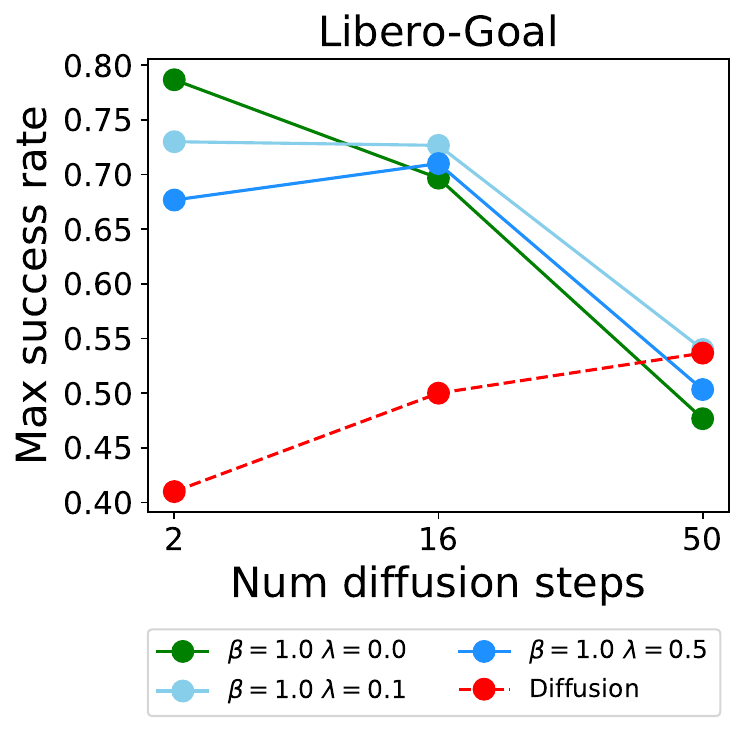}
        \includegraphics[width=.3\linewidth]{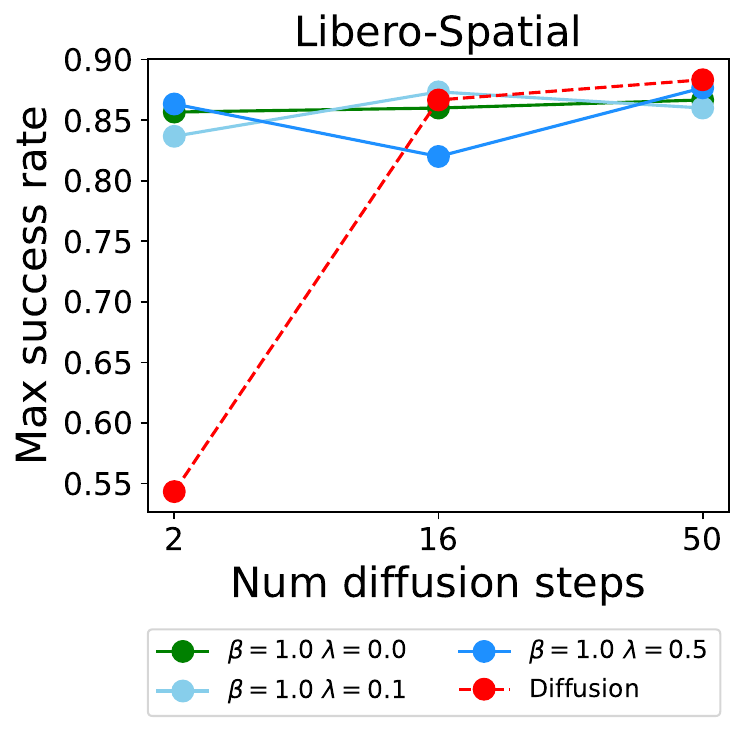}
        \includegraphics[width=.3\linewidth]{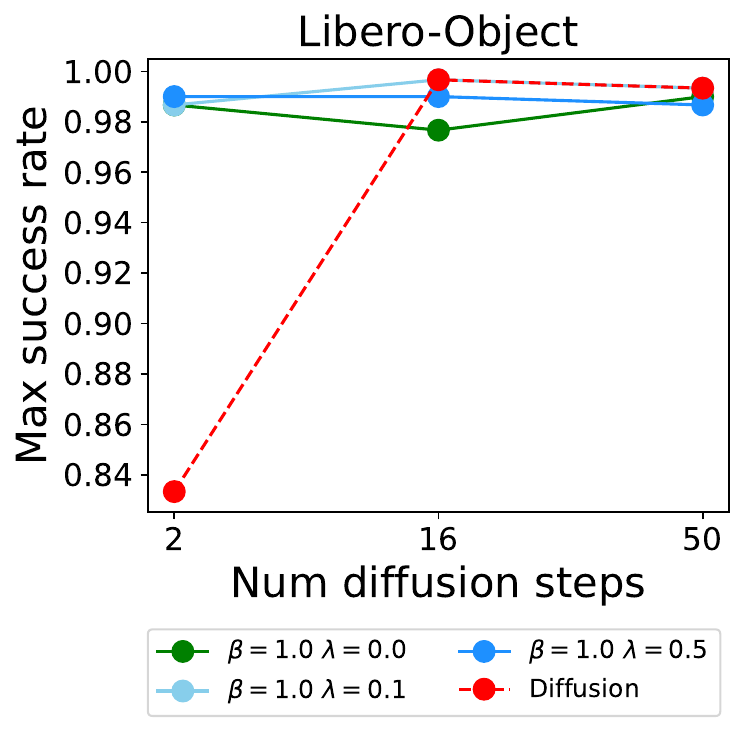}
    \caption{Performance in three other Libero suites - Libero-Goal, Libero-Spatial, and Libero-Object - as function of diffusion steps and $\lambda$ with same settings as \Cref{fig:libero10_results}. We plot the maximum success rate from best checkpoint during training. Distributional models consistently performed the best when fewest diffusion steps are used. $\lambda > 0$ also performs better than $\lambda = 0$}
    \label{fig:libero_suites_results_beta1}
\end{figure}

\begin{figure}
    \centering
        \includegraphics[width=.4\linewidth]{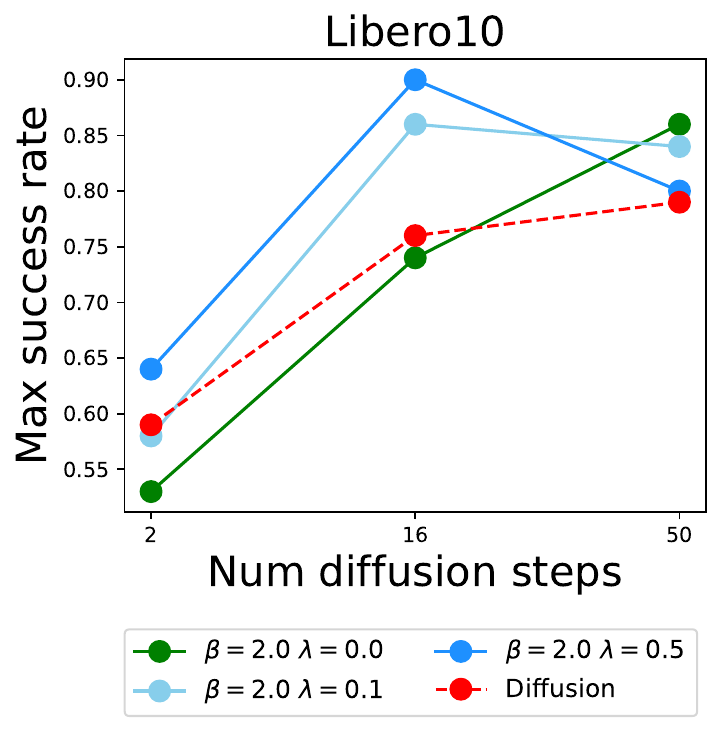}
        \hfill
        \includegraphics[width=.4\linewidth]{figures/libero_10_results_beta1_l2diff.pdf}
    \caption{\textbf{Left}. Performance in Libero10 as function of diffusion steps and $\lambda$, same as \Cref{fig:libero10_results} but for $\beta=2$. \textbf{Right}. we reproduce \Cref{fig:libero10_results} for ease of read.}
    \label{fig:libero10_results_beta2}
\end{figure}

\subsection{Additional samples}

For CIFAR-10, samples from the \emph{distributional} model $\hat{x}_{\theta}$ trained with $\beta=0.1,\lambda=1$ and with $10$ sampling steps are shown in Figure~\ref{fig:cifar10_10_steps_samples}, left. Samples from the one trained with $\beta=2,\lambda=0$ and with $100$ sampling steps are shown in Figure~\ref{fig:cifar10_100_steps_samples}, left. The samples from the diffusion model with $10$ steps are shown in Figure~\ref{fig:cifar10_10_steps_samples}, right and with $100$ steps are shown in Figure~\ref{fig:cifar10_100_steps_samples}, right.

For CelebA, samples from the \emph{distributional} model trained with $\beta=0.001,\lambda=1$ and with $10$ sampling steps, are shown in Figure~\ref{fig:celeba_10_steps_samples}, left. Samples from the one trained with $\beta=2,\lambda=0$ and with $100$ sampling steps, are shown in Figure~\ref{fig:celeba_100_steps_samples}, left. Samples from diffusion model with $10$ steps are shown in Figure~\ref{fig:celeba_10_steps_samples}, right and with $100$ steps are shown in Figure~\ref{fig:celeba_100_steps_samples}, right.

For LSUN Bedrooms, samples from the \emph{distributional} model trained with $\beta=0.0001,\lambda=1$ and with $10$ sampling steps, are shown in Figure~\ref{fig:lsun_10_steps}, left. Samples from the one trained with $\beta=0.0001,\lambda=0$ and with $100$ sampling steps, are shown in Figure~\ref{fig:lsun_100_steps}, left. Samples from the diffusion model with $10$ steps are shown in Figure~\ref{fig:lsun_10_steps}, right and with $100$ steps are shown in Figure~\ref{fig:lsun_100_steps}, right.

For latent CelebA HQ, samples from the \emph{distributional} model trained with $\beta=0.01,\lambda=0.5$ and with $10$ sampling steps, are shown in Figure~\ref{fig:latente_celeba_hq_10_steps}, left. The samples from the one trained with $\beta=0.01,\lambda=0.1$ and with $100$ sampling steps, are shown in Figure~\ref{fig:latente_celeba_hq_100_steps}, left. Samples from the diffusion model with $10$ steps are shown in Figure~\ref{fig:latente_celeba_hq_10_steps}, right and with $100$ steps are shown in Figure~\ref{fig:latente_celeba_hq_100_steps}, right.

\begin{figure}
    \centering
    \begin{subfigure}
    \centering
        \includegraphics[trim=0 138 138 0, clip, 
        width=.45\linewidth]{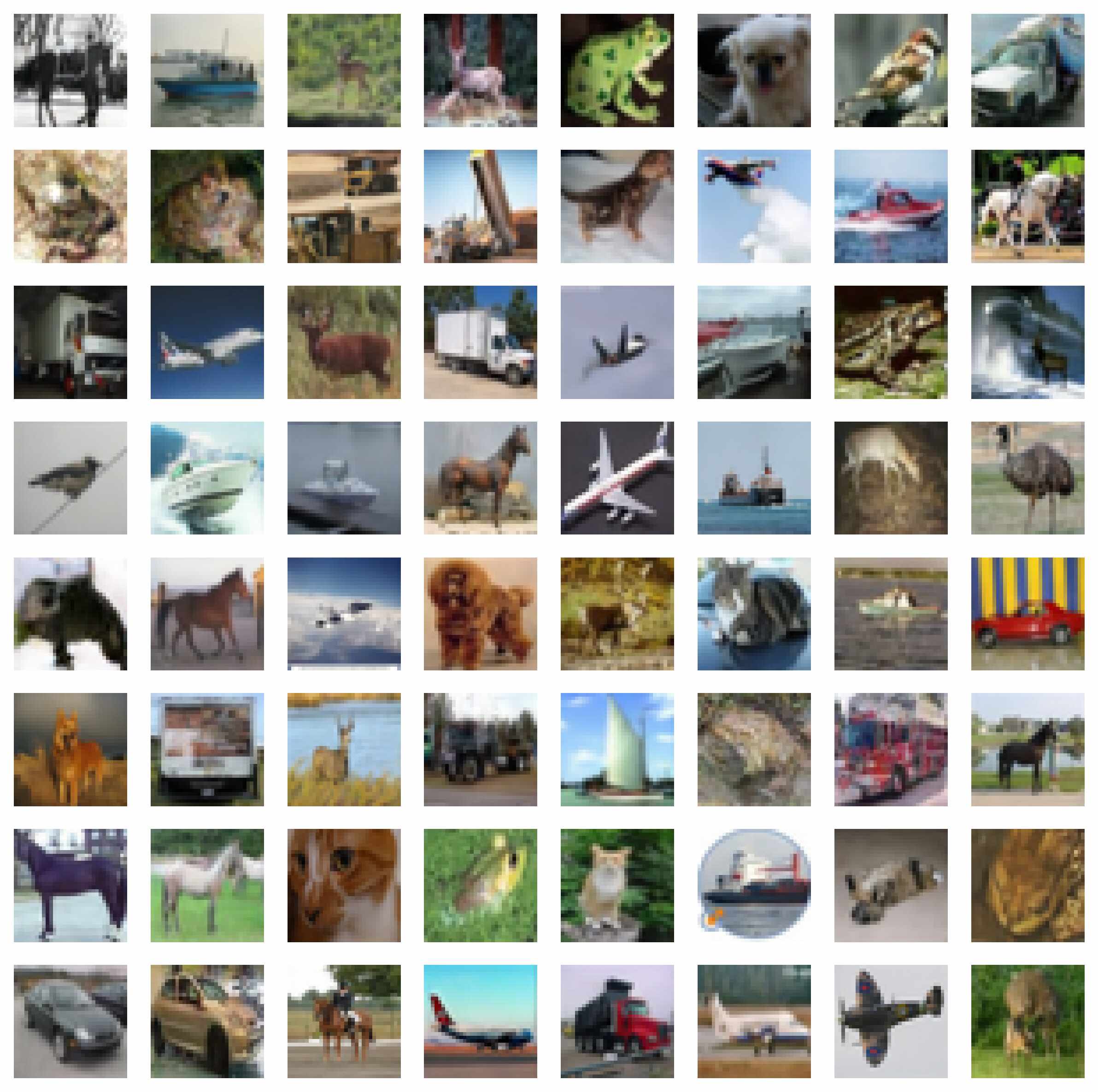}
    \end{subfigure}
    \hfil
    \begin{subfigure}
    \centering
        \includegraphics[trim=0 138 138 0, clip,  width=.45\linewidth]{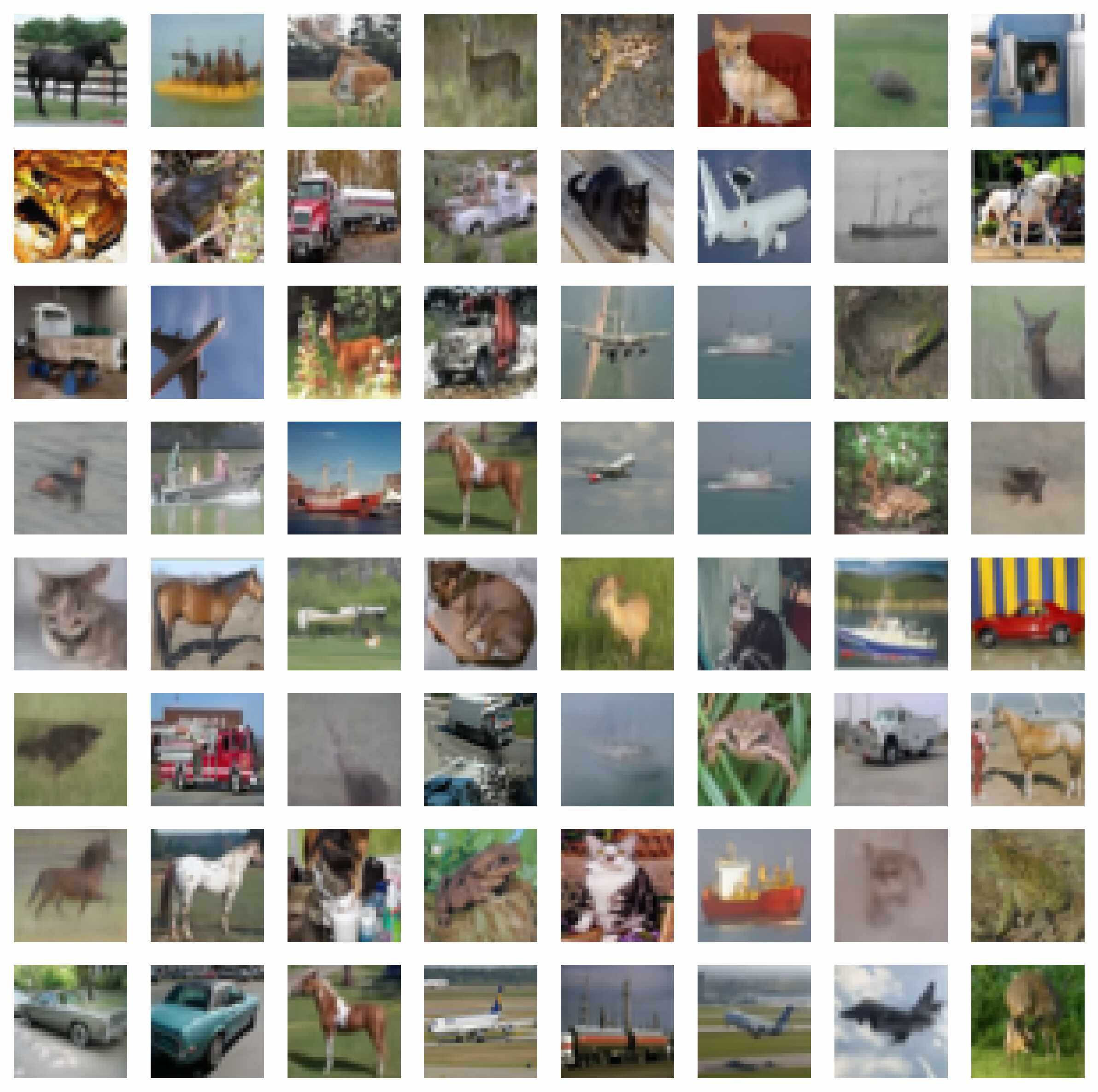}
    \end{subfigure}
    \caption{\textbf{Left}, samples from \emph{distributional} model trained on CIFAR-10 and sampled with $10$ steps. $\beta=0.1,\lambda=1$. \textbf{Right}, samples from diffusion model trained on CIFAR-10 and sampled with $10$ steps.}
    \label{fig:cifar10_10_steps_samples}
\end{figure}

\begin{figure}
    \centering
    \begin{subfigure}
    \centering
        \includegraphics[trim=0 138 138 0, clip, 
        width=.45\linewidth]{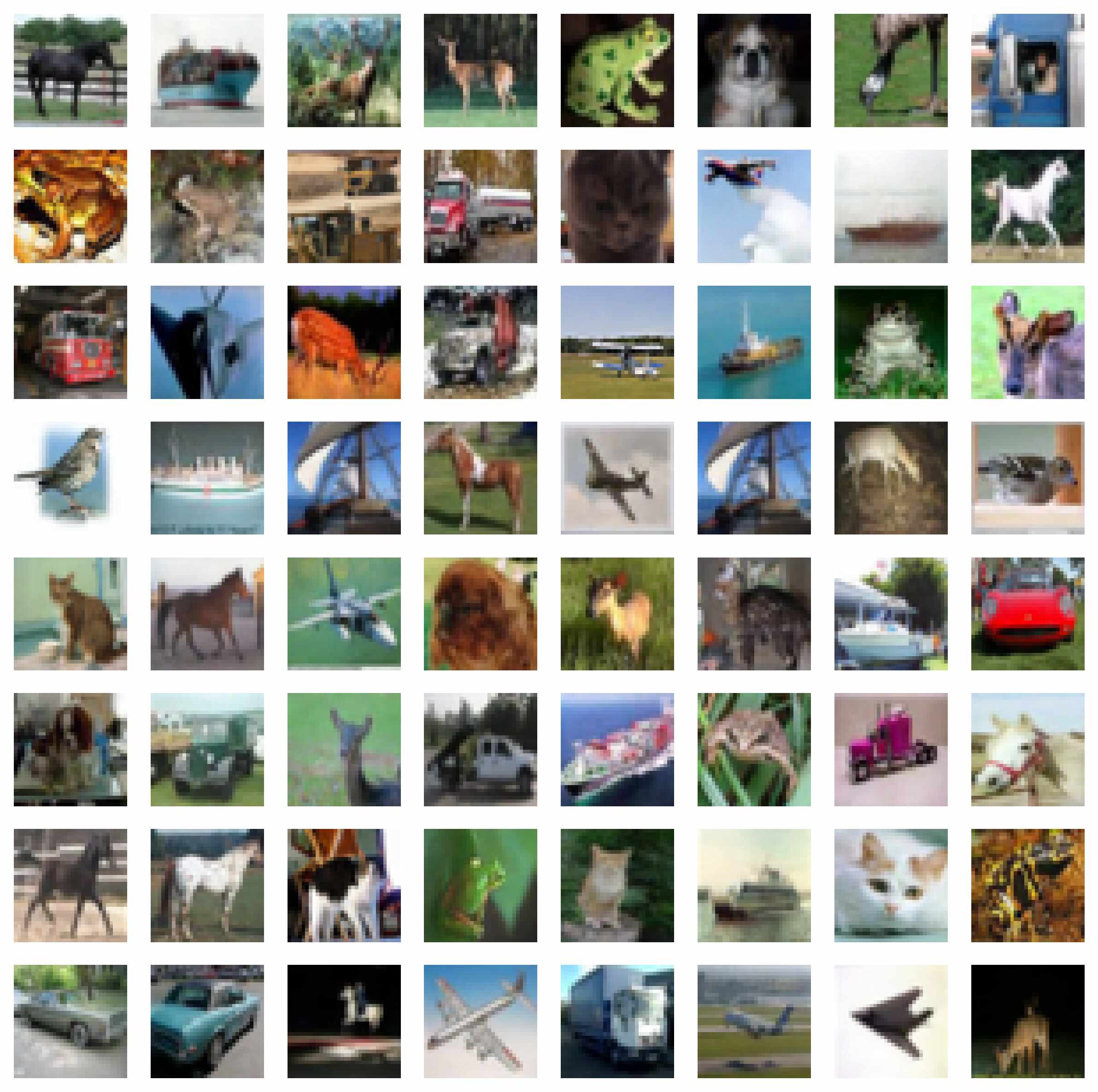}
    \end{subfigure}
    \hfil
    \begin{subfigure}
    \centering
        \includegraphics[trim=0 138 138 0, clip,  width=.45\linewidth]{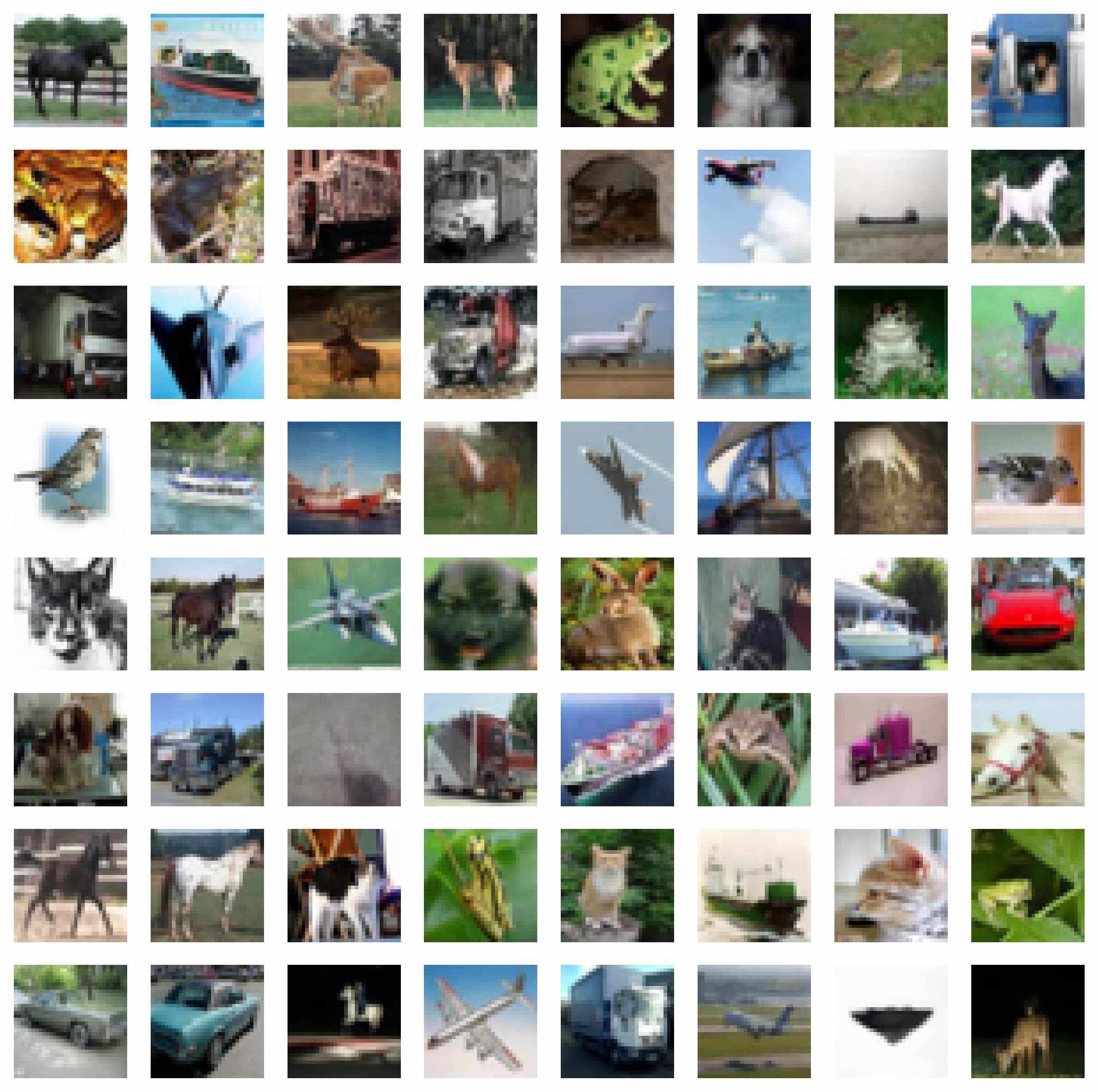}
    \end{subfigure}
    \caption{\textbf{Left}, samples from \emph{distributional} model trained on CIFAR-10 and sampled with $100$ steps. $\beta=2,\lambda=0$. \textbf{Right}, samples from diffusion model trained on CIFAR-10 and sampled with $100$ steps.}
    \label{fig:cifar10_100_steps_samples}
\end{figure}

\begin{figure}
    \centering
    \begin{subfigure}
    \centering
        \includegraphics[trim=0 138 138 0, clip, 
        width=.45\linewidth]{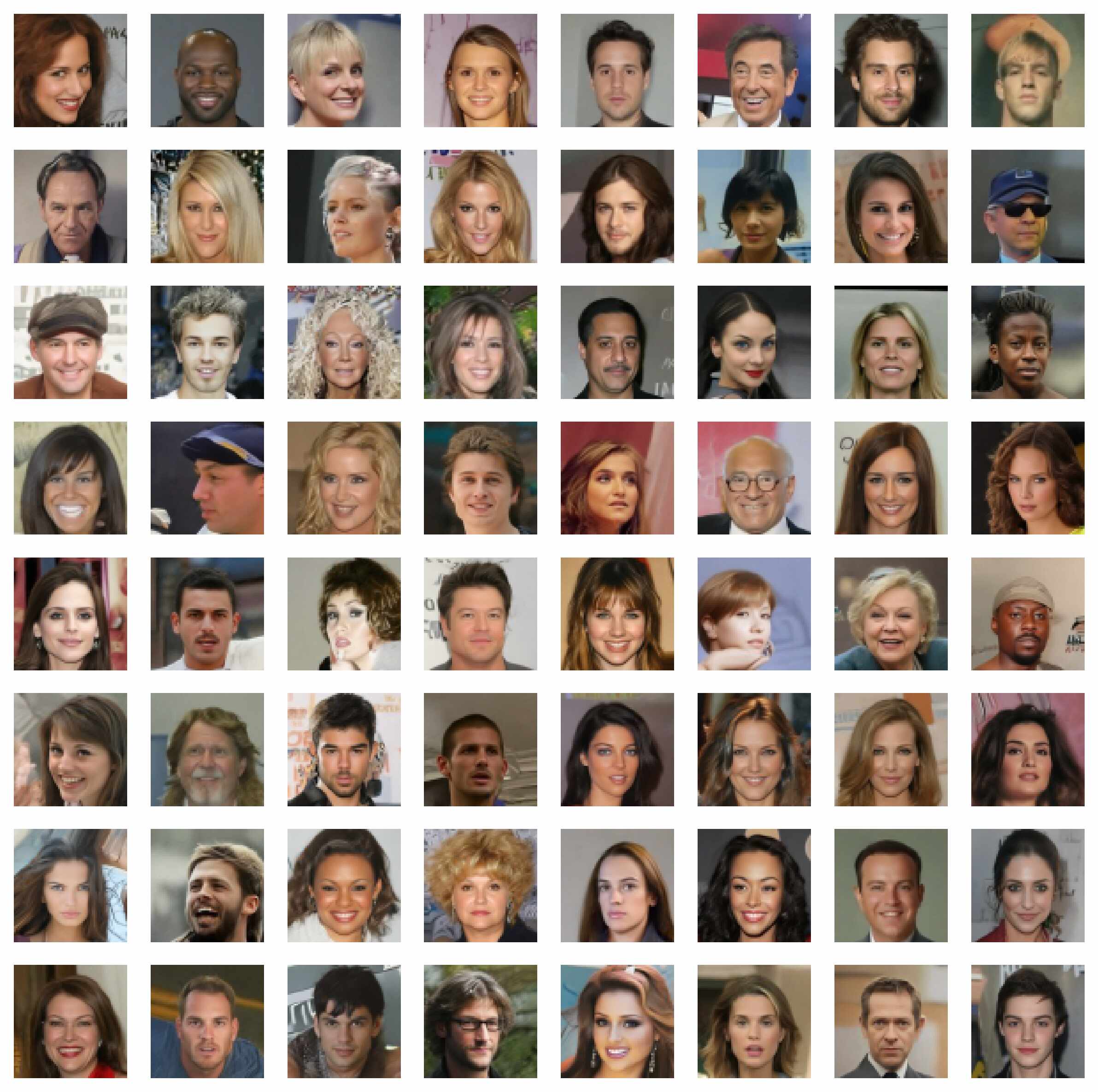}
    \end{subfigure}
    \hfil
    \begin{subfigure}
    \centering
        \includegraphics[trim=0 138 138 0, clip,  width=.45\linewidth]{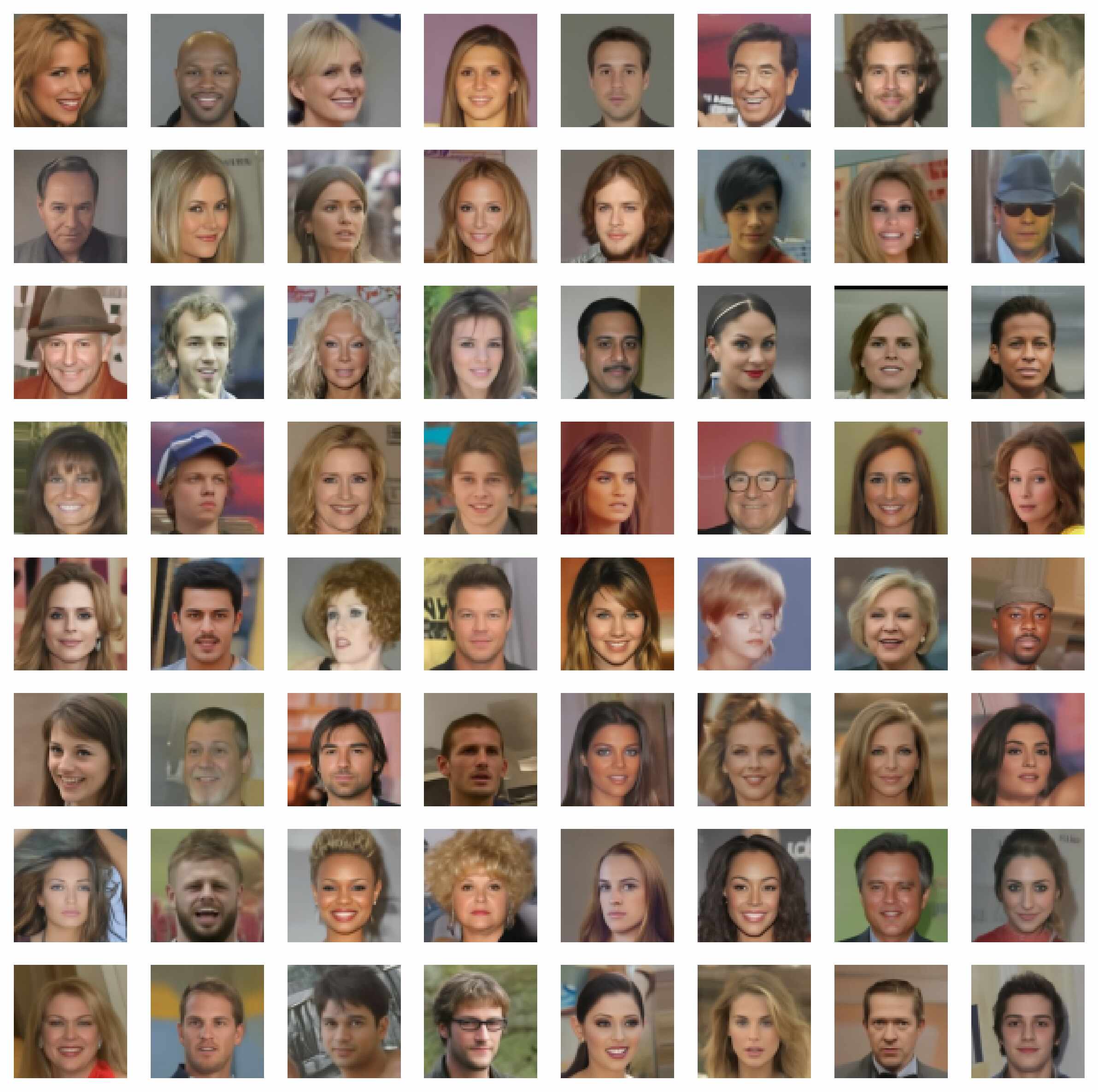}
    \end{subfigure}
    \caption{\textbf{Left}, samples from \emph{distributional} model trained on CelebA and sampled with $10$ steps. $\beta=0.001,\lambda=1$. \textbf{Right}, samples from diffusion model trained on CelebA and sampled with $10$ steps.}
    \label{fig:celeba_10_steps_samples}
\end{figure}

\begin{figure}
    \centering
    \begin{subfigure}
    \centering
        \includegraphics[trim=0 138 138 0, clip, 
        width=.45\linewidth]{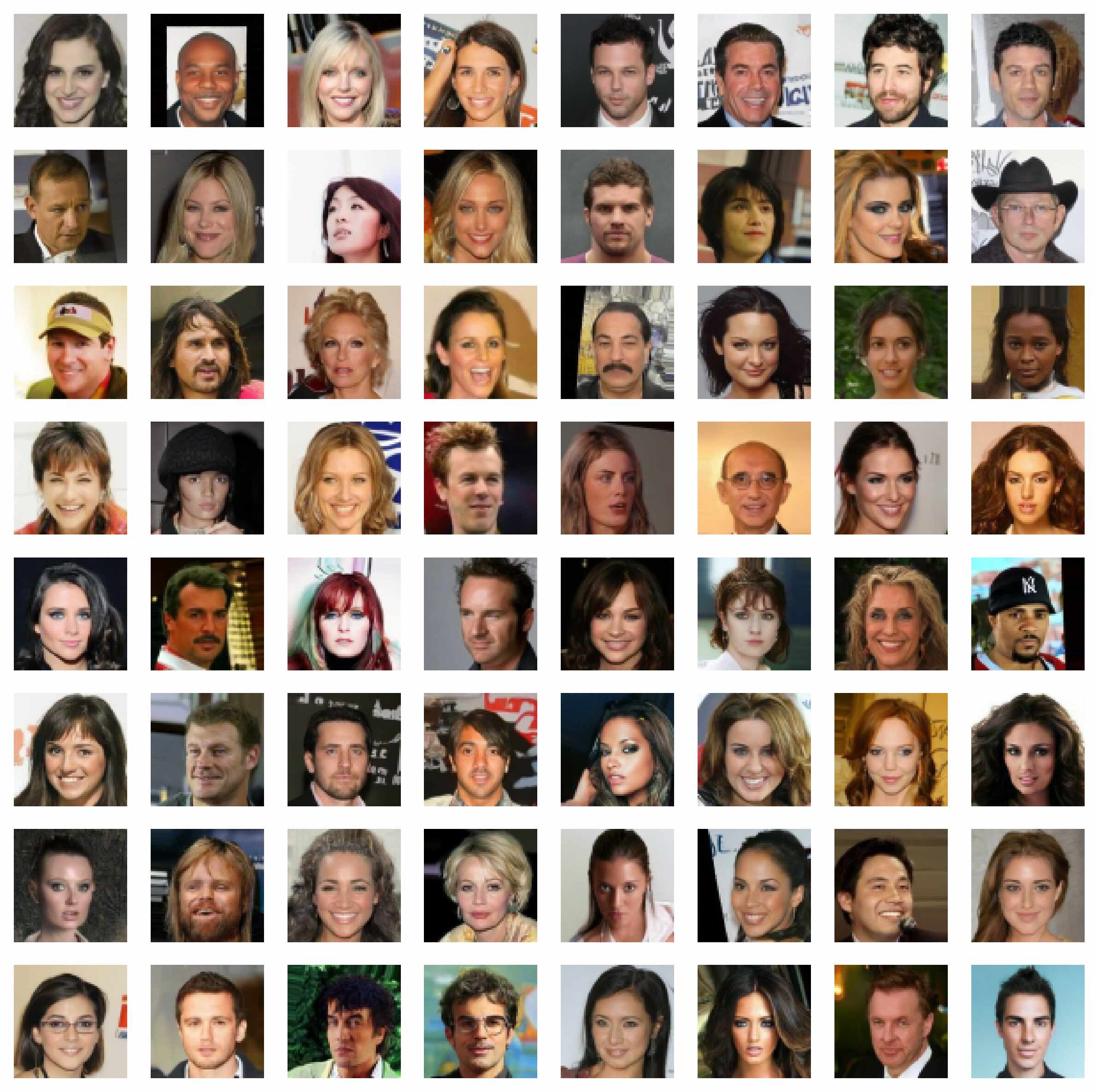}
    \end{subfigure}
    \hfil
    \begin{subfigure}
    \centering
        \includegraphics[trim=0 138 138 0, clip,  width=.45\linewidth]{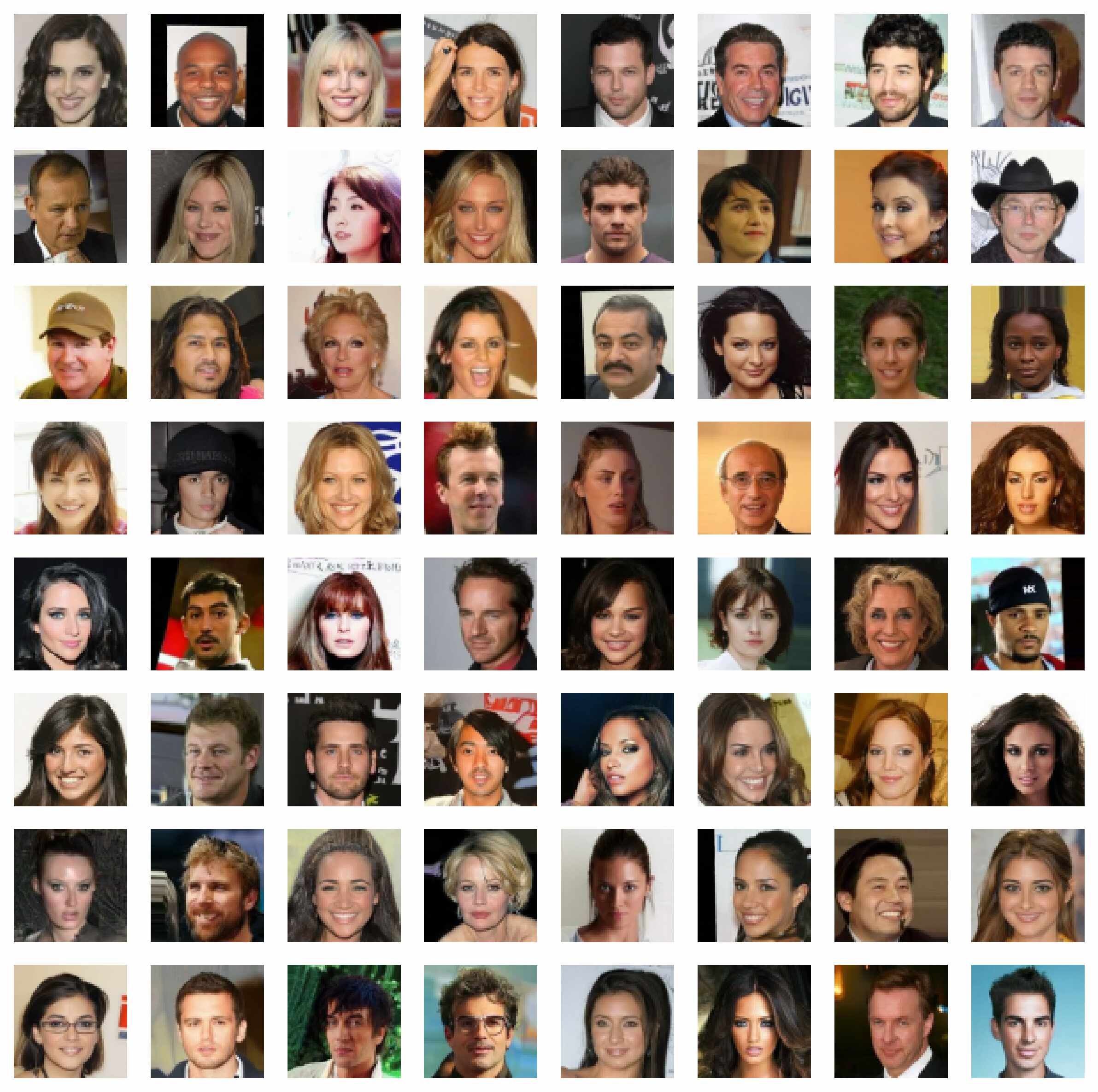}
    \end{subfigure}
    \caption{\textbf{Left}, samples from \emph{distributional} model trained on CelebA and sampled with $100$ steps. $\beta=2,\lambda=1$. \textbf{Right}, samples from diffusion model trained on CelebA and sampled with $100$ steps.}
    \label{fig:celeba_100_steps_samples}
\end{figure}

\begin{figure}
    \centering
    \begin{subfigure}
    \centering
        \includegraphics[trim=0 138 138 0, clip, 
        width=.45\linewidth]{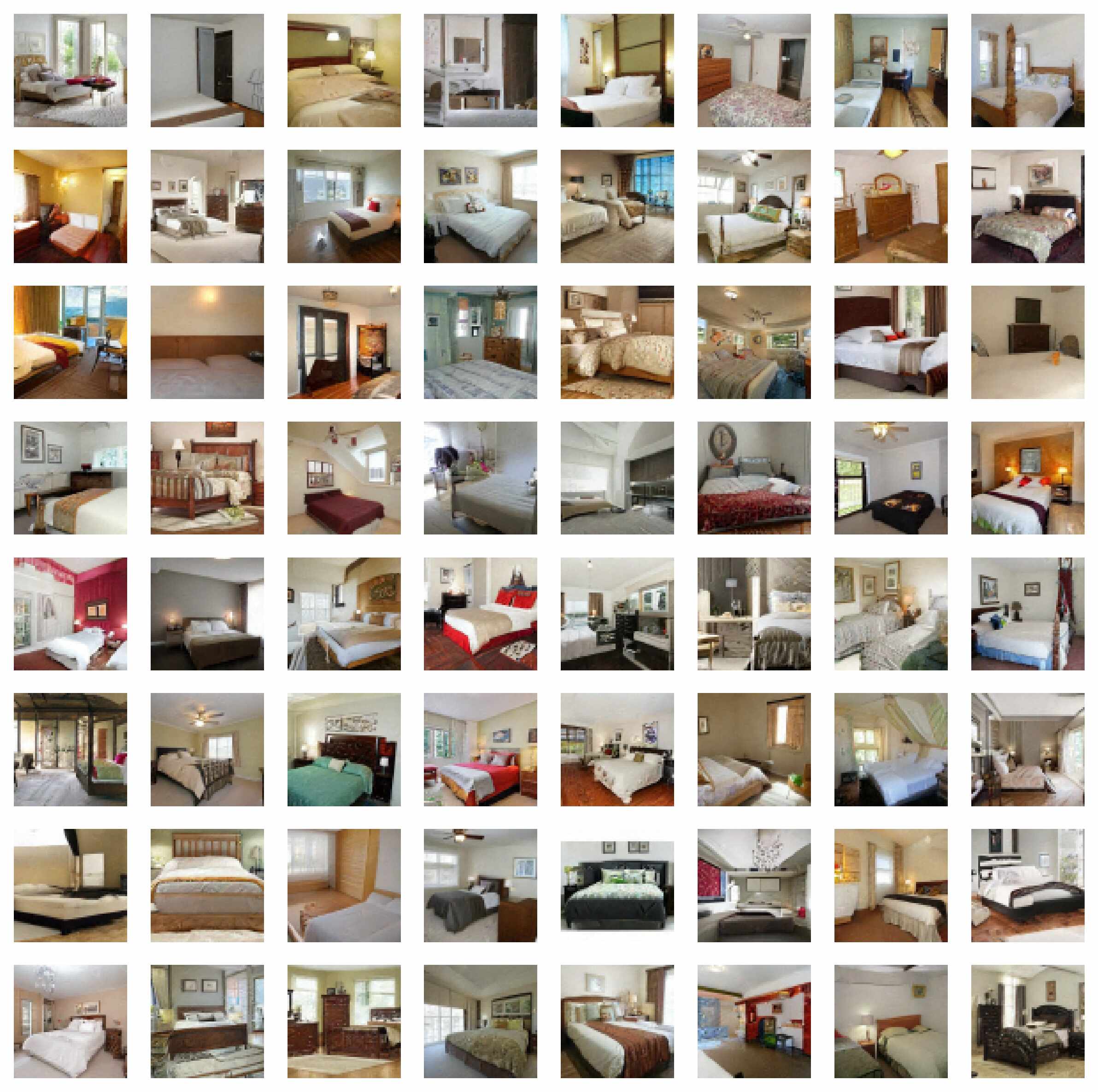}
    \end{subfigure}
    \hfil
    \begin{subfigure}
    \centering
        \includegraphics[trim=0 138 138 0, clip,  width=.45\linewidth]{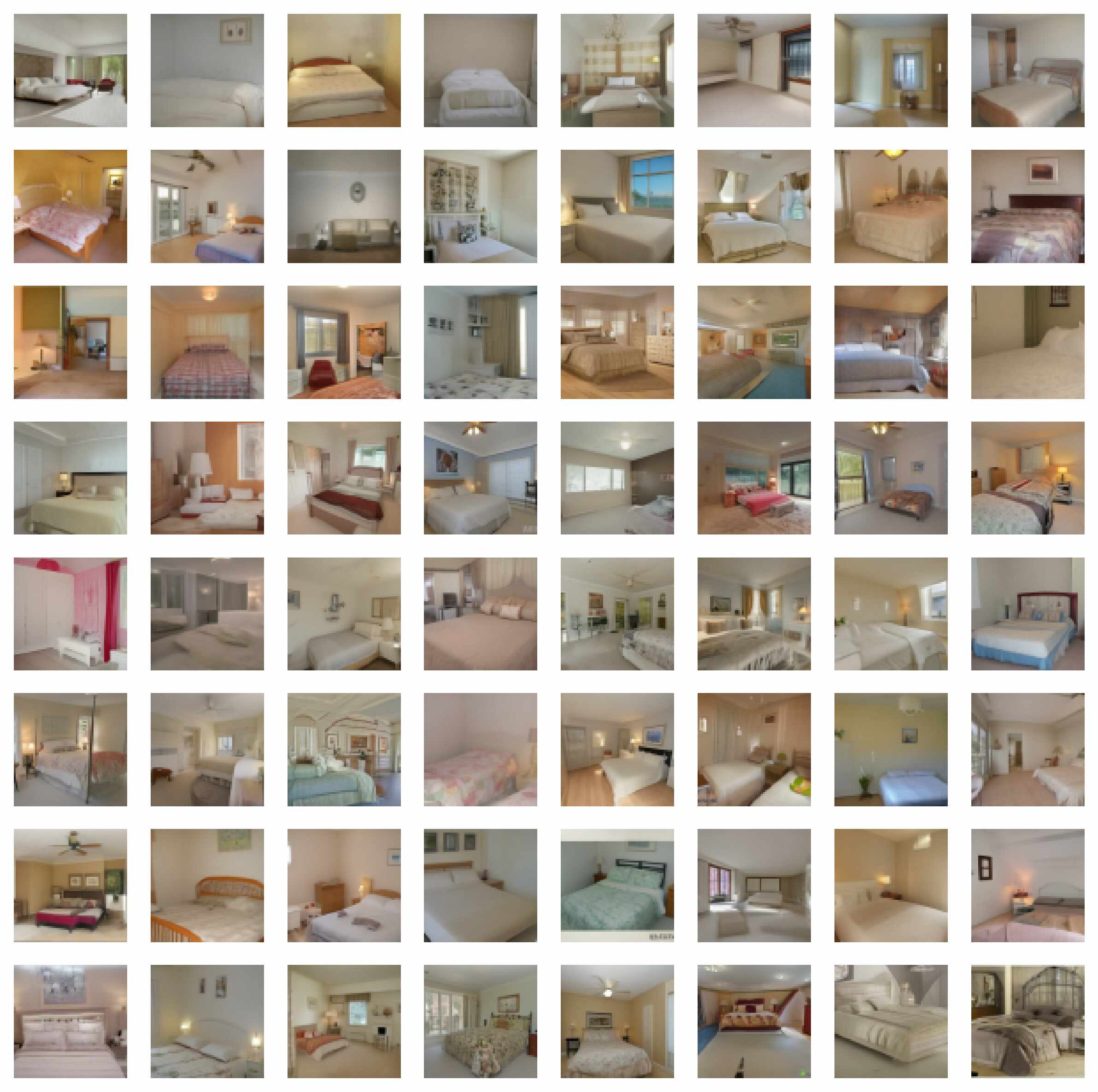}
    \end{subfigure}
    \caption{\textbf{Left}, samples from \emph{distributional} model trained on LSUN Bedrooms and sampled with $10$ steps. $\beta=0.0001,\lambda=1$. \textbf{Right}, samples from diffusion model trained on LSUN Bedrooms and sampled with $10$ steps}
    \label{fig:lsun_10_steps}
\end{figure}

\begin{figure}
    \centering
    \begin{subfigure}
    \centering
        \includegraphics[trim=0 138 138 0, clip, 
        width=.45\linewidth]{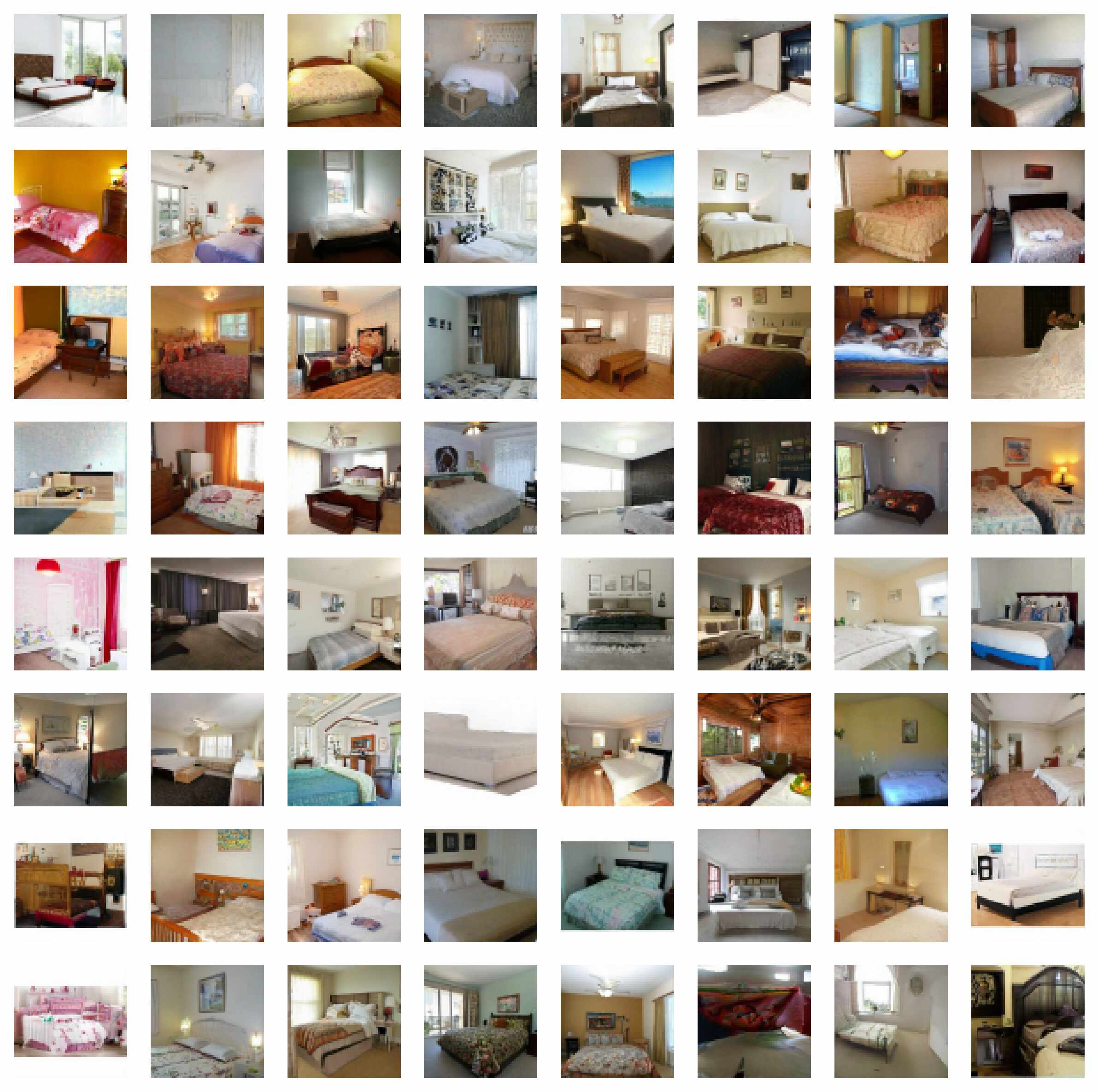}
    \end{subfigure}
    \hfil
    \begin{subfigure}
    \centering
        \includegraphics[trim=0 138 138 0, clip,  width=.45\linewidth]{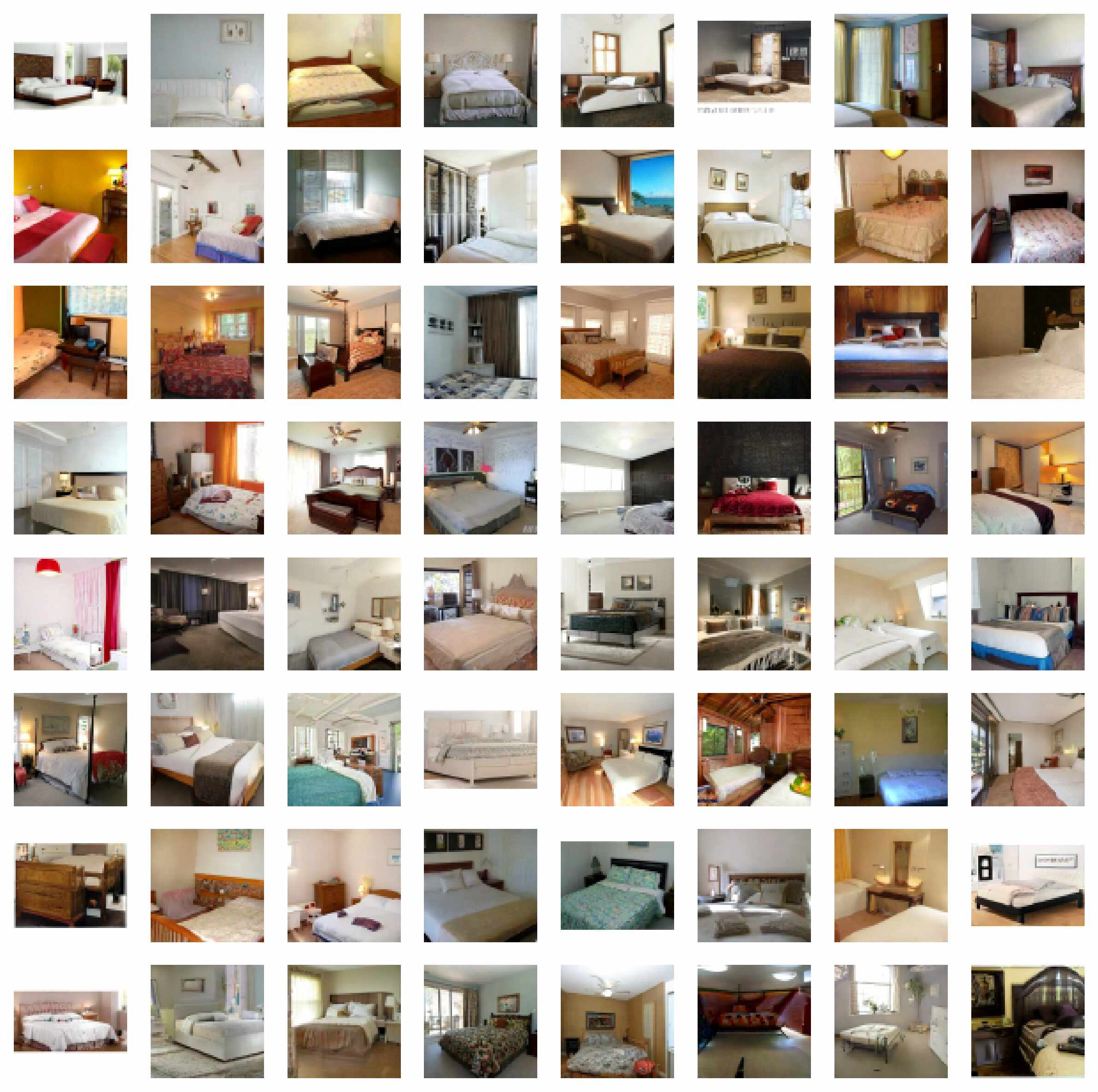}
    \end{subfigure}
    \caption{\textbf{Left}, samples from \emph{distributional} model trained on LSUN Bedrooms and sampled with $100$ steps. $\beta=0.0001,\lambda=0$. \textbf{Right}, samples from diffusion model trained on LSUN Bedrooms and sampled with $100$ steps}
    \label{fig:lsun_100_steps}
\end{figure}

\begin{figure}
    \centering
    \begin{subfigure}
    \centering
        \includegraphics[trim=0 138 138 0, clip, 
        width=.45\linewidth]{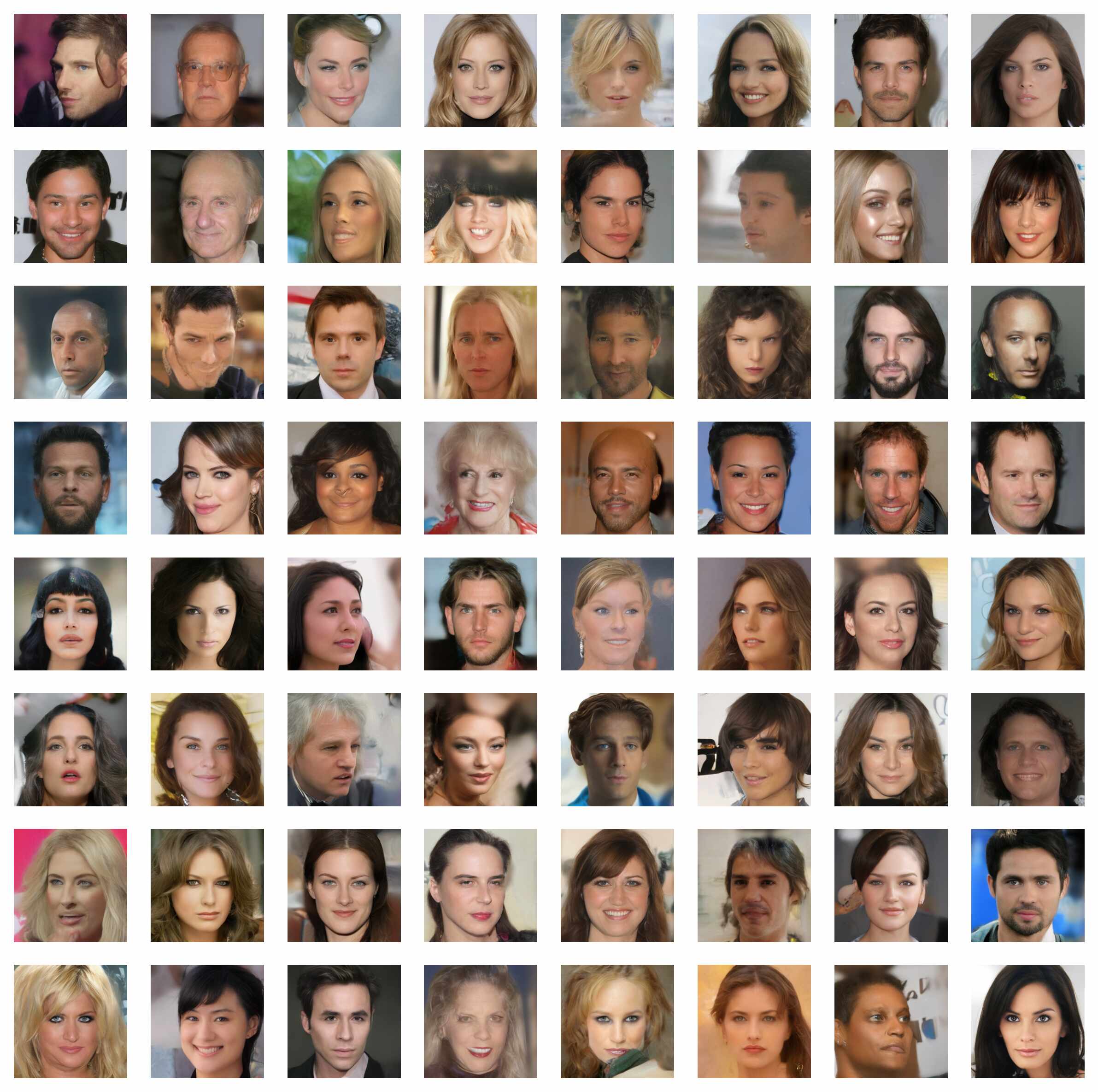}
    \end{subfigure}
    \hfil
    \begin{subfigure}
    \centering
        \includegraphics[trim=0 138 138 0, clip,  width=.45\linewidth]{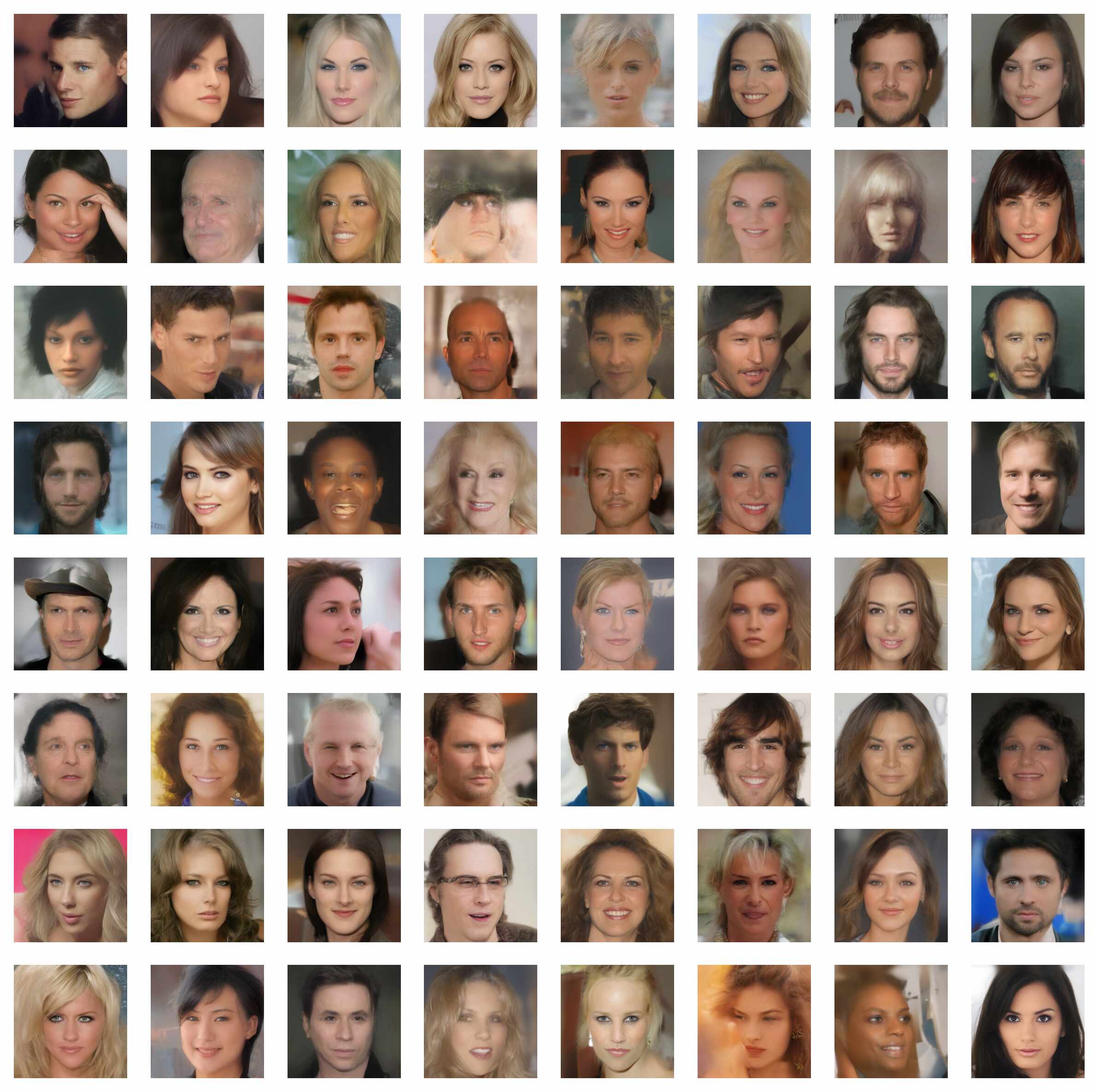}
    \end{subfigure}
    \caption{\textbf{Left}, samples from \emph{distributional} model trained on latent CelebA HQ  and sampled with $10$ steps. $\beta=0.01,\lambda=0.5$. \textbf{Right}, samples from diffusion model trained on latent CelebA HQ and sampled with $10$ steps.}
    \label{fig:latente_celeba_hq_10_steps}
\end{figure}

\begin{figure}
    \centering
    \begin{subfigure}
    \centering
        \includegraphics[trim=0 138 138 0, clip, 
        width=.45\linewidth]{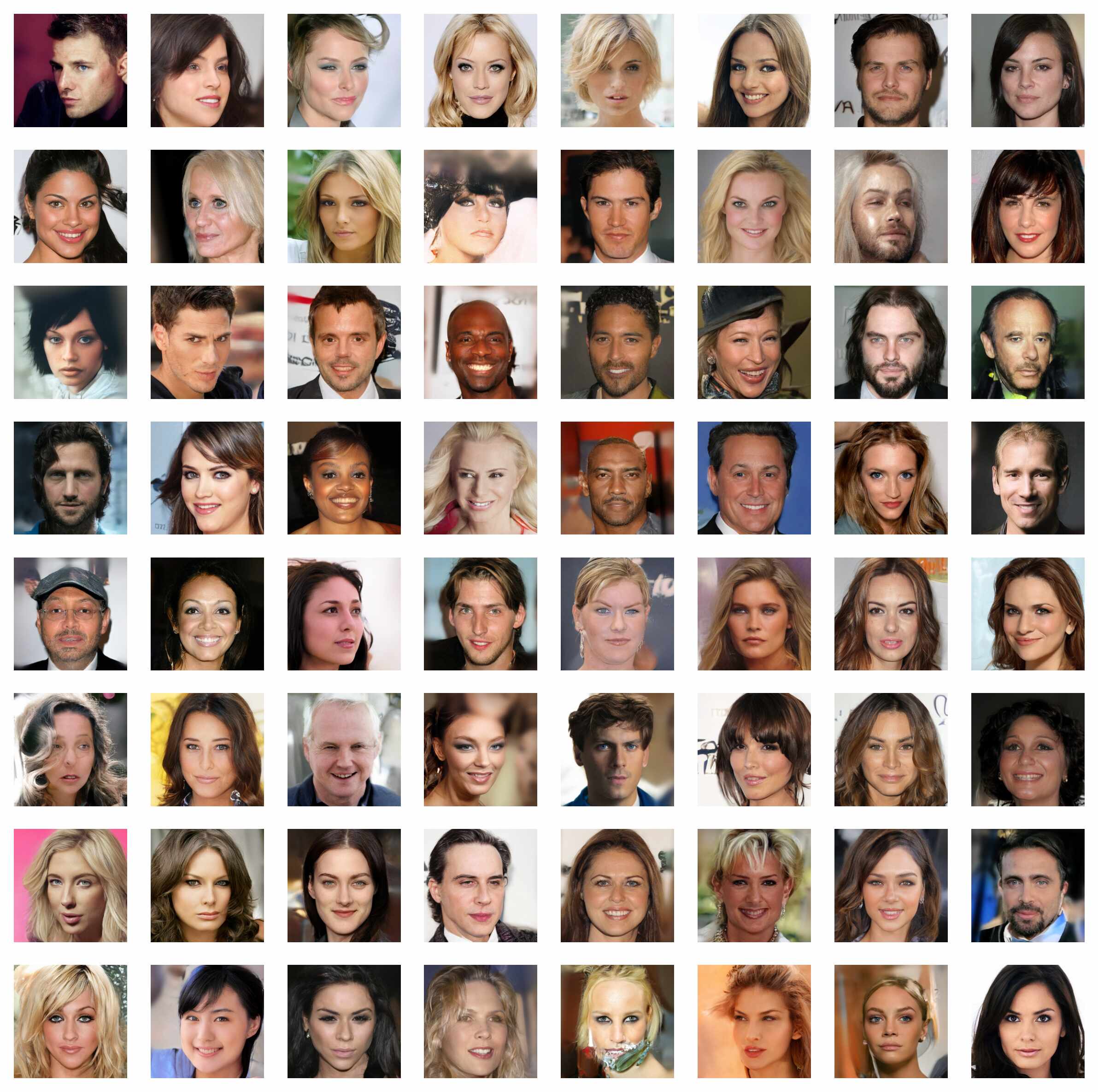}
    \end{subfigure}
    \hfil
    \begin{subfigure}
    \centering
        \includegraphics[trim=0 138 138 0, clip,  width=.45\linewidth]{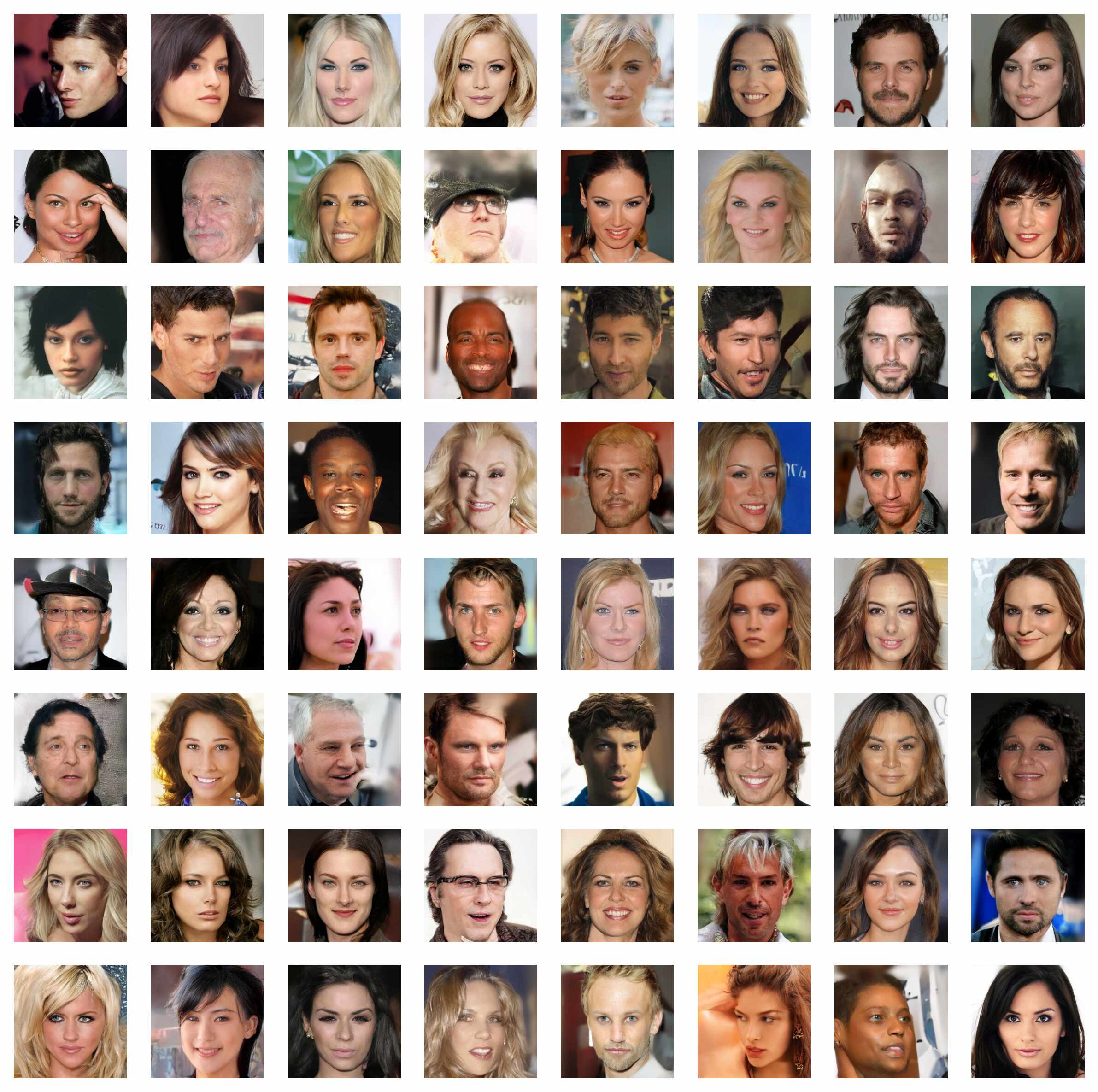}
    \end{subfigure}
    \caption{\textbf{Left}, samples from \emph{distributional} model trained on latent CelebA HQ  and sampled with $100$ steps. $\beta=0.01,\lambda=0.1$. \textbf{Right}, samples from diffusion model trained on latent CelebA HQ and sampled with $100$ steps.}
    \label{fig:latente_celeba_hq_100_steps}
\end{figure}

\end{document}